\numberwithin{equation}{section}
\DeclarePairedDelimiter{\dotp}{\langle}{\rangle}
\DeclarePairedDelimiter{\smin}{s_{\min}(}{)}
\newcommand{\longdash}[1][2em]{%
  \makebox[#1]{$\m@th\smash-\mkern-7mu\cleaders\hbox{$\mkern-2mu\smash-\mkern-2mu$}\hfill\mkern-7mu\smash-$}}
\newcommand{\omitskip}{\kern-\arraycolsep}
\newcommand{\C}{{\mathbb C}}
\newcommand{\E}{{\mathbb E}}
\newcommand{\R}{{\mathbb R}}
\newcommand{\real}{\mathbb{R}}%
\newcommand{\N}{{\mathbb N}}
\newcommand{\cB}{\mathcal{B}}
\newcommand{\cP}{\mathcal{P}}
\newcommand{\cS}{\mathcal{S}}
\newcommand{\cX}{\mathcal{X}}
\newcommand{\dd}{\textup{d}}
\newcommand{\Padj}{P^\ast}
\newcommand{\onevec}{{1}}%
\newcommand{\indic}{\operatorname{\mathbbm{1}}}%
\newcommand{\sampleOne}{{[1]}}
\newcommand{\sampleTwo}{{[2]}}
\newcommand{\Amax}{A_{\max}}
\newcommand{\bmax}{b_{\max}}
\newcommand{\gammin}{\gamma_{\min}}
\newcommand{\gammax}{\gamma_{\max}}
\newcommand{\law}{\mathcal{L}}
\newcommand{\bigO}{\mathcal{O}}
\newcommand{\indep}{\perp \!\!\! \perp}
\newcommand{\var}{\operatorname{Var}}
\newcommand{\absgap}{\operatorname{\gamma^\star}}
\newcommand{\spec}{\operatorname{Spec}}
\newcommand{\TV}{\operatorname{TV}}
\newcommand{\tr}{\operatorname{Tr}}
\newcommand{\ddup}{\mathrm{d}}
\newcommand{\ltwopi}{L^2(\pi)}
\newcommand{\obs}{g}
\newtheorem{thm}{Theorem}[section]
\newtheorem{definition}[thm]{Definition}
\newtheorem{proposition}[thm]{Proposition}
\newtheorem{lem}[thm]{Lemma}
\newtheorem{cor}[thm]{Corollary}
\newtheorem{assumption}{Assumption}
\newtheorem*{question*}{Question}
\newtheorem{claim}{Claim}
\newtheorem*{claim*}{Claim}
\newtheorem{remark}{Remark}
\global\long\def\BarA{\Bar A}%
\global\long\def\mS{\mathcal{S}}%
\global\long\def\dA{D}%
\global\long\def\dAbar{\Bar D}%
\global\long\def\Upsilon{\upsilon}%
\global\long\def\S{\mathbb{S}}%
\global\long\def\indic{\operatorname{\mathbbm{1}}}%
\begin{document}

\title{Bias and Extrapolation in Markovian Linear Stochastic Approximation
with Constant Stepsizes}

\author{Dongyan (Lucy) Huo,\texorpdfstring{$^\mathsection$}{} Yudong Chen,\texorpdfstring{$^\dagger$}{} Qiaomin Xie,\texorpdfstring{$^\ddagger$}{}%
    \texorpdfstring{\footnote{Emails: \texttt{dh622@cornell.edu}, \texttt{yudong.chen@wisc.edu}, \texttt{qiaomin.xie@wisc.edu}}\\~\\
	\normalsize $^\mathsection$School of Operations Research and Information Engineering, Cornell University\\
	\normalsize $^\dagger$Department of Computer Sciences, University of Wisconsin-Madison\\
	\normalsize $^\ddagger$Department of Industrial and Systems Engineering, University of Wisconsin-Madison}{}
}

\date{}

\maketitle

\begin{abstract}
We consider Linear Stochastic Approximation (LSA) with constant stepsize
and Markovian data. Viewing the joint process of the data and LSA
iterate as a time-homogeneous Markov chain, we prove its convergence
to a unique limiting and stationary distribution in Wasserstein distance
and establish non-asymptotic, geometric convergence rates. Furthermore,
we show that the bias vector of this limit admits an infinite series
expansion with respect to the stepsize. Consequently, the bias is
proportional to the stepsize up to higher order terms. This result
stands in contrast with LSA under i.i.d.\ data, for which the bias
vanishes. In the reversible chain setting, we provide a general characterization
of the relationship between the bias and the mixing time of the Markovian
data, establishing that they are roughly proportional to each other.

While Polyak-Ruppert tail-averaging reduces the variance of LSA iterates,
it does not affect the bias. The above characterization allows us
to show that the bias can be reduced using Richardson-Romberg extrapolation
with $m\ge2$ stepsizes, which eliminates the $m-1$ leading terms
in the bias expansion. This extrapolation scheme leads to an exponentially
smaller bias and an improved mean squared error, both theoretically
and empirically. Our results immediately apply to the Temporal Difference
learning algorithm with linear function approximation, and stochastic
gradient descent applied to quadratic functions.
\end{abstract}

\section{Introduction}

\label{sec:intro}

We consider the following Linear Stochastic Approximation (LSA) iteration driven by Markovian data: 
\begin{align*}
\theta_{k+1}=\theta_{k}+\alpha\big(A(x_{k})\theta_{k}+b(x_{k})\big),\quad k=0,1,2,\ldots,
\end{align*}
where $(x_{k})_{k\ge0}$ is a Markov chain representing the underlying
data stream, $A$ and $b$ are deterministic functions, and $\alpha>0$
is a constant stepsize. LSA is an iterative data-driven procedure
for approximating the solution $\theta^{*}$ to the linear fixed point
equation $\BarA\theta^{*}+\bar{b}=0$, where $\BarA:=\sum_{i}\pi_{i}A(i)$,
$\bar{b}:=\sum_{i}\pi_{i}b(i)$, and $\pi$ is the unique stationary
distribution of the chain $(x_{k})_{k\ge0}.$

Stochastic Approximation (SA), which uses stochastic updates to solve fixed-point equations, is a fundamental algorithmic paradigm in many
areas including stochastic control and filtering \citep{kushner2003-yin-sa-book,borkar08-SA-book},
approximate dynamic programming and reinforcement learning (RL) \citep{Bertsekas19-RL-book,Sutton18-RL-book}.
For example, the celebrated Temporal Difference (TD) learning algorithm~\citep{Sutton1988-td}
in RL, potentially equipped with linear function approximation, is
a special case of LSA and an important algorithm primitive in RL.
Variants of the TD algorithm such as TD($\lambda$) and Gradient TD
can also be cast as LSA~\citep{Lakshminarayanan18-LSA-Constant-iid}.
Another important special case of SA is stochastic gradient descent
(SGD). When applied to a quadratic objective function, SGD can be
viewed as an LSA procedure.

Classical work on SA focuses on settings with diminishing stepsizes,
which allow for asymptotic convergence of $\theta_{k}$ to $\theta^{*}$
\citep{Robbins51-Monro-SA,Blum54-SA,borkar2000-ode-sa}. More recently,
SA with constant stepsizes has attracted attention due to its simplicity,
fast convergence, and good empirical performance. A growing line of
work has been devoted to this setting and established non-asymptotic
results valid for finite values of $k$ \citep{Lakshminarayanan18-LSA-Constant-iid,srikant-ying19-finite-LSA,chen21-finite-td,Bhandari21-linear-td}.
These results provide \emph{upper bounds} on the mean-squared error
(MSE) $\E\|\theta_{k}-\theta^{*}\|^{2}$, and such bounds typically
consist of the sum of two terms: a finite-time term\footnote{This term is sometimes called ``(finite-time) bias'' in prior work.
It should not be confused with the \emph{asymptotic} bias discussed
below.} that decays with $k$, and a steady-state MSE bound that is independent
of~$k$. 

In this work, we study LSA with constant stepsizes through the lens of
Markov chain theory. This perspective allows us to delineate the convergence
and distributional properties of LSA viewed as a stochastic process.
Consequently, we provide a more precise characterization of the MSE
in terms of the decomposition 
\[
\E\|\theta_{k}-\theta^{*}\|^{2}\asymp\underbrace{\|\E\theta_{k}-\E\theta_{\infty}^{(\alpha)}\|^{2}}_{\text{optimization error}}+\underbrace{\var(\theta_{k})}_{\substack{\text{variance}}
}+\underbrace{\|\E\theta_{\infty}^{(\alpha)}-\theta^{*}\|^{2}}_{\text{asymptotic bias}^{2}},
\]
where the random variable $\theta_{\infty}^{(\alpha)}$ denotes the limit (as $k\to\infty$) of the LSA
iterate $\theta_{k}$ with stepsize $\alpha$. Our main results, summarized
below, characterize the behavior of the above three terms.

\paragraph*{Convergence and optimization error.}

With a constant stepsize $\alpha$, the process $(x_{k},\theta_{k})_{k\ge0}$
is a time-homogeneous Markov chain. Under appropriate conditions,
we show that the sequence of $(x_{k},\theta_{k})$ converges to a
unique limiting random variable $(x_{\infty},\theta_{\infty}^{(\alpha)})$
in distribution and in $W_{2}$, the Wasserstein distance of order
2. Moreover, the distribution of $(x_{\infty},\theta_{\infty}^{(\alpha)})$
is the unique stationary distribution of the chain $(x_{k},\theta_{k})_{k\ge0}$.
We provide non-asymptotic bounds on the distributional distance between
$\theta_{k}$ and $\theta_{\infty}^{(\alpha)}$ in $W_{2}$, which
in turn upper bounds the optimization error $\|\E\theta_{k}-\E\theta_{\infty}^{(\alpha)}\|$.
Both bounds decay exponentially in $k$ thanks to the use of a constant
stepsize. We emphasize that the existence of the limit $\theta_{\infty}^{(\alpha)}$
and the convergence rate cannot be deduced from existing upper bounds
on the MSE $\E\|\theta_{k}-\theta^{*}\|^{2}$, which does not vanish
as $k\to\infty.$

\paragraph*{Variance and asymptotic bias.}

The variance $\var(\theta_{k})$ is of order $\bigO(1)$ as $k$ grows.
The variance can be made vanishing by averaging the LSA iterates.
For example, the Polyak-Ruppert tail-averaged iterate $\bar{\theta}_{k}:=\frac{1}{k/2}\sum_{t=k/2}^{k-1}\theta_{t}$
has variance of order $\bigO(1/k)$. Consequently, for large $k$,
the MSE of $\bar{\theta}_{k}$ is dominated by the asymptotic bias,
i.e., $\E\|\bar{\theta}_{k}-\theta_{*}\|^{2}\approx\|\E\bar{\theta}_{\infty}-\theta^{*}\|^{2}=\|\E\theta_{\infty}^{(\alpha)}-\theta^{*}\|^{2}$.
Our second main result establishes that the asymptotic bias is proportional
to the stepsize $\alpha$ (up to a second order term): 
\begin{align}
\E\theta_{\infty}^{(\alpha)}-\theta^{*}=\alpha B^{(1)}+\bigO(\alpha^{2}),\label{eq:bias_intro}
\end{align}
where $B^{(1)}$ is a vector independent of $\alpha$ and admits an
explicit expression in terms of $A,b$ and the transition kernel $P$
of the underlying Markov chain $(x_{k})_{k\ge0}$. Crucially, the
first order term in equation~\eqref{eq:bias_intro} is an \emph{equality}
rather than an upper bound. This equality implies that the asymptotic
bias is not affected by averaging the LSA iterates.

\paragraph*{Bias expansion and extrapolation.}

While the asymptotic bias persists under Polyak-Ruppert averaging,
the equality~\eqref{eq:bias_intro} implies that bias can be reduced
using a simple technique called Richardson-Romberg (RR) extrapolation:
run LSA with two stepsizes $\alpha$ and $2\alpha$, compute the respective
averaged iterates $\bar{\theta}_{k}^{(\alpha)}$ and $\bar{\theta}_{k}^{(2\alpha)}$,
and output their linear combination $\widetilde{\theta}_{k}^{(\alpha)}:=2\bar{\theta}_{k}^{(\alpha)}-\bar{\theta}_{k}^{(2\alpha)}$.
Doing so cancels out the leading term in the bias characterization~\eqref{eq:bias_intro},
resulting in an order-wise smaller bias $\E\widetilde{\theta}_{\infty}^{(\alpha)}-\theta^{*}=\bigO(\alpha^{2}).$

In fact, the bias characterization~\eqref{eq:bias_intro} can be
generalized to higher orders. We establish that the bias admits the
following \emph{infinite series expansion}: 
\begin{align}
\E\theta_{\infty}^{(\alpha)}-\theta^{*}=\alpha B^{(1)}+\alpha^{2}B^{(2)}+\alpha^{3}B^{(3)}+\cdots,\label{eq:bias_inf_intro}
\end{align}
where the vectors $\{B^{(i)}\}$ are independent of $\alpha$. Consequently,
RR extrapolation can be executed with $m\ge2$ stepsizes to eliminate
the $m-1$ leading terms in equation~\eqref{eq:bias_inf_intro},
reducing the bias to the order $\bigO(\alpha^{m}).$\\

Put together, the above results show that the combination of constant stepsize, averaging, and extrapolation allows one to approach the
\emph{best of three worlds}: (a) using a constant stepsize leads to
fast, exponential-in-$k$ convergence of the optimization error, (b)
tail-averaging eliminates the variance at an (optimal) $1/k$ rate,
and (c) RR extrapolation order-wise reduces the asymptotic bias. We
highlight that the $m$ iterate sequences used in RR extrapolation
can be computed in parallel, using the same data stream $(x_{k})_{k\ge0}$.
Compared with standard LSA, the above combined procedure is data efficient
(in terms of the sample complexity $k$ for achieving a given MSE),
does not require sophisticated tuning of the stepsize, and incurs
a minimal increase in the computational cost.

The results above should be contrasted with the setting of LSA with
\emph{i.i.d.\ data}, where the $x_{k}$'s are sampled independently
from some distribution $\pi$. In this setting, it has been shown
(sometimes implicitly) in existing work that the asymptotic bias is
zero \citep{Lakshminarayanan18-LSA-Constant-iid,Mou20-LSA-iid}. Similar
results are known for SGD applied to a quadratic function with gradients
contaminated by independent noise \citep{bach2013,Dieuleveut20-bach-SGD}.
It is perhaps surprising that using Markovian data leads to a non-zero
bias, even when the LSA iteration is linear in $\theta_{k}$. To provide
intuition, we plot the dependency graphs for LSA with i.i.d.\ data
and Markovian data in Figure~\ref{fig:dag-lsa}. It can be seen that
in the Markovian setting, the correlation between $x_{k}$'s leads
to additional correlation among $\theta_{k}$'s; in particular, the
iterate sequence $(\theta_{k})_{k\ge0}$ is no longer a Markov chain
by itself. As such, $\theta_{k+1}$ has an implicit, \emph{nonlinear}
dependence on $\theta_{k}$ through $(x_{k-1},x_{k})$. This non-linearity
is the source of the asymptotic bias.

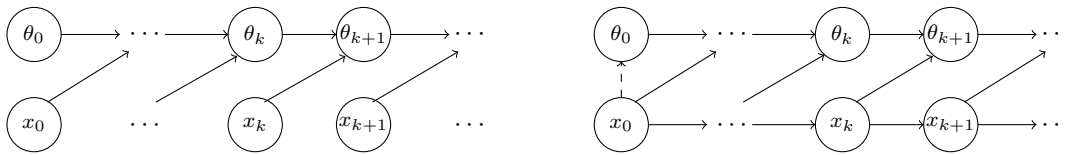
\begin{figure}[H]
\centering \subfigure{ \begin{tikzpicture}[scale=0.6]
    \draw (-2.5, 0) circle (.6) node{\footnotesize $x_{0}$};
    \draw (-2.5, 2) circle (.6) node{\footnotesize $\theta_0$};
    \draw[->] (-1.9, 2) -- (-0.6,2);
    \draw[->](-2.2, 0.5) -- (-0.4,1.62);

    \draw (0, 0) node{$\cdots$};
    \draw[->](0.2,0.5) -- (2.0,1.52);

    \draw (2.4, 0) circle (.6) node{\footnotesize $x_{k}$};
    \draw[->](2.6,0.5) -- (4.4,1.52);
    
    \draw (4.8, 0) circle (.6) node{\footnotesize $x_{k+1}$};
    \draw[->](5,0.5) -- (6.9,1.62);
    
    \draw (7.2, 0) node{$\cdots$};
    
    \draw (0, 2) node{$\cdots$};
    \draw[->] (0.4,2) -- (1.8,2);
    
    \draw (2.4, 2) circle (.6) node{\footnotesize $\theta_k$};
    \draw[->] (3,2) -- (4.2,2);
    
    \draw (4.8, 2) circle (.6) node{\footnotesize $\theta_{k+1}$};
    \draw[->] (5.4,2) -- (6.7,2);
    
    \draw (7.2, 2) node{$\cdots$};
    \end{tikzpicture} } \hspace{0.8cm} \subfigure{ \begin{tikzpicture}[scale=0.6]
    \draw (-2.5, 0) circle (.6) node{\footnotesize $x_{0}$};
    \draw (-2.5, 2) circle (.6) node{\footnotesize $\theta_0$};
    \draw[->] (-1.9, 0) -- (-0.6,0);
    \draw[->] (-1.9, 2) -- (-0.6,2);
    \draw[dashed,->](-2.5,0.6) -- (-2.5,1.4);
    \draw[->](-2.2, 0.5) -- (-0.4,1.62);

    \draw (0, 0) node{$\cdots$};
    \draw[->] (0.4,0) -- (1.8,0);
    \draw[->](0.2,0.5) -- (2.0,1.55);

    \draw (2.4, 0) circle (.6) node{\footnotesize $x_{k}$};
    \draw[->] (3.0,0) -- (4.2,0);
    \draw[->](2.7,0.5) -- (4.4,1.52);
    
    \draw (4.8, 0) circle (.6) node{\footnotesize $x_{k+1}$};
    \draw[->] (5.4,0) -- (6.7,0);
    \draw[->](5.1,0.5) -- (6.9,1.62);
    
    \draw (7.2, 0) node{$\cdots$};
    
    \draw (0, 2) node{$\cdots$};
    \draw[->] (0.4,2) -- (1.8,2);
    
    \draw (2.4, 2) circle (.6) node{\footnotesize $\theta_k$};
    \draw[->] (3.0,2) -- (4.2,2);
    
    \draw (4.8, 2) circle (.6) node{\footnotesize $\theta_{k+1}$};
    \draw[->] (5.4,2) -- (6.7,2);
    
    \draw (7.2, 2) node{$\cdots$};
    \end{tikzpicture} } \caption{Dependency graphs of LSA. \textit{Left}: i.i.d.\ data. \textit{Right}:
Markovian data. }
\label{fig:dag-lsa} 
\end{figure}

\paragraph*{Bias and mixing time.}

We quantify the observations above by relating the magnitude of the
asymptotic bias to the mixing time of the underlying Markov chain
$(x_{k})_{k\ge0}$ and the absolute spectral gap of the transition
kernel $P$, denoted by $\absgap(P)$. 
We show that the leading coefficient $B^{(1)}$ in the expansion~\eqref{eq:bias_inf_intro}
has a norm upper bounded by $\bigO\big(\frac{1-\absgap(P)}{\absgap(P)}\big)$.
It is well known that the mixing time of $(x_{k})_{k\ge0}$ can be
tightly upper and lower bounded by functions of $\absgap(P)$ \citep{Paulin2015,Levin17-mixing_book}.
Consequently, the faster the underlying chain $(x_{k})_{k\ge0}$ mixes,
the smaller the asymptotic bias is. As a special case, LSA with i.i.d.\ data
has zero mixing time and $1-\absgap(P)=0$, hence zero bias.\\

Our results hold for LSA driven by a Markov chain $(x_{k})_{k\ge0}$
on a general (possibly continuous) state space. These results immediately
apply to Markovian settings of the TD algorithm with linear function
approximation and SGD for quadratic functions. Furthermore, we provide
numerical results for LSA, TD and SGD, which corroborate our theory
and demonstrate the benefit of using constant stepsizes, tail averaging,
and RR extrapolation.

\paragraph*{Paper Organization:}

In Section~\ref{sec:related}, we review existing results related
to our work. After setting up the problem and assumptions in Section~\ref{sec:setup},
we present our main results in Section~\ref{sec:main}. In Section~\ref{sec:experiments},
we provide numerical results for LSA, TD and SGD. We outline the proofs of the main results in Section~\ref{sec:proof_sketch}. The paper
is concluded in Section~\ref{sec:conclusion} with a discussion of
future directions. The proofs of our theoretical results are provided
in the Appendix.

\section{Related Work}

\label{sec:related}

In this section, we review existing results that are most related
to our work. The literature on SA and SGD is vast. Here we mainly
discuss prior works in the non-asymptotic and constant stepsize regime,
with a focus on the Markovian noise setting.

\subsection{Classical Results on SA and SGD}

The study of stochastic approximation and stochastic gradient descent
dates back to the seminal work of Robbins and Monro \citep{Robbins51-Monro-SA}.
Convergence results in classical works typically assume that the stepsize
sequence $(\alpha_{k})_{k\geq1}$ satisfies $\sum_{k=1}^{\infty}\alpha_{k}=\infty\text{ and }\sum_{k=1}^{\infty}\alpha_{k}^{2}<\infty.$
This assumption implies that the stepsize sequence is diminishing.
Under suitable conditions, Robbins and Monro prove that SA and SGD
algorithms asymptotically converge in the $L^{2}$ sense \citep{Robbins51-Monro-SA},
and Blum shows that the convergence holds almost surely \citep{Blum54-SA}.
Subsequent works \citep{Ruppert88-Avg,polyak90_average} propose the
technique now known as the Polyak-Ruppert (PR) averaging. A Central Limit Theorem (CLT) for the asymptotic normality of the averaged iterates
is established in \citep{Polyak92-Avg}. Borkar and Meyn \citep{borkar2000-ode-sa}
introduce the Ordinary Differential Equation (ODE) technique for analyzing
SA algorithms. Utilizing the ODE technique, the recent work \citep{Meyn21_ode}
establishes a functional CLT for SA driven by Markovian noise.

The asymptotic theory of SA and SGD is well-developed and covered
in several excellent textbooks~\citep{kushner2003-yin-sa-book,borkar08-SA-book,Benveniste12-sa-book,WrightRecht2022_OptBook}.
Our work, in comparison, focuses on the setting of constant stepsizes
and provides non-asymptotic bounds.

\subsection{SA and SGD with Constant Stepsizes}

A growing body of recent works considers the constant stepsize setting
of SA and the closely related SGD algorithm. A majority of works in this line assume i.i.d.\ noise, with a number of finite-time results.
The work in \citep{Lakshminarayanan18-LSA-Constant-iid} analyzes LSA and establishes finite-time upper and lower bounds on the MSE.
The work \citep{Mou20-LSA-iid} provides refined results with the
optimal dependence on problem-specific constants, as well as a CLT
for the averaged iterates with an exact characterization of the asymptotic
covariance matrix. Using new results on random matrix products, the
work \citep{durmus2021-LSA} establishes tight concentration bounds
of LSA, which are extended to LSA with iterate averaging in \citep{durmus22-LSA}. 

Closely related to our work is \citep{Dieuleveut20-bach-SGD}, which
studies constant stepsize SGD for strongly convex and smooth functions.
By connecting SGD to time-homogeneous Markov chains, they establish
that the iterates converge to a unique stationary distribution. This
result is generalized to non-convex and non-smooth functions with
quadratic growth in the work \citep{Yu21-stan-SGD}, which further
establishes asymptotic normality of the averaged SGD iterates. Subsequent
work \citep{chen21-siva_asymptotic} studies the limit of the stationary
distribution as stepsize goes to zero. Note that these aforementioned
results are established under the i.i.d.\ noise setting.

More recent works study constant-stepsize SA and SGD under Markovian
noise. The work \citep{srikant-ying19-finite-LSA} provides finite-time
bounds on the MSE of LSA. The work \citep{Mou21-optimal-linearSA}
considers LSA with averaging and establishes instance-dependent MSE
upper bounds with tight dimension dependence. The papers \citep{srikant-ying19-finite-LSA,durmus22-LSA}
establish bounds on higher moments of LSA iterates. Going beyond linear
SA, the work \citep{chen20-contract-SA} considers general SA with
contractive mapping and provides finite-time convergence results.
The work \citep{guy2020} studies SGD for linear regression problems
with Markovian data and constant stepsizes. Most of these results
focus on the upper bounds of the MSE and do not decouple the effect of
the asymptotic bias. 

A portion of our results are similar in spirit to \citep[Proposition 2]{Dieuleveut20-bach-SGD}
and \citep[Theorem 3]{durmus2021-LSA}, in that we both study LSA
and SGD with constant stepsizes in the lens of Markov chain analysis.
A crucial difference is that we consider the Markovian data setting
whereas they consider i.i.d.\ data. Arising naturally in stochastic
control and RL problems, the Markovian setting leads to non-zero asymptotic
bias and new analytical challenges, which are not present in the i.i.d.\ setting.
In particular, our analysis involves more delicate coupling arguments
and builds on the Lyapunov function techniques from \citep{srikant-ying19-finite-LSA}.
Along the way, we obtain a refinement of the MSE bounds from the work
\citep{srikant-ying19-finite-LSA}. We discuss these analytical challenges
and improvements in greater detail after stating our theorems; see
Sections~\ref{sec:main} and~\ref{sec:proof_sketch}.

\subsection{Applications in Reinforcement Learning and TD Learning}

Many iterative algorithms in RL solve for the fixed point of Bellman
equations and can be viewed as special cases of SA~\citep{Sutton18-RL-book,Bertsekas19-RL-book}.
The TD algorithms \citep{Sutton1988-td} with linear function approximation,
including TD(0) and more generally TD($\lambda$), are LSA procedures.
Our results can be specialized to TD learning and hence are related
to existing works in this line.

Classical results on TD Learning, similarly to those on SA, focus
on asymptotic convergence under diminishing stepsizes \citep{Sutton1988-td,Dayan92-tdlambda,Dayan1994-tdlambda,Tsitsiklis97-td_paper}.
More recent works provide finite-time results. The work \citep{Dalal18-lineartd0}
is among the first to provide MSE and concentration bounds for linear
TD learning in its original form, and their analysis assumes diminishing
stepsize and i.i.d.\ noise. The work \citep{Bhandari21-linear-td}
presents finite-time analysis of TD(0) under both i.i.d.\ and Markovian
noise, with both diminishing and constant stepsizes. Their results
require a projection step in TD(0) to ensure boundedness. The
Lyapunov analysis in \citep{srikant-ying19-finite-LSA} on LSA, when
specialized to TD(0), removes this projection step and proves upper
bounds on the MSE. The recent work in \citep{chen21-offpolicy,chen21-finite-td}
uses Lyapunov theory to study the tabular TD and obtains finite sample
convergence guarantees. The paper \citep{Khamaru21-td-instance} provides
sharp, instance-dependent $\ell_{\infty}$ error bounds for the tabular
TD algorithm with i.i.d.\ data.

We mention in passing that Q-learning~\citep{Watkins92-QLearning},
another standard algorithm in RL, can be viewed as a nonlinear SA
procedure with contractive mappings. Q-learning has been studied in
both classical and recent work, e.g., \citep{Tsitsiklis1994-QLearn,Szepesvari97-QLearn-Rates,EvenDar04-QLearn-rates}
and \citep{chen21-finite-td,Chandak22-QLearn}. Generalizing our results
to nonlinear SA and Q-learning is an interesting future direction.

\section{Set-up and Assumptions}
\label{sec:setup}

We formally set up the problem and introduce the assumptions and notations used
in the sequel.

\subsection{Problem Set-up}
\label{sec:problem_setup} 

Let $(x_{k})_{k\geq0}$ be a Markov chain on a general state space
$\cX$. Consider the following linear stochastic approximation iteration:
\begin{equation}
\theta_{k+1}^{(\alpha)}=\theta_{k}^{(\alpha)}+\alpha\left(A(x_{k})\theta_{k}^{(\alpha)}+b(x_{k})\right),\quad k=0,1,\ldots,\label{eq:update-rule}
\end{equation}
where $A:\cX\to\R^{d\times d}$ and $b:\cX\to\R^{d}$ are deterministic
functions, and $\alpha>0$ is a constant stepsize. In what follows,
we omit the superscript in $\theta_{k}^{(\alpha)}$ when the dependence
on $\alpha$ is clear from the context. The initial distribution of
$\theta_{0}$ may depend on $x_{0}$ but is independent of $(x_{k})_{k\ge1}$
given $x_{0}$.  

Let $\pi$ be the stationary distribution of the Markov chain $(x_{k})$
and define the shorthands 
\begin{equation}
\BarA:=\E_{\pi}[A(x)]\in\real^{d\times d}\quad\text{and}\quad\bar{b}:=\E_{\pi}[b(x)]\in\real^{d},
\label{eq:bar-limit}
\end{equation}
where $\E_{\pi}[\cdot]$ denotes the expectation with respect to $x\sim\pi$.
The iterative procedure~\eqref{eq:update-rule} computes an approximation
of the target vector $\theta^{*}$, defined as the solution to the
steady-state equation 
\begin{equation}
\BarA\theta+\bar{b}=0.
\label{eq:steady-state-equation}
\end{equation}
Our general goal is to characterize the relationship between the iterate
$\theta_{k}$ and the target solution $\theta^{*}$.

The stochastic process of the LSA iterates, $(\theta_{k})_{k\ge0}$,
is not a Markov chain itself: given $\theta_{k}$, the random variables
$\theta_{k+1}$ and $\theta_{k-1}$ are correlated through the underlying
Markov process $(x_{0},x_{1},\ldots,x_{k})$. However, direct calculation
verifies that the joint process $(x_{k},\theta_{k})_{k\ge0}$ is a
Markov chain on the state space $\cX\times\R^{d}$. This chain is
time-homogeneous as the stepsize $\alpha$ is independent of $k$. 

\begin{remark} 
\label{rmk:sgd-remark} 
    The LSA iteration~\eqref{eq:update-rule}
    covers as a special case the SGD algorithm applied to minimizing a
    quadratic function $f(\theta)=-\frac{1}{2}\theta^{\top}\BarA\theta-\bar{b}\theta=\E_{\pi}\big[-\frac{1}{2}\theta^{\top}A(x)\theta-b(x)\theta\big],$
    where $-\BarA$ is the symmetric expected Hessian matrix. Note that
    LSA is more general than SGD for quadratic minimization, as $A(x)$
    need not be symmetric, in which case $\theta\mapsto-(\BarA\theta+\Bar b)$
    is not a gradient field.  
\end{remark}

\begin{remark}
\label{rmk:sparsity}
    One primary advantage of LSA (and SGD) is the low computational cost of the iteration \eqref{eq:update-rule}, particularly in forming the matrix-vector product $A(x_k)\theta_k$. For example, in TD(0) the matrix $A(x_k)$ is rank-one (and in addition 2-sparse in the tabular case); see Section~\ref{sec:td-section}. In comparison, the expected matrix $\BarA$, as well as its running empirical estimate $\hat{A}_k:=\frac{1}{k}\sum_{t=0}^{k-1}A(x_t)$, are typically dense and full-rank. 
\end{remark}

\subsubsection{General State Space Markov Chains}

\label{sec:hilbert-notation} 

As the underlying Markov chain $(x_{k})_{k\geq0}$ is on a general
state space $\cX$, we review some relevant concepts and notations.
We assume throughout the paper that the $\cX$ is Borel, i.e., the
$\sigma$-algebra $\cB(\cX)$ is Borel. Let $P$ denote the transition
kernel of the chain. A distribution $\pi$ is the stationary/invariant
distribution if $\int_{\cX}\pi(\dd x)P(x,B)=\pi(B),\forall B\in\cB(\cX)$.
For a function $f:\cX\to\R^{d}$, we write $\pi(f)=\int_{\cX}\pi(\dd x)f(x)$.
Define the $\pi$-weighted inner product $\dotp{f,g}_{\ltwopi}=\int_{\cX}\pi(\dd x)f^{\top}(x)g(x)$
and the induced norm $\|f\|_{\ltwopi}=(\dotp{f,f}_{\ltwopi})^{1/2}$.
Let $\ltwopi=\{f:\|f\|_{\ltwopi}<\infty\}$ denote the corresponding Hilbert space of $\real^{d}$-valued, square-integrable, and measurable
functions on $\cX$.\footnote{\label{fn:quotient_space}As customary, two functions $f$ and $g$
are identified as the same element in $\ltwopi$ if they are equal
$\pi$-almost everywhere, i.e, $\|f-g\|_{\ltwopi}=0$. } For an operator $T:\ltwopi\to\ltwopi$, its operator norm is defined
as $\|T\|_{\ltwopi}=\sup_{\|f\|_{\ltwopi}=1}\|Tf\|_{\ltwopi}$. The
transition kernel $P$ is a bounded linear operator on $\ltwopi$
with norm $\|P\|_{\ltwopi}=1$. Also, define the kernel/operator $\Pi=1\otimes\pi$
by $\Pi(x,\cdot)=\pi$; equivalently $(\Pi f)(x)=\pi f,\forall x\in\cX$.

When the state space $\cX$ is Borel, there exists a kernel $P^{\ast}$
as a \emph{regular} conditional probability that satisfies $\int_{A}\pi(\dd x)P(x,B)=\int_{B}\pi(\dd y)P^{\ast}(y,A),$
$\forall A,B\in\cB(\cX)$, and $P^{\ast}$ defines the probability
law for the time-reversed chain of $(x_{k})_{k\ge0}$ \citep[Chapter 21.4, Theorem 19]{fristedt1997}.
Moreover, $P^{\ast}$ is the adjoint operator to $P$ in $\ltwopi$,
i.e., $\dotp{f,Pg}_{\ltwopi}=\dotp{P^{\ast}f,g}_{\ltwopi}$. The Markov
chain $(x_{k})_{k\ge0}$ is called \emph{reversible} with respect
to $\pi$ if $P$ is self-adjoint, i.e., $\Padj=P$.

Define the spectrum of $P$ as 
\[
\spec(P)=\Big\{\lambda\in\C\backslash0:(\lambda I-P)^{-1}\text{ does not exist as a bounded linear operator on }L^{2}(\pi)\Big\}.
\]
The set $\spec(P)$ contains the eigenvalues of $P$. The absolute
spectral gap of $P$ is defined as 
\begin{equation}
\absgap(P)=\begin{cases}
1-\sup\{|\lambda|:\lambda\in\spec(P),\lambda\neq1\} & \text{if eigenvalue \ensuremath{1} has multiplicity \ensuremath{1}},\\
0 & \text{otherwise}.
\end{cases}\label{eq:absgap-def}
\end{equation}
When $P$ has a unique invariant distribution $\pi$, the eigenvalue
$1$ has multiplicity $1$ \citep[Proposition 22.1.2]{Douc2018} and
hence $\absgap(P)>0$. When $P$ is reversible with respect to $\pi$,
$\spec(P)$ lies on the real line and we have the expression $\absgap(P)=1-\|P-\Pi\|_{\ltwopi}.$

We remark that all definitions above coincide with the familiar ones
when the state space $\cX$ is finite. For example, we have $\absgap(P)=1-|\lambda_{2}(P)|$,
where $|\lambda_{2}(P)|$ is the second largest eigenvalue modulus
(SLEM) of the transition probability matrix $P$. 

\subsection{Assumptions}

\label{sec:assumption-section}

We now state the assumptions needed for our main theorems.

\begin{assumption} \label{assumption:uniform-ergodic} $(x_{k})_{k\geq0}$
is a uniformly ergodic Markov chain on a Borel state space $(\cX,\cB(\cX))$
with transition kernel $P$ and a unique stationary distribution $\pi$.
The initial state $x_{0}$ is drawn from~$\pi$. \end{assumption}

Recall that a Markov chain is called uniformly ergodic if $\sup_{x\in\cX}\|P^{n}(x,\cdot)-\pi\|_{\TV}\to0$
as $n\to\infty$, where $\|\cdot\|_{\TV}$ is the total variation
norm. A uniformly ergodic chain satisfies the following seemingly
stronger condition \citep[Theorem 16.0.2]{Meyn12_book}: there exist
constants $r\in[0,1)$ and $R>0$ such that 
\begin{equation}
\sup_{x\in\cX}\big\| P^{k}(x,\cdot)-\pi\big\|_{\TV}\leq Rr^{k};\label{eq:geom-mix-rate}
\end{equation}
that is, the chain converges to $\pi$ from any initial $x_{0}$ at
a uniform geometric rate. All irreducible, aperiodic, and finite state
space Markov chains are uniformly ergodic. Uniform ergodicity also
allows for the chain to have transient states in addition to a single
recurrent class. The uniform ergodicity assumption is used in the
prior work~\citep{Bhandari21-linear-td,dong22-sgd,durmus22-LSA}.
It is possible to further relax this assumption (e.g., as in~\citep{srikant-ying19-finite-LSA,Mou21-optimal-linearSA});
we do not pursue this direction in this paper.

The additional stationarity assumption $x_{0}\sim\pi$, which has
been used in a number of previous papers~\citep{Bhandari21-linear-td,Mou21-optimal-linearSA},
is imposed merely to simplify several mathematical expressions. This assumption is not essential; it can be removed by applying our analysis
to the joint Markov chain $(x_{k},\theta_{k})_{k\geq0}$ after the
marginal $(x_{k})_{k\geq0}$ has approximately mixed, which happens
quickly thanks to the geometric mixing property~\eqref{eq:geom-mix-rate}.

Our following two assumptions are similar to those in the work~\citep{srikant-ying19-finite-LSA,durmus22-LSA}.
Let $\|\cdot\|$ denote the Euclidean norm for vectors and the spectral
norm (i.e., the largest singular value) for matrices.

\begin{assumption} \label{assumption:bounded} It holds that 
\[
\Amax:=\sup_{x\in\cX}\|A(x)\|\leq1\quad\text{and}\quad\bmax:=\sup_{x\in\cX}\|b(x)\|<\infty.
\]
\end{assumption}

Assumption~\ref{assumption:bounded} implies the bounds $\|\BarA\|\leq\Amax\leq1$
and $\|\Bar b\|\leq\bmax$. The constant $1$ in the assumption is chosen for convenience and can be relaxed to other positive constants
by rescaling the LSA update (\ref{eq:update-rule}).

Playing an important role in our analysis is the mixing time of the
Markov chain $(x_{k})_{k\ge0}$ with respect to the functions $A(\cdot)$
and $b(\cdot)$, defined as follows. 

\begin{definition} For $\epsilon\in(0,1)$, the $\epsilon$-mixing
time of $(x_{k})_{k\geq0}$ with respect to $(A,b)$ is defined to
be the smallest number $\tau_{\epsilon}\geq1$ satisfying 
\begin{align}
\big\|\E[A(x_{k})\mid x_{0}=x]-\BarA\big\| & \leq\epsilon\cdot\Amax,\quad\forall x\in\cX,\,\forall k\geq\tau_{\epsilon},\label{eq:a-mix-time}\\
\big\|\E[b(x_{k})\mid x_{0}=x]-\bar{b}\big\| & \leq\epsilon\cdot\bmax,\quad\forall x\in\cX,\,\forall k\geq\tau_{\epsilon}.\label{eq:b-mix-time}
\end{align}
\end{definition}

Under Assumptions~\ref{assumption:uniform-ergodic} and~\ref{assumption:bounded},
the $\epsilon$-mixing time satisfies $\tau_{\epsilon}\leq K\log\frac{1}{\epsilon}$
for all $\epsilon\in(0,1)$, where the number $K\geq1$ is independent
of $\epsilon$. This fact can be seen in the inequality 
\[
\big\|\E[A(x_{k})\mid x_{0}=x]-\BarA\big\|\leq\Amax\left(2\sup_{x\in\cX}\|P^{k}(x,\cdot)-\pi\|_{\TV}\right)\leq2\Amax Rr^{k},
\]
where the last step follows from equation~\eqref{eq:geom-mix-rate};
a similar argument applies to $b(x_{k})$.

In the sequel, unless specified otherwise, we always choose $\epsilon=\alpha$
and write $\tau\equiv\tau_{\alpha}$.

\begin{assumption} \label{assumption:hurwitz} The matrix $\BarA$
is Hurwitz, i.e., all eigenvalues have strictly negative real parts.
\end{assumption}

Assumption~\ref{assumption:hurwitz} is standard in the study of
the stability of dynamical systems. Under this assumption, it is well
known that there exists a symmetric positive definite matrix $\Gamma$
satisfying the Lyapunov equation $\BarA^{\top}\Gamma+\Gamma\BarA=-I,$
where $I$ is the $d$-by-$d$ identity matrix. Denote by $\gammin$
and $\gammax$ the minimum and maximum eigenvalues of the matrix $\Gamma$
respectively. We have $\gammin>0$ and 
\begin{equation}
\gammin\|v\|^{2}\leq v^{\top}\Gamma v\leq\gammax\|v\|^{2},\quad\forall v\in\R^{d}.\label{eq:gamma-property}
\end{equation}
 Moreover, the matrix $\BarA$ is invertible and with smallest singular
value $\smin{\BarA}>0$, and the target solution $\theta^{*}$ to
steady-state equation (\ref{eq:steady-state-equation}) is unique.

\subsection{Notations}

\label{sec:notations}

In general, we adopt the notational convention that upper-case letters
(e.g., $A$) denote matrices and lowercase letters (e.g., $b$) denote
vectors or scalars. The lowercase letter $c$ and its derivatives
$c',c_{0}$, etc.\ denote universal numerical constants, whose values
may change from line to line. Recall that $\|\cdot\|$ denotes the Euclidean norm for vectors and the spectral norm for matrices, and
$\left\Vert \cdot\right\Vert _{\ltwopi}$ denotes the norm on $\ltwopi$
and the induced operator norm.

We generally use $B\equiv B(A,b,P)$ and its derivatives to denote
quantities (vectors or matrices) that depend only on $A,b$, and $P$,
but independent of the stepsize $\alpha$ and the iteration index
$k$. As we are primarily interested in how various quantities scale
with $\alpha$ and $k$, we make use of the following big-O notation:
for a quantity $h$ that may depend on all problem parameters, we write $h=\bigO(f(\alpha,k))$
if it holds that $\|h\|\leq B(A,b,P)\cdot f(\alpha,k)$ for some $B(A,b,P)$
independent of $\alpha$ and $k$, where $f$ is a function of $\alpha$
and $k$. For example, $h=\bigO(\alpha/k)$ means $\|h\|\le B(A,b,P)\cdot\alpha/k$.
In addition, when presenting results on the relationship to mixing
time, we use $C\equiv C(A,b,\pi)$ to denote a quantity that may depend
on $\pi$ but not other properties of $P$ (e.g., its mixing time
and spectral gap). 

We use $\law(z)$ to denote the law of a random variable $z$, and
$\var(z)$ its covariance matrix. We write $z_{1}\indep z_{2}\mid z_{3}$
if the random variables $z_{1}$ and $z_{2}$ are conditionally independent
given $z_{3}$. Let $\mathcal{P}_{2}(\R^{d})$ be the space of square-integrable
distributions on $\R^{d}$. Let $\cP_{2}(\cX\times\R^{d})$ be the
set of distributions $\Bar{\nu}$ on $\cX\times\R^{d}$ with the property
that the marginal of $\Bar{\nu}$ on $\R^{d}$ is square-integrable. 

\section{Main Results}

\label{sec:main}

In this section, we present our main results. In Section~\ref{sec:converge-to-limit},
we study the convergence of the LSA iterates $(x_{k},\theta_{k})_{k\ge0}$
to a unique limiting distribution. In Section~\ref{sec:bias-expansion},
we characterize the above limit and its relationship with the stepsize
and mixing time. We explore the implications of these results for
PR tail averaging and RR extrapolation in Section~\ref{sec:implications}.
We apply our results to TD(0) Learning in Section~\ref{sec:td-section}
and to SGD in Section~\ref{sec:sgd-implication}.

\subsection{Convergence to Limit Distribution}

\label{sec:converge-to-limit}

Our convergence results are based on the Wasserstein distance \citep{Villani08-ot_book}.
The Wasserstein distance of order 2 between two probability measures
$\mu$ and $\nu$ in $\mathcal{P}_{2}(\R^{d})$ is defined as
\begin{align*}
W_{2}(\mu,\nu) & =\inf_{\xi\in\Pi(\mu,\nu)}\bigg(\int_{\R^{d}}\|u-v\|^{2}\,\ddup\xi(u,v)\bigg)^{1/2}\\
 & =\inf\Big\{\left(\E[\|\theta-\theta'\|^{2}]\right)^{1/2}:\;\law(\theta)=\mu,\;\law(\theta')=\nu\Big\},
\end{align*}
where $\Pi(\mu,\nu)$ denotes the set of all couplings between $\mu$
and $\nu$, i.e., the collection of joint distributions in $\mathcal{P}_{2}(\R^{d}\times\R^{d})$
with marginal distributions $\mu$ and $\nu$. To study the joint
process $(x_{k},\theta_{k})_{k\ge0}$, we extend the above distance
to the space $\cP_{2}(\cX\times\R^{d})$. Under the uniform ergodicity
Assumption~\ref{assumption:uniform-ergodic}, it is natural to metricize
the space $\cX$ with the discrete metric $d_{0}$, where $d_{0}(x,x'):=\indic\{x\neq x'\}$.
We then define the following metric $\Bar d$ on the product space
$\cX\times\R^{d}$: 
\[
\Bar d\big((x,\theta),(x',\theta')\big):=\sqrt{d_{0}(x,x')+\|\theta-\theta'\|^{2}}.
\]
For a pair of distributions $\Bar{\mu}$ and $\Bar{\nu}$ in $\cP_{2}(\cX\times\R^{d}),$
we consider the Wasserstein-2 distance w.r.t.\ $\Bar d$: 
\begin{equation}
\begin{aligned}\Bar W_{2}(\Bar{\mu},\Bar{\nu}) & =\inf\bigg\{\Big(\E\big[d_{0}(x,x')+\|\theta-\theta'\|^{2}\big]\Big)^{1/2}:\;\law\big((x,\theta)\big)=\Bar{\mu},\;\law\big((x',\theta')\big)=\Bar{\nu}\bigg\}.\end{aligned}
\label{eq:w2-definition-extended}
\end{equation}
It follows immediately from definition that $W_{2}\big(\law(\theta),\law(\theta')\big)\le\Bar W_{2}\big(\law(x,\theta),\law(x',\theta')\big).$
Note that convergence in $W_{2}$ or $\Bar W_{2}$ implies the usual
convergence in distribution plus the convergence of the first two
moments \citep[Definition 6.8 and Theorem 6.9]{Villani08-ot_book}.

Our first theorem establishes the convergence of the Markov chain
$(x_{k},\theta_{k})_{k\geq0}$ in $\Bar W_{2}$ to a unique stationary
distribution and characterizes the geometric convergence rate.

\begin{thm} \label{thm:thm-converge} Suppose that Assumptions~\ref{assumption:uniform-ergodic},~\ref{assumption:bounded}
and~\ref{assumption:hurwitz} hold, and the stepsize $\alpha$ satisfies
\begin{equation}
\alpha\tau_{\alpha}<\frac{0.05}{95\gammax}.\label{eq:alpha-constraint}
\end{equation}

\begin{enumerate}
\item Under all initial distributions of $\theta_{0}$, the sequence of
random variables $(x_{k},\theta_{k})_{k\geq0}$ converges in $\Bar{W_{2}}$
to a unique limit $(x_{\infty},\theta_{\infty})\sim\Bar{\mu}$. Moreover,
it holds that 
\begin{equation}
\tr(\var(\theta_{\infty}))\le\alpha\tau_{\alpha}\kappa,\qquad\text{where }\kappa:=720\cdot\frac{\gammax^{2}}{\gammin}\cdot s_{\min}^{-2}(\BarA)\bmax^{2}.\label{eq:kappa-def}
\end{equation}
\item $\Bar{\mu}$ is the unique stationary distribution of the Markov chain
$(x_{k},\theta_{k})_{k\geq0}$. 
\item Let $\mu:=\law(\theta_{\infty})$ be the second marginal of $\Bar{\mu}$.
For all $k\ge\tau_{\alpha}$, it holds that 
\begin{equation}
W_{2}^{2}\Big(\law(\theta_{k}),\mu\Big)\le\bar{W}_{2}^{2}(\law(x_{k},\theta_{k}),\Bar{\mu})\leq20\,\frac{\gammax}{\gammin}\Big(\E[\|\theta_{0}-\E[\theta_{\infty}]\|^{2}]+\tr(\var(\theta_{\infty}))\Big)\cdot\left(1-\frac{0.9\alpha}{\gammax}\right)^{k}.\label{eq:w2-thetak-to-mu}
\end{equation}
\end{enumerate}
\end{thm}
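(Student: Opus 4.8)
\section*{Proof proposal}

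The plan is to obtain all three parts from a single coupling-and-contraction argument, supplemented by one separate Lyapunov drift computation for the variance bound. Since Assumption~\ref{assumption:uniform-ergodic} fixes $x_0\sim\pi$, the chain started from an arbitrary $\theta_0$ and any stationary copy share the same $x$-marginal, so I would couple the two copies \emph{synchronously in $x$}: draw a single $x_0\sim\pi$ and feed the common realization $(x_k)_{k\ge0}$ into both iterations, forcing $x_k=x_k'$ for all $k$ so that $d_0(x_k,x_k')\equiv0$. Under this coupling $\bar W_2^2\big(\law(x_k,\theta_k),\law(x_k',\theta_k')\big)\le\E\|\delta_k\|^2$ with $\delta_k:=\theta_k-\theta_k'$, and~\eqref{eq:update-rule} yields the clean noiseless recursion $\delta_{k+1}=(I+\alpha A(x_k))\delta_k$. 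Everything then reduces to showing $\E\|\delta_k\|^2$ contracts geometrically.

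To quantify the contraction I would track $W_k:=\E[\delta_k^\top\Gamma\delta_k]$, with $\Gamma\succ0$ solving $\BarA^\top\Gamma+\Gamma\BarA=-I$. Expanding one step,
\[
W_{k+1}=W_k+\alpha\,\E\big[\delta_k^\top(A(x_k)^\top\Gamma+\Gamma A(x_k))\delta_k\big]+\alpha^2\,\E\big[\delta_k^\top A(x_k)^\top\Gamma A(x_k)\delta_k\big].
\]
The quadratic term is $\bigO(\alpha^2)\,\E\|\delta_k\|^2$ since $\Amax\le1$ and $\|\Gamma\|=\gammax$. For the cross term I substitute the Lyapunov equation, writing $A(x_k)^\top\Gamma+\Gamma A(x_k)=-I+\big[(A(x_k)-\BarA)^\top\Gamma+\Gamma(A(x_k)-\BarA)\big]$; the $-I$ supplies the desired drift $-\alpha\,\E\|\delta_k\|^2$, and the fluctuation must be controlled. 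This is where the mixing time enters: although $x_k\sim\pi$ gives $\E[A(x_k)]=\BarA$, the iterate $\delta_k$ is correlated with $x_k$ through the shared history. I would break this correlation by looking back $\tau=\tau_\alpha$ steps, writing $\delta_k=\delta_{k-\tau}+(\delta_k-\delta_{k-\tau})$. Conditioning the $\delta_{k-\tau}$ part on $\cF_{k-\tau}=\sigma(x_0,\dots,x_{k-\tau})$ and invoking~\eqref{eq:a-mix-time} with $\epsilon=\alpha$ gives $\|\E[A(x_k)\mid\cF_{k-\tau}]-\BarA\|\le\alpha\Amax$, so this piece is $\bigO(\alpha)\,\E\|\delta_{k-\tau}\|^2$; the discrepancy is handled by noting each factor $I+\alpha A(x_t)$ is $I+\bigO(\alpha)$, so $\|\delta_k-\delta_{k-\tau}\|\le\bigO(\alpha\tau)\|\delta_{k-\tau}\|$, and $\E\|\delta_{k-\tau}\|^2$ is comparable to $\E\|\delta_k\|^2$ up to a $1+\bigO(\alpha\tau)$ factor.

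Assembling these estimates yields $W_{k+1}\le\big(1-\alpha/\gammax+\bigO(\alpha^2\tau/\gammax)\big)W_k$ for $k\ge\tau$. Under~\eqref{eq:alpha-constraint} the higher-order term is dominated and the per-step factor is at most $1-0.9\alpha/\gammax$, so $W_k\le(1-0.9\alpha/\gammax)^{k}W_\tau$; converting via $\gammin\|v\|^2\le v^\top\Gamma v\le\gammax\|v\|^2$, absorbing the bounded growth over the first $\tau$ steps into the prefactor $20\gammax/\gammin$, and bounding $\E\|\delta_0\|^2\le 2\big(\E\|\theta_0-\E\theta_\infty\|^2+\tr(\var(\theta_\infty))\big)$ gives part~3. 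The same contraction makes the (suitably $\tau$-step-blocked) Markov operator a strict contraction on the complete space $(\cP_2(\cX\times\R^d),\bar W_2)$ restricted to measures with $x$-marginal $\pi$; Banach's fixed-point theorem then supplies a unique fixed point $\bar\mu$ and convergence to it from every $\theta_0$, giving the convergence claim and part~2 (any stationary measure has $x$-marginal $\pi$ by uniform ergodicity, so uniqueness is not lost). Finally, part~1's trace bound comes from a parallel drift on the stationary chain: with $V_k:=\E[(\theta_k-\theta^*)^\top\Gamma(\theta_k-\theta^*)]$ and $A(x_k)\theta^*+b(x_k)=(A(x_k)-\BarA)\theta^*+(b(x_k)-\bar b)$, the same decorrelation gives $V_{k+1}\le(1-c\alpha)V_k+\bigO(\alpha^2\tau\,\bmax^2)$, and evaluating at stationarity yields $\tr(\var(\theta_\infty))\le\E\|\theta_\infty-\theta^*\|^2\le V_\infty/\gammin\le\alpha\tau\kappa$.

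The main obstacle is the fluctuation term: everything hinges on decorrelating $A(x_k)$ from $\delta_k$ (and the noise from $\theta_k-\theta^*$ in part~1). The look-back-by-$\tau$ device is what converts the Markovian dependence into the clean factor of $\alpha$ furnished by~\eqref{eq:a-mix-time}, but it forces one to simultaneously bound the drift of $\delta$ across the window and to track constants tightly enough to land below the explicit thresholds in~\eqref{eq:alpha-constraint} and~\eqref{eq:w2-thetak-to-mu}. Making the bookkeeping sharp rather than merely order-correct is the delicate step, and is exactly where the refinement over the MSE analysis of~\citep{srikant-ying19-finite-LSA} is required.
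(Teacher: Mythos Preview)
Your synchronous-coupling plus Lyapunov-drift computation for the difference process $\delta_k$ is exactly the paper's pilot result (Proposition~\ref{prop:pilot} specialized to $b\equiv0$), and your drift analysis for $\theta_k-\theta^*$ yielding the trace bound is likewise what they do. So the variance estimate in Part~1 and the rate in Part~3 match the paper.

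The gap is in how you obtain existence and invariance of $\bar\mu$. You assert that the blocked kernel is a $\bar W_2$-contraction on $\{\bar\nu:\text{$x$-marginal}=\pi\}$ and invoke Banach. But your contraction controls only the \emph{synchronous}-coupling cost $\E\|\delta_k\|^2$, and this does not upper-bound the initial $\bar W_2$: even when both $x$-marginals equal $\pi$, the optimal $\bar W_2$-coupling of $(\bar\nu_1,\bar\nu_2)$ may have $\Prob(x_0\ne x_0')>0$ (it can be cheaper to pay a unit of $d_0$-cost than a large $\|\theta-\theta'\|^2$-cost), so you cannot initialize your synchronous argument from it and close the contraction inequality in $\bar W_2$. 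The paper flags exactly this obstruction and proceeds differently: it shows $\bigl(\law(x_k,\theta_k)\bigr)_{k\ge0}$ is $\bar W_2$-Cauchy by constructing, via the reversed kernel $P^*$, an initialization $\theta_0^{\sampleTwo}$ with $(x_k,\theta_k^{\sampleTwo})\overset{\textup{d}}{=}(x_{k+1},\theta_{k+1}^{\sampleOne})$, giving existence of the limit; invariance is then proved separately via a near-non-expansive inequality (Lemma~\ref{lem:near-non-expansive}) under a \emph{coalescing} coupling of the two $x$-chains, not a synchronous one.

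Your Banach route can be repaired, but not in $\bar W_2$: work instead in the conditional Wasserstein metric $\tilde W_2^2(\bar\nu_1,\bar\nu_2):=\int\pi(\dd x)\,W_2^2(\nu_1^x,\nu_2^x)$ on measures with $x$-marginal $\pi$, for which the synchronous coupling is admissible by construction and your argument does yield a genuine $\tilde W_2$-contraction of a power of $Q$. Completeness of this space together with $\bar W_2\le\tilde W_2$ then recovers all three parts. This is a legitimate alternative to the paper's Cauchy-plus-near-non-expansive argument, arguably more economical; but as written the metric is wrong.
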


We outline the proof of Theorem~\ref{thm:thm-converge} in Section~\ref{sec:thm-converge-proof-sketch},
deferring the complete proof to Appendix~\ref{sec:proof_thm_conv_limit_dist}.
This convergence result is valid under the stepsize condition~\eqref{eq:alpha-constraint},
stated as an upper bound on the product $\alpha\tau_{\alpha}$. Since
$\tau_{\alpha}\leq K\log\frac{1}{\alpha}$ for some constant $K\geq1$
independent of $\alpha$ (see Section~\ref{sec:assumption-section}),
the condition~\eqref{eq:alpha-constraint} is satisfied for sufficiently
small $\alpha$. Note that the limiting distribution $\Bar{\mu}$
is in general not a product distribution of its marginals $\pi$ and
$\mu$.

We remark on the techniques for proving Theorem~\ref{thm:thm-converge}.
To establish the convergence of a Markov chain to a unique stationary
distribution, a standard approach is to show that the chain is positive
recurrent by verifying irreducibility and Lyapunov drift conditions.
This approach has been developed for Markov chains on general state
spaces~\citep{Meyn12_book} and is adopted in the prior work \citep{Yu21-stan-SGD,Meyn21_ode,meyn22-meanshift}.
However, it is unclear how to implement this approach for the general
LSA iteration~\eqref{eq:update-rule}. For example, suppose that
the stepsize $\alpha$ and the functions $A$ and $b$ take on rational
values. If the initial $\theta_{0}$ is rational, then $\theta_{k}$
only takes rational values for all $k\ge0$. If $\theta_{0}$ is irrational,
then $\theta_{k}$ remains irrational. As such, it seems challenging
to certify $\psi$-irreducibility and recurrence for the chain $(x_{k},\theta_{k})_{k\ge0}$.
Instead, we prove weak convergence through the convergence in the
Wasserstein distance, which can be bounded via coupling arguments.
The Wasserstein distance has also been used in the prior work \citep{Dieuleveut20-bach-SGD,guy2020,durmus2021-LSA}
to study SGD and LSA in the i.i.d.\ data setting. Their analysis
relies heavily upon the i.i.d.\ assumption and the one-step contraction property
$W_{2}^{2}(\law(\theta_{k+1}),\mu)<W_{2}^{2}(\law(\theta_{k}),\mu)$.
Establishing this property in our Markovian setting is difficult if
not impossible---we elaborate in Section~\ref{sec:thm-converge-proof-sketch}.
Our proof makes use of a different and more delicate coupling argument.\\

As a corollary of Theorem~\ref{thm:thm-converge}, we obtain geometric
convergence for the first two moments of $\theta_{k}$. 

\begin{cor} \label{cor:non-asymptotic-bounds} Under the setting
of Theorem~\ref{thm:thm-converge}, for all $k\geq\tau_{\alpha}$
we have 
\begin{equation}
\label{eq:first-moment-geometric}
\left\Vert \E[\theta_{k}]-\E[\theta_{\infty}]\right\Vert   \leq C\cdot\left(1-\frac{0.9\alpha}{\gammax}\right)^{k/2}\quad\text{and}\quad
\left\Vert \E\left[\theta_{k}\theta_{k}^{\top}\right]-\E\left[\theta_{\infty}\theta_{\infty}^{\top}\right]\right\Vert   \leq C'\cdot\left(1-\frac{0.9\alpha}{\gammax}\right)^{k/2},
\end{equation}
for some $C\equiv C(A,b,\pi)$ and $C'\equiv C'(A,b,\pi)$ that are
independent of $\alpha$ and $k$. \end{cor}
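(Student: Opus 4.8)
The plan is to derive Corollary~\ref{cor:non-asymptotic-bounds} directly from the $\Bar W_2$ convergence bound~\eqref{eq:w2-thetak-to-mu} in Theorem~\ref{thm:thm-converge}, using the elementary fact that convergence in $W_2$ controls the first two moments. Let me sketch both estimates.

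\textbf{First moment.} I would introduce an optimal coupling $(x_k,\theta_k)$ and $(x_\infty,\theta_\infty)$ achieving (or nearly achieving) the infimum in $\Bar W_2$. By Jensen's inequality,
\[
\big\|\E[\theta_k]-\E[\theta_\infty]\big\| = \big\|\E[\theta_k-\theta_\infty]\big\| \le \E\big[\|\theta_k-\theta_\infty\|\big] \le \Big(\E\big[\|\theta_k-\theta_\infty\|^2\big]\Big)^{1/2},
\]
and since $\E[d_0(x_k,x_\infty)+\|\theta_k-\theta_\infty\|^2]\ge \E[\|\theta_k-\theta_\infty\|^2]$, the last quantity is bounded by $\Bar W_2(\law(x_k,\theta_k),\Bar\mu)$. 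Invoking~\eqref{eq:w2-thetak-to-mu} and taking the square root of the geometric factor yields the rate $(1-0.9\alpha/\gammax)^{k/2}$ with constant $C = \big(20\tfrac{\gammax}{\gammin}(\E\|\theta_0-\E\theta_\infty\|^2+\tr\var(\theta_\infty))\big)^{1/2}$. I would then confirm this constant is of the form $C(A,b,\pi)$: the prefactor depends only on $\gammin,\gammax,\theta_0$, while $\tr\var(\theta_\infty)$ is bounded via~\eqref{eq:kappa-def} by $\alpha\tau_\alpha\kappa=\bigO(1)$, so the whole constant is independent of $\alpha$ and $k$ (absorbing the bounded $\alpha\tau_\alpha\kappa$ term).

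\textbf{Second moment.} This is the more delicate estimate and I expect it to be the main obstacle. The difficulty is that $\|\E[\theta_k\theta_k^\top]-\E[\theta_\infty\theta_\infty^\top]\|$ is a \emph{second}-order functional, so a naive bound by $W_2$ loses a factor: writing $\theta_k\theta_k^\top-\theta_\infty\theta_\infty^\top = (\theta_k-\theta_\infty)\theta_k^\top + \theta_\infty(\theta_k-\theta_\infty)^\top$ under the coupling and applying the triangle and Cauchy--Schwarz inequalities gives
\[
\big\|\E[\theta_k\theta_k^\top-\theta_\infty\theta_\infty^\top]\big\|
\le \Big(\E\|\theta_k-\theta_\infty\|^2\Big)^{1/2}\Big(\E\|\theta_k\|^2\Big)^{1/2}
+\Big(\E\|\theta_\infty\|^2\Big)^{1/2}\Big(\E\|\theta_k-\theta_\infty\|^2\Big)^{1/2}.
\]
The first factor in each product decays like $\Bar W_2\sim(1-0.9\alpha/\gammax)^{k/2}$, so I only need to control $\E\|\theta_k\|^2$ and $\E\|\theta_\infty\|^2$ by a constant uniform in $k$ and $\alpha$. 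The bound on $\E\|\theta_\infty\|^2$ follows from $\tr\var(\theta_\infty)=\bigO(1)$ together with $\|\E\theta_\infty\|=\bigO(1)$ (itself a consequence of the bias expansion, or directly from boundedness of the limit). For $\E\|\theta_k\|^2$ I would argue the Lyapunov drift from the proof of Theorem~\ref{thm:thm-converge} already yields a uniform-in-$k$ second-moment bound under the stepsize condition~\eqref{eq:alpha-constraint}; alternatively, $\E\|\theta_k\|^2\le 2\E\|\theta_k-\theta_\infty\|^2+2\E\|\theta_\infty\|^2$ and the first term is bounded for all $k$ by the $\Bar W_2$ estimate at $k=\tau_\alpha$ plus the monotone-type decay. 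Collecting these, the $\tfrac{k}{2}$ exponent is inherited directly from the square root of~\eqref{eq:w2-thetak-to-mu}, and the constant $C'$ is again of the form $C'(A,b,\pi)$.

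The main work is thus verifying that $\E\|\theta_k\|^2$ is uniformly bounded in both $k$ and $\alpha$ — everything else is a mechanical application of Jensen, Cauchy--Schwarz, and the square-root of the geometric rate in~\eqref{eq:w2-thetak-to-mu}. I would reuse the Lyapunov/drift machinery already developed for Theorem~\ref{thm:thm-converge} to supply this uniform moment bound rather than re-deriving it, keeping the corollary's proof short.
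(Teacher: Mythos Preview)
Your approach is essentially the same as the paper's: take an optimal coupling realizing the $W_2$ bound from Theorem~\ref{thm:thm-converge}, apply Jensen for the first moment, and use a bilinear decomposition plus Cauchy--Schwarz for the second moment, with the uniform bound on $\E\|\theta_\infty\|^2$ supplied by Lemma~\ref{lem:theta_inf_bound}. The one small difference is the second-moment decomposition: the paper writes
\[
\theta_k\theta_k^\top-\theta_\infty\theta_\infty^\top=(\theta_k-\theta_\infty)(\theta_k-\theta_\infty)^\top+\theta_\infty(\theta_k-\theta_\infty)^\top+(\theta_k-\theta_\infty)\theta_\infty^\top,
\]
which after Cauchy--Schwarz involves only $\E\|\theta_k-\theta_\infty\|^2$ and $\E\|\theta_\infty\|^2$, so no separate uniform bound on $\E\|\theta_k\|^2$ is needed. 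Your decomposition $(\theta_k-\theta_\infty)\theta_k^\top+\theta_\infty(\theta_k-\theta_\infty)^\top$ is equally valid, and your proposed remedy $\E\|\theta_k\|^2\le 2\E\|\theta_k-\theta_\infty\|^2+2\E\|\theta_\infty\|^2$ closes the gap immediately under the same coupling---but the paper's version sidesteps this detour entirely.
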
 The proof of Corollary~\ref{cor:non-asymptotic-bounds}
is given in Appendix~\ref{sec:non-asymptotic-bounds-proof}.

\subsection{Expansion and Characterization of Bias}

\label{sec:bias-expansion}

Having show that $\theta_{k}^{(\alpha)}$ converges in distribution
to a limit $\theta_{\infty}^{(\alpha)}$, our next theorem characterizes
the asymptotic bias $\E[\theta_{\infty}^{(\alpha)}]-\theta^{*}$.
Some additional notations are needed to write down an explicit expression
for the bias. Recall that $\ltwopi$ is the set of $\real^{d}$-valued
square-integrable functions on $\cX$, $\Padj$ is the adjoint of
$P$ as an operator on $\ltwopi$, and $\Pi=1\otimes\pi$. Let $\dA:\ltwopi\to\ltwopi$
denote the operator given by $(\dA f)(x)=A(x)f(x)$ for each $x\in\cX,f\in\ltwopi$,
with its normalized version $\dAbar$ is given by $(\dAbar f)(x)=\BarA^{-1}A(x)f(x)$.
We define the operator $\Xi:\ltwopi\to\ltwopi$, the function $\Upsilon:\cX\to\real^{d}$
and the vectors $B^{(i)}\in\real^{d},i=1,2,\ldots$ as 
\begin{subequations}
\label{eq:bias-coef-def}
\begin{align}
\Xi & :=(I-\Padj+\Pi)^{-1}(\Padj-\Pi)\dA(I-\Pi\dAbar),\label{eq:xi-def}\\
\Upsilon & :=(I-\Padj+\Pi)^{-1}(\Padj-\Pi)(A\theta^{\ast}+b),\label{eq:upsilon-def}\\
B^{(i)} & :=-\pi\dAbar\Xi^{i-1}\Upsilon.\label{eq:Bi-def}
\end{align}
\end{subequations}
Note that $\Xi$, $\Upsilon$ and $B^{(i)}$ are independent of $\alpha$.

\begin{thm} \label{thm:bias-characterization} 
Suppose that Assumptions~\ref{assumption:uniform-ergodic},~\ref{assumption:bounded}
and~\ref{assumption:hurwitz} hold, and $\alpha$ satisfies equation~\eqref{eq:alpha-constraint}.
Then $\Xi$, $\Upsilon$ and $B^{(i)},i=1,2,\ldots$ are bounded operators
on $\ltwopi$. Moreover:
\begin{enumerate}
\item For each $m=1,2,\ldots$, we have the expansion 
\begin{align}
\E[\theta_{\infty}^{(\alpha)}] & -\theta^{\ast}=\sum_{i=1}^{m}\alpha^{i}B^{(i)}+\bigO(\alpha^{m+1}).\label{eq:bias-char}
\end{align}
\item Suppose in addition that $\alpha<1/\left\Vert \Xi\right\Vert _{\ltwopi}$.\footnote{See equation (\ref{eq:alpha_cond_inf_series}) for an explicit sufficient
condition for $\alpha<1/\left\Vert \Xi\right\Vert _{\ltwopi}$. } We have the infinite series expansion 
\begin{equation}
\E[\theta_{\infty}^{(\alpha)}]-\theta^{\ast}=\sum_{i=1}^{\infty}\alpha^{i}B^{(i)}=-\alpha\pi\dAbar\big(I-\alpha\Xi\big)^{-1}\Upsilon.\label{eq:bias-expansion}
\end{equation}
\end{enumerate}
\end{thm}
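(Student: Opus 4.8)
The plan is to reduce the computation of $\E[\theta_{\infty}]-\theta^{\ast}$ to the analysis of a single object: the \emph{conditional mean function} $g(x):=\E[\theta_{\infty}\mid x_{\infty}=x]$. Since $\cX$ is Borel, regular conditional probabilities exist, and $g\in\ltwopi$ because $\|g\|_{\ltwopi}^{2}\le\E\|\theta_{\infty}\|^{2}<\infty$ by Theorem~\ref{thm:thm-converge}. I decompose $g=\Pi g+\tilde g$, where $\Pi g$ is the constant function with value $\bar g:=\pi(g)=\E[\theta_{\infty}]$ and $\tilde g:=(I-\Pi)g$ is mean-zero. The correlation between $x_{\infty}$ and $\theta_{\infty}$---the source of the bias---is encoded entirely in the fluctuation $\tilde g$, and the whole proof amounts to solving for $\tilde g$.

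First I would derive a fixed-point equation for $g$ from stationarity. Writing the stationary one-step update $\theta_{\infty}'=(I+\alpha A(x_{\infty}))\theta_{\infty}+\alpha b(x_{\infty})$ and conditioning on the \emph{next} state $x_{\infty}'$, the time-reversal identity $\int_{A}\pi(\dd x)P(x,B)=\int_{B}\pi(\dd y)\Padj(y,A)$ replaces the forward kernel by its adjoint $\Padj$, while the conditional independence $\theta_{\infty}\indep x_{\infty}'\mid x_{\infty}$ (the future state depends on the past only through $x_{\infty}$) yields $\E[\theta_{\infty}\mid x_{\infty},x_{\infty}']=g(x_{\infty})$. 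Together these give
\begin{equation*}
(I-\Padj)g=\alpha\,\Padj(\dA g+b).
\end{equation*}
Projecting onto constants with $\Pi$ (and using $\Pi\Padj=\Pi$) recovers the scalar identity $\pi(\dA g)+\bar b=0$; since $\pi(\dA g)=\BarA\bar g+\pi(\dA\tilde g)$ and $\theta^{\ast}=-\BarA^{-1}\bar b$, this already pins down the bias in terms of $\tilde g$:
\begin{equation*}
\E[\theta_{\infty}]-\theta^{\ast}=-\pi(\dAbar\tilde g).
\end{equation*}
Projecting instead onto the mean-zero subspace with $I-\Pi$ and inverting $I-\Padj+\Pi$ (bounded because the uniformly ergodic chain has a spectral gap) gives $\tilde g=\alpha(I-\Padj+\Pi)^{-1}(\Padj-\Pi)(\dA g+b)$.

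The crux is to close this into a self-consistent equation for $\tilde g$ alone. Substituting $g=\Pi g+\tilde g$ and writing $A(x)\bar g=A(x)\theta^{\ast}+A(x)v$ with $v:=\bar g-\theta^{\ast}=-\pi(\dAbar\tilde g)$, the factor $\dA g+b$ splits as $(A\theta^{\ast}+b)+\dA(I-\Pi\dAbar)\tilde g$; the operator $I-\Pi\dAbar$ appears precisely because the correction $x\mapsto A(x)v=-A(x)\pi(\dAbar\tilde g)$ recombines with $\dA\tilde g$. This is exactly the combination defining $\Xi$ and $\Upsilon$ in~\eqref{eq:bias-coef-def}, so the equation becomes
\begin{equation*}
\tilde g=\alpha\Upsilon+\alpha\Xi\tilde g.
\end{equation*}
Under $\alpha<1/\|\Xi\|_{\ltwopi}$, the Neumann series for $(I-\alpha\Xi)^{-1}$ converges, so $\tilde g=\alpha(I-\alpha\Xi)^{-1}\Upsilon$; inserting this into $\E[\theta_{\infty}]-\theta^{\ast}=-\pi(\dAbar\tilde g)$ yields the closed form $-\alpha\pi\dAbar(I-\alpha\Xi)^{-1}\Upsilon$ of~\eqref{eq:bias-expansion}, and expanding $(I-\alpha\Xi)^{-1}=\sum_{j\ge0}\alpha^{j}\Xi^{j}$ reproduces the series $\sum_{i\ge1}\alpha^{i}B^{(i)}$ with $B^{(i)}=-\pi\dAbar\Xi^{i-1}\Upsilon$. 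For part~1 I would instead iterate $\tilde g=\alpha\Upsilon+\alpha\Xi\tilde g$ exactly $m$ times to obtain the identity $\tilde g=\sum_{j=0}^{m-1}\alpha^{j+1}\Xi^{j}\Upsilon+\alpha^{m}\Xi^{m}\tilde g$, which after applying $-\pi\dAbar$ gives $\sum_{i=1}^{m}\alpha^{i}B^{(i)}$ plus the remainder $-\alpha^{m}\pi\dAbar\Xi^{m}\tilde g$.

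The step I expect to be the main obstacle is the rigorous derivation of the fixed-point equation: manipulating conditional expectations and the time-reversal identity on a general Borel state space, and carefully justifying $\theta_{\infty}\indep x_{\infty}'\mid x_{\infty}$ at stationarity. A secondary point, specific to part~1, is bounding the remainder $\alpha^{m}\pi\dAbar\Xi^{m}\tilde g$ by $\bigO(\alpha^{m+1})$ \emph{without} assuming $\alpha<1/\|\Xi\|_{\ltwopi}$; this reduces to showing $\|\tilde g\|_{\ltwopi}=\bigO(\alpha)$, which follows by bootstrapping---inserting the a priori bound $\|\tilde g\|_{\ltwopi}\le(\tr\var(\theta_{\infty}))^{1/2}=\bigO(\sqrt{\alpha\tau})$ from Theorem~\ref{thm:thm-converge} once into $\alpha\Upsilon+\alpha\Xi\tilde g$ and invoking $\alpha\tau=\bigO(1)$, since $\Xi$ has $\alpha$-independent norm. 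Boundedness of $\Xi,\Upsilon,B^{(i)}$ is immediate from these ingredients: $(I-\Padj+\Pi)^{-1}$ is bounded by uniform ergodicity and $\dA,\dAbar$ are bounded by Assumptions~\ref{assumption:bounded} and~\ref{assumption:hurwitz}.
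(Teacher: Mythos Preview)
Your proposal is correct and follows essentially the same route as the paper: define the conditional mean (the paper calls it $z$, you call it $g$), derive the fixed-point equation $z=\Padj(z+\alpha(\dA z+b))$ via stationarity and time-reversal, center to obtain $\delta=z-\pi z$ (your $\tilde g$), show $\E[\theta_{\infty}]-\theta^{\ast}=-\pi\dAbar\delta$, and arrive at the self-consistent equation $\delta=\alpha\Upsilon+\alpha\Xi\delta$ which is then iterated or solved by Neumann series. The one small deviation is how you obtain $\|\tilde g\|_{\ltwopi}=\bigO(\alpha)$: the paper reads it off directly from the intermediate identity $\delta=\alpha(I-\Padj+\Pi)^{-1}(\Padj-\Pi)(\dA z+b)$ together with $\|z\|_{\ltwopi}^{2}\le\E\|\theta_{\infty}\|^{2}=\bigO(1)$, whereas you propose to bootstrap from the variance bound $\|\tilde g\|_{\ltwopi}\le(\tr\var(\theta_{\infty}))^{1/2}$; both arguments are valid, the paper's being slightly more direct.
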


Theorem~\ref{thm:bias-characterization} is akin to a Taylor series
expansion of $\E[\theta_{\infty}^{(\alpha)}]$ with respect to the
stepsize $\alpha$. The existence of such an expansion is non-trivial:
$\theta_{\infty}^{(\alpha)}$ is undefined at $\alpha=0$, and it
is not clear a priori whether $\E[\theta_{\infty}^{(\alpha)}]$ is
a differentiable and analytic function of $\alpha$. We emphasize
that equations~\eqref{eq:bias-char} and~\eqref{eq:bias-expansion}
are equalities, hence the bias is non-zero whenever $B^{(i)}\neq0$
for some $i\ge1$. In particular, averaging the LSA iterates $\theta_{k}$
does not affect this bias and only reduces the variance.

The proof of Theorem~\ref{thm:bias-characterization}, outlined in
Section~\ref{sec:thm-bias-proof-sketch} and completed in Appendix~\ref{sec:bias-proof},
is based on the following idea. As discussed in Section~\ref{sec:intro},
the asymptotic bias arises due to the implicit nonlinear dependence
between $\theta_{k+1}$ and $\theta_{k}$ as both of them depend on
the state $x_{k}$ of the underlying Markov chain. If $\theta_{k}$
were independent of $x_{k}$, the bias would be zero. This observation
suggests that the bias is determined by the strength of dependence
between $\theta_{k}$ and $x_{k}$, which can be quantified by the
variation of the conditional expectation $\E[\theta_{k}\mid x_{k}=x]$
as a function of $x\in\cX$. Therefore, our analysis is based on understanding
this conditional expectation in steady state, namely $\E\left[\theta_{\infty}\mid x_{\infty}=x\right]$.
We characterize this quantity using the Basic Adjoint Relationship
(BAR) \citep{Harrison1985brownian,Harrison1987RBM,Dai11-bar-paper}
for the steady state with a specific choice of test functions.

Theorem~\ref{thm:bias-characterization} provides an explicit expression
(\ref{eq:bias-coef-def}) for the coefficients $B^{(i)}$ in the bias
expansion. In Sections \ref{sec:mixing-time} and \ref{sec:zero_bias}
below, we use this expression to further characterize the magnitude
of the bias and its relationship to the mixing time of $(x_{k})_{k\ge0}$.
On the other hand, even without knowing the functional form of $B^{(i)}$,
we can still use Richardson-Romberg extrapolation to cancel out the lower
order terms of $\alpha$ in the expansions~\eqref{eq:bias-char}
and~\eqref{eq:bias-expansion}, reducing the bias to a higher order
term of $\alpha$. These results are presented in Section~\ref{sec:implications}.

\subsubsection{Bias and Mixing Time}

\label{sec:mixing-time}

As mentioned, the bias $\E[\theta_{\infty}]-\theta^{*}$ arises due
to the Markovian correlation in $(x_{k})_{k\ge0}$. If the chain $(x_{k})_{k\ge0}$
mixes quickly, the correlation is weak and intuitively one should
expect a small bias. We now rigorously quantify the relationship between
the bias and the mixing time of $(x_{k})_{k\ge0}$. 

We focus on the setting where the chain $(x_{k})_{k\ge0}$ is reversible,
i.e., $\Padj=P$, and relate the bias to the absolute spectral gap $\absgap(P)$.
The gap $\absgap(P)$ is in turn related to the mixing time $\tau_{\epsilon}$
via
\begin{equation}
\frac{1-\absgap(P)}{\absgap(P)}\cdot K'\log(1/\epsilon)\overset{\text{(i)}}{\leq}\tau_{\epsilon}\overset{\text{(ii)}}{\leq}\frac{1}{\absgap(P)}\cdot K''\log(1/\epsilon),\label{eq:mixing_SLEM}
\end{equation}
where $K'$ and $K''$ are independent of $\epsilon$, inequality
(i) holds for general state space $\cX$, and inequality (ii) is valid
for finite $\cX$ \citep[Proposition 3.3]{Paulin2015}. The theorem
below provides upper bounds on the coefficients $B^{(i)}$ in the
bias expansions~\eqref{eq:bias-char}--\eqref{eq:bias-expansion}
in terms of $\absgap(P)$.

\begin{thm} \label{thm:bias-characterization-reversible}

Suppose that Assumptions~\ref{assumption:uniform-ergodic},~\ref{assumption:bounded}
and~\ref{assumption:hurwitz} hold, $\alpha$ satisfies equation~\eqref{eq:alpha-constraint},
and the Markov chain $(x_{k})_{k\geq0}$ is reversible. For each $i=1,2,\ldots,$
we have 
\[
\|B^{(i)}\|\le\left(C\cdot\frac{1-\absgap(P)}{\absgap(P)}\right)^{i}
\]
for some number $C\equiv C(A,b,\pi)>0$ that depends only on $A,b,\pi$.\footnote{The proof of Theorem~\ref{thm:bias-characterization-reversible}
provides an explicit formula for $C$.} \end{thm}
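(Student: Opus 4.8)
The plan is to bound each $B^{(i)} = -\pi \dAbar \Xi^{i-1}\Upsilon$ by controlling the three building blocks in its definition, namely $\Upsilon$, the operator $\Xi$, and the final averaging against $\pi \dAbar$, all measured in the $\ltwopi$ norm. Since the reversibility assumption $\Padj = P$ gives the clean identity $\absgap(P) = 1 - \|P - \Pi\|_{\ltwopi}$, the natural quantity to isolate is $\|P-\Pi\|_{\ltwopi} = 1 - \absgap(P)$, and every appearance of the mixing-dependent factor $(I - \Padj + \Pi)^{-1}(\Padj - \Pi)$ should ultimately produce the ratio $\frac{1-\absgap(P)}{\absgap(P)}$. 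First I would record that on $\ltwopinull$ (the mean-zero subspace, where $\Pi$ acts as zero) the operator $P-\Pi$ coincides with $P$ restricted there and has norm $1-\absgap(P)$, while $(I - P + \Pi)^{-1}$ restricted to $\ltwopinull$ is $(I-P)^{-1}$ with norm at most $\frac{1}{1-\|P\|_{\ltwopinull}} = \frac{1}{\absgap(P)}$. Composing these, the key operator factor $(I-\Padj+\Pi)^{-1}(\Padj-\Pi)$ has $\ltwopi$-norm bounded by $\frac{1-\absgap(P)}{\absgap(P)}$, which is exactly the target rate.

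Next I would treat the remaining factors, which involve only $A$, $b$, $\theta^\ast$, and $\pi$ and are therefore absorbed into the constant $C\equiv C(A,b,\pi)$ permitted by the theorem statement. The multiplication operators $\dA$ and $\dAbar$ satisfy $\|\dA\|_{\ltwopi}\le \Amax \le 1$ and $\|\dAbar\|_{\ltwopi}\le \|\BarA^{-1}\|\,\Amax \le s_{\min}^{-1}(\BarA)$ by Assumptions~\ref{assumption:bounded} and~\ref{assumption:hurwitz}; the operator $I - \Pi\dAbar$ is bounded by a constant depending on the same quantities; the function $A\theta^\ast + b$ has $\ltwopi$-norm controlled by $\Amax\|\theta^\ast\| + \bmax$, where $\|\theta^\ast\| = \|\BarA^{-1}\bar b\| \le s_{\min}^{-1}(\BarA)\bmax$; and the final $\pi\dAbar$ is a linear functional of norm at most $s_{\min}^{-1}(\BarA)$. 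Crucially all of these are functions of $A,b,\pi$ alone and carry no dependence on the spectral gap, so they may be folded into $C$ without spoiling the claimed form.

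Assembling the pieces yields the bound on $\Upsilon$ as $C_1 \cdot \frac{1-\absgap(P)}{\absgap(P)}$ and on the operator norm $\|\Xi\|_{\ltwopi}$ as $C_2 \cdot \frac{1-\absgap(P)}{\absgap(P)}$, each carrying exactly one copy of the mixing factor. Then for $B^{(i)} = -\pi\dAbar\,\Xi^{i-1}\Upsilon$ I would use submultiplicativity of the operator norm to get
\begin{align*}
\|B^{(i)}\| \le \|\pi\dAbar\|\cdot \|\Xi\|_{\ltwopi}^{\,i-1}\cdot \|\Upsilon\|_{\ltwopi} \le C_0\Big(C_2\cdot\tfrac{1-\absgap(P)}{\absgap(P)}\Big)^{i-1}\Big(C_1\cdot\tfrac{1-\absgap(P)}{\absgap(P)}\Big),
\end{align*}
which collects exactly $i$ copies of $\frac{1-\absgap(P)}{\absgap(P)}$ and a product of gap-independent constants; choosing $C = \max\{C_0 C_1, C_2\}$ (or any single constant dominating the three) gives the stated form $\big(C\cdot\frac{1-\absgap(P)}{\absgap(P)}\big)^{i}$. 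The step I expect to be the main obstacle is the first one: cleanly justifying the norm bound $\|(I-\Padj+\Pi)^{-1}(\Padj-\Pi)\|_{\ltwopi}\le\frac{1-\absgap(P)}{\absgap(P)}$ on the \emph{general} (continuous) state space. This requires verifying that $\Pi$ is the orthogonal projection onto constants in $\ltwopi$, that $P-\Pi$ and $(I-P+\Pi)^{-1}$ respect the orthogonal decomposition $\ltwopi = \mathrm{span}\{1\}\oplus\ltwopinull$, and—using reversibility so that $P$ is self-adjoint with real spectrum in $[-1+\absgap(P),\,1-\absgap(P)]$ on $\ltwopinull$—that the Neumann/spectral bound $\|(I-P)^{-1}\|_{\ltwopinull}\le\frac{1}{\absgap(P)}$ genuinely holds as a bounded operator rather than merely formally; the reversibility hypothesis is precisely what makes the spectral calculus available and is therefore indispensable here.
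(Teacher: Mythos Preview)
Your proposal is correct and follows essentially the same route as the paper: bound $\|\Upsilon\|_{\ltwopi}$ and $\|\Xi\|_{\ltwopi}$ each by a gap-independent constant times $\frac{1-\absgap(P)}{\absgap(P)}$, then apply submultiplicativity to $B^{(i)}=-\pi\dAbar\Xi^{i-1}\Upsilon$ and absorb the remaining $A,b,\pi$-dependent factors into $C$. The only difference concerns what you flag as the ``main obstacle'': rather than using the orthogonal decomposition of $\ltwopi$ and spectral calculus on $\ltwopinull$, the paper simply bounds the two factors separately via $\|P-\Pi\|_{\ltwopi}=1-\absgap(P)$ (reversibility) and the Neumann series estimate $\|(I-P+\Pi)^{-1}\|_{\ltwopi}\le\sum_{n\ge0}\|P-\Pi\|_{\ltwopi}^{n}=\frac{1}{\absgap(P)}$, which sidesteps the general-state-space worries you raised.
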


 Theorem \ref{thm:bias-characterization-reversible} together with
the bias expansion (\ref{eq:bias-expansion}) imply the bound 
\[
\|\E\left[\theta_{\infty}\right]-\theta^{*}\|\le2C\cdot\alpha\frac{1-\absgap(P)}{\absgap(P)}
\]
for a small stepsize $\alpha$. In light of the relationship~\eqref{eq:mixing_SLEM},
we see that the bias is roughly proportional to the product of $\alpha$
and $\tau_{\alpha}.$ In the extreme case where the $x_{k}$'s are
independent with distribution $\pi$, i.e., $P=1\otimes\pi$, we have
$1-\absgap(P)=0$, hence $B^{(i)}=0$ for all $i$ and the bias is zero.
This zero-bias property is implicit in the results in \citep[Theorem 1]{Lakshminarayanan18-LSA-Constant-iid}
and \citep[Theorem 1]{Mou20-LSA-iid}, which study LSA in the i.i.d.\ setting.

Theorem~\ref{thm:bias-characterization-reversible} is proved in
Appendix~\ref{sec:proof-reversible}. The proof uses the spectral
property $1-\left\Vert P-\Pi\right\Vert _{\ltwopi}=\absgap(P)$ of a
reversible kernel $P$. The theorem can be readily extended to the
non-reversible setting by considering the multiplicative reversiblization
$P^{*}P$ and the pseudo-spectral gap, $\absgap_{\text{ps}}(P):=\max_{k\geq1}\big\{\absgap\big({(\Padj)}^{k}P^{k}\big)/k\big\}$
\citep{Paulin2015}. We omit the details.

\begin{remark} 

As the bias is due to the correlation in $x_{k}$'s, one may consider
running LSA with the subsampled (and thus less correlated) data $(x'_{k})_{k\ge0},$
where $x'_{k}:=x_{ck}$ and $c\ge2$ is an integer. Doing so reduces
the mixing time of $x'_{k}$ and in turn the bias by a factor of $c$,
but $c$ times more data is used. The overall effect is essentially
equivalent to using the smaller stepsize $\alpha/c$, as we have shown
that the bias and the exponent of geometric convergence in (\ref{eq:w2-thetak-to-mu})
are both proportional to the stepsize. We emphasize that all our results
including Richardson-Romberg extrapolation can be applied \emph{on
top of} such subsampling or stepsize adjustment.

\end{remark}

\subsubsection{Zero Bias with Markovian Data}

\label{sec:zero_bias}

While Markovian LSA has asymptotic bias in general,
there are important special cases where the bias vanishes even when
$(x_{k})_{k\ge0}$ is Markovian. By Theorem \ref{thm:bias-characterization},
the bias is zero for all $\alpha$ if and only if $B^{(i)}=0$
for all $i$. Thanks to the explicit expression (\ref{eq:bias-coef-def})
for $B^{(i)}$, we see that a sufficient condition for zero bias is
$(\Padj-\Pi)(A\theta^{\ast}+b)=0$. This condition can be expressed
in the more familiar notation of conditional expectation.

\begin{cor} \label{cor:zero-bias-sufficient} Under the assumptions
of Theorem~\ref{thm:bias-characterization}, if 
\begin{equation}
\E\Big[A(x_{k})\theta^{\ast}+b(x_{k})\mid x_{k+1}=x\Big]=0,\quad\forall x\in\cX,\label{eq:zero-bias-sufficient}
\end{equation}
then $\E[\theta_{\infty}]-\theta^{\ast}=0$. \end{cor}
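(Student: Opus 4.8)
The plan is to derive Corollary~\ref{cor:zero-bias-sufficient} as a direct consequence of the explicit bias formula in Theorem~\ref{thm:bias-characterization}, specifically the infinite series expansion~\eqref{eq:bias-expansion} together with the definitions~\eqref{eq:bias-coef-def} of $\Upsilon$ and $B^{(i)}$. The key observation is that every coefficient $B^{(i)}=-\pi\dAbar\Xi^{i-1}\Upsilon$ has the function $\Upsilon=(I-\Padj+\Pi)^{-1}(\Padj-\Pi)(A\theta^{\ast}+b)$ as a right factor. Hence if $\Upsilon=0$ as an element of $\ltwopi$, then $B^{(i)}=0$ for every $i\ge1$, and by~\eqref{eq:bias-expansion} the bias $\E[\theta_{\infty}]-\theta^{\ast}=\sum_{i\ge1}\alpha^{i}B^{(i)}$ vanishes identically. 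So the entire task reduces to showing that the hypothesis~\eqref{eq:zero-bias-sufficient} forces $\Upsilon=0$.

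First I would translate the adjoint operator $\Padj$ into a conditional expectation with respect to the time-reversed chain. Recall from Section~\ref{sec:hilbert-notation} that $\Padj$ is precisely the transition kernel of the reversed chain, so for any $h\in\ltwopi$ one has $(\Padj h)(x)=\E[h(x_{k})\mid x_{k+1}=x]$. Applying this to $h:=A\theta^{\ast}+b$, the hypothesis~\eqref{eq:zero-bias-sufficient} reads exactly $\Padj(A\theta^{\ast}+b)=0$ pointwise (i.e.\ $\pi$-almost everywhere). Next I would compute $(\Padj-\Pi)(A\theta^{\ast}+b)$: the $\Pi$ term contributes $\Pi(A\theta^{\ast}+b)=\pi(A\theta^{\ast}+b)=\BarA\theta^{\ast}+\bar b=0$ by the steady-state equation~\eqref{eq:steady-state-equation} defining $\theta^{\ast}$. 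Combining these two facts gives $(\Padj-\Pi)(A\theta^{\ast}+b)=0$, and therefore $\Upsilon=(I-\Padj+\Pi)^{-1}\cdot 0=0$ since $(I-\Padj+\Pi)^{-1}$ is a bounded linear operator (its boundedness is already guaranteed by Theorem~\ref{thm:bias-characterization}).

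The only point requiring mild care is the identification $(\Padj h)(x)=\E[h(x_{k})\mid x_{k+1}=x]$, which is the defining property of the time-reversed kernel recorded in Section~\ref{sec:hilbert-notation} via the regular conditional probability characterization of $P^{\ast}$. Once this is in place the argument is a short chain of equalities with no genuine obstacle. I would write the proof as: invoke the conditional-expectation form of $\Padj$ to rewrite~\eqref{eq:zero-bias-sufficient} as $\Padj(A\theta^{\ast}+b)=0$; use $\pi(A\theta^{\ast}+b)=\BarA\theta^{\ast}+\bar b=0$ to handle the $\Pi$ term; conclude $(\Padj-\Pi)(A\theta^{\ast}+b)=0$, hence $\Upsilon=0$; and finally read off $B^{(i)}=0$ for all $i$ and thus zero bias from~\eqref{eq:bias-expansion}. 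The main conceptual content is thus entirely front-loaded into the definitions~\eqref{eq:bias-coef-def} established in Theorem~\ref{thm:bias-characterization}, and the corollary is essentially the statement that $\Upsilon$ is the ``seed'' from which all bias coefficients are generated.
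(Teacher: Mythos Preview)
Your proposal is correct and matches the paper's own reasoning, which is given in the text immediately preceding the corollary: the paper observes that $(\Padj-\Pi)(A\theta^{\ast}+b)=0$ is a sufficient condition for $\Upsilon=0$ (hence all $B^{(i)}=0$), and then simply states that this condition ``can be expressed in the more familiar notation of conditional expectation.'' You have filled in that translation explicitly---splitting $(\Padj-\Pi)(A\theta^{\ast}+b)$ into the $\Padj$ piece (which is the hypothesis~\eqref{eq:zero-bias-sufficient}) and the $\Pi$ piece (which vanishes by the steady-state equation)---so your argument is a fleshed-out version of the same idea.
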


It is clear that if $P$ is reversible, then one may replace $x_{k+1}$ by $x_{k-1}$
in the condition (\ref{eq:zero-bias-sufficient}) and the corollary continues to hold.

The condition (\ref{eq:zero-bias-sufficient}) above is trivially
satisfied when $x_{k}\overset{\text{i.i.d.}}{\sim}\pi$, in which
case $P=\Pi$ and $\E_{x_k\sim\pi}[A(x_k)\theta^{\ast}+b(x_k)]=0$ by definition
of $\theta^{*}$ in~\eqref{eq:steady-state-equation}. Importantly,
it is possible for the condition (\ref{eq:zero-bias-sufficient})
to hold even when $(x_{k})_{k\ge0}$ is correlated. We discuss two
such settings in Sections~\ref{sec:td-section} and \ref{sec:sgd-implication}
on the TD(0) and SGD algorithms.

\subsection{Averaging and Extrapolation}

\label{sec:implications}

We exploit the results above to study the performance of LSA in conjunction
with Polyak-Ruppert tail averaging and Richardson-Romberg extrapolation.
We focus on corollaries of the convergence bounds in Theorem~\ref{thm:thm-converge}
and the bias expansion with order $m=1$ in Theorem~\ref{thm:bias-characterization},
namely $\E[\theta_{\infty}^{(\alpha)}]=\theta^{\ast}+\alpha B^{(1)}+\bigO(\alpha^{2})$
(analogous results can be obtained from higher order expansions).
In particular, we decompose the MSE into the optimization error, squared
bias and variance, and study how these quantities interplay with constant
stepsizes, averaging, and extrapolation. 

As our focus is dependence on the stepsize $\alpha$ and iteration
count $k$, we follow the notation convention in Section~\ref{sec:notations}.
In particular, $B,B'$ and $B''$ denote multiplicative factors independent
of $\alpha$ and $k$, whose values may change from line to line;
the big-$\bigO$ notation hides such factors.

\subsubsection{Results for Polyak-Ruppert Tail Averaging\label{sec:averaging}}

Polyak-Ruppert averaging~\citep{Ruppert88-Avg,Polyak92-Avg} is a
classical approach for reducing the variance and accelerating the
convergence of stochastic approximation. Here we consider the tail-averaging
variant of PR averaging~\citep{Jain18-tail-avg}. Given a user-specified
burn-in period $k_{0}\ge0$, define the tail-averaged iterates 
\[
\bar{\theta}_{k_{0},k}:=\frac{1}{k-k_{0}}\sum_{t=k_{0}}^{k-1}\theta_{t},\quad\text{for }k=k_{0}+1,k_{0}+2,\ldots
\]

The following corollary provides non-asymptotic characterization for
the first two moments of $\bar{\theta}_{k_{0},k}$. The proof can
be found in Appendix~\ref{sec:pr-avg-proof}.

\begin{cor} \label{cor:pr-avg-bounds} Under the setting of Theorem~\ref{thm:thm-converge},
the following bounds hold for all $k_{0}\ge\frac{4\gammax}{\alpha}\log\Big(\frac{1}{\alpha\tau_{\alpha}}\Big)$
and $k\geq k_{0}+\tau_{\alpha}$: 
\begin{align}
\E[\bar{\theta}_{k_{0},k}]-\theta^{\ast} & =\alpha B+\bigO\left(\alpha^{2}+\frac{1}{\alpha(k-k_{0})}\exp\left(-\frac{\alpha k_{0}}{4\gammax}\right)\right),\label{eq:pr-avg-first-mom}\\
\E\left[\left(\bar{\theta}_{k_{0},k}-\theta^{\ast}\right)\left(\bar{\theta}_{k_{0},k}-\theta^{\ast}\right)^{\top}\right] & =\alpha^{2}B'+\bigO\left(\alpha^{3}+\frac{\tau_{\alpha}}{k-k_{0}}+\frac{1}{\alpha(k-k_{0})^{2}}\exp\left(-\frac{\alpha k_{0}}{4\gammax}\right)\right).\label{eq:mse-pr-bound}
\end{align}
\end{cor}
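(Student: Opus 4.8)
The plan is to reduce the two-moment statement about the tail-averaged iterate $\bar{\theta}_{k_0,k}$ to the corresponding one-moment and two-moment statements for the individual iterates $\theta_t$, which are already controlled by Theorem~\ref{thm:thm-converge} and Corollary~\ref{cor:non-asymptotic-bounds}. First I would prove the first-moment bound~\eqref{eq:pr-avg-first-mom}. By linearity, $\E[\bar{\theta}_{k_0,k}] - \theta^* = \frac{1}{k-k_0}\sum_{t=k_0}^{k-1}(\E[\theta_t]-\theta^*)$. I would write each summand as $(\E[\theta_\infty^{(\alpha)}]-\theta^*) + (\E[\theta_t]-\E[\theta_\infty^{(\alpha)}])$. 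The first piece is exactly the asymptotic bias, which by the order-$1$ bias expansion in Theorem~\ref{thm:bias-characterization} equals $\alpha B^{(1)} + \bigO(\alpha^2)$; averaging over $t$ leaves this term unchanged, giving the leading $\alpha B$. The second piece is the optimization error, bounded by Corollary~\ref{cor:non-asymptotic-bounds} by $C(1-0.9\alpha/\gammax)^{t/2}$. I would then sum the geometric tail from $t=k_0$ to $k-1$: the sum is at most $\frac{C}{1-(1-0.9\alpha/\gammax)^{1/2}}(1-0.9\alpha/\gammax)^{k_0/2}$, and dividing by $(k-k_0)$ together with $1-(1-0.9\alpha/\gammax)^{1/2} \gtrsim \alpha/\gammax$ and the bound $(1-0.9\alpha/\gammax)^{k_0/2}\le \exp(-0.45\alpha k_0/\gammax)$ produces the claimed $\frac{1}{\alpha(k-k_0)}\exp(-\alpha k_0/(4\gammax))$ error term.

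Next I would handle the second-moment bound~\eqref{eq:mse-pr-bound}, which is the more delicate part. Writing $\bar{\theta}_{k_0,k}-\theta^* = \frac{1}{k-k_0}\sum_{t}(\theta_t - \theta^*)$, the outer product expands into a double sum $\frac{1}{(k-k_0)^2}\sum_{s,t}\E[(\theta_s-\theta^*)(\theta_t-\theta^*)^\top]$. The natural decomposition is into a ``bias$\times$bias'' contribution and a ``fluctuation'' contribution: I would center each iterate as $\theta_t-\theta^* = (\E[\theta_\infty^{(\alpha)}]-\theta^*) + (\theta_t - \E[\theta_\infty^{(\alpha)}])$. The product of the two deterministic bias terms gives $(\E[\theta_\infty]-\theta^*)(\E[\theta_\infty]-\theta^*)^\top = \alpha^2 B' + \bigO(\alpha^3)$ using the order-$1$ expansion twice, which is the stated leading term. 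The cross terms involve the averaged optimization error $\frac{1}{k-k_0}\sum_t(\E[\theta_t]-\E[\theta_\infty])$, already controlled above and contributing at the $\frac{1}{\alpha(k-k_0)}\exp(\cdots)$ scale once multiplied by the $\bigO(\alpha)$ bias factor.

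The hard part will be the fully-centered fluctuation term $\frac{1}{(k-k_0)^2}\sum_{s,t}\E[(\theta_s-\E\theta_\infty)(\theta_t-\E\theta_\infty)^\top]$, which must be shown to be $\bigO\!\big(\frac{\tau_\alpha}{k-k_0}\big)$ rather than $\bigO(1)$. The diagonal terms $s=t$ each have trace of order $\tr(\var(\theta_\infty))=\bigO(\alpha\tau_\alpha)$ in steady state by~\eqref{eq:kappa-def}, contributing $\frac{1}{(k-k_0)^2}\cdot(k-k_0)\cdot\bigO(\alpha\tau_\alpha)=\bigO\big(\frac{\alpha\tau_\alpha}{k-k_0}\big)$, which is subsumed. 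The off-diagonal terms require control of the autocovariance $\E[(\theta_s-\E\theta_\infty)(\theta_t-\E\theta_\infty)^\top]$ for $s\ne t$. I expect this to be the main obstacle: one must exploit the geometric mixing of the joint chain $(x_k,\theta_k)$ to show these correlations decay geometrically in $|s-t|$ after the burn-in $k_0$, so that the double sum telescopes to $\bigO\big(\frac{1}{k-k_0}\sum_{h\ge0}\rho^h\big)$ for a contraction factor $\rho=1-\Theta(\alpha/\gammax)$, yielding a factor $\frac{1}{\alpha(k-k_0)}$ scale. Matching this against the target $\frac{\tau_\alpha}{k-k_0}$ is where the stepsize condition $\alpha\tau_\alpha<\tfrac{0.05}{95\gammax}$ and the $W_2$ contraction from Theorem~\ref{thm:thm-converge} must be combined carefully; the coupling-based geometric decay of the covariance, rather than a crude triangle-inequality bound, is the technically demanding step. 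Once the autocovariance decay is in hand, collecting the diagonal, off-diagonal, bias-squared, and residual burn-in contributions gives~\eqref{eq:mse-pr-bound}.
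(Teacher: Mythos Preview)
Your overall decomposition matches the paper's proof exactly: center around $\E[\theta_\infty]$, isolate the bias-squared term $(\E\theta_\infty-\theta^*)(\E\theta_\infty-\theta^*)^\top = \alpha^2 B' + \bigO(\alpha^3)$, bound the cross terms via the first-moment argument, and reduce the fluctuation term to a double sum of autocovariances split into diagonal and off-diagonal parts. The first-moment proof is also identical to the paper's.

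The one genuine gap is in your off-diagonal estimate. You obtain $\frac{1}{k-k_0}\sum_{h\ge0}\rho^h \asymp \frac{1}{\alpha(k-k_0)}$ and then flag that this does not match the target $\frac{\tau_\alpha}{k-k_0}$. The missing ingredient is that the autocovariance at lag $h=l-t$ carries an $\alpha\tau_\alpha$ \emph{prefactor}, not just the decay $\rho^{h/2}$. The paper obtains this by conditioning on $(x_t,\theta_t)$ and applying the $W_2$ bound~\eqref{eq:w2-thetak-to-mu} to the chain restarted at time $t$:
\[
\big\|\E[\theta_l\mid x_t,\theta_t]-\E[\theta_\infty]\big\| \;\le\; C\sqrt{\big(\|\theta_t-\E\theta_\infty\|^2 + \tr\var(\theta_\infty)\big)\,\rho^{\,l-t}}.
\]
Multiplying by $\|\theta_t-\E\theta_\infty\|$ and taking expectation, the right-hand side is controlled by $\E\|\theta_t-\E\theta_\infty\|^2$ and $\tr\var(\theta_\infty)$, both of which are $\bigO(\alpha\tau_\alpha)$ once $t\ge k_0$: the burn-in requirement $k_0\ge\tfrac{4\gammax}{\alpha}\log\tfrac{1}{\alpha\tau_\alpha}$ is precisely what forces the transient part of~\eqref{eq:w2-thetak-to-mu} down to the stationary scale. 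This yields $\|\E[(\theta_t-\E\theta_\infty)(\theta_l-\E\theta_\infty)^\top]\| = \bigO(\alpha\tau_\alpha\cdot\rho^{(l-t)/2})$, and summing over $l$ gives $\alpha\tau_\alpha\cdot\bigO(1/\alpha)=\bigO(\tau_\alpha)$ per row, hence $\bigO(\tau_\alpha/(k-k_0))$ overall. For the short lags $l-t\le\tau_\alpha$ where the conditional contraction has not yet taken effect, Cauchy--Schwarz with the same $\bigO(\alpha\tau_\alpha)$ second-moment bound suffices. So the stepsize condition is not what rescues the rate; it is the burn-in that supplies the small prefactor you are missing.
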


To parse the above results, we fix $k_{0}=k/2$ and take the trace
of both sides of equation~\eqref{eq:mse-pr-bound}, which gives the
following bound on the MSE: 
\begin{equation}
\E\left[\|\bar{\theta}_{k/2,k}-\theta^{\ast}\|^{2}\right]=\underbrace{\alpha^{2}B''+\bigO(\alpha^{3})}_{\substack{T_{1}:\text{ asymptotic}\\
\text{squared bias}
}
}+\underbrace{\bigO\left(\frac{\tau_{\alpha}}{k}\right)}_{T_{2}:\text{ variance}}+\underbrace{\bigO\left(\frac{1}{\alpha k^{2}}\exp\left(-\frac{\alpha k}{8\gammax}\right)\right)}_{T_{3}:\text{ optimization error}}.\label{eq:mse-ta}
\end{equation}
Here the term $T_{1}$ corresponds to the asymptotic squared bias
$\|\E\bar{\theta}_{\infty/2,\infty}-\theta^{*}\|^{2}=\|\E\theta_{\infty}-\theta^{*}\|^{2}$,
which is not affected by averaging. The term $T_{2}$ roughly corresponds
to the variance $\var(\bar{\theta}_{k/2,k})$, which enjoys a $1/k$
decay rate due to averaging. The term $T_{3}$ is associated with
the optimization error $\|\E\bar{\theta}_{k/2,k}-\bar{\theta}_{\infty/2,\infty}\|^{2}$,
which decays geometrically in $k$ thanks to using a constant stepsize
$\alpha$ and only averaging the last $k/2$ iterates. Note that for
large values of $k$, the squared bias $T_{1}$ is the dominating
term in the MSE bound~\eqref{eq:mse-ta}.

We make several quick remarks. Firstly, since $\tau_{\alpha}\approx\log(1/\alpha)$,
the burn-in period $k_{0}$ required in Corollary \ref{cor:pr-avg-bounds}
is roughly $\bigO(\tau_{\alpha}/\alpha)$. This is the time by which
both the LSA iterates $(\theta_{k})_{k\geq0}$ and the underlying
chain $(x_{k})_{k\geq0}$ become well mixed, at which point the effect
of tail-averaging kicks in. 

Secondly, our MSE bound (\ref{eq:mse-ta}) for the averaged iterates,
which follows readily from Theorems \ref{thm:thm-converge} and \ref{thm:bias-characterization},
is comparable to the sharp results in \citep{Mou21-optimal-linearSA}
in terms of scaling with $\alpha,k$ and $\tau_{\alpha}$, though
the more complicated analysis in \citep{Mou21-optimal-linearSA} gives
tighter dependence on other parameters. 

Lastly, by setting $k_{0}=k-1$ in Corollary~\ref{cor:pr-avg-bounds}
(and relaxing the requirement $k\geq k_{0}+\tau$ via a more refined
argument), we obtain the following characterization for the raw LSA
iterates, $\bar{\theta}_{k,k+1}=\theta_{k}$:
\begin{equation}
\E\big[\|\theta_{k}-\theta^{\ast}\|^{2}\big]=\alpha^{2}B''+\bigO\left(\alpha\tau_{\alpha}\right)+\bigO\big(e^{-\alpha k/(4\gammax)}\big).\label{eq:mse-raw}
\end{equation}
This result is consistent with existing MSE upper bounds in~\citep{srikant-ying19-finite-LSA,Bhandari21-linear-td,chen20-contract-SA}.
The power of our results in (\ref{eq:pr-avg-first-mom})--(\ref{eq:mse-raw})
lies in that the first right hand side term therein features an equality
rather than merely an upper bound. As such, our results decouple the
contribution of the squared bias $\alpha^{2}B''$ from that of the
variance $\bigO(\alpha\tau_{\alpha})$. This decoupling is crucial
in understanding the effect of tail-averaging (in Corollary~\ref{cor:pr-avg-bounds})
and RR extrapolation (in Corollary~\ref{cor:rr-ext-bounds} to follow).

\subsubsection{Results for Richardson-Romberg Extrapolation\label{sec:extrapolation}}

We next show that RR extrapolation~\citep{bulirsch2002numerical_analysis}
can be used to reduce the bias to a higher order term of $\alpha$.
Let $\bar{\theta}_{k_{0},k}^{(\alpha)}$ and $\bar{\theta}_{k_{0},k}^{(2\alpha)}$
denote the tail-averaged iterates computed using two stepsizes $\alpha$
and $2\alpha$ with the same data stream $(x_{k})_{k\ge0}$. The RR
extrapolated iterates are defined as 
\[
\widetilde{\theta}_{k_{0},k}^{(\alpha)}=2\bar{\theta}_{k_{0},k}^{(\alpha)}-\bar{\theta}_{k_{0},k}^{(2\alpha)}.
\]
With $k_{0},k\to\infty$, Theorems~\ref{thm:thm-converge} and~\ref{thm:bias-characterization}
ensure that $\widetilde{\theta}_{k_{0},k}^{(\alpha)}$ converges to
$2\theta_{\infty}^{(\alpha)}-\theta_{\infty}^{(2\alpha)}$, which
has bias 
\begin{align*}
2\big(\E\theta_{\infty}^{(\alpha)}-\theta^{*}\big)-\big(\E\theta_{\infty}^{(2\alpha)}-\theta^{*}\big)=2\big(\alpha B^{(1)}+\bigO(\alpha^{2})\big)-\big(2\alpha B^{(1)}+\bigO(4\alpha^{2})\big)=\bigO(\alpha^{2}).
\end{align*}
Note that the extrapolation cancels out the first-order term of $\alpha$,
reducing the bias by a factor of~$\alpha$.

The following corollary formalizes the above argument and provides
non-asymptotic characterization for the first two moments of $\widetilde{\theta}_{k_{0},k}^{(\alpha)}$.
The proof can be found in Appendix~\ref{sec:rr-ext-proof}.

\begin{cor} \label{cor:rr-ext-bounds} Under the setting of Theorem~\ref{thm:thm-converge},
the RR extrapolated iterates with stepsizes $\alpha$ and $2\alpha$
satisfy the following bounds for all $k_{0}\ge\frac{4\gammax}{\alpha}\log\Big(\frac{1}{\alpha\tau_{\alpha}}\Big)$
and $k\geq k_{0}+\tau_{\alpha}$: 
\begin{align*}
\E\left[\widetilde{\theta}_{k_{0},k}^{(\alpha)}\right]-\theta^{\ast} & =\bigO(\alpha^{2})+\bigO\left(\frac{1}{\alpha(k-k_{0})}\exp\left(-\frac{\alpha k_{0}}{4\gammax}\right)\right),
\end{align*}
\begin{align}
\E\left[\Big(\widetilde{\theta}_{k_{0},k}^{(\alpha)}-\theta^{*}\Big)\Big(\widetilde{\theta}_{k_{0},k}^{(\alpha)}-\theta^{*}\Big)^{\top}\right] & =\underbrace{\bigO\left(\alpha^{4}\right)}_{\substack{\textnormal{asymptotic}\\
\textnormal{squared bias}
}
}+\underbrace{\bigO\left(\frac{\tau_{\alpha}}{k-k_{0}}\right)}_{\textnormal{variance}}+\underbrace{\bigO\left(\frac{1}{\alpha(k-k_{0})^{2}}\exp\left(-\frac{\alpha k_{0}}{4\gammax}\right)\right)}_{\textnormal{optimization error}}.\label{eq:rr-second-moment}
\end{align}
\end{cor}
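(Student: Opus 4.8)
The plan is to treat the RR estimator as a fixed linear combination of two Polyak--Ruppert tail averages and push the bounds of Corollary~\ref{cor:pr-avg-bounds} through this combination. Write $u:=\bar{\theta}_{k_{0},k}^{(\alpha)}-\theta^{*}$ and $v:=\bar{\theta}_{k_{0},k}^{(2\alpha)}-\theta^{*}$, so that $\widetilde{\theta}_{k_{0},k}^{(\alpha)}-\theta^{*}=2u-v$. First I would check that the hypotheses of Corollary~\ref{cor:pr-avg-bounds} hold simultaneously for the stepsizes $\alpha$ and $2\alpha$: since $\tau_{2\alpha}\le\tau_{\alpha}$ and $\tau_{\alpha}=\bigO(\log(1/\alpha))$, both the stepsize constraint~\eqref{eq:alpha-constraint} and the burn-in requirement are met for $\alpha$ small enough that $2\alpha\tau_{2\alpha}$ still obeys~\eqref{eq:alpha-constraint}. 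For the first moment I apply~\eqref{eq:pr-avg-first-mom} to each stepsize and form $2\E[u]-\E[v]$. The leading terms satisfy $2(\alpha B^{(1)})-(2\alpha)B^{(1)}=0$, so the order-$\alpha$ bias cancels exactly; the $\bigO(\alpha^{2})$ and $\bigO((2\alpha)^{2})$ remainders combine into $\bigO(\alpha^{2})$, and among the two optimization-error terms the one for stepsize $\alpha$ dominates (its exponential decays more slowly and its prefactor is larger), giving $\bigO\big(\tfrac{1}{\alpha(k-k_{0})}\exp(-\tfrac{\alpha k_{0}}{4\gammax})\big)$. This is exactly the claimed first-moment bound.

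For the second moment I would use the decomposition $\E[(2u-v)(2u-v)^{\top}]=(\E[2u-v])(\E[2u-v])^{\top}+\var(2u-v)$, which cleanly separates squared bias from variance; this is what makes the $\bigO(\alpha^{4})$ rate appear without any delicate $\alpha^{3}$ cancellation (the naive expansion $4\E[uu^{\top}]-2\E[uv^{\top}]-2\E[vu^{\top}]+\E[vv^{\top}]$ would instead require cancelling the order-$\alpha^{3}$ part of the cross-second-moment). The squared-bias term is controlled by the first-moment bound just established: writing $\E[2u-v]=p+q$ with $\|p\|=\bigO(\alpha^{2})$ and $q$ the optimization-error piece, the outer product expands into $pp^{\top}+pq^{\top}+qp^{\top}+qq^{\top}$, whose pure-bias part $pp^{\top}$ yields the advertised $\bigO(\alpha^{4})$. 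For the variance I would sidestep the cross-covariance between $u$ and $v$ (delicate because both are built from the same data stream $(x_{k})_{k\ge0}$) by invoking the elementary PSD inequality $\var(2u-v)\preceq 8\var(u)+2\var(v)$, which follows from $(X-Y)(X-Y)^{\top}\preceq 2XX^{\top}+2YY^{\top}$ applied to the centered variables. Each of $\var(u),\var(v)$ is bounded by $\bigO\big(\tfrac{\tau_{\alpha}}{k-k_{0}}+\tfrac{1}{\alpha(k-k_{0})^{2}}\exp(-\tfrac{\alpha k_{0}}{4\gammax})\big)$, read off from the non-bias part of~\eqref{eq:mse-pr-bound} (using $\tau_{2\alpha}\le\tau_{\alpha}$). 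Combining the squared-bias and variance contributions yields the three terms in~\eqref{eq:rr-second-moment}.

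The hard part is the bookkeeping in the squared-bias term: after the order-$\alpha$ bias has cancelled, I must verify that the interaction between the residual $\bigO(\alpha^{2})$ bias and the optimization-error remainder $q$ is genuinely of lower order than the stated variance and optimization-error terms. Concretely, the cross terms $pq^{\top},qp^{\top}$ and the square $qq^{\top}$ must be absorbed, and this is exactly where the burn-in lower bound $k_{0}\ge\frac{4\gammax}{\alpha}\log\frac{1}{\alpha\tau_{\alpha}}$ enters: it gives $\exp(-\tfrac{\alpha k_{0}}{4\gammax})\le\alpha\tau_{\alpha}$, which together with $k-k_{0}\ge\tau_{\alpha}$ shows that $pq^{\top},qp^{\top}$ are dominated by $\bigO(\tau_{\alpha}/(k-k_{0}))$ and that $qq^{\top}$ is likewise absorbed into the variance term rather than inflating the $\bigO(\alpha^{4})$ rate. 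This dominance check is the main obstacle, and it explains why the same burn-in condition that appears in Corollary~\ref{cor:pr-avg-bounds} reappears unchanged in Corollary~\ref{cor:rr-ext-bounds}.
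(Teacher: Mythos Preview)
Your first-moment argument is exactly the paper's. For the second moment your approach is correct but organized differently from the paper. The paper does not use a bias--variance split about $\theta^{*}$; instead it centers at the \emph{limiting} means, writing
\[
\widetilde{\theta}_{k_{0},k}^{(\alpha)}-\theta^{*}=2u_{1}-u_{2}+v,\qquad u_{i}:=\bar{\theta}_{k_{0},k}^{(i\alpha)}-\E\big[\theta_{\infty}^{(i\alpha)}\big],\qquad v:=2\E\big[\theta_{\infty}^{(\alpha)}\big]-\E\big[\theta_{\infty}^{(2\alpha)}\big]-\theta^{*},
\]
and then applies a single crude norm inequality of the type $\|\E[(2u_{1}-u_{2}+v)(\cdot)^{\top}]\|\le c(\E\|u_{1}\|^{2}+\E\|u_{2}\|^{2}+\|v\|^{2})$. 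The point of this centering is that $v$ is deterministic with $\|v\|^{2}=\bigO(\alpha^{4})$ directly from Theorem~\ref{thm:bias-characterization}, while $\E\|u_{i}\|^{2}$ is precisely the quantity already bounded as the $T_{1}$ term~\eqref{eq:theta-avg-to-inf-mse} in the proof of Corollary~\ref{cor:pr-avg-bounds}. No cross-term bookkeeping (your $pq^{\top}$, $qq^{\top}$ absorption) is needed, and the burn-in condition is not re-used in the second-moment step.

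Your route also works, but one step is looser than you state. ``Reading off $\var(u)$ from the non-bias part of~\eqref{eq:mse-pr-bound}'' is not immediate: \eqref{eq:mse-pr-bound} bounds the second moment about $\theta^{*}$, which equals $\var(\bar{\theta})$ plus the squared bias, and the squared bias itself carries an $\alpha^{2}B'$ term that must be cancelled against the $\alpha^{2}B'$ in~\eqref{eq:mse-pr-bound}. That cancellation holds (since $B'=B^{(1)}(B^{(1)})^{\top}$, cf.\ the $T_{4}$ computation in the proof of Corollary~\ref{cor:pr-avg-bounds}), but you should either say so or, more cleanly, invoke the intermediate bound~\eqref{eq:theta-avg-to-inf-mse} directly, which is exactly $\E[(\bar{\theta}-\E[\theta_{\infty}])(\cdot)^{\top}]$ and hence dominates $\var(\bar{\theta})$ without any cancellation. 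With that fix, your PSD inequality $\var(2u-v)\preceq 8\var(u)+2\var(v)$ and the subsequent absorption of $pq^{\top},qq^{\top}$ via the burn-in bound are correct and give the stated result.
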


Comparing the bound~\eqref{eq:rr-second-moment} with \eqref{eq:mse-pr-bound},
we see that RR extrapolation reduces the squared bias by a factor
of $\alpha^{2}$ while retaining the $1/k$ and $\exp(-k)$ convergence
rates for the variance and optimization error, respectively.

Thanks to higher order expansion in Theorem~\ref{thm:bias-characterization},
RR extrapolation can in fact be applied to more than two stepsizes,
which further reduces the bias. Let $\mathcal{A}=\{\alpha_{1},\alpha_{2},\ldots,\alpha_{m}\}$
be a set of $m\ge2$ distinct stepsizes and $\alpha=\max_{1\le i\le m}\alpha_{i}$.
Let $(h_{1},h_{2},\ldots,h_{m})\in\R^{m}$ be the solution to the
following linear equation system: 
\begin{align}
\sum_{i=1}^{m}h_{i}=1;\qquad\sum_{i=1}^{m}h_{i}\alpha_{i}^{t}=0,\;\;t=1,2,\ldots,m-1.\label{eq:RR_coefficients}
\end{align}
The solution is unique since the coefficient matrix of the system
is a Vandermonde matrix. The RR extrapolated iterates with stepsizes
in $\mathcal{A}$ and the burn-in period $k_{0}$ are given by 
\begin{align}
\widetilde{\theta}_{k_{0},k}^{\mathcal{A}}=\sum_{i=1}^{m}h_{i}\cdot\bar{\theta}_{k_{0},k}^{(\alpha_{i})}\,.\label{eq:RR_m_stepsizes}
\end{align}
This procedure eliminates the first $m-1$ terms in the bias expansion~\eqref{eq:bias-expansion},
reducing the bias to 
\begin{align*}
\E\left[\widetilde{\theta}_{k_{0},\infty}^{\mathcal{A}}\right]-\theta^{*} & =\sum_{i=1}^{m}h_{i}\cdot\left(\E\left[\theta_{\infty}^{(\alpha_{i})}\right]-\theta^{*}\right)=\bigO(\alpha^{m}).
\end{align*}
One can derive non-asymptotic bounds similar to Corollary~\ref{cor:rr-ext-bounds}---we
omit the details. In Section~\ref{sec:experiments}, we numerically
verify the efficacy of this high-order RR extrapolation approach.

\subsection{Implications for TD Learning}
\label{sec:td-section}

TD(0) is an iterative algorithm in RL for evaluating a given policy
for a Markov Decision Process (MDP), or equivalently for computing
the value function of a Markov Reward Process (MRP)~\citep{Bertsekas19-RL-book,Sutton18-RL-book}.
Potentially equipped with function approximation, TD(0) is a special
case of LSA. Consequently, all the results in the previous sections
can be specialized to TD(0), as we show below.

Consider an MRP $(\cS,P^{\cS},r,\gamma)$, with Borel state space
$\cS$, transition kernel $P^{\cS}$, bounded deterministic reward
function $r:\cS\to[-r_{\max},r_{\max}]$, and discount factor $\gamma\in[0,1)$.
We assume that the kernel $P^{\cS}$ is uniformly ergodic with unique
stationary distribution $\pi^{\cS}$. Note that we allow for a general
uncountable state space $\cS$, which generalizes many existing works
that focus on finite or countable state spaces \citep{Tsitsiklis97-td_paper,Bhandari21-linear-td}.
The value function $V:\cS\to\R$ is defined as $V(s)=\E\left[\sum_{t=0}^{\infty}\gamma^{t}r(s_{t})|s_{0}=s\right],$
where $(s_{k})_{k\ge0}$ is the Markov chain with kernel $P^{\cS}$.
It is common to assume that $V$ can be approximated by a linear function
as $V(s)\approx\phi(s)^{\top}\theta,$ where $\phi=(\phi_{1},\ldots,\phi_{d})^{\top}:\cS\to\R^{d}$
is a known feature map and $\theta$ is an unknown weight vector.
We assume that $\phi$ is in $L^{2}(\pi^{\cS})$ and has a finite
rank $d$. The latter means that the functions $\{\phi_{i}\}$ are
linearly independent, i.e., $\sum_{i=1}^{d}c_{i}\phi_{i}=0$ implies
$c_{i}=0,$ $\forall i\in\{1,\ldots,d\}$. When $\cS$ is finite or
countable, our assumption reduces to the matrix $\Phi=\begin{bmatrix}\phi(1) & \phi(2) & \cdots & \phi(|\cS|)\end{bmatrix}^{\top}\in\R^{|\cS|\times d}$
having full column rank, which is standard in literature~\citep{Tsitsiklis97-td_paper,Bhandari21-linear-td,srikant-ying19-finite-LSA}.
Lastly, we assume for simplicity that $s_{0}\sim\pi^{\cS}$ and the
feature map is normalized such that $\phi_{\max}:=\sup_{s\in\cS}\|\phi(s)\|\leq\frac{1}{\sqrt{1+\gamma}}$. 

Given a single Markovian data stream $(s_{k})_{k\geq0}$, the linear
TD(0) algorithm computes the update 
\begin{equation}
\theta_{k+1}=\theta_{k}+\alpha\big[r(s_{k})+\gamma\phi(s_{k+1})^{\top}\theta_{k}-\phi(s_{k})^{\top}\theta_{k}\big]\phi(s_{k}).\label{eq:linear-td-update-rule}
\end{equation}
TD(0) computes an approximation of the solution $\theta^{*}$ of the
projected Bellman equation $\Phi\theta=\Pi_{\phi}(r+\gamma P^{\cS}\Phi\theta),$
where $\Pi_{\phi}$ is the projection operator w.r.t.~$\left\Vert \cdot\right\Vert _{L^{2}(\pi^{\cS})}$
onto the subspace spanned by $\{\phi_{i}\}$. It is easy to see that
the TD(0) update~\eqref{eq:linear-td-update-rule} is a special case
of the LSA update~\eqref{eq:update-rule} with 
\begin{align*}
x_{k}=(s_{k},s_{k+1}),\quad A(x_{k})=\phi(s_{k})\big(\gamma\phi(s_{k+1})-\phi(s_{k})\big)^{\top},\quad b(x_{k})=r(s_{k})\phi(s_{k}),
\end{align*}
and $\cX=\cS\times\cS.$ Below we verify that TD(0) satisfies the
required assumptions. 
\begin{itemize}
\item Assumption~\ref{assumption:uniform-ergodic}: The state space $\cX$
is Borel since the product of Borel spaces remains Borel \citep[Chapter 21.4, Proposition 20]{fristedt1997}.
The uniform ergodicity of the chain $(s_{k})_{k\ge0}$ implies that
of the augmented chain $(x_{k})_{k\ge0}$. The assumption $s_{0}\sim\pi^{\cS}$
implies $x_{0}\sim\pi$, where $\pi$ denotes the stationary distribution
of $(x_{k})_{k\ge0}$. 
\item Assumption~\ref{assumption:bounded} follows from the normalization
$\phi_{\max}\leq\frac{1}{\sqrt{1+\gamma}}$ and direction calculation:
\begin{align*}
\Amax & =\sup_{s,s'\in\cS}\|\phi(s)(\gamma\phi(s')-\phi(s))^{\top}\|\leq(1+\gamma)\phi_{\max}^{2}\quad\text{and}\quad\bmax=\sup_{s\in\cS}\|r(s)\phi(s)\|\leq r_{\max}\phi_{\max}.
\end{align*}
\item Assumption~\ref{assumption:hurwitz}: By direct calculation we have
$\BarA_{ij}=\dotp{\phi_{i},(I-\gamma P^{\cS})\phi_{j}}_{L^{2}(\pi^{\mS})},i,j=1,\ldots,d,$
so $\BarA$ is negative definite and hence Hurwitz. The proof is given
in Appendix~\ref{sec:neg-def-proof}.
\end{itemize}
Consequently, all the results in Sections~\ref{sec:converge-to-limit}--\ref{sec:implications}
apply to TD(0) with linear function approximation, constant stepsizes
and Markovian data. 

Our work generalizes many existing non-asymptotic results on TD(0)
in the i.i.d.\ data setting, e.g., \citep{Dalal18-lineartd0,Bhandari21-linear-td,Khamaru21-td-instance,durmus2021-LSA}.
Under this setting, TD(0) corresponds to the update 
\begin{align}
\theta_{k+1}=\theta_{k}+\alpha\big[r(s_{k})+\gamma\phi(s_{k}^{\text{next}})^{\top}\theta_{k}-\phi(s_{k})^{\top}\theta_{k}\big]\phi(s_{k}),\label{eq:TD_iid}
\end{align}
where the data $x_{k}=(s_{k},s_{k}^{\text{next}})$ is independent
across $k$ and follows the distribution $s_{k}\sim\pi^{\mS}$, $s_{k}^{\text{next}}\sim P^{\mS}(s_{k},\cdot)$.
In this setting, Theorem~\ref{thm:bias-characterization-reversible}
implies that TD(0) with a constant stepsize has no asymptotic bias,
i.e., $\E[\theta_{\infty}]=\theta^{*}.$

\subsubsection{The Semi-Simulator Setting}

Using the explicit bias characterization in Section \ref{sec:bias-expansion},
we show that TD(0) may admit zero bias beyond the i.i.d.~data setting. 

Specifically, consider the update rule (\ref{eq:TD_iid}). This time
we assume that $(s_{k})_{k\ge0}$ is a Markov chain, and conditioned
on $s_{k}$, $s_{k}^{\text{next}}$ is sampled independently from
$P^{\mS}(s_{k},\cdot)$. We call this the \emph{semi-simulator setting}.
For simplicity, we present our results for the \emph{tabular} setting of
TD(0) on a finite state space, i.e., $d=|\mS|<\infty$ and $\phi(s)=e_{s}\in\real^{|\mS|}$
is the $s$-th standard basis vector. In this case, the target vector
$\theta^{*}=V$ is the true value function and satisfies the vanilla
Bellman equation $\theta^{*}=r+\gamma P^{\mS}\theta^{*}$, which can
be explicitly written as
\begin{equation}
\theta^{*}(s)=r(s)+\gamma\E_{s^{\text{next}}\sim P^{\mS}(s,\cdot)}[\theta^{*}(s^{\text{next}})],\quad\forall s\in\mS.\label{eq:Bellman}
\end{equation}

\begin{cor} \label{cor:zero-bias-TD} Under the semi-simulator setting,
the TD(0) update (\ref{eq:TD_iid}) satisfies $\E[\theta_{\infty}]-\theta^{*}=0$.\end{cor}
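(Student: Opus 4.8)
The plan is to invoke the sufficient condition for zero bias from Corollary~\ref{cor:zero-bias-sufficient}, namely that $\E[A(x_k)\theta^{\ast}+b(x_k)\mid x_{k+1}=x]=0$ for all $x$, and verify it directly for the semi-simulator tabular TD(0) update. First I would write out the relevant quantities in the tabular case. Since $\phi(s)=e_s$, we have $A(x_k)=e_{s_k}(\gamma e_{s_k^{\text{next}}}-e_{s_k})^{\top}$ and $b(x_k)=r(s_k)e_{s_k}$, so that
\[
A(x_k)\theta^{\ast}+b(x_k)=\bigl(r(s_k)+\gamma\theta^{\ast}(s_k^{\text{next}})-\theta^{\ast}(s_k)\bigr)\,e_{s_k}.
\]
The key structural feature of the semi-simulator setting is that $x_k=(s_k,s_k^{\text{next}})$, and the next data point is $x_{k+1}=(s_{k+1},s_{k+1}^{\text{next}})$ where $s_{k+1}$ is the \emph{chain} successor of $s_k$, drawn from $P^{\mS}(s_k,\cdot)$, while $s_k^{\text{next}}$ is an \emph{independent} fresh draw from $P^{\mS}(s_k,\cdot)$ used only inside the TD increment. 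This independence is what I expect to exploit.

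The main step is to compute the conditional expectation $\E[A(x_k)\theta^{\ast}+b(x_k)\mid x_{k+1}=x]$. Here the subtle point — and what I anticipate to be the main obstacle — is correctly identifying the conditional law of $x_k=(s_k,s_k^{\text{next}})$ given $x_{k+1}=(s_{k+1},s_{k+1}^{\text{next}})$. One must disentangle how conditioning on the future point propagates backward through the dependency structure. The crucial observation is that $s_k^{\text{next}}$ is conditionally independent of the entire rest of the process given $s_k$; in particular, $s_k^{\text{next}}$ does not enter the transition to $s_{k+1}$ at all (the chain advances via $s_k\to s_{k+1}$, not via $s_k^{\text{next}}$). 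Consequently, conditioned on $x_{k+1}$, and in particular on $s_{k+1}$ (equivalently on $s_k$ through the time-reversed chain), the variable $s_k^{\text{next}}$ remains distributed as $P^{\mS}(s_k,\cdot)$. Therefore I would compute
\[
\E\bigl[\gamma\theta^{\ast}(s_k^{\text{next}})+r(s_k)-\theta^{\ast}(s_k)\mid s_k\bigr]=r(s_k)+\gamma\,\E_{s^{\text{next}}\sim P^{\mS}(s_k,\cdot)}[\theta^{\ast}(s^{\text{next}})]-\theta^{\ast}(s_k),
\]
which vanishes by the Bellman equation~\eqref{eq:Bellman}.

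Assembling these pieces, once I condition on $x_{k+1}=x$ I would first condition further on $s_k$ (a measurable function of the past given the future, or simply integrate it out with the appropriate backward kernel), pull the deterministic scalar factor through, and use that $s_k^{\text{next}}\mid s_k\sim P^{\mS}(s_k,\cdot)$ is unaffected by conditioning on $x_{k+1}$. The inner conditional expectation of the scalar multiplier is exactly the Bellman residual evaluated at $s_k$, which is identically zero by~\eqref{eq:Bellman}, so the whole expression is the zero vector for every realization of $s_k$, hence after averaging over $s_k$ given $x_{k+1}=x$ the condition~\eqref{eq:zero-bias-sufficient} holds for all $x\in\cX$. Corollary~\ref{cor:zero-bias-sufficient} then immediately yields $\E[\theta_{\infty}]-\theta^{\ast}=0$, completing the proof. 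The only place requiring care is the conditional-independence justification in the backward direction; I would state it as a short lemma about the dependency graph of the semi-simulator process (mirroring the right panel of Figure~\ref{fig:dag-lsa} but with the $s_k^{\text{next}}$ branch detached from the chain's forward evolution) rather than grinding through an explicit backward-kernel computation.
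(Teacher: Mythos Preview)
Your proposal is correct and follows essentially the same approach as the paper: both verify the sufficient condition~\eqref{eq:zero-bias-sufficient} by writing $A(x_k)\theta^\ast+b(x_k)=\bigl(r(s_k)+\gamma\theta^\ast(s_k^{\text{next}})-\theta^\ast(s_k)\bigr)e_{s_k}$, then towering on $s_k$ and using the conditional independence $s_k^{\text{next}}\indep x_{k+1}\mid s_k$ together with the Bellman equation~\eqref{eq:Bellman} to annihilate the inner expectation. The paper does this in three displayed lines rather than via a separate dependency-graph lemma, but the argument is identical.
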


\begin{proof}We verify the condition (\ref{eq:zero-bias-sufficient})
in Corollary \ref{cor:zero-bias-sufficient}. Recall that $x_{k}=(s_{k},s_{k}^{\text{next}})$.
By definitions of $A,b$ and $\phi$ and the law of total expectation,
we have 
\begin{align*}
\E\big[A(x_{k})\theta^{\ast}+b(x_{k})\mid x_{k+1}\big]= & \E\big[\phi(s_{k})\big(\gamma\phi(s_{k}^{\text{next}})-\phi(s_{k})\big)^{\top}\theta^{\ast}+r(s_{k})\phi(s_{k})\mid x_{k+1}\big]\\
= & \E\big[\phi(s_{k})\big(\gamma\theta^{*}(s_{k}^{\text{next}})-\theta^{*}(s_{k})+r(s_{k})\big)\mid x_{k+1}\big]\\
= & \E\big[\phi(s_{k})\E[\gamma\theta^{*}(s_{k}^{\text{next}})-\theta^{*}(s_{k})+r(s_{k})\mid x_{k+1},s_{k}]\mid x_{k+1}\big]
\end{align*}
With $s_{k}^{\text{next}}\indep x_{k+1}\mid s_{k}$ and the Bellman
equation (\ref{eq:Bellman}), the inner expectation above satisfies
\[
\E[\gamma\theta^{*}(s_{k}^{\text{next}})-\theta^{*}(s_{k})+r(s_{k})\mid x_{k+1},s_{k}]=\gamma\E[\theta^{*}(s_{k}^{\text{next}})\mid s_{k}]-\theta^{*}(s_{k})+r(s_{k})=0.
\]
Combining pieces verifies the desired condition $\E[A(x_{k})\theta^{\ast}+b(x_{k})\mid x_{k+1}]=0.$\end{proof}

\subsection{Implications for Markovian SGD}
\label{sec:sgd-implication} 

SGD is widely used in optimization and machine learning problems.
When minimizing a quadratic objective function, the update step of
SGD is linear in the iterate and can be cast as LSA. Markovian data arise in many sequential decision-making settings, such as those
with experience replay and auto-regressive dynamics \citep{guy2020}. 

Specifically, we consider SGD applied to the minimization of a quadratic
function 
\[
\ell(\theta):=-\frac{1}{2}\,\E_{x\sim\pi}\Big[\theta^{\top}A(x)\theta+b(x)^{\top}\theta+c(x)\Big],
\]
where $A:\cX\to\S^{d\times d}$, $b:\cX\to\R^{d}$ and $c:\cX\to\real$
are deterministic functions, and $\S^{d\times d}$ denotes the set
of $d\times d$ symmetric matrices. Given a Markovian data stream
$(x_{k})_{k\geq0}$ with stationary distribution $\pi$, the constant
stepsize SGD algorithm $\theta_{k+1}=\theta_{k}+\alpha\left(A(x_{k})\theta_{k}+b(x_{k})\right)$
is a special case of the LSA iteration (\ref{eq:update-rule}). We
impose the same Assumptions~\ref{assumption:uniform-ergodic}--\ref{assumption:hurwitz}
on $A,b$ and $(x_{k})_{k\ge0}$. Since the matrix $\BarA$ is symmetric,
Assumption \ref{assumption:hurwitz} implies that $-\BarA$ is positive
definite. In this case, the objective function $\ell$ is strongly
convex, with a unique minimizer $\theta^{*}$ satisfying the stationarity
condition $\nabla\ell(\theta)=0,$ which can be seen to coincide with
the steady-state equation (\ref{eq:steady-state-equation}). Note
that we do not need the matrix $-A(x)$ to be positive definite for
each individual $x$.

All the results in Sections~\ref{sec:converge-to-limit}--\ref{sec:implications}
apply to the above setting. In particular, we establish the distributional convergence of SGD with constant stepsizes and characterize the bias, variance, and optimization error under averaging and extrapolation.
These results generalize several quadratic minimization results in
\citep{bach2013,Dieuleveut20-bach-SGD}, which focus on the i.i.d.~setting.
In the Markovian data setting, our results imply that  in general there
exists an asymptotic bias proportional to the stepsize $\alpha$,
i.e., $\E[\theta_{\infty}]-\theta^{*}=\alpha B^{(1)}+\bigO(\alpha^{2}).$
This result is a generalization and a more precise version of \citep[Theorem 4]{guy2020},
which considers SGD for least squares regression---a special case
of quadratic minimization---and establishes the lower bound $\|\E[\theta_{\infty}]-\theta^{\ast}\|\geq c\alpha$
under Markovian data. Our result explains the root of this error lower
bound and suggests RR extrapolation as a way to reduce the error.

\subsubsection{Least Squares Regression with Independent Additive Noise}

\label{sec:reg-implication}

We specialize our results to the least squares regression problem.
The explicit bias characterization in Section~\ref{sec:bias-expansion}
allows us to identify an interesting setting with Markovian data but
nevertheless zero bias.

Suppose that we observe data pairs $(\obs_{k},y_{k})_{k\geq0}$, where
$\obs_{k}\in\real^{d}$ is the covariate vector sequentially sampled
from a uniformly ergodic Markov chain, $y_{k}=\dotp{\obs_{k},\theta^{\text{reg}}}+n_{k}$
is the scalar response variable, $\theta^{\text{reg}}\in\real^{d}$
is the true regression vector, and $n_{k}$ is the additive noise.
We assume that the noise $n_{t}$ is zero mean and independent from
$\{g_{t}\}$ and $\{n_{t}:t\neq k\}$. The goal is to estimate the
minimizer $\theta^{*}=\theta^{\text{reg}}$ of the least squares objective
$\ell(\theta)=\E_{(g,y)\sim\pi_{0}}\big[(\dotp{\obs,\theta}-y)^{2}\big],$
where $\pi_{0}$ is the stationary distribution of the process $(g_{k},y_{k})_{k\ge0}$.
The corresponding SGD step is 
\begin{equation}
\theta_{t+1}=\theta_{t}-\alpha\obs_{t}(\obs_{t}^{\top}\theta_{t}-y_{t}).\label{eq:SGD_regression}
\end{equation}
This step can be cast into the Markovian LSA iteration (\ref{eq:update-rule})
with 
\[
x_{t}=(g_{t},n_{t}),\quad A(x_{t})=-\obs_{t}\obs_{t}^{\top},\quad\text{and}\quad b(x_{t})=\obs_{t}y_{t}=\obs_{t}\obs_{t}^{\top}\theta^{\ast}+\obs_{t}n_{t}.
\]
Note that the $x_{t}$'s are correlated since the covariates $g_{t}$'s
are. Nevertheless, SGD turns out to have zero asymptotic bias, $\E\theta_{\infty}-\theta^{*}=0$,
due to the structure of the noise component $n_{t}$. We prove this
claim by verifying the condition~(\ref{eq:zero-bias-sufficient})
in Corollary~\ref{cor:zero-bias-sufficient}: 
\begin{align}
\E[A(x_{t})\theta^{\ast}+b(x_{t})\mid x_{t+1}]= & -\E[\obs_{t}\obs_{t}^{\top}\mid\obs_{t+1},n_{t+1}]\theta^{\ast}+\E[\obs_{t}\obs_{t}^{\top}\theta^{\ast}+\obs_{t}n_{t}\mid\obs_{t+1},n_{t+1}]\nonumber \\
= & \E[\obs_{t}\mid\obs_{t+1}]\cdot\E[n_{t}]=0,\label{eq:regression_zero_bias}
\end{align}
where the last two steps follow from the independence and zero mean
assumptions on $n_{t}$. Note that a similar observation is made in
\citep[Theorem 3]{guy2020} using a more specialized argument. 

In fact, the bias is zero under the more general setting of conditionally
independent noise:

\begin{cor} \label{cor:zero-bias-regression} In the regression setting
above, if the additive noise satisfies
\[
n_{t}\indep g_{t}\mid(g_{t+1},n_{t+1})\quad\text{and}\quad\E[n_{t}\mid g_{t+1},n_{t+1}]=0,
\]
then the asymptotic bias of the SGD update (\ref{eq:SGD_regression})
satisfies $\E[\theta_{\infty}]-\theta^{*}=0$. \end{cor}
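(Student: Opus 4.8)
The plan is to verify the sufficient condition for zero bias given in Corollary~\ref{cor:zero-bias-sufficient}, namely that $\E[A(x_t)\theta^{\ast} + b(x_t) \mid x_{t+1}] = 0$ for all values of $x_{t+1}$. Recall from the regression set-up that $x_t = (g_t, n_t)$, $A(x_t) = -g_t g_t^\top$, and $b(x_t) = g_t g_t^\top \theta^{\ast} + g_t n_t$, so the quantity $A(x_t)\theta^{\ast} + b(x_t)$ telescopes: the two $g_t g_t^\top \theta^{\ast}$ terms cancel exactly, leaving the single term $g_t n_t$. Thus the condition to verify reduces to showing $\E[g_t n_t \mid g_{t+1}, n_{t+1}] = 0$. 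This algebraic simplification is the same one exploited in equation~\eqref{eq:regression_zero_bias}, and it is what makes the regression setting special: the structural identity $b(x_t) = g_t g_t^\top \theta^{\ast} + g_t n_t$ isolates the dependence on the true regression vector so that it is annihilated by $A(x_t)\theta^{\ast}$.

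The heart of the argument is then to show $\E[g_t n_t \mid g_{t+1}, n_{t+1}] = 0$ using the two hypotheses of the corollary. First I would invoke the conditional independence $n_t \indep g_t \mid (g_{t+1}, n_{t+1})$, which allows the conditional expectation of the product to factor:
\[
\E[g_t n_t \mid g_{t+1}, n_{t+1}] = \E[g_t \mid g_{t+1}, n_{t+1}] \cdot \E[n_t \mid g_{t+1}, n_{t+1}].
\]
One must be slightly careful here since $g_t$ is a vector and $n_t$ a scalar; the factorization holds coordinatewise because conditional independence of $n_t$ and $g_t$ given $(g_{t+1},n_{t+1})$ means the conditional expectation of the product equals the product of conditional expectations (with the scalar factor pulled out). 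Then the second hypothesis, $\E[n_t \mid g_{t+1}, n_{t+1}] = 0$, makes the scalar factor vanish, so the entire product is zero regardless of the value of the vector factor $\E[g_t \mid g_{t+1}, n_{t+1}]$.

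Having established $\E[A(x_t)\theta^{\ast} + b(x_t) \mid x_{t+1}] = 0$, I would conclude by directly applying Corollary~\ref{cor:zero-bias-sufficient}, which yields $\E[\theta_\infty] - \theta^{\ast} = 0$ and completes the proof. I do not anticipate a serious obstacle: this corollary is a strict generalization of the computation already displayed in~\eqref{eq:regression_zero_bias}, replacing the earlier independence-and-zero-mean assumptions on $n_t$ with their conditional analogues. The only point requiring mild care is the justification of the product factorization in the vector-scalar setting and ensuring all relevant integrability holds so the conditional expectations are well-defined; these follow from the square-integrability already built into the standing assumptions. The essential content is that the telescoping of $A(x_t)\theta^{\ast}$ against the first part of $b(x_t)$ leaves only the noise term $g_t n_t$, whose conditional mean is forced to zero by the conditional-independence and conditional-zero-mean hypotheses.
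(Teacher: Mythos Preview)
Your proposal is correct and follows exactly the same approach as the paper: the paper's proof simply observes that the last two steps of equation~\eqref{eq:regression_zero_bias}---the factorization of $\E[g_t n_t \mid g_{t+1}, n_{t+1}]$ and the vanishing of the noise factor---remain valid under the conditional-independence and conditional-zero-mean hypotheses. You have spelled out these steps in more detail than the paper's one-line proof, but the substance is identical.
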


\begin{proof}The last two steps in equation (\ref{eq:regression_zero_bias})
hold under the assumption of the corollary.\end{proof}

Finally, we reiterate that when the condition in Corollary~\ref{cor:zero-bias-regression} is not satisfied and the bias is nonzero, one may consider employing Richardson-Romberg extrapolation to reduce the bias.

\section{Numerical Experiments}

\label{sec:experiments}

In this section, we provide numerical experiment results for LSA,
TD(0) with linear function approximation, and SGD applied to least squares regression.

\subsection{Experiments for LSA}

\label{sec:expt_LSA}

We consider the LSA update~\eqref{eq:update-rule} in dimension $d=4$
for a finite state, irreducible, and aperiodic Markov chain with $n=8$
states. We construct the transition probability matrix $P$ and the
functions $A$ and $b$ randomly; see Appendix~\ref{sec:expt_detals_LSA}
for the details. Given $P$, we generate a single trajectory of the
Markov chain $(x_{k})_{k=1}^{K}$ of length $K=10^{8}$, and run the
LSA iteration with initialization $\theta_{0}^{(\alpha)}=0$ and stepsizes
$\alpha\in\{0.2,0.4,0.8\}$.

In Figure~\ref{fig:lsa-ta-rr}, we plot the error $\|\theta_{k}^{(\alpha)}-\theta^{*}\|$
for the raw LSA iterates $\theta_{k}^{(\alpha)}$, the error for the
tail-averaged (TA) iterates $\bar{\theta}_{k/2,k}^{(\alpha)}$, and
the error for the RR extrapolated iterates $\widetilde{\theta}_{k}^{(\alpha)}$
with stepsizes $\alpha$ and $2\alpha$. For comparison, we also include
the errors for LSA with a diminishing stepsize $\alpha_{k}=0.2/k^{0.75}$.
We see that the raw LSA iterates with constant stepsizes oscillate,
whereas the tail averaged iterates converge to a limit, with a smaller
error for a smaller stepsize. Moreover, the final TA error, which
corresponds to the asymptotic bias, is roughly proportional to the
stepsize (note the equal spacing in the log scale between the three TA
lines). Finally, RR extrapolation with two stepsizes further reduces
the bias, as can be seen by comparing, e.g., the dashed red line (TA
with $\alpha=0.4$) and the solid red line (RR with $\alpha=0.4$
and $0.8$). These observations are consistent with our theory. Finally,
the tail-averaged iterates with constant stepsizes have significantly
faster initial convergence than the iterates with a diminishing stepsize.

\begin{figure}[htbp]
\centering \subfigure[
The errors of the raw LSA iterates, tail-averaged (TA) iterates, and RR extrapolated iterates with different stepsizes $\alpha$. 
\label{fig:lsa-ta-rr}]{ \includegraphics[width=0.45\textwidth]{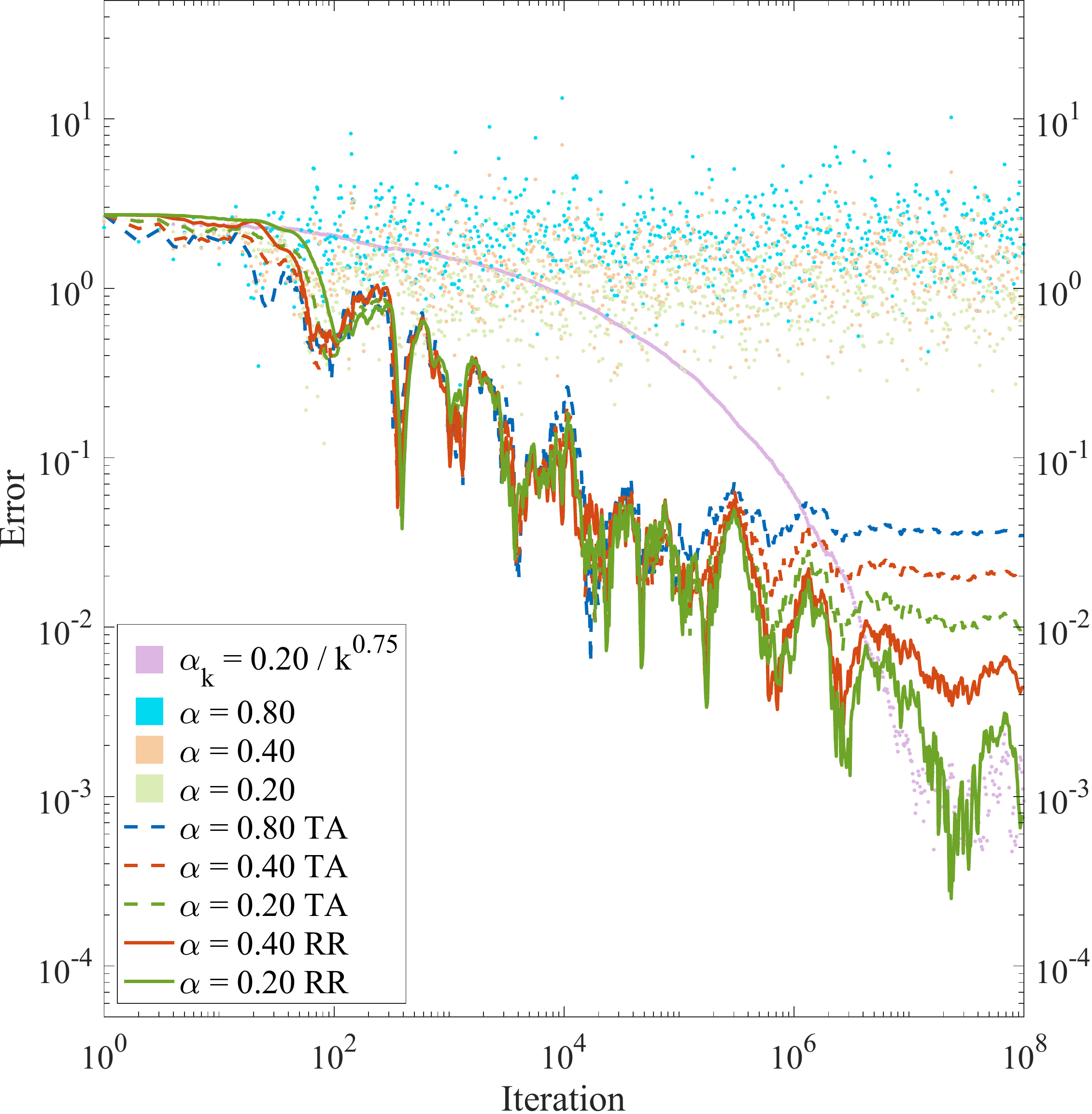}
} \quad\subfigure[
The errors of the raw LSA iterates and tail-averaged (TA) iterates under different SLEM $|\lambda_2|$. The stepsize $\alpha$ is fixed at 0.8. 
\label{fig:lsa-iid-mkv}]{ \includegraphics[width=0.45\textwidth]{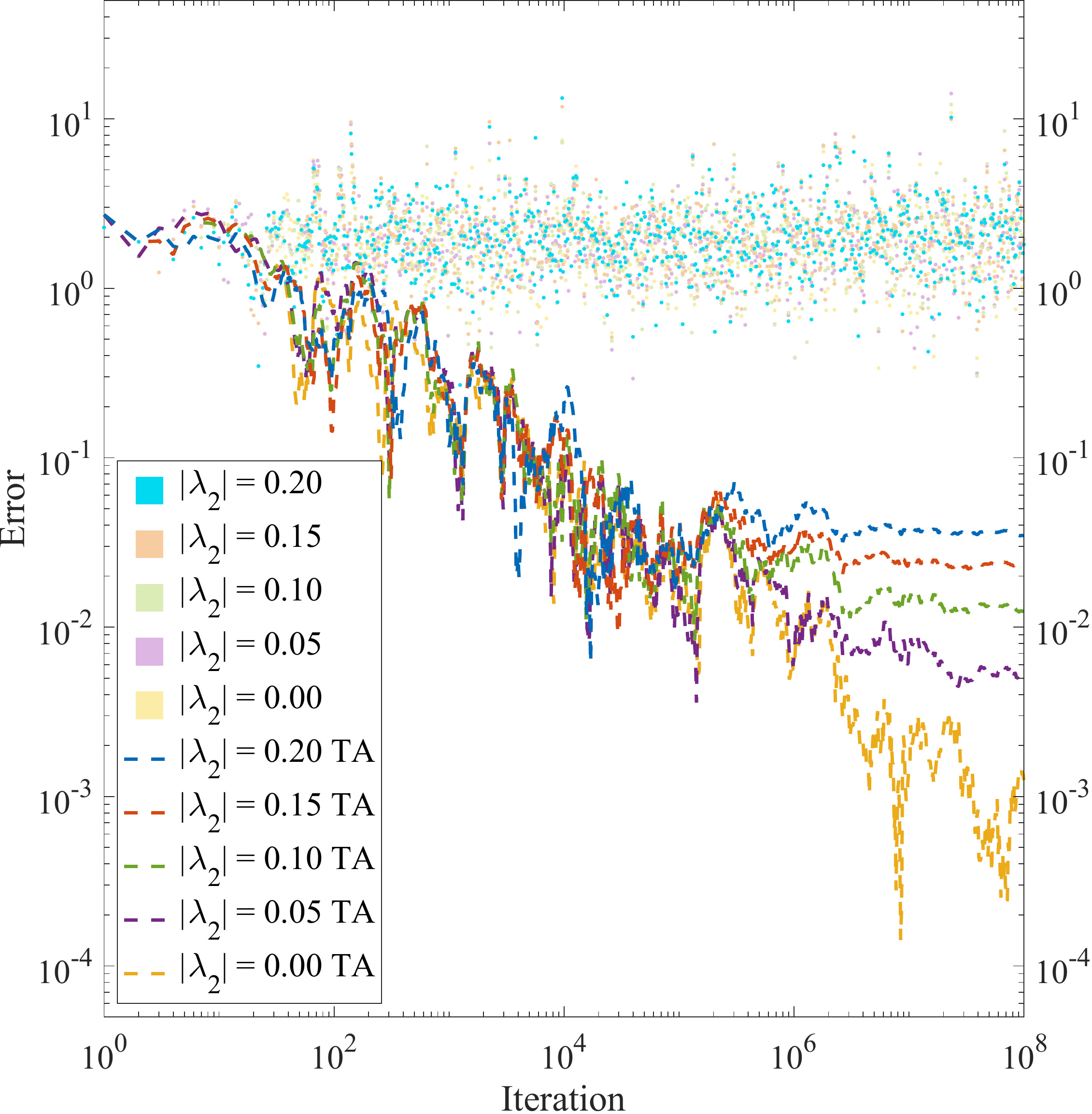} } \caption{Experiment results for LSA}
\label{fig:lsa} 
\end{figure}

We next investigate how the error depends on the spectral gap and
mixing time of $P$. As the state space is finite, we work with the
SLEM $|\lambda_{2}|$ of $P$, which satisfies $\absgap(P)=1-|\lambda_{2}|$.
Given $P$ generated above and its stationary distribution $\pi$,
we construct another transition probability matrix parameterized by
$\beta\in[0,1]$ as follows: 
\[
P^{(\beta)}=\beta\cdot P+(1-\beta)\cdot\onevec\pi^{\top}.
\]
Note that $P^{(1)}=P$, and $P^{(\beta)}$ has $\pi$ as the stationary
distribution for any $\beta$. As $\beta$ decreases from $1$ to
$0$, the SLEM $|\lambda_{2}|$ of $P^{(\beta)}$ decreases towards
$0$. For different values of $\beta$, we run the LSA with $P^{(\beta)}$
as the transition probability matrix of the chain $(x_{k})_{k\ge0}$.
In Figure~\ref{fig:lsa-iid-mkv}, we plot the corresponding errors
of the tail-averaged iterates. We see that a smaller $|\lambda_{2}|$
leads to a smaller final error. Moreover, when $\lambda_{2}=0$, which
corresponds to the i.i.d.\ data setting, the error is converging
to zero, which indicates a vanishing asymptotic bias. These observations
are consistent with Theorem~\ref{thm:bias-characterization-reversible}
on the relationship between the asymptotic bias and mixing time.

\subsection{Experiments for TD(0) with Linear Function Approximation}
\label{sec:expt_TD}

We perform a similar set of experiments as in the previous sub-section
on the TD(0) algorithm. In particular, we consider the classical ``Problematic
MDP'' from \citep{koller2000-pi,Lagoudakis03-LSPI}, and use TD(0)
with linear function approximation to estimate the value function
of a given policy. See Appendix~\ref{sec:expt_details_TD} for the
details of the MDP, the policy, and the choice of the feature vectors.

\begin{figure}[htbp]
\centering \includegraphics[width=0.7\textwidth]{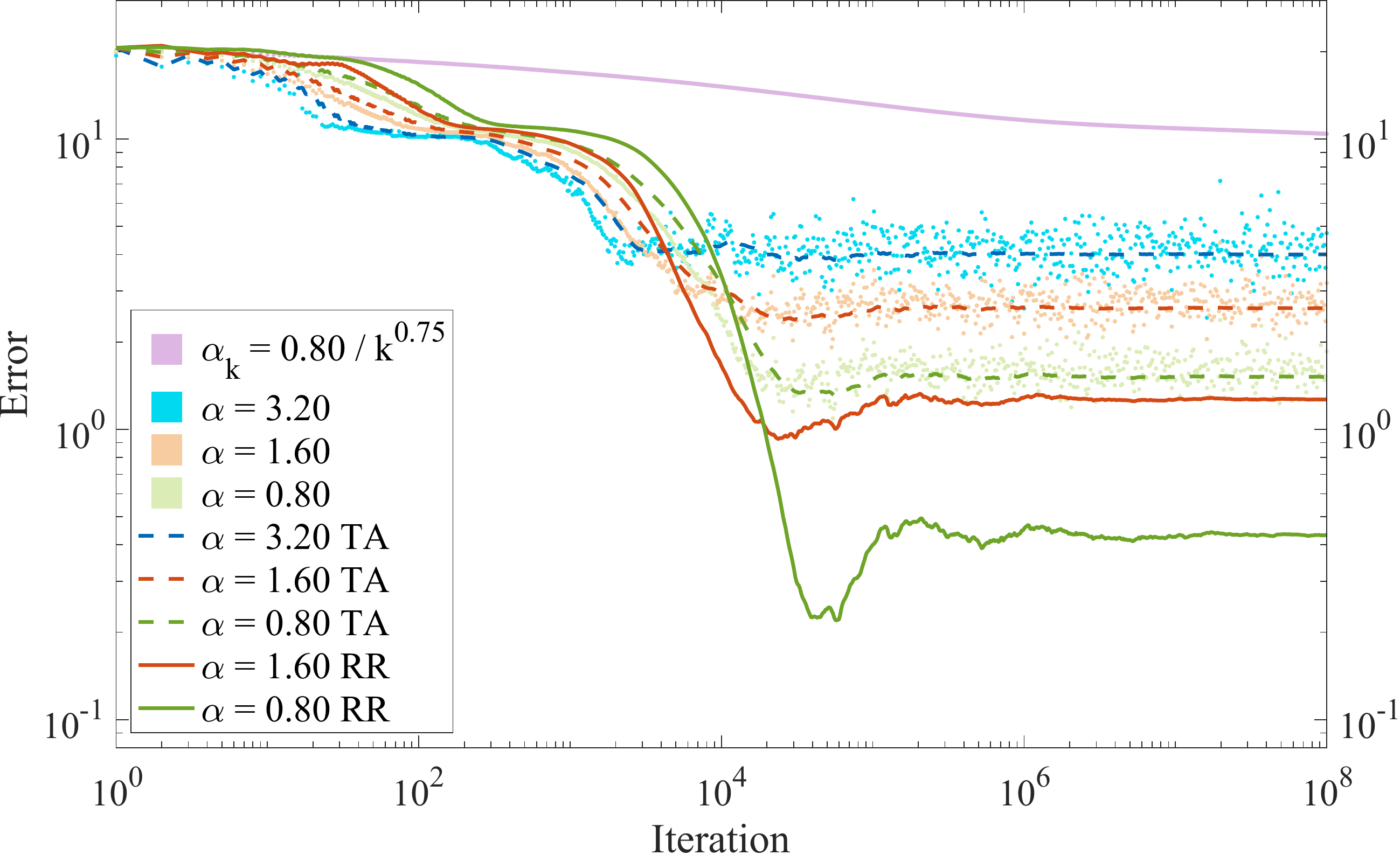}
\caption{The errors of the raw TD(0), tail-averaged (TA), and RR extrapolated
iterates with different stepsizes $\alpha$.}
\label{fig:td-ta-rr} 
\end{figure}

In Figure~\ref{fig:td-ta-rr}, we plot the errors of the raw TD(0)
iterates, tail-averaged iterates, and RR extrapolated iterates with
different stepsizes $\alpha$. The results are qualitatively similar
to those in Figure~\ref{fig:lsa-ta-rr}. In addition, the TA iterates
with a larger stepsize have faster initial convergence, which is consistent
with theoretical prediction in Corollary~\ref{cor:pr-avg-bounds}.

We further investigate the benefit of higher-order RR extrapolation
with more than 2 stepsizes, using the procedure described in equations~\eqref{eq:RR_coefficients}
and~\eqref{eq:RR_m_stepsizes}. Specifically, we compare the errors
of the tail-averaged iterates and the RR extrapolated iterates with
$2$ to $6$ stepsizes. The results are shown in Figure~\ref{fig:TD_6RRs}.
Here we use a set of large stepsizes (of similar magnitudes), which
give fast initial convergence. We see that using more stepsizes in
RR extrapolation reduces the final errors by a significant margin.
In particular, the error of RR extrapolation with 6 stepsizes is smaller
by 3 orders of magnitude than TA with the same stepsizes. We emphasize
that this error reduction is obtained almost for free, as we can run
the six TD(0) iterations in parallel using the same data.

\begin{figure}[htbp]
\centering \includegraphics[width=0.7\textwidth]{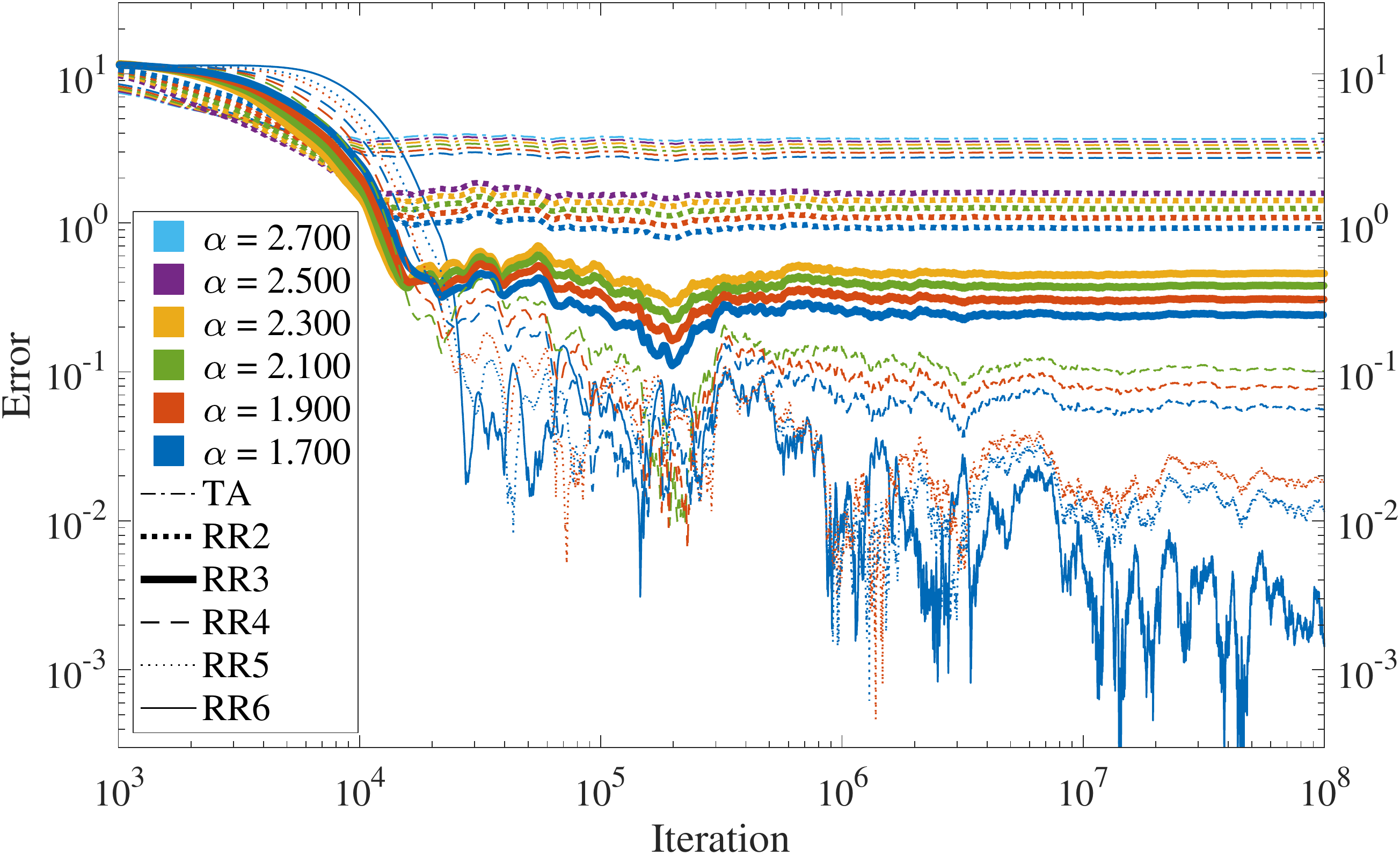} \caption{Comparison between tail-averaging (TA) and RR extrapolation with $m$
stepsizes, for $m=2,\ldots,6$. The setting for each line in the plot
is given by its line style (representing the number of stepsizes used
in RR) and line color (representing the smallest stepsize involved).
For example, the dash-dotted green line corresponds to TA with stepsize
$\alpha=2.1$, and the dashed red line corresponds to RR with four
stepsizes $\alpha\in\{1.9,\,2.1,\,2.3,\,2.5\}$. }
\label{fig:TD_6RRs} 
\end{figure}

\subsection{Experiments for Least Squares Linear Regression}
\label{sec:expt_SGD}

Finally, we consider SGD applied to the regression model $y_{k}=\dotp{\obs_{k},\theta^{\text{reg}}}+n_{k}$
as described in Section~\ref{sec:reg-implication}, where $\theta^{\text{reg}}=0$,
and $g_{k}\in\R^{2}$ is obtained from a Metropolis-Hastings (MH)
sampler on $[-1,1]^{2}$; see Appendix~\ref{sec:expt_detals_SGD}
for the details. We generate a single trajectory of the Markov chain
$(\obs_{k},n_{k})_{k=1}^{K}$ with $K=10^{8}$ and apply SGD with
constant stepsizes $\alpha\in\{0.01,0.02,0.04\}$. The target vector
is the minimizer $\theta^{\ast}$ of the least squares objective $\ell(\theta)=\E_{(g,n)\sim\pi}[\|\dotp{g,\theta}-\left\langle g,\theta^{\text{reg}}\right\rangle -n\|^{2}]$. 

We consider two different settings for the noise $n_{k}$ and study
its impact on the error $\|\theta_{k}-\theta^{*}\|$. In the first
setting, we assume $n_{k}\overset{\text{i.i.d.}}{\sim}\text{Unif}[-1,1]$.
In this case, Corollary \ref{cor:zero-bias-regression} predicts that
the asymptotic bias is zero despite the Markovian correlation in $\{g_{k}\}$.
In Figure~\ref{fig:sgd-ta-no-bias}, we plot the errors for the
TA iterates $\bar{\theta}_{k/2,k}^{(\alpha)}$ with a constant stepsize
and for the raw LSA iterates with a diminishing stepsize $\alpha_{k}=0.01/k^{0.75}$.
We see that the constant stepsize TA iterates converge to zero, as
predicted by our theory, and the convergence speed is faster than
using a diminishing stepsize.

In the second setting, the noise $n_{k}$ is correlated with $\obs_{k}$
as follows: we set $n_{k}=\text{sign}\big(g_{k}(1)+g_{k}(2)\big)$,
where $g_{k}(i)$ denotes the $i$-th coordinate of $g_{k}$. In
Figure~\ref{fig:sgd-ta-with-bias}, we plot the errors for TA iterates
$\bar{\theta}_{k/2,k}^{(\alpha)}$ and RR extrapolated iterates $\widetilde{\theta}_{k}^{(\alpha)}$.
The results here indicate a nonzero bias proportional to the stepsize,
in contrast to the first noise setting. We also see that RR extrapolation
reduces the bias.

\begin{figure}[htbp]
\centering %
\begin{minipage}[c]{0.47\textwidth}%
 \centering \includegraphics[width=1\linewidth]{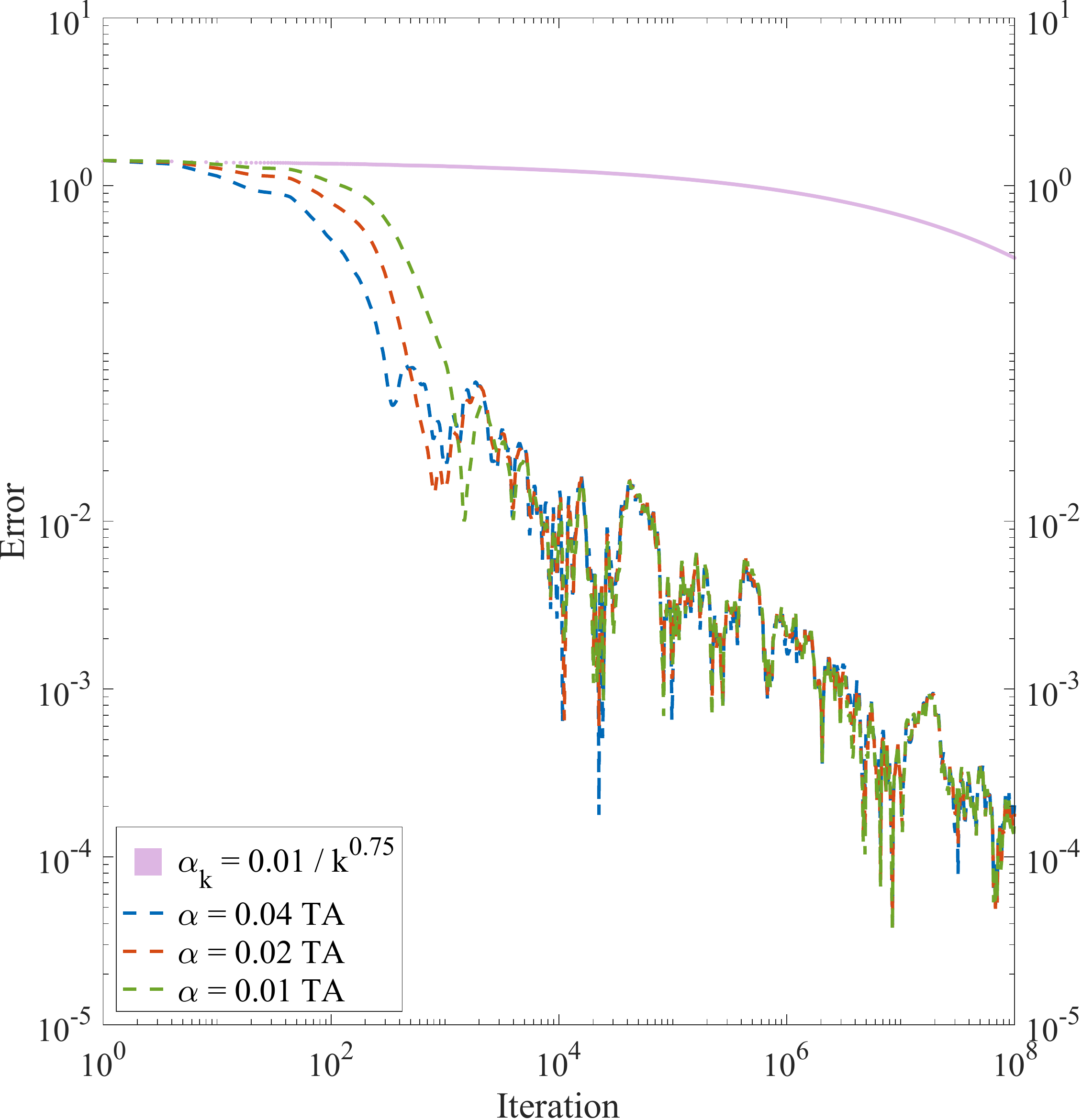}
\captionof{figure}{The errors of the tail-averaged (TA) SGD iterates
with different stepsizes $\alpha$.\\
 } \label{fig:sgd-ta-no-bias} %
\end{minipage}\hspace{0.5cm}%
\begin{minipage}[c]{0.47\textwidth}%
\centering \includegraphics[width=1\linewidth]{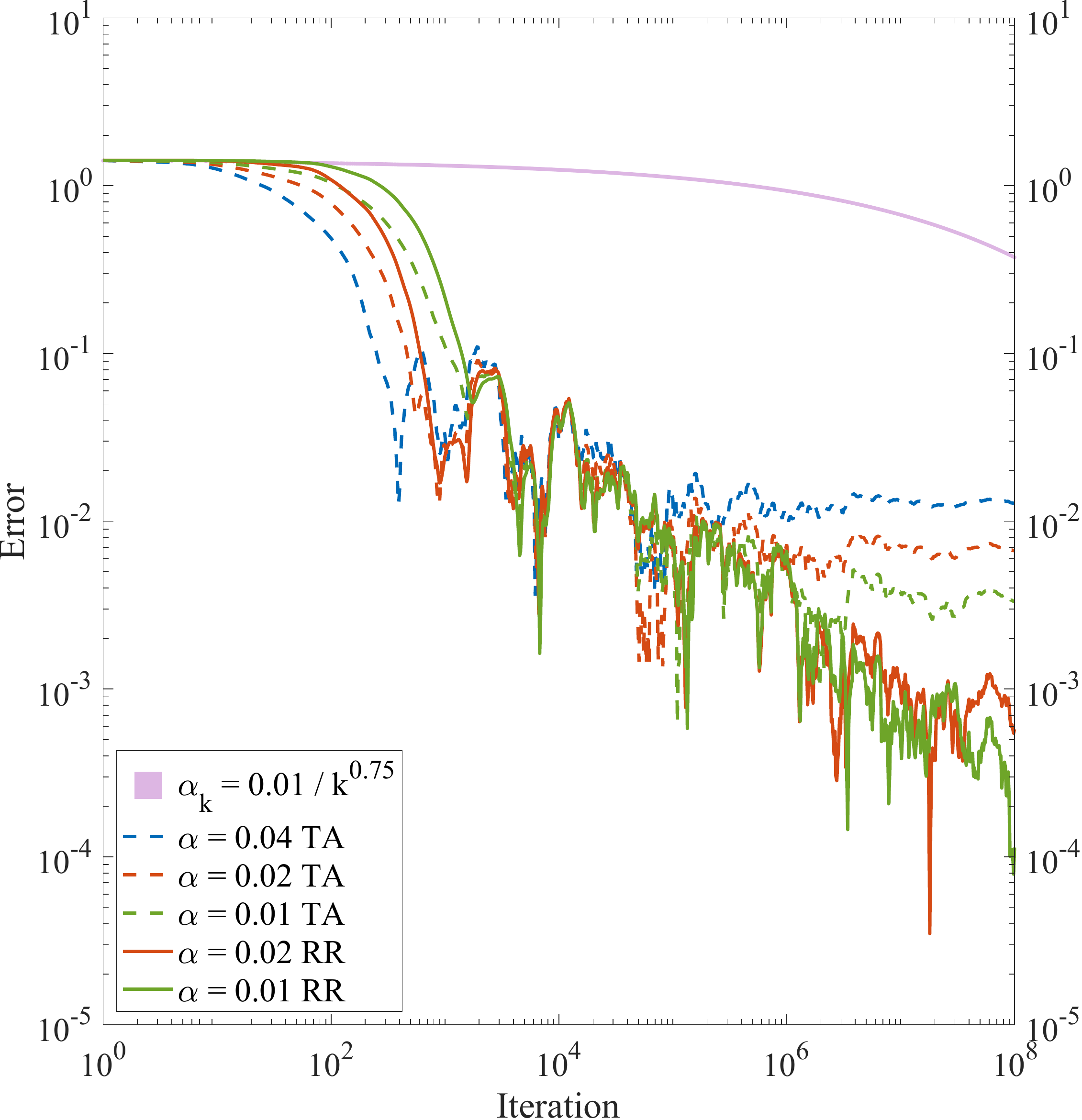}
\captionof{figure}{The errors of the tail-averaged (TA) and RR
extrapolated SGD iterates with different stepsizes $\alpha$.} \label{fig:sgd-ta-with-bias} %
\end{minipage}
\end{figure}

\section{Proof Outline}
\label{sec:proof_sketch}

In this section, we outline the proofs for Theorem~\ref{thm:thm-converge}
(convergence of LSA) and Theorem~\ref{thm:bias-characterization}
(bias expansion). The proofs make use of a pilot result Proposition~\ref{prop:pilot},
stated in Section~\ref{sec:pilot}, which serves as the basis for
subsequent analysis. The complete proofs of these results and other
main theorems/corollaries are given in the appendix.

\subsection{A Pilot Result}
\label{sec:pilot}

We have the following non-asymptotic upper bound on the MSE $\E[\|\theta_{k}-\theta^{\ast}\|^{2}]$.

\begin{proposition} \label{prop:pilot} Under Assumptions~\ref{assumption:uniform-ergodic},~\ref{assumption:bounded}
and~\ref{assumption:hurwitz}, if $\alpha$ satisfies equation~\eqref{eq:alpha-constraint},
then the following bound holds for all $k\geq\tau$, 
\[
\E[\|\theta_{k}-\theta^{\ast}\|^{2}]\leq10\,\frac{\gammax}{\gammin}\left(1-\frac{0.9\alpha}{\gammax}\right)^{k}\Big(\E[\|\theta_{0}-\theta^{\ast}\|^{2}]+s_{\min}^{-2}(\BarA)\bmax^{2}\Big)+\alpha\tau\cdot\kappa,
\]
with $\kappa$ defined in equation~\eqref{eq:kappa-def}. \end{proposition}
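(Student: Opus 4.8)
The plan is to run a Lyapunov drift argument with the quadratic form $V(z):=z^\top\Gamma z$, where $\Gamma\succ0$ solves the Lyapunov equation $\BarA^\top\Gamma+\Gamma\BarA=-I$ afforded by Assumption~\ref{assumption:hurwitz}. Write $z_k:=\theta_k-\theta^\ast$ and introduce the zero-mean noise function $g(x_k):=A(x_k)\theta^\ast+b(x_k)$, so that $\pi(g)=\BarA\theta^\ast+\bar b=0$ by~\eqref{eq:steady-state-equation}; the update~\eqref{eq:update-rule} then reads $z_{k+1}=z_k+\alpha w_k$ with $w_k:=A(x_k)z_k+g(x_k)$. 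First I would expand
\[
V(z_{k+1})=V(z_k)+2\alpha\, z_k^\top\Gamma w_k+\alpha^2\, w_k^\top\Gamma w_k
\]
and isolate the idealized drift $2\alpha\,z_k^\top\Gamma\BarA z_k=\alpha\,z_k^\top(\Gamma\BarA+\BarA^\top\Gamma)z_k=-\alpha\|z_k\|^2$, which is the source of the negative feedback. What remains to control are the \emph{correlation term} $2\alpha\,z_k^\top\Gamma(A(x_k)-\BarA)z_k$, the \emph{noise cross term} $2\alpha\,z_k^\top\Gamma g(x_k)$, and the second-order term $\alpha^2 w_k^\top\Gamma w_k$.

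The crux is the first two terms: they do not vanish in conditional expectation because $A(x_k)$ and $g(x_k)$ are correlated with $z_k$ through the shared Markovian history. Following the look-back technique of~\citep{srikant-ying19-finite-LSA}, I would condition on the filtration $\mathcal{F}_{k-\tau}:=\sigma(\theta_0,x_0,\ldots,x_{k-\tau})$ and decompose $z_k=z_{k-\tau}+(z_k-z_{k-\tau})$. Since $z_{k-\tau}$ is $\mathcal{F}_{k-\tau}$-measurable and, by the Markov property, conditionally independent of $x_k$ given $x_{k-\tau}$, I can pull the conditional expectations out and invoke the mixing-time bounds~\eqref{eq:a-mix-time}--\eqref{eq:b-mix-time} with $\epsilon=\alpha$: using $\|\E[A(x_k)-\BarA\mid x_{k-\tau}]\|\le\alpha\Amax$ and $\|\E[g(x_k)\mid x_{k-\tau}]\|\le\alpha(\Amax\|\theta^\ast\|+\bmax)$, the $z_{k-\tau}$ contributions are of order $\alpha^2\|z_{k-\tau}\|^2$ and $\alpha^2 g_{\max}\|z_{k-\tau}\|$ with $g_{\max}:=\sup_x\|g(x)\|$. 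The residual $z_k-z_{k-\tau}$ is handled by iterating the recursion over the $\tau$-window, which gives $\|z_k-z_{k-\tau}\|\lesssim\alpha\tau\big(\sup_{k-\tau\le j\le k}\|z_j\|+g_{\max}\big)$, and its contributions (bounding $\|A(x_k)-\BarA\|\le2\Amax$ without mixing) are of order $\alpha^2\tau\|z_k\|^2$ and $\alpha^2\tau g_{\max}^2$. Together with $\alpha^2 w_k^\top\Gamma w_k\le\alpha^2\gammax(\Amax\|z_k\|+g_{\max})^2$, collecting terms yields a one-step drift inequality of the schematic form
\[
\E[V(z_{k+1})]\le\E[V(z_k)]-\alpha\,\E\|z_k\|^2+c_1\alpha^2\tau\,\E\|z_k\|^2+c_2\alpha^2\tau\, g_{\max}^2 .
\]

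From here the argument is mechanical. Under the stepsize condition~\eqref{eq:alpha-constraint} the product $\alpha\tau$ is small enough that $c_1\alpha\tau\le0.1$, so the perturbation is absorbed into the negative drift: $-\alpha\E\|z_k\|^2+c_1\alpha^2\tau\E\|z_k\|^2\le-0.9\alpha\,\E\|z_k\|^2\le-\tfrac{0.9\alpha}{\gammax}\E[V(z_k)]$, where the last step uses $\|z_k\|^2\ge V(z_k)/\gammax$ from~\eqref{eq:gamma-property}. This produces the clean contraction $\E[V(z_{k+1})]\le(1-\tfrac{0.9\alpha}{\gammax})\E[V(z_k)]+c_2\alpha^2\tau g_{\max}^2$. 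Unrolling from $k=\tau$ and summing the geometric series (bounded by $\gammax/(0.9\alpha)$) gives a steady-state term of order $\gammax\alpha\tau g_{\max}^2$; I would bound $\E[V(z_\tau)]$ in terms of $\E\|z_0\|^2$ using that $(1+\alpha)^\tau=O(1)$ over the short window, bound $g_{\max}^2\le4\,s_{\min}^{-2}(\BarA)\bmax^2$ via $\|\theta^\ast\|\le s_{\min}^{-1}(\BarA)\bmax$ and $\Amax\le1$, and finally convert back from $V$ to $\|\cdot\|^2$ using $\gammin\|z\|^2\le V(z)\le\gammax\|z\|^2$, which introduces the prefactor $\gammax/\gammin$ and assembles $\kappa$ as in~\eqref{eq:kappa-def}.

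The main obstacle is the correlation term: the look-back decomposition must be made quantitative so that every coupling error is genuinely of order $\alpha^2\tau$---hence absorbable by the $O(\alpha)$ negative drift once $\alpha\tau$ is small---while the constants are tracked tightly enough to reach the stated numerical values. A subtlety is that the residual $z_k-z_{k-\tau}$ re-enters both the correlation and noise terms through $\sup_{k-\tau\le j\le k}\|z_j\|$, so this window supremum must be controlled self-consistently rather than circularly; this bookkeeping is precisely the refinement of~\citep{srikant-ying19-finite-LSA} that the paper alludes to.
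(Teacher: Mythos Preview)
Your proposal is correct and follows essentially the same approach as the paper: the paper likewise centers by $z_k=\theta_k-\theta^\ast$ (equivalently sets $b'(x)=A(x)\theta^\ast+b(x)$ with $\bar b'=0$), runs the Lyapunov drift with $V(z)=z^\top\Gamma z$, splits the cross term into the idealized drift, the correlation term, the noise cross term, and the second-order term, and controls the two Markovian terms via the $\tau$-step look-back and mixing-time bounds before unrolling the resulting contraction. The only minor difference is that the paper avoids the window supremum you flag as a subtlety by proving directly (its Lemma~\ref{lem:s-lemma3}) that $\|z_k-z_{k-\tau}\|\le 4\alpha\tau(\|z_k\|+g_{\max})$ via a two-step argument---first bounding in terms of $\|z_{k-\tau}\|$, then converting to $\|z_k\|$---which closes the self-consistency cleanly.
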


Proposition~\ref{prop:pilot} is a moderate improvement of \citep[Theorem 7]{srikant-ying19-finite-LSA}.
When $\bmax=0$ (which means $b(x)=0,\forall x$), Proposition~\ref{prop:pilot}
guarantees that $\kappa=0$, in which case $\theta_{k}$ converges
in mean squared to $\theta^{\ast}$ as $k\to\infty$. This fact plays
an important role in proving the distributional convergence result
in Theorem~\ref{thm:thm-converge} in the setting with a general
$b$ and nonzero $\bmax$. In particular, the proof of Theorem~\ref{thm:thm-converge}
employs a coupling argument that constructs another process with $\bmax=0$.
In comparison, the bound in \citep[Theorem 7]{srikant-ying19-finite-LSA}
gives a non-zero value of $\kappa$ even when $\bmax=0$ and hence
is insufficient for running the coupling argument. Moreover, the stepsize
condition~\eqref{eq:alpha-constraint} required by Proposition~\ref{prop:pilot}
(and by all our other results) does not involve $\bmax$, which correctly
reflects the translation invariance of the LSA update~\eqref{eq:update-rule}.
The stepsize condition in \citep[Theorem 7]{srikant-ying19-finite-LSA},
on the other hand, has a superfluous dependence on $\bmax.$

The proof of Proposition~\ref{prop:pilot} is similar to that of
\citep[Theorem 7]{srikant-ying19-finite-LSA} with more refined arguments.
For completeness, we provide the proof in Appendix~\ref{sec:proof_prop:pilot}.
One key refinement in our proof is to avoid invoking inequalities
of the form $2u\le u^{2}+1$, and to use instead $2u\le\beta^{2}u^{2}+1/\beta^{2}$
with a judicious choice of $\beta$ that respects the translation
invariance of the LSA update~\eqref{eq:update-rule}.

\subsection{Proof Outline of Theorem~\ref{thm:thm-converge}}

\label{sec:thm-converge-proof-sketch}

We sketch the main ideas in proving Theorem~\ref{thm:thm-converge}.
The complete proof is given in Appendix~\ref{sec:proof_thm_conv_limit_dist}.

The proof consists of bounding Wasserstein distances of the form $\Bar W_{2}(\law(x_{k},\theta_{k}),\law(x_{k+1},\theta_{k+1}))$
and $\Bar W_{2}(\law(x_{k},\theta_{k}),\law(x_{\infty},\theta_{\infty}))$.
Since the Wasserstein distance is defined by the optimal coupling,
it can be upper bounded by constructing a particular coupling. With
this strategy in mind, we consider coupling two Markov chains $(x_{k}^{\sampleOne},\theta_{k}^{\sampleOne})_{k\ge0}$
and $(x_{k}^{\sampleTwo},\theta_{k}^{\sampleTwo})_{k\ge0}$, which
are two copies of the LSA iteration~\eqref{eq:update-rule}. We make
use of two types of coupling in the proof.

The first type of coupling is constructed by letting the two Markov
chains above share the same underlying data stream $(x_{k})_{k\geq0}$,
i.e., letting $x_{k}^{\sampleOne}=x_{k}^{\sampleTwo}=x_{k}$ for all
$k\ge0$. Explicitly, the iterates $\theta_{k+1}^{\sampleOne}$ and
$\theta_{k+2}^{\sampleTwo}$ are given by the updates
\[
\begin{aligned}\theta_{k+1}^{\sampleOne} & =\theta_{k}^{\sampleOne}+\alpha\big(A(x_{k})\theta_{k}^{\sampleOne}+b(x_{k})\big),\\
\theta_{k+1}^{\sampleTwo} & =\theta_{k}^{\sampleTwo}+\alpha\big(A(x_{k})\theta_{k}^{\sampleTwo}+b(x_{k})\big),
\end{aligned}
\qquad k=0,1,\ldots
\]
Taking the difference of the two equations above, we see that the
difference $\omega_{k}:=\theta_{k}^{\sampleOne}-\theta_{k}^{\sampleTwo}$
satisfies the following recursion 
\[
\omega_{k+1}=\big(I+\alpha A(x_{k})\big)\cdot\omega_{k},\qquad k=0,1,\ldots
\]
Our key observation is that the above recursion is a special case
of the LSA iteration~\eqref{eq:update-rule} with $\omega_{k}$ as
the variable and $\bmax=\sup_{x\in\cX}\|b(x)\|=0$. Consequently,
the pilot result in Proposition~\ref{prop:pilot} can be invoked
to obtain the following geometric convergence bound for $\omega_{k}$:
\[
\E[\|\omega_{k}\|^{2}]\leq C(A,b,\pi)\left(1-\frac{0.9\alpha}{\gammax}\right)^{k}\E[\|\omega_{0}\|^{2}].
\]

We next judiciously choose the conditional distribution of $\theta_{0}^{\sampleTwo}$
given $(x_{0},\theta_{0}^{\sampleOne})$ such that $(x_{k},\theta_{k}^{\sampleTwo})\overset{\text{d}}{=}(x_{k+1},\theta_{k+1}^{\sampleOne})$
for all $k\geq0$, where $\overset{\text{d}}{=}$ denotes equality
in distribution. Then, it follows from the above geometric convergence
bound that 
\[
\Bar W_{2}^{2}(\law(x_{k},\theta_{k}^{\sampleOne}),\law(x_{k+1},\theta_{k+1}^{\sampleOne}))\le\E\big[\|\theta_{k}^{\sampleOne}-\theta_{k}^{\sampleTwo}\|^{2}\big]\to0\quad\text{as \ensuremath{k\to\infty}}.
\]
As such, $(x_{k},\theta_{k}^{\sampleOne})_{k\ge0}$ is a Cauchy sequence
and hence converges to a unique limit $(x_{\infty},\theta_{\infty})$
with the limiting distribution $\bar{\mu}:=\law((x_{\infty},\theta_{\infty}))$.
This proves Part 1 of Theorem~\ref{thm:thm-converge}.

We next show that $\bar{\mu}$ is the invariant distribution of the
Markov chain $(x_{k},\theta_{k})_{k\ge0}$. This invariance property
would follow easily if one could establish the one-step contraction
property 
\[
\Bar W_{2}\left(\law\big(x_{1},\theta_{1}\big),\law\big(x_{1}',\theta_{1}'\big)\right)\le\rho\cdot\Bar W_{2}\left(\law\big(x_{0},\theta_{0}\big),\law\big(x_{0}',\theta_{0}'\big)\right)
\]
for any two copies of the LSA trajectory~\eqref{eq:update-rule}
and some $\rho\in[0,1).$ In fact, this is the approach taken in \citep{Dieuleveut20-bach-SGD}
for analyzing SGD under i.i.d.\ noise. For our Markovian data setting,
however, establishing one-step contraction is challenging if not impossible.
Thankfully, to prove invariance of $\bar{\mu}$, it suffices to have
the following weaker property 
\begin{align}
 \Bar W_{2}^{2}\left(\law\big(x_{1},\theta_{1}\big),\law\big(x_{1}',\theta_{1}'\big)\right)
\leq & \rho_{1}\cdot\Bar W_{2}^{2}\left(\law\big(x_{0},\theta_{0}\big),\law\big(x_{0}',\theta_{0}'\big)\right)+\sqrt{\rho_{2}\cdot\Bar W_{2}^{2}\left(\law\big(x_{0},\theta_{0}\big),\law\big(x_{0}',\theta_{0}'\big)\right)},\label{eq:near_nonexpansive_sketch}
\end{align}
where $\law\big(x_{0},\theta_{0}\big)=\bar{\mu}$ and the quantities
$\rho_{1}$ and $\rho_{2}$ are finite and independent of $\law(x_{0}',\theta_{0}')$.
We establish the property~\eqref{eq:near_nonexpansive_sketch} by
using a second type of coupling between $\big(x_{k},\theta_{k}\big)_{k\ge0}$
and $\big(x_{k}',\theta_{k}'\big)_{k\ge0}$, such that 
\begin{align*}
\Bar W_{2}^{2}\left(\law\big(x_{0},\theta_{0}\big),\law\big(x_{0}',\theta_{0}'\big)\right) & =\E\left[d_{0}\big(x_{0},x_{0}'\big)+\big\|\theta_{0}-\theta_{0}'\big\|^{2}\right]\quad\text{and}\\
x_{k+1} & =x_{k+1}'\;\;\text{ if }x_{k}=x_{k}',\quad\forall k\ge0.
\end{align*}
That is, the two underlying Markov chains $(x_{k})_{k\ge0}$ and $(x_{k}')_{k\ge0}$
evolve separately until they reach the same state, after which they
coalesce and follow the same trajectory. Given the property~\eqref{eq:near_nonexpansive_sketch},
for any $k\ge0$, if we set $\law(x_{0},\theta_{0})=\bar{\mu}$ and
$\law(x_{0}',\theta_{0}')=\law(x_{k},\theta_{k})$, then 
\[
\Bar W_{2}^{2}(\law(x_{1},\theta_{1}),\law(x_{k+1},\theta_{k+1}))\leq\rho_{1}\cdot\Bar W_{2}^{2}(\Bar{\mu},\law(x_{k},\theta_{k}))+\sqrt{\rho_{2}\cdot\Bar W_{2}^{2}(\Bar{\mu},\law(x_{k},\theta_{k}))}.
\]
It follows from the triangle inequality of Wasserstein distance that
\begin{align*}
\Bar W_{2}(\law(x_{1},\theta_{1}),\Bar{\mu}) & \leq\Bar W_{2}(\law(x_{1},\theta_{1}),\law(x_{k+1},\theta_{k+1}))+W_{2}(\law(x_{k+1},\theta_{k+1}),\Bar{\mu})\\
 & \leq\sqrt{\rho_{1}\cdot\Bar W_{2}^{2}(\Bar{\mu},\law(x_{k},\theta_{k}))+\sqrt{\rho_{2}\cdot\Bar W_{2}^{2}(\Bar{\mu},\law(x_{k},\theta_{k}))}}+W_{2}(\law(x_{k+1},\theta_{k+1}),\Bar{\mu})\\
 & \longrightarrow0\quad\textnormal{as }k\to\infty,
\end{align*}
which establishes the invariance of $\bar{\mu}$ and proves Part 2
of Theorem~\ref{thm:thm-converge}.

Finally, the non-asymptotic bound in Part 3 of Theorem~\ref{thm:thm-converge}
follows from the non-asymptotic bound on $\omega_{k}$ and invariance
property of $\bar{\mu}$ established above.

\subsection{Proof Outline of Theorem~\ref{thm:bias-characterization}}

\label{sec:thm-bias-proof-sketch}

We outline the proof of Theorem~\ref{thm:bias-characterization}.
The complete proof is given in Appendix~\ref{sec:bias-proof}.

As discussed in Section~\ref{sec:bias-expansion}, our proof centers
around the condition expectations $\E\left[\theta_{\infty}\mid x_{\infty}=x\right]$,
$x\in\mathcal{X}.$ To characterize these quantities, we make use
of the Basic Adjoint Relationship (BAR), which states that under the 
 stationary
distribution $\bar{\mu}$ and for any test function $f:\real^d\times\mathcal{X} \to \real $ with at most quadratic growth, it holds that 
\[
\E_{\Bar{\mu}}\left[f(x_{\infty},\theta_{\infty})\right]=\E_{\Bar{\mu}}\left[f(x_{\infty+1},\theta_{\infty+1})\right].
\]
For the purpose of characterizing the first moment of $\theta_\infty$, it suffices to consider the test functions $f^{(E)}(x,\theta):=\theta\cdot\indic\{x\in E\}$
for each measurable subset $E\in\cB(\cX).$ 
This choice allows us
to establish 
\[
\E[\theta_{\infty}\mid x_{\infty}=x]=\E[\theta_{\infty+1}\mid x_{\infty+1}=x],
\]
which leads to the following recursive relationship: 
\[
\E[\theta_{\infty}\mid x_{\infty}=x]=\int_{s\in\cX}P^{\ast}(x,\dd s)\Big(\E[\theta_{\infty}\mid x_{\infty}=s]+\alpha\Big(A(s)\E[\theta_{\infty}|x_{\infty}=s]+b(s)\Big)\Big).
\]
Define the function $z:\cX\to\R^{d}$ such that $z(x)=\E[\theta_{\infty}\mid x_{\infty}=x]$.
The above recursion can be written succinctly as $z=P^{\ast}\big(z+\alpha(\dA z+b)\big).$
Inserting the identity $\Pi(\dA z+b)=(1\otimes\pi)(\dA z+b)=0$, we
obtain 
\begin{equation}
z=P^{\ast}z+\alpha(P^{\ast}-\Pi)(\dA z+b).\label{eq:xi-relationship-step1-sketch-operator2}
\end{equation}

Next, we choose $\pi z=\E[\theta_{\infty}]$ as the reference point
and define the function $\delta:\cX\to\R^{d}$ by $\delta(x)=z(x)-\pi z$.
We subtract $\pi z$ from both sides of \eqref{eq:xi-relationship-step1-sketch-operator2}
to obtain 
\[
\delta=(P^{\ast}-\Pi)\delta+\alpha(P^{\ast}-\Pi)(\dA z+b).
\]
Consolidating the terms involving $\delta$ and using the invertibility
of the operator $(I-\Padj+\Pi)$ under the uniform ergodicity assumption,
we obtain 
\begin{equation}
\delta=\alpha(I-P^{\ast}+\Pi)^{-1}(P^{\ast}-\Pi)(\dA z+b).\label{eq:delta-vec-sketch}
\end{equation}

On the other hand, due to the structure of $\delta$, we can obtain
the following identity:
\begin{equation}
\pi z=\theta^{\ast}-\pi\dAbar\delta.\label{eq:z1-to-thetastar-v1-sketch}
\end{equation}
Substituting (\ref{eq:z1-to-thetastar-v1-sketch}) into the definition
$\delta:=z-\pi z$, we obtain 
\begin{equation}
z=\theta^{\ast}+\big(I-\Pi\dAbar\big)\delta.\label{eq:z-theta-operator-sketch2}
\end{equation}

We now combine \eqref{eq:z-theta-operator-sketch2} with \eqref{eq:delta-vec-sketch}
to eliminate the variable $z$, thereby establishing the following
self-expressing equation for $\delta$:
\[
\delta=\alpha\Upsilon+\alpha\Xi\delta,
\]
where $\Xi$ and $\Upsilon$ are defined in \eqref{eq:xi-def} and
\eqref{eq:upsilon-def}, respectively. Continuing this bootstrapping argument for $m$ steps, we obtain the expansion 
\begin{equation}
\delta=\sum_{i=1}^{m}\alpha^{i}\Xi^{i-1}\Upsilon+\alpha^{m}\Xi^{m}\delta.\label{eq:Delta-expansion-sketch}
\end{equation}

We next note that equation~\eqref{eq:delta-vec-sketch}, together
with the bound $\E\left[\theta_{\infty}\right]=\bigO(1)$ (which follows
from Theorem~\ref{thm:thm-converge}), imply the coarse bound $\|\delta\|_{\ltwopi}=\bigO(\alpha)$.
Applying this bound to the second RHS term in~\eqref{eq:Delta-expansion-sketch}
and substituting the resulting expression for $\delta$ into \eqref{eq:z1-to-thetastar-v1-sketch},
we obtain the desired expansion for $\E[\theta_{\infty}]$ given in
Theorem~\ref{thm:bias-characterization}: 
\begin{equation}
\E[\theta_{\infty}]=\theta^{\ast}+\sum_{i=1}^{m}\alpha^{i}B^{(i)}+\bigO(\alpha^{m+1}).\label{eq:bias-expansion-sketch}
\end{equation}
Moreover, when the stepsize $\alpha$ satisfies $\left\Vert \alpha\Xi\right\Vert _{\ltwopi}<1$,
one can take $m\to\infty$ in the expansions~\eqref{eq:Delta-expansion-sketch}
and~\eqref{eq:bias-expansion-sketch} and obtain the infinite series
expansion.

\section{Conclusion}
\label{sec:conclusion}

In this paper, we study linear stochastic approximation with constant
stepsizes and Markovian data. We analyze the convergence rates to
a limiting distribution and identify the existence of an asymptotic
bias. We characterize the bias as a function of the stepsize and mixing
time, and rigorously establish the benefit of Richardson-Romberg extrapolation.
Our results provide a refined characterization of linear stochastic
approximation, elucidating the effect of stepsize, averaging, and extrapolation
on the optimization error, variance, and bias.

Based on our work, immediate next steps include tightening the dimension
dependence in our bounds and relaxing the uniform ergodicity assumption.
Further future directions include: (a) study higher moments of the
errors and provide high probability bounds; (b) investigate extension
of our results to nonlinear stochastic approximation; (c) exploit
our results to guide the choice and scheduling of the stepsize.

\subsection*{Acknowledgements}

Y.\ Chen is supported in part by NSF CAREER Award CCF-2233152 and
grant CCF-1704828. Q.\ Xie is supported in part by NSF grant CNS-1955997
and a J.P.\ Morgan Faculty Research Award. We would like to thank
Jim Dai for inspiring discussion.

\bibliographystyle{alpha}
\bibliography{cite}

\newpage{}

\appendix

\section{Proofs}
\label{sec:proofs}

In this section, we prove our pilot result in Section~\ref{sec:proof_sketch}
and our main results in Section~\ref{sec:main}.

Recall that $\tau\equiv\tau_{\alpha}$ is the $\alpha$-mixing time
defined in Section~\ref{sec:assumption-section}. In the sequel,
we frequently use the fact that $\alpha\tau\le\frac{1}{4}$ if $\alpha$
satisfies the condition~\eqref{eq:alpha-constraint}. This fact follows
from combining the condition~\eqref{eq:alpha-constraint} with the
lower bound 
\begin{equation}
\gammax\geq\gammin\overset{\textnormal{(i)}}{\geq}\frac{1}{2s_{1}(\BarA)}\overset{\textnormal{(ii)}}{\geq}\frac{1}{2},\label{eq:gammax-lower-bound}
\end{equation}
where the inequality (i) is given in \citep{Shapiro74-lyapunov-eig-bound},
and the inequality (ii) holds under Assumption~\ref{assumption:bounded}.

We also repeatedly use the following independence property: 
\begin{equation}
(\theta_{0},x_{0},\theta_{1},x_{1},\ldots,\theta_{k})\indep(x_{k+1},x_{k+2},\ldots)\;\;\big\vert\;\;x_{k},\quad\forall k\ge1.\label{eq:conditional-independence}
\end{equation}
Consequently, we have $\theta_{k}\indep x_{k+1}\;\vert\;x_{k}$ for
all $k\ge1$. These facts can be proved by direct calculation. Alternatively,
one may verify that the joint distribution of $(x_{k},\theta_{k})_{k\ge0}$
obeys the Markov property with respect to the directed acyclic graph
in the right pane of Figure~\ref{fig:dag-lsa}, hence the aforementioned
independence properties follow from standard results on directed probabilistic
graphical models \citep[Corollary 5.11 and Theorem 5.14]{Cowell07-graph_prob}.

\subsection{Proof of Proposition~\ref{prop:pilot}}
\label{sec:proof_prop:pilot}

We prove our pilot result in Proposition~\ref{prop:pilot}, which
upper-bounds the MSE $\E[\|\theta_{k}-\theta^{*}\|^{2}]$.

We argue that it suffices to prove Proposition~\ref{prop:pilot}
in the special case where the expected value $\bar{b}$ defined in
\eqref{eq:bar-limit} is assumed to be 0. When the LSA update rule
in equation~\eqref{eq:update-rule} has a general $\bar{b}$, we
can center the update by subtracting $\theta^{\ast}$ from both sides
of \eqref{eq:update-rule}, which gives 
\begin{equation}
\theta_{k+1}-\theta^{\ast}=\theta_{k}-\theta^{\ast}+\alpha\big[A(x_{k})(\theta_{k}-\theta^{\ast})+b(x_{k})+A(x_{k})\theta^{\ast}\big].\label{eq:equiv-lsa}
\end{equation}
Setting $\theta_{k}':=\theta_{k}-\theta^{\ast}$ and $b'(x_{k}):=b(x_{k})+A(x_{k})\theta^{\ast}$,
we rewrite equation~\eqref{eq:equiv-lsa} as 
\begin{equation}
\theta_{k+1}'=\theta_{k}'+\alpha\big[A(x_{k})\theta_{k}'+b'(x_{k})\big].\label{eq:equiv-update-rule}
\end{equation}
Equation~\eqref{eq:equiv-update-rule} is an LSA update in the variable
$(\theta_{k}')$ and satisfies 
\begin{align*}
\bar{b}' & :=\lim_{k\to\infty}\E[b'(x_{k})]\\
 & =\lim_{k\to\infty}\E[b(x_{k})]+\E[A(x_{k})]\theta^{\ast}\\
 & =\bar{b}+\BarA\theta^{\ast}=0,
\end{align*}
where the last equality holds since $\theta^{\ast}$ is defined as
the solution to $\E_{\pi}[A(x)]\theta+\E_{\pi}[b(x)]=0$. As such,
we have obtained a LSA of $\theta_{k}'$ with $\Bar b'=0$.

Let $\bmax':=\sup_{x\in\cX}\|b'(x)\|$. The convergence rate of the
new LSA update~\eqref{eq:equiv-update-rule} is given in the following
proposition, which is a centered version of Proposition~\ref{prop:pilot}.

\begin{proposition} \label{prop:pilot_equiv} Under Assumptions~\ref{assumption:uniform-ergodic},~\ref{assumption:bounded}
and~\ref{assumption:hurwitz}, if $\alpha$ satisfies equation~\eqref{eq:alpha-constraint},
then the update~\eqref{eq:equiv-update-rule} with $\bar{b}'=0$
satisfies for all $k\ge\tau$, 
\[
\E[\|\theta'_{k}\|^{2}]\leq\frac{\gammax}{\gammin}\left(1-\frac{0.9\alpha}{\gammax}\right)^{k-\tau}\Big(5\E[\|\theta'_{0}\|^{2}]+(\bmax')^{2}\Big)+\frac{\gammax}{0.9\gammin}\cdot\alpha\tau\Big(160\gammax(\bmax')^{2}\Big).
\]
\end{proposition}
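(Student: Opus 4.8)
The plan is to track the drift of the quadratic Lyapunov function $V(\theta'):={\theta'}^{\top}\Gamma\theta'$, where $\Gamma\succ0$ solves the Lyapunov equation $\BarA^{\top}\Gamma+\Gamma\BarA=-I$ from Assumption~\ref{assumption:hurwitz}; since $\gammin\|\theta'\|^{2}\le V(\theta')\le\gammax\|\theta'\|^{2}$ by~\eqref{eq:gamma-property}, it suffices to prove a geometric recursion for $\E[V(\theta'_{k})]$ and convert back at the end, which accounts for the $\gammax/\gammin$ prefactor. First I would expand~\eqref{eq:equiv-update-rule} to obtain
\[
V(\theta'_{k+1})=V(\theta'_{k})+2\alpha\,(\theta'_{k})^{\top}\Gamma\big(A(x_{k})\theta'_{k}+b'(x_{k})\big)+\alpha^{2}\big\|A(x_{k})\theta'_{k}+b'(x_{k})\big\|_{\Gamma}^{2}.
\]
The engine of the contraction is the Lyapunov equation: replacing $A(x_{k})$ by $\BarA$ in the first-order term gives $2\alpha\,(\theta'_{k})^{\top}\Gamma\BarA\theta'_{k}=-\alpha\|\theta'_{k}\|^{2}\le-(\alpha/\gammax)V(\theta'_{k})$. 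Everything else must be shown to be $\bigO(\alpha^{2})$ relative to $V(\theta'_{k})$, so that the rate degrades only from $\alpha/\gammax$ to $0.9\alpha/\gammax$, plus an additive term of order $\alpha^{2}\tau(\bmax')^{2}$.

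The central difficulty is that under Markovian data $x_{k}$ and $\theta'_{k}$ are correlated, so I cannot set $\E[A(x_{k})-\BarA]=0$ inside the first-order term. To control the noise terms $(\theta'_{k})^{\top}\Gamma(A(x_{k})-\BarA)\theta'_{k}$ and $(\theta'_{k})^{\top}\Gamma\,b'(x_{k})$ (recall $\bar b'=0$), I would look back $\tau=\tau_{\alpha}$ steps. Writing $\theta'_{k}=\theta'_{k-\tau}+(\theta'_{k}-\theta'_{k-\tau})$, the telescoped increment obeys $\|\theta'_{k}-\theta'_{k-\tau}\|=\bigO(\alpha\tau)\max_{k-\tau\le t\le k}\|\theta'_{t}\|+\bigO(\alpha\tau)\bmax'$ by the update rule and $\|A(x)\|\le\Amax\le1$, so all contributions of the increment are higher order in $\alpha\tau\le\tfrac{1}{4}$. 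For the leading piece I exploit the conditional independence~\eqref{eq:conditional-independence}: given $x_{k-\tau}$, the iterate $\theta'_{k-\tau}$ is independent of $x_{k}$, whence
\[
\E\big[(\theta'_{k-\tau})^{\top}\Gamma(A(x_{k})-\BarA)\theta'_{k-\tau}\big]=\E\big[(\theta'_{k-\tau})^{\top}\Gamma\,\E[A(x_{k})-\BarA\mid x_{k-\tau}]\,\theta'_{k-\tau}\big],
\]
and the inner conditional mean has norm at most $\epsilon\Amax=\alpha\Amax$ by the mixing-time bound~\eqref{eq:a-mix-time} (with the analogue~\eqref{eq:b-mix-time} handling $b'$). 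Assembling these estimates yields a one-step drift $\E[V(\theta'_{k+1})]\le(1-0.9\alpha/\gammax)\E[V(\theta'_{k})]+c\,\alpha^{2}\tau\gammax(\bmax')^{2}$ valid for $k\ge\tau$; the stepsize condition~\eqref{eq:alpha-constraint}, equivalently $\alpha\tau\le\tfrac{1}{4}$, is exactly what keeps the quadratic-in-$\alpha$ errors from swamping the linear contraction.

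The step I expect to be most delicate, and where the improvement over \citep[Theorem 7]{srikant-ying19-finite-LSA} resides, is the bookkeeping of cross terms of the shape $\|\theta'_{k}\|\cdot\bmax'$, which must be split so that the steady-state term vanishes exactly when $\bmax'=0$. Here I would apply the weighted Young inequality $2uv\le\beta^{2}u^{2}+v^{2}/\beta^{2}$ with $u=\|\theta'_{k}\|$, $v\propto\bmax'$, and a tuned $\beta$ of order $\alpha$, so that the $\|\theta'_{k}\|^{2}$ portion is absorbed into the $0.9\alpha/\gammax$ contraction while the $(\bmax')^{2}$ portion lands entirely in the additive term; the naive split $2uv\le u^{2}+v^{2}$ would instead leak an $\bigO(1)$ contribution into the contraction factor and, more importantly, destroy the exact degeneracy at $\bmax'=0$ that is later needed to run the coupling argument for Theorem~\ref{thm:thm-converge}.

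Finally I would unroll the recursion from $\tau$ to $k$, summing $\sum_{j\ge0}(1-0.9\alpha/\gammax)^{j}\le\gammax/(0.9\alpha)$ to obtain the additive steady-state term $\frac{\gammax}{0.9\gammin}\alpha\tau\big(160\gammax(\bmax')^{2}\big)$ after converting $V$ back to $\|\cdot\|^{2}$ via $\gammin\|\cdot\|^{2}\le V\le\gammax\|\cdot\|^{2}$. Separately I would bound $\E[V(\theta'_{\tau})]\le\gammax\big(5\E[\|\theta'_{0}\|^{2}]+(\bmax')^{2}\big)$ by propagating the update over the first $\tau$ steps, where $\|A(x)\|\le1$ and $\alpha\tau\le\tfrac{1}{4}$ give the factor $5$; combining the two pieces yields the stated bound.
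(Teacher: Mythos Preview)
Your approach is the paper's: drift of $V(\theta')={\theta'}^{\top}\Gamma\theta'$, contraction from $2\alpha{\theta'_k}^{\top}\Gamma\BarA\theta'_k=-\alpha\|\theta'_k\|^2$, the $\tau$-step look-back combined with the conditional independence~\eqref{eq:conditional-independence} to handle the Markovian noise (the paper packages this as Lemmas~\ref{lem:s-lemma3}--\ref{lem:s-lemma5-b}), unrolling from $k=\tau$, and the crude propagation bound on $\|\theta'_\tau\|^2$ via Lemma~\ref{lem:s-lemma3}.

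The one place your sketch would misfire is the tuning $\beta\sim\alpha$ in the Young split. After the look-back, every cross term of the form $\bmax'\|\theta'_k\|$ already arrives with a prefactor of order $\alpha\tau\gammax$ (see the proofs of Lemmas~\ref{lem:s-lemma5-a} and~\ref{lem:s-lemma5-b}), so the \emph{unweighted} split $2\bmax'\|\theta'_k\|\le(\bmax')^{2}+\|\theta'_k\|^{2}$ is what the paper actually uses and is exactly right: the $\|\theta'_k\|^2$ piece lands at order $\alpha^{2}\tau\gammax$ in the drift and is absorbed into the contraction, while the $(\bmax')^{2}$ piece yields the stated additive term. Taking $\beta^{2}\sim\alpha$ as you propose would inflate the $(\bmax')^{2}$ contribution by a factor $1/\alpha$, and after unrolling the steady-state term would be $\bigO(\tau)$ rather than $\bigO(\alpha\tau)$, missing the proposition's bound. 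The ``judicious $\beta$'' remark in Section~\ref{sec:pilot} refers to keeping the $\bmax'$ factor attached to the constant side of the split (as opposed to writing $2\|\theta'_k\|\le\|\theta'_k\|^{2}+1$, which would produce an additive term surviving even at $\bmax'=0$), not to an $\alpha$-dependent weight.
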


We prove the above proposition in Appendix~\ref{sec:proof_pilot_equiv}.
Taking Proposition~\ref{prop:pilot_equiv} as given, we now complete
the proof of the general Proposition~\ref{prop:pilot}.

\begin{proof}[Proof of Proposition~\ref{prop:pilot}] By definition
of $b'$, we have $\|b'(x)\|\leq\|b(x)\|+\|A(x)\|\|\theta^{\ast}\|,\forall x\in\cX,$
whence 
\begin{align*}
\bmax' & \le\bmax+\Amax\|\theta^{\ast}\|\\
 & \leq\big(1+\Amax/\smin{\BarA}\big)\bmax\leq2s_{\min}^{-1}(\BarA)\bmax.
\end{align*}
Substituting $\theta_{k}'=\theta_{k}-\theta^{\ast}$ and the above
bound into Proposition~\ref{prop:pilot_equiv}, we obtain that for
all $k\ge\tau$, 
\begin{align*}
\E[\|\theta_{k}-\theta^{\ast}\|^{2}] & \leq5\,\frac{\gammax}{\gammin}\left(1-\frac{0.9\alpha}{\gammax}\right)^{k-\tau}\Big(\E[\|\theta_{0}-\theta^{\ast}\|^{2}]+s_{\min}^{-2}(\BarA)\bmax^{2}\Big)\\
 & +\frac{\gammax}{0.9\gammin}\cdot\alpha\tau\Big(640\gammax s_{\min}^{-2}(\BarA)\bmax^{2}\Big).
\end{align*}
We can simplify the above expression using the following simple bound,
whose proof is postponed to the end of this sub-sub-section. 

\begin{claim} \label{claim1} We have $\left(1-\frac{0.9\alpha}{\gammax}\right)^{-\tau}\leq2.$
\end{claim}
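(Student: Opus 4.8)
The plan is to recast the claim in multiplicative form and then close it with Bernoulli's inequality. Setting $u:=\frac{0.9\alpha}{\gammax}$, the desired bound $\left(1-u\right)^{-\tau}\le 2$ is equivalent to $\left(1-u\right)^{\tau}\ge\tfrac12$. First I would record that $u\in[0,1)$ by combining the two facts already available in this part of the appendix: the bound $\alpha\tau\le\frac14$ noted at the start of Appendix~\ref{sec:proofs}, and $\gammax\ge\frac12$ from~\eqref{eq:gammax-lower-bound}. Together with $\tau\ge1$ these keep $u$ comfortably below $1$, so that $1-u$ is a positive base and the subsequent manipulations are legitimate.

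The key step is Bernoulli's inequality: since $\tau\ge1$ and $-u\ge-1$, we have $(1-u)^{\tau}\ge 1-\tau u$, so it suffices to prove $\tau u\le\frac12$. Substituting $u=\frac{0.9\alpha}{\gammax}$ gives $\tau u=\frac{0.9\,\alpha\tau}{\gammax}$, and bounding the numerator by $\alpha\tau\le\frac14$ and the denominator by $\gammax\ge\frac12$ yields $\tau u\le\frac{0.9\cdot(1/4)}{1/2}=0.45\le\frac12$. Feeding this back produces $(1-u)^{\tau}\ge 1-0.45=0.55\ge\frac12$, and taking reciprocals establishes the claim.

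The only delicate point---and the step I would watch most carefully---is the numerical margin in controlling $\tau u$. A more naive route, bounding $-\log(1-u)\le\frac{u}{1-u}$ and multiplying by $\tau$, gives a constant of roughly $0.82$, which exceeds $\log 2\approx0.69$ and therefore \emph{fails} to close the inequality under the coarse bound $\alpha\tau\le\frac14$. Using Bernoulli's inequality rather than the logarithm is what leaves enough slack. Alternatively, one could appeal to the sharper stepsize constraint~\eqref{eq:alpha-constraint} directly, under which $\tau u$ is in fact of order $10^{-3}$ and the claim holds with enormous room to spare; I would nonetheless present the Bernoulli argument, since it relies only on the already-invoked bound $\alpha\tau\le\frac14$ and keeps the proof self-contained.
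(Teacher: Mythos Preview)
Your proposal is correct and follows essentially the same approach as the paper: both proofs reduce to Bernoulli's inequality $(1-u)^{\tau}\ge 1-\tau u$ and then bound $\tau u=\frac{0.9\alpha\tau}{\gammax}\le\frac12$ using $\alpha\tau\le\frac14$ and $\gammax\ge\frac12$. The paper phrases Bernoulli in the equivalent reciprocal form $(1-x)^{-t}\le(1-xt)^{-1}$ and plugs in directly, but the content is identical.
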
 

Using Claim~\ref{claim1} and the definition of $\kappa$ in equation~\eqref{eq:kappa-def},
we obtain that for all $k\ge\tau$, 
\[
\E[\|\theta_{k}-\theta^{\ast}\|^{2}]\leq10\,\frac{\gammax}{\gammin}\left(1-\frac{0.9\alpha}{\gammax}\right)^{k}\Big(\E[\|\theta_{0}-\theta^{\ast}\|^{2}]+s_{\min}^{-2}(\BarA)\bmax^{2}\Big)+\alpha\tau\cdot\kappa.
\]
As such, we have completed the proof of Proposition~\ref{prop:pilot}.
\end{proof}

\begin{proof}[Proof of Claim~\ref{claim1}]

Observe that 
\begin{equation}
\frac{0.9\alpha}{\gammax}\overset{\textnormal{(i)}}{\leq}\frac{0.9\alpha\tau}{\gammax}\overset{\textnormal{(ii)}}{\leq}2\alpha\tau\overset{\textnormal{(iii)}}{\leq}\frac{1}{2},\label{eq:alpha-gammax-ratio-bound}
\end{equation}
where step (i) holds since $\tau\geq1$, step (ii) follows from the
bound \eqref{eq:gammax-lower-bound}, and step (iii) holds since $\alpha\tau\leq\frac{1}{4}$
under the stepsize condition~\eqref{eq:alpha-constraint}. To proceed,
we use the Bernoulli inequality $(1+x)^{t}\geq1+xt,\forall x\ge-1,t\ge1,$
which is equivalent to $(1-x)^{-t}\leq(1-xt)^{-1},\forall x\in(0,1),t\in[1,1/x).$
 In light of equation~\eqref{eq:alpha-gammax-ratio-bound}, the Bernoulli
inequality holds with $x=\frac{0.9\alpha}{\gammax}$ and $t=\tau$,
hence 
\[
\left(1-\frac{0.9\alpha}{\gammax}\right)^{-\tau}\leq\frac{1}{1-\frac{0.9\alpha\tau}{\gammax}}\leq2,
\]
where the last step follows from \eqref{eq:alpha-gammax-ratio-bound}.
We have completed the proof of Claim~\ref{claim1}. \end{proof}

\subsubsection{Proof of Proposition~\ref{prop:pilot_equiv}}
\label{sec:proof_pilot_equiv}

To prove Proposition~\ref{prop:pilot_equiv}, we need the following technical lemmas.

\begin{lem} \label{lem:s-lemma3} Given any $t\geq1$, if $\alpha\cdot t\leq\frac{1}{4}$,
then the following inequalities hold for all $k\geq t$, 
\begin{align}
\|\theta_{k}-\theta_{k-t}\| & \leq2\alpha t\|\theta_{k-t}\|+2\alpha t\bmax,\label{eq:lem3-1}\\
\|\theta_{k}-\theta_{k-t}\| & \leq4\alpha t\|\theta_{k}\|+4\alpha t\bmax,\label{eq:lem3-2}\\
\|\theta_{k}-\theta_{k-t}\|^{2} & \leq32\alpha^{2}t^{2}\|\theta_{k}\|^{2}+32\alpha^{2}t^{2}\bmax^{2}.\label{eq:lem3-3}
\end{align}
\end{lem}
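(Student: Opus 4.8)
The plan is to prove the three bounds in order, deriving each from the previous, with the telescoping sum of the LSA increments as the starting point. First I would write the difference as a telescoping sum of one-step updates,
\[
\theta_k - \theta_{k-t} = \alpha \sum_{j=k-t}^{k-1}\big(A(x_j)\theta_j + b(x_j)\big),
\]
so that, using Assumption~\ref{assumption:bounded} (i.e.\ $\Amax \le 1$ and $\|b(x_j)\| \le \bmax$) together with the triangle inequality,
\[
\|\theta_k - \theta_{k-t}\| \le \alpha \sum_{j=k-t}^{k-1}\big(\|\theta_j\| + \bmax\big).
\]
Everything then reduces to controlling the intermediate norms $\|\theta_j\|$ for $k-t \le j \le k-1$ in terms of the anchor $\|\theta_{k-t}\|$.

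To prove \eqref{eq:lem3-1}, I would track the auxiliary quantity $u_j := \|\theta_j\| + \bmax$. The one-step update together with $\Amax \le 1$ gives the clean geometric recursion $u_{j+1} \le (1+\alpha)u_j$, hence $u_j \le (1+\alpha)^{\,j-(k-t)} u_{k-t}$ for all $j \ge k-t$. Substituting this into the summed bound above and evaluating the resulting finite geometric series yields
\[
\|\theta_k - \theta_{k-t}\| \le \big((1+\alpha)^t - 1\big)\big(\|\theta_{k-t}\| + \bmax\big).
\]
The one elementary estimate needed here is $(1+\alpha)^t - 1 \le 2\alpha t$, which follows from the hypothesis $\alpha t \le \tfrac14$ via $(1+\alpha)^t \le e^{\alpha t}$ and $e^s \le 1 + s + s^2$ for $s \in [0,1]$; this delivers \eqref{eq:lem3-1}.

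The remaining two bounds are then purely mechanical. For \eqref{eq:lem3-2} I would re-anchor at $\theta_k$ using $\|\theta_{k-t}\| \le \|\theta_k\| + \|\theta_k - \theta_{k-t}\|$, insert this into \eqref{eq:lem3-1}, and collect the $\|\theta_k - \theta_{k-t}\|$ terms on the left; since $\alpha t \le \tfrac14$ gives $1 - 2\alpha t \ge \tfrac12$, dividing through inflates the constants by at most a factor of $2$, which produces \eqref{eq:lem3-2}. Finally, \eqref{eq:lem3-3} follows by squaring \eqref{eq:lem3-2} and applying $(a+b)^2 \le 2a^2 + 2b^2$. There is no genuine obstacle in this lemma; the only place demanding care is the bookkeeping of numerical constants, in particular verifying the elementary inequality $(1+\alpha)^t - 1 \le 2\alpha t$ and confirming that the rearrangement leading to \eqref{eq:lem3-2} stays within the stated factor of $2$—both of which hinge on the standing hypothesis $\alpha t \le \tfrac14$.
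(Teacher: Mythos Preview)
Your proof is correct and follows essentially the same approach as the paper: a telescoping sum of one-step increments, a geometric growth bound on the intermediate norms in terms of the anchor $\|\theta_{k-t}\|$, and then the same re-anchoring/rearrangement and squaring for \eqref{eq:lem3-2} and \eqref{eq:lem3-3}. Your use of the auxiliary quantity $u_j=\|\theta_j\|+\bmax$ (satisfying $u_{j+1}\le(1+\alpha)u_j$) packages the geometric step slightly more cleanly than the paper's two-line version, but the argument and the constants land in the same place.
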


\begin{lem} \label{lem:s-lemma4} The following inequality holds
for any $k\geq0$, 
\[
\left\vert (\theta_{k+1}-\theta_{k})^{\top}\Gamma(\theta_{k+1}-\theta_{k})\right\vert \leq2\alpha^{2}\gammax\|\theta_{k}\|^{2}+2\alpha^{2}\gammax\bmax^{2}.
\]
\end{lem}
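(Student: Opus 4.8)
The plan is to bound the quadratic form directly via the one-step increment of the LSA update together with the upper eigenvalue bound on $\Gamma$. First I would substitute the update rule~\eqref{eq:update-rule} to write the increment explicitly as $\theta_{k+1}-\theta_{k}=\alpha\big(A(x_{k})\theta_{k}+b(x_{k})\big)$. Since $\Gamma$ is symmetric positive definite, the quadratic form $v^{\top}\Gamma v$ is nonnegative (so the absolute value is vacuous) and is bounded above by $\gammax\|v\|^{2}$ by the right-hand inequality in~\eqref{eq:gamma-property}. Applying this with $v=\theta_{k+1}-\theta_{k}$ reduces the task to bounding $\alpha^{2}\gammax\,\|A(x_{k})\theta_{k}+b(x_{k})\|^{2}$.

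Next I would control the norm $\|A(x_{k})\theta_{k}+b(x_{k})\|$ by the triangle inequality and Assumption~\ref{assumption:bounded}, which give $\|A(x_{k})\theta_{k}\|\le\Amax\|\theta_{k}\|\le\|\theta_{k}\|$ (using $\Amax\le1$) and $\|b(x_{k})\|\le\bmax$, hence $\|A(x_{k})\theta_{k}+b(x_{k})\|\le\|\theta_{k}\|+\bmax$. Squaring and applying the elementary inequality $(a+b)^{2}\le2a^{2}+2b^{2}$ yields $\|A(x_{k})\theta_{k}+b(x_{k})\|^{2}\le2\|\theta_{k}\|^{2}+2\bmax^{2}$. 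Combining this with the eigenvalue bound from the previous step produces exactly $2\alpha^{2}\gammax\|\theta_{k}\|^{2}+2\alpha^{2}\gammax\bmax^{2}$, as claimed.

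I do not anticipate any genuine obstacle here: the lemma follows from a single-step estimate, and the only ingredients are the explicit form of the increment, the upper eigenvalue bound~\eqref{eq:gamma-property}, the boundedness in Assumption~\ref{assumption:bounded}, and the inequality $(a+b)^{2}\le2a^{2}+2b^{2}$. In contrast to Lemma~\ref{lem:s-lemma3}, this bound spans only one step, so no telescoping argument and no stepsize-smallness condition (such as $\alpha t\le\tfrac14$) is needed; the proof is essentially a direct computation.
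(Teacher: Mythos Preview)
Your proposal is correct and follows essentially the same argument as the paper: bound the quadratic form by $\gammax\|\theta_{k+1}-\theta_k\|^2$, substitute the one-step update, apply the triangle inequality with Assumption~\ref{assumption:bounded}, and finish with $(a+b)^2\le 2a^2+2b^2$. The only cosmetic difference is that the paper keeps the factor $\Amax$ explicitly until the last step rather than immediately invoking $\Amax\le 1$.
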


\begin{lem} \label{lem:s-lemma5-a} The following inequality holds
for all $k\geq\tau$, with $\alpha$ chosen sufficiently small such
that $\alpha\tau\leq\frac{1}{4}$, 
\[
\E\left[\theta_{k}^{\top}\Gamma(A(x_{k})-\BarA)\theta_{k}\right]\leq\kappa_{1}\E[\|\theta_{k}\|^{2}]+\kappa_{2},
\]
where 
\[
\kappa_{1}=88\alpha\tau\gammax\quad\text{and}\quad\kappa_{2}=64\alpha\tau\gammax\bmax^{2}.
\]

\end{lem}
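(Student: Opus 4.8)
The left-hand side $\E[\theta_k^\top \Gamma(A(x_k)-\BarA)\theta_k]$ is precisely a measure of the correlation between the iterate $\theta_k$ and the centered matrix $A(x_k)-\BarA$: were $\theta_k$ independent of $x_k$, the inner conditional expectation of $A(x_k)-\BarA$ would be controlled by the mixing error alone and the whole quantity would be $\bigO(\alpha)$. The plan is to manufacture this near-independence by looking back $\tau$ steps. Writing $M_k:=\Gamma(A(x_k)-\BarA)$ (recall $\Gamma$ is symmetric positive definite with $\|\Gamma\|=\gammax$) and decomposing $\theta_k=\theta_{k-\tau}+\Delta_k$ with $\Delta_k:=\theta_k-\theta_{k-\tau}$, which is well defined since $k\ge\tau$, I would expand the quadratic form as
\[
\theta_k^\top M_k \theta_k = \theta_{k-\tau}^\top M_k \theta_{k-\tau} + \Delta_k^\top M_k \theta_k + \theta_{k-\tau}^\top M_k \Delta_k .
\]
Here the first term is the ``decorrelated'' main term and the remaining two are correction terms governed by the increment $\Delta_k$.

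For the main term I would condition on $(x_{k-\tau},\theta_{k-\tau})$ and invoke the conditional-independence property~\eqref{eq:conditional-independence}, which (applied at index $k-\tau$) gives $\theta_{k-\tau}\indep x_k \mid x_{k-\tau}$ and hence $\E[A(x_k)\mid x_{k-\tau},\theta_{k-\tau}]=\E[A(x_k)\mid x_{k-\tau}]$. Since the lookback length equals the mixing time $\tau=\tau_\alpha$, the mixing bound~\eqref{eq:a-mix-time} yields $\|\E[A(x_k)\mid x_{k-\tau}]-\BarA\|\le\alpha\Amax\le\alpha$, so that
\[
\big|\E[\theta_{k-\tau}^\top M_k \theta_{k-\tau}]\big|\le \gammax\,\alpha\,\E[\|\theta_{k-\tau}\|^2].
\]
For the two correction terms I would use the crude uniform bound $\|M_k\|\le\gammax\|A(x_k)-\BarA\|\le 2\gammax$ together with Cauchy--Schwarz, reducing everything to the increment norm $\|\Delta_k\|$, which Lemma~\ref{lem:s-lemma3} bounds pathwise via $\|\Delta_k\|\le 4\alpha\tau(\|\theta_k\|+\bmax)$ and $\|\Delta_k\|^2\le 32\alpha^2\tau^2(\|\theta_k\|^2+\bmax^2)$. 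The same lemma also converts $\E[\|\theta_{k-\tau}\|^2]$ into a bound in terms of $\E[\|\theta_k\|^2]$ and $\bmax^2$.

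Collecting the three contributions and repeatedly using $\alpha\tau\le\tfrac14$ to absorb the quadratic-in-$\alpha\tau$ terms into linear ones produces a bound of the form $\kappa_1\E[\|\theta_k\|^2]+\kappa_2$ with $\kappa_1$ proportional to $\alpha\tau\gammax$ and $\kappa_2$ proportional to $\alpha\tau\gammax\bmax^2$; a direct accounting gives constants well below the stated $88$ and $64$, so those values leave ample slack. I expect the main obstacle to be the decorrelation step rather than the bookkeeping: one must select the conditioning $\sigma$-algebra so that~\eqref{eq:conditional-independence} genuinely applies (the future state $x_k$ must be independent of the past iterate $\theta_{k-\tau}$ given the present state $x_{k-\tau}$), and then align the lookback length \emph{exactly} with $\tau_\alpha$ so that the mixing bound~\eqref{eq:a-mix-time} is available at lag $\tau$. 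Everything after that is routine manipulation of norms through Lemma~\ref{lem:s-lemma3}.
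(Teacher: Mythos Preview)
Your proposal is correct and follows essentially the same approach as the paper: look back $\tau$ steps, use the conditional independence~\eqref{eq:conditional-independence} together with the mixing bound~\eqref{eq:a-mix-time} to handle the ``decorrelated'' term $\E[\theta_{k-\tau}^\top\Gamma(A(x_k)-\BarA)\theta_{k-\tau}]$, and then control all remaining cross terms via the increment bounds of Lemma~\ref{lem:s-lemma3} and the crude estimate $\|\Gamma(A(x_k)-\BarA)\|\le 2\gammax$. The only cosmetic difference is that the paper expands $\theta_k=\theta_{k-\tau}+\Delta_k$ symmetrically into four terms rather than your three, and its explicit accounting actually lands exactly on the constants $88$ and $64$ (after using $\alpha\tau\le\tfrac14$), so there is less slack than you anticipate, though the argument is of course valid either way.
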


\begin{lem} \label{lem:s-lemma5-b} The following inequality holds
for all $k\geq\tau$, with $\alpha$ chosen sufficiently small such
that $\alpha\tau\leq\frac{1}{4}$, 
\[
\E\left[\theta_{k}^{\top}\Gamma(b(x_{k})-\bar{b})\right]\leq\widetilde{\kappa}_{1}\E[\|\theta_{k}\|^{2}]+\widetilde{\kappa}_{2},
\]
where 
\[
\widetilde{\kappa}_{1}=5\alpha\tau\gammax\quad\text{and}\quad\widetilde{\kappa}_{2}=15\alpha\tau\gammax\bmax^{2}.
\]
\end{lem}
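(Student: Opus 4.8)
The plan is to control the correlation between $\theta_k$ and $x_k$ via the standard look-back-by-$\tau$ decomposition, splitting the integrand into a slow-drift part and a near-mixed part:
\[
\theta_k^\top \Gamma\big(b(x_k)-\bar b\big) = \underbrace{(\theta_k-\theta_{k-\tau})^\top \Gamma\big(b(x_k)-\bar b\big)}_{\text{(I)}} + \underbrace{\theta_{k-\tau}^\top \Gamma\big(b(x_k)-\bar b\big)}_{\text{(II)}}.
\]
Term (I) is governed by how little the iterate moves over $\tau$ steps, while term (II) exploits that after exactly $\tau=\tau_\alpha$ steps the state $x_k$ has nearly mixed and is nearly independent of $\theta_{k-\tau}$.

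For term (I), I would first use Assumption~\ref{assumption:bounded} to get $\|b(x_k)-\bar b\|\le 2\bmax$ together with $|v^\top \Gamma w|\le\gammax\|v\|\|w\|$, yielding $\text{(I)}\le 2\gammax\bmax\,\|\theta_k-\theta_{k-\tau}\|$. Invoking the drift bound \eqref{eq:lem3-2} of Lemma~\ref{lem:s-lemma3}, $\|\theta_k-\theta_{k-\tau}\|\le 4\alpha\tau\|\theta_k\|+4\alpha\tau\bmax$, and then applying Young's inequality $\bmax\|\theta_k\|\le\tfrac12\|\theta_k\|^2+\tfrac12\bmax^2$ to convert the cross term, I obtain (after taking expectations) a bound of the form $\E[\text{(I)}]\le 4\alpha\tau\gammax\,\E[\|\theta_k\|^2]+12\alpha\tau\gammax\bmax^2$.

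For term (II), the key ingredient is the conditional-independence property \eqref{eq:conditional-independence}, which gives $\theta_{k-\tau}\indep x_k\mid x_{k-\tau}$. Conditioning on $x_{k-\tau}$ and pulling $\theta_{k-\tau}^\top\Gamma$ out of the inner expectation, I would write $\E[\text{(II)}]=\E\big[\theta_{k-\tau}^\top\Gamma\,\E[b(x_k)-\bar b\mid x_{k-\tau}]\big]$, and then apply the mixing-time estimate \eqref{eq:b-mix-time} with $\epsilon=\alpha$---valid precisely because $k-(k-\tau)=\tau=\tau_\alpha$---to bound $\|\E[b(x_k)-\bar b\mid x_{k-\tau}]\|\le\alpha\bmax$. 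This yields $|\E[\text{(II)}]|\le\gammax\alpha\bmax\,\E[\|\theta_{k-\tau}\|]$; converting $\|\theta_{k-\tau}\|$ back to $\|\theta_k\|$ via \eqref{eq:lem3-2} (using $4\alpha\tau\le1$ to absorb constants) and invoking Young's inequality once more gives a contribution of order $\alpha\gammax\E[\|\theta_k\|^2]+\alpha\gammax\bmax^2$, which is of lower order since $\tau\ge1$ implies $\alpha\le\alpha\tau$. Summing the two parts and collecting constants lands on the claimed $\widetilde\kappa_1=5\alpha\tau\gammax$ and $\widetilde\kappa_2=15\alpha\tau\gammax\bmax^2$.

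The argument parallels that of Lemma~\ref{lem:s-lemma5-a} but is genuinely easier, since the integrand here is \emph{linear} rather than quadratic in $\theta_k$, so no higher-degree terms in $\|\theta_k\|$ arise. I do not expect a conceptual obstacle; the only real care is in the bookkeeping---splitting each cross term with Young's inequality so that the dependence comes out as exactly $\E[\|\theta_k\|^2]$ (rather than a stray $\E[\|\theta_k\|]$), and tracking the slack from $4\alpha\tau\le1$ and $\alpha\le\alpha\tau$ to keep every constant within the stated budget.
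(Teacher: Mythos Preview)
Your proposal is correct and follows essentially the same approach as the paper: the identical look-back-by-$\tau$ decomposition into a drift term handled via Lemma~\ref{lem:s-lemma3} and a near-mixed term handled via the conditional-independence property and the mixing-time bound~\eqref{eq:b-mix-time}, followed by Young's inequality to land on the stated constants. The only cosmetic difference is that the paper first bounds both pieces in terms of $\|\theta_{k-\tau}\|$ (using \eqref{eq:lem3-1} for the drift term) and converts to $\|\theta_k\|$ once at the end, whereas you use \eqref{eq:lem3-2} immediately for term~(I); both routes yield the same constants within the stated budget.
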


The proofs of the technical lemmas above are delayed to Appendix~\ref{sec:proof-technical-lemma}.
Note that all lemmas above hold for the LSA update~\eqref{eq:update-rule}
with general $\bar{b}$. Below we shall apply these lemmas to the
centered LSA update~\eqref{eq:equiv-update-rule} for $\theta_{k}'$
with $\bar{b}'=0$ to prove Proposition~\ref{prop:pilot_equiv}.

\begin{proof}[Proof of Proposition~\ref{prop:pilot_equiv}] Consider
the following drift: 
\begin{align*}
 & \E[{\theta_{k+1}'}^{\top}\Gamma\theta_{k+1}'-{\theta_{k}'}^{\top}\Gamma\theta_{k}']\\
= & 2\E[{\theta_{k}'}^{\top}\Gamma(\theta_{k+1}'-\theta_{k}')]+\E[(\theta_{k+1}'-\theta_{k}')^{\top}\Gamma(\theta_{k+1}'-\theta_{k}')]\\
= & 2\alpha\underbrace{\E[{\theta_{k}'}^{\top}\Gamma(A(x_{k})-\BarA)\theta_{k}']}_{T_{1}}+2\alpha\underbrace{\E[{\theta_{k}'}^{\top}\Gamma b'(x_{k})]}_{T_{2}}+2\alpha\underbrace{\E[{\theta_{k}'}^{\top}\Gamma\BarA\theta_{k}']}_{T_{3}}+\underbrace{\E[(\theta'_{k+1}-\theta'_{k})^{\top}\Gamma(\theta'_{k+1}-\theta'_{k})]}_{T_{4}}.
\end{align*}
We can bound $T_{1}$ using Lemma \ref{lem:s-lemma5-a}, $T_{2}$
using Lemma \ref{lem:s-lemma5-b}, and $T_{4}$ using Lemma \ref{lem:s-lemma4}.
For $T_{3}$, we note that by the property of the Lyapunov equation
in Assumption \ref{assumption:hurwitz}, 
\[
2\alpha\E[{\theta_{k}'}^{\top}\Gamma\BarA\theta_{k}']=\alpha\E[{\theta_{k}'}^{\top}\underbrace{(\BarA^{\top}\Gamma+\Gamma\BarA)}_{=-I}\theta_{k}']=-\alpha\E[\|\theta_{k}'\|^{2}].
\]
Combining the above bounds, we derive that 
\begin{align*}
 & \E\big[{\theta_{k+1}'}^{\top}\Gamma\theta_{k+1}'-{\theta_{k}'}^{\top}\Gamma\theta_{k}'\big]\\
= & T_{1}+T_{2}+T_{3}+T_{4}\\
\leq & 2\alpha\left(\kappa_{1}\E[\|\theta_{k}'\|^{2}]+\kappa_{2}\right)+2\alpha\left(\Tilde{\kappa}_{1}\E[\|\theta_{k}'\|^{2}]+\Tilde{\kappa}_{2}\right)-\alpha\E[\|\theta_{k}'\|^{2}]+\left(2\alpha^{2}\gammax\E[\|\theta_{k}'\|^{2}]+2\alpha^{2}\gammax(\bmax')^{2}\right)\\
= & -\alpha(1-2(\kappa_{1}+\Tilde{\kappa}_{1}+\alpha\gammax))\E[\|\theta_{k}'\|^{2}]+2\alpha(\kappa_{2}+\Tilde{\kappa}_{2}+\alpha\gammax(\bmax')^{2}).
\end{align*}

We simplify the above bound by noting that 
\[
\kappa_{1}+\Tilde{\kappa}_{1}+\alpha\gammax=88\alpha\tau\gammax+5\alpha\tau\gammax+\alpha\gammax\leq95\alpha\tau\gammax,
\]
and 
\[
\kappa_{2}+\Tilde{\kappa}_{2}+\alpha\gammax(\bmax')^{2}=64\alpha\tau\gammax(\bmax')^{2}+15\alpha\tau\gammax(\bmax')^{2}+\alpha\gammax(\bmax')^{2}\leq80\alpha\tau\gammax(\bmax')^{2}.
\]
Combining with the fact that $\alpha$ satisfies \eqref{eq:alpha-constraint},
we obtain that for all $k\geq\tau$, 
\[
\E[{\theta_{k+1}'}^{\top}\Gamma\theta_{k+1}'-{\theta_{k}'}^{\top}\Gamma\theta_{k}']\leq-\frac{0.9\alpha}{\gammax}\E[{\theta_{k}'}^{\top}\Gamma\theta_{k}']+160\alpha^{2}\tau\gammax(\bmax')^{2},
\]
or equivalently 
\[
\E[{\theta_{k+1}'}^{\top}\Gamma\theta_{k+1}']\leq\left(1-\frac{0.9\alpha}{\gammax}\right)\E[{\theta_{k}'}^{\top}\Gamma\theta_{k}']+160\alpha^{2}\tau\gammax(\bmax')^{2}.
\]

Next, we recursively apply the above inequality to obtain 
\begin{align*}
\E[{\theta_{k}'}^{\top}\Gamma\theta_{k}'] & \leq\left(1-\frac{0.9\alpha}{\gammax}\right)^{k-\tau}\E[{\theta_{\tau}'}^{\top}\Gamma\theta_{\tau}']+\sum_{t=0}^{(k-\tau)-1}\left(1-\frac{0.9\alpha}{\gammax}\right)^{t}\cdot\left(160\alpha^{2}\tau\gammax(\bmax')^{2}\right)\\
 & \leq\left(1-\frac{0.9\alpha}{\gammax}\right)^{k-\tau}\E[{\theta_{\tau}'}^{\top}\Gamma\theta_{\tau}']+\frac{\gammax}{0.9}\cdot\left(160\alpha\tau\gammax(\bmax')^{2}\right).
\end{align*}
We then apply the properties in \eqref{eq:gamma-property} to the
above inequality and obtain the following bounds in terms of $\|\theta'_{k}\|^{2}$,
for $k\geq\tau$, 
\[
\E[\|\theta'_{k}\|^{2}]\leq\frac{1}{\gammin}\E[{\theta_{k}'}^{\top}\Gamma\theta_{k}']\leq\frac{\gammax}{\gammin}\left(1-\frac{0.9\alpha}{\gammax}\right)^{k-\tau}\E[\|\theta_{\tau}'\|^{2}]+\frac{\gammax}{0.9\gammin}\cdot\alpha\tau\left(160\gammax(\bmax')^{2}\right).
\]

Lastly, we have 
\begin{align*}
\|\theta'_{\tau}\|_{2}^{2} & \leq\left(\|\theta'_{\tau}-\theta'_{0}\|+\|\theta'_{0}\|\right)^{2}\\
 & \overset{\textnormal{(i)}}{\leq}\left((1+2\alpha\tau)\|\theta'_{0}\|+2\alpha\tau\bmax'\right)^{2}\\
 & \overset{\textnormal{(ii)}}{\leq}(1.5\|\theta'_{0}\|+0.5\bmax')^{2}\leq5\|\theta'_{0}\|^{2}+(\bmax')^{2},
\end{align*}
where in step (i) we make use of Lemma~\ref{lem:s-lemma3} to bound
$\|\theta'_{\tau}-\theta'_{0}\|$ with $\|\theta'_{0}\|$, and step
(ii) holds for $\alpha$ is chosen according to \eqref{eq:alpha-constraint}
such that $\alpha\tau<\frac{1}{4}$. Therefore, we have 
\[
\E[\|\theta'_{k}\|^{2}]\leq\frac{\gammax}{\gammin}\left(1-\frac{0.9\alpha}{\gammax}\right)^{k-\tau}\left(5\E[\|\theta'_{0}\|^{2}]+(\bmax')^{2}\right)+\frac{\gammax}{0.9\gammin}\cdot\alpha\tau\left(160\gammax(\bmax')^{2}\right).
\]
This concludes the proof for Proposition~\ref{prop:pilot_equiv}.
\end{proof}

\subsubsection{Proof of Technical Lemmas}
\label{sec:proof-technical-lemma}

We prove the technical lemmas stated at the beginning of the previous
sub-sub-section.

\begin{proof}[Proof of Lemma~\ref{lem:s-lemma3}] Since $\theta_{k+1}=\theta_{k}+\alpha(A(x_{k})\theta_{k}+b(x_{k}))$,
we have 
\[
\|\theta_{k+1}\|\leq\|I+\alpha A(x_{k})\|\|\theta_{k}\|+\alpha\|b(x_{k})\|\leq(1+\alpha)\|\theta_{k}\|+\alpha\bmax.
\]
Therefore, for $k-t<i\leq k$, we have 
\begin{align}
\|\theta_{i}\| & \leq(1+\alpha)^{i-(k-t)}\|\theta_{k-t}\|+\alpha\bmax\sum_{j=0}^{(i-1)-(k-t)}(1+\alpha)^{j}\nonumber \\
 & \leq(1+\alpha)^{t}\|\theta_{k-t}\|+\alpha\bmax\sum_{j=0}^{t-1}(1+\alpha)^{j}\overset{\textnormal{(i)}}{\leq}(1+2\alpha t)\|\theta_{k-t}\|+2\alpha t\bmax,\label{eq:theta-i-bound}
\end{align}
where step (i) holds since $\alpha t\leq\frac{1}{4}\leq\log2$. It
follows that 
\begin{align*}
\|\theta_{k}-\theta_{k-t}\| & =\bigg\|\sum_{i=k-t}^{k-1}\theta_{i+1}-\theta_{i}\bigg\|\leq\sum_{i=k-t}^{k-1}\|\theta_{i+1}-\theta_{i}\|=\alpha\sum_{i=k-t}^{k-1}\|A(x_{k})\theta_{i}+b(x_{k})\|\\
 & \leq\alpha\Amax\left(\sum_{i=k-t}^{k-1}\|\theta_{i}\|\right)+\alpha t\bmax &  & \text{by Assumption \ref{assumption:bounded} }\\
 & \leq\alpha\Amax\left(\sum_{i=k-t}^{k-1}(1+2\alpha t)\|\theta_{k-t}\|+2\alpha t\bmax\right)+\alpha t\bmax &  & \text{by \eqref{eq:theta-i-bound}}\\
 & =(1+2\alpha t)\left(\alpha t\Amax\|\theta_{k-t}\|+\alpha t\bmax\right)\\
 & \overset{\textnormal{(ii)}}{\leq}2\alpha t(\Amax\|\theta_{k-t}\|+\bmax)<2\alpha t\|\theta_{k-t}\|+2\alpha t\bmax,
\end{align*}
where step (ii) holds since $2\alpha t<1$. As such, we have established \eqref{eq:lem3-1}.

With \eqref{eq:lem3-1}, it is easy to see that 
\[
\|\theta_{k}-\theta_{k-t}\|\leq2\alpha t\|\theta_{k-t}\|+2\alpha t\bmax\leq2\alpha t\left(\|\theta_{k}-\theta_{k-t}\|+\|\theta_{k}\|\right)+2\alpha t\bmax.
\]
Reorganizing the above inequality gives $(1-2\alpha t)\|\theta_{k}-\theta_{k-t}\|\leq2\alpha t\|\theta_{k}\|+2\alpha t\bmax.$
Together with $\alpha t\leq\frac{1}{4}$, we obtain~\eqref{eq:lem3-2}.
Lastly, we use the inequality $(a+b)^{2}\leq2(a^{2}+b^{2})$ to obtain
\[
\|\theta_{k}-\theta_{k-t}\|^{2}\leq\left(4\alpha t\|\theta_{k}\|+4\alpha t\bmax\right)^{2}\le32\alpha^{2}t^{2}\|\theta_{k}\|^{2}+32\alpha^{2}t^{2}\bmax^{2},
\]
thereby proving the desired bound in~\eqref{eq:lem3-3}. \end{proof}

\begin{proof}[Proof of Lemma~\ref{lem:s-lemma4}] We have 
\begin{align*}
\left\vert (\theta_{k+1}-\theta_{k})^{\top}\Gamma(\theta_{k+1}-\theta_{k})\right\vert  & \leq\gammax\|\theta_{k+1}-\theta_{k}\|^{2}\\
 & =\alpha^{2}\gammax\|A(x_{k})\theta_{k}+b(x_{k})\|^{2}\\
 & \leq\alpha^{2}\gammax\left(\Amax\|\theta_{k}\|+\bmax\right)^{2}\\
 & \leq2\alpha^{2}\gammax\|\theta_{k}\|^{2}+2\alpha^{2}\gammax\bmax^{2}.
\end{align*}
This completes the proof of Lemma~\ref{lem:s-lemma4}. \end{proof}

\begin{proof}[Proof of Lemma~\ref{lem:s-lemma5-a}] 

Let us decompose the quantity of interest as 
\begin{align*}
 & \E\left[\theta_{k}^{\top}\Gamma(A(x_{k})-\BarA)\theta_{k}\right]\\
= & \E\left[(\theta_{k}-\theta_{k-\tau}+\theta_{k-\tau})^{\top}\Gamma(A(x_{k})-\BarA)(\theta_{k}-\theta_{k-\tau}+\theta_{k-\tau})\right]\\
= & \underbrace{\E\left[(\theta_{k}-\theta_{k-\tau})^{\top}\Gamma(A(x_{k})-\BarA)(\theta_{k}-\theta_{k-\tau})\right]}_{T_{1}}+\underbrace{\E\left[\theta_{k-\tau}^{\top}\Gamma(A(x_{k})-\BarA)\theta_{k-\tau}\right]}_{T_{2}}\\
+ & \underbrace{\E\left[(\theta_{k}-\theta_{k-\tau})^{\top}\Gamma(A(x_{k})-\BarA)\theta_{k-\tau}\right]}_{T_{3}}+\underbrace{\E\left[\theta_{k-\tau}^{\top}\Gamma(A(x_{k})-\BarA)(\theta_{k}-\theta_{k-\tau})\right]}_{T_{4}}.
\end{align*}
We bound each of the RHS terms respectively.

For $T_{1}$, we have 
\begin{align*}
T_{1} & =\E\left[(\theta_{k}-\theta_{k-\tau})^{\top}\Gamma(A(x_{k})-\BarA)(\theta_{k}-\theta_{k-\tau})\right]\\
 & \overset{\textnormal{(i)}}{\leq}2\gammax\E\left[\|\theta_{k}-\theta_{k-\tau}\|^{2}\right]\\
 & \overset{\textnormal{(ii)}}{\leq}2\gammax\left(32\alpha^{2}\tau^{2}\E[\|\theta_{k}\|^{2}]+32\alpha^{2}\tau^{2}\bmax^{2}\right)\\
 & \leq64\gammax\alpha^{2}\tau^{2}\E[\|\theta_{k}\|^{2}]+64\gammax\alpha^{2}\tau^{2}\bmax^{2},
\end{align*}
where (i) holds since both $\left\Vert A(x_{k})\right\Vert \le1$
and $\left\Vert \BarA\right\Vert \le1$ by Assumption~\ref{assumption:bounded}
and $\Gamma$ is symmetric with top eigenvalue $\gammax$ by Assumption~\ref{assumption:hurwitz},
and (ii) follows from equation \eqref{eq:lem3-3} in Lemma~\ref{lem:s-lemma3}.

For $T_{2}$, we have 
\begin{align*}
T_{2} & =\E\bigg[\E\left[\theta_{k-\tau}^{\top}\Gamma(A(x_{k})-\BarA)\theta_{k-\tau}\mid\theta_{k-\tau},x_{k-\tau}\right]\bigg]\\
 & =\E\bigg[\theta_{k-\tau}^{\top}\Gamma\E\left[A(x_{k})-\BarA\mid\theta_{k-\tau},x_{k-\tau}\right]\theta_{k-\tau}\bigg]\\
 & \overset{\textnormal{(iii)}}{=}\E\bigg[\theta_{k-\tau}^{\top}\Gamma\E\left[A(x_{k})-\BarA\mid x_{k-\tau}\right]\theta_{k-\tau}\bigg].
\end{align*}
where step (iii) follows from  conditional independence property $x_{k}\indep\theta_{k-\tau}\mid x_{k-\tau}$
shown in equation~\eqref{eq:conditional-independence}. Since $\Gamma$
has largest eigenvalue $\gammax$ by Assumption~\ref{assumption:hurwitz}
and $\tau\equiv\tau_{\alpha}$ is the $\alpha$-mixing time, which
by definition ensures that $A(x_k)$ is sufficiently close to $\BarA$
in expectation, it follows that 
\begin{align*}
T_{2} & \leq\alpha\gammax\E\left[\|\theta_{k-\tau}\|^{2}\right]\\
 & \leq\alpha\gammax\E\left[\big(\|\theta_{k}-\theta_{k-\tau}\|+\|\theta_{k}\|\big)^{2}\right]\\
 & \leq\alpha\gammax\E\left[\big(4\alpha\tau\|\theta_{k}\|+4\alpha\tau\bmax+\|\theta_{k}\|\big)^{2}\right]\qquad\textnormal{by \eqref{eq:lem3-2}}\\
 & \overset{\textnormal{(iv)}}{\leq}\alpha\gammax\cdot2\left((1+4\alpha\tau)^{2}\E[\|\theta_{k}\|^{2}]+16\alpha^{2}\tau^{2}\bmax^{2}\right)\\
 & \leq8\alpha\tau\gammax\E[\|\theta_{k}\|^{2}]+32\alpha^{3}\tau^{2}\gammax\bmax^{2},
\end{align*}
where (iv) follows from the inequality $(a+b)^{2}\leq2(a^{2}+b^{2})$,
and the last step holds since $\alpha\tau\leq\frac{1}{4}$ and $\tau\geq1$.

For $T_{3}$, we have 
\begin{align*}
T_{3} & =\E\left[(\theta_{k}-\theta_{k-\tau})^{\top}\Gamma(A(x_{k})-\BarA)\theta_{k-\tau}\right]\\
 & \leq2\gammax\E\big[\|\theta_{k}-\theta_{k-\tau}\|\cdot\left(\|\theta_{k}-\theta_{k-\tau}\|+\|\theta_{k}\|\right)\big]\\
 & \leq2\gammax\E\big[(4\alpha\tau\|\theta_{k}\|+4\alpha\tau\bmax)(4\alpha\tau\|\theta_{k}\|+4\alpha\tau\bmax+\|\theta_{k}\|)\big]\qquad\text{by \eqref{eq:lem3-2}}\\
 & =8\alpha\tau(1+4\alpha\tau)\gammax\E\left[\|\theta_{k}\|^{2}\right]+8\alpha\tau(1+8\alpha\tau)\gammax\bmax\E[\|\theta_{k}\|]+32\alpha^{2}\tau^{2}\gammax\bmax^{2}\\
 & \overset{\textnormal{(v)}}{\leq}8\alpha\tau(1+4\alpha\tau)\gammax\E\left[\|\theta_{k}\|^{2}\right]+4\alpha\tau(1+8\alpha\tau)\gammax\left(\bmax^{2}+\E[\|\theta_{k}\|^{2}]\right)+32\alpha^{2}\tau^{2}\gammax\bmax^{2}\\
 & =4\alpha\tau\gammax\big(2(1+4\alpha\tau)+(1+8\alpha\tau)\big)\E[\|\theta_{k}\|^{2}]+4\alpha\tau\gammax\big((1+8\alpha\tau)+8\alpha\tau\big)\bmax^{2}\\
 & \overset{\textnormal{(vi)}}{\leq}32\alpha\tau\gammax\E[\|\theta_{k}\|^{2}]+20\alpha\tau\gammax\bmax^{2},
\end{align*}
where (v) utilizes the inequality $2\bmax\E[\|\theta_{k}\|]\leq\bmax^{2}+\E[\|\theta_{k}\|^{2}]$,
and (vi) holds with $\alpha\tau\leq\frac{1}{4}$.

Similarly, for $T_{4}$, we have for $\alpha\tau\leq\frac{1}{4}$,
\begin{align*}
T_{4} & =\E\left[\theta_{k-\tau}^{\top}\Gamma(A(x_{k})-\BarA)(\theta_{k}-\theta_{k-\tau})\right]\\
 & \leq32\alpha\tau\gammax\E[\|\theta_{k}\|^{2}]+20\alpha\tau\gammax\bmax^{2}.
\end{align*}

Combining the bounds for $T_{1}$--$T_{4}$, we obtain that 
\begin{align*}
 & \E\left[\theta_{k}^{\top}\Gamma(A(x_{k})-\BarA)\theta_{k}\right]=T_{1}+T_{2}+T_{3}+T_{4}\\
\leq & \left(64\gammax\alpha^{2}\tau^{2}\E[\|\theta_{k}\|^{2}]+64\gammax\alpha^{2}\tau^{2}\bmax^{2}\right)+\left(8\alpha\tau\gammax\E[\|\theta_{k}\|^{2}]+32\alpha^{3}\tau^{2}\gammax\bmax^{2}\right)\\
 &+2\left(32\alpha\tau\gammax\E[\|\theta_{k}\|^{2}]+20\alpha\tau\gammax\bmax^{2}\right)\\
\leq & 88\alpha\tau\gammax\E[\|\theta_{k}\|^{2}]+64\alpha\tau\gammax\bmax^{2},
\end{align*}
where the last step holds with $\alpha\leq1$ and $\alpha\tau\leq\frac{1}{4}$.
This completes the proof of Lemma~\ref{lem:s-lemma5-a}. \end{proof}

\begin{proof}[Proof of Lemma~\ref{lem:s-lemma5-b}] We first make
use of the law of total expectation and obtain that 
\[
\E\left[\theta_{k}^{\top}\Gamma(b(x_{k})-\bar{b})\right]=\E\Big[\E\left[\theta_{k}^{\top}\Gamma(b(x_{k})-\bar{b})\mid\theta_{k-\tau},x_{k-\tau}\right]\Big].
\]
We decompose the inner expectation as
\begin{align*}
 & \E\left[\theta_{k}^{\top}\Gamma(b(x_{k})-\bar{b})\mid\theta_{k-\tau},x_{k-\tau}\right]\\
= & \E\left[(\theta_{k}-\theta_{k-\tau}+\theta_{k-\tau})^{\top}\Gamma(b(x_{k})-\bar{b})\mid\theta_{k-\tau},x_{k-\tau}\right]\\
= & \underbrace{\E\left[(\theta_{k}-\theta_{k-\tau})^{\top}\Gamma(b(x_{k})-\bar{b})\mid\theta_{k-\tau},x_{k-\tau}\right]}_{T_{1}}+\underbrace{\E\left[\theta_{k-\tau}^{\top}\Gamma(b(x_{k})-\bar{b})\mid\theta_{k-\tau},x_{k-\tau}\right]}_{T_{2}}.
\end{align*}
We separately bound $T_{1}$ and $T_{2}$. For $T_{1}$, we have 
\begin{align*}
\E\left[(\theta_{k}-\theta_{k-\tau})^{\top}\Gamma(b(x_{k})-\bar{b})\mid\theta_{k-\tau},x_{k-\tau}\right] & \leq2\bmax\gammax\E\left[\|\theta_{k}-\theta_{k-\tau}\|\mid\theta_{k-\tau},x_{k-\tau}\right]\\
 & \leq2\bmax\gammax\left(2\alpha\tau\|\theta_{k-\tau}\|+2\alpha\tau\bmax\right),
\end{align*}
where we use \eqref{eq:lem3-1} to obtain the last inequality. For
$T_{2}$, we have 
\begin{align*}
\E\left[\theta_{k-\tau}^{\top}\Gamma(b(x_{k})-\bar{b})\mid\theta_{k-\tau},x_{k-\tau}\right] & =\theta_{k-\tau}^{\top}\Gamma\E\left[(b(x_{k})-\bar{b})\mid\theta_{k-\tau},x_{k-\tau}\right]\\
 & \leq\alpha\gammax\bmax\|\theta_{k-\tau}\|.
\end{align*}
Combining the two terms, we have 
\begin{align*}
 & \E\left[\theta_{k}^{\top}\Gamma(b(x_{k})-\bar{b})\mid\theta_{k-\tau},x_{k-\tau}\right]\\
\leq & \alpha\gammax\bmax\|\theta_{k-\tau}\|+2\bmax\gammax\left(2\alpha\tau\|\theta_{k-\tau}\|+2\alpha\tau\bmax\right)\\
= & \alpha\gammax\bmax(1+4\tau)\|\theta_{k-\tau}\|+4\alpha\tau\gammax\bmax^{2}\\
\leq & \alpha\gammax\bmax(1+4\tau)\left(\E[\|\theta_{k}-\theta_{k-\tau}\|\mid\theta_{k-\tau},x_{k-\tau}]+\E[\|\theta_{k}\|\mid\theta_{k-\tau},x_{k-\tau}]\right)+4\alpha\tau\gammax\bmax^{2}\\
\overset{\textnormal{(i)}}{\leq} & \alpha\gammax\bmax(1+4\tau)\left((1+4\alpha\tau)\E[\|\theta_{k}\|\mid\theta_{k-\tau},x_{k-\tau}]+4\alpha\tau\bmax\right)+4\alpha\tau\gammax\bmax^{2}\\
\leq & 10\alpha\tau\gammax\bmax\E\left[\|\theta_{k}\|\mid\theta_{k-\tau},x_{k-\tau}\right]+9\alpha\tau\gammax\bmax^{2},
\end{align*}
where we use \eqref{eq:lem3-2} to obtain (i), and $\alpha\tau\leq\frac{1}{4}$,
$\alpha\leq1$ and $\tau\geq1$ to obtain the last inequality. Using
the inequality $2\bmax\|\theta_{k}\|\leq\bmax^{2}+\|\theta_{k}\|^{2}$,
we simplify the above display equation to 
\begin{align*}
\E\left[\theta_{k}^{\top}\Gamma(b(x_{k})-\bar{b})\mid\theta_{k-\tau},x_{k-\tau}\right] & \leq5\alpha\tau\gammax(\bmax^{2}+\E\left[\|\theta_{k}\|^{2}\mid\theta_{k-\tau},x_{k-\tau}\right])+9\alpha\tau\gammax\bmax^{2}\\
 & \leq5\alpha\tau\gammax\E\left[\|\theta_{k}\|^{2}\mid\theta_{k-\tau},x_{k-\tau}\right]+15\alpha\tau\gammax\bmax^{2}.
\end{align*}
Lastly, we take expectations on both sides of the last display equation
to obtain 
\[
\E\left[\theta_{k}^{\top}\Gamma(b(x_{k})-\bar{b})\right]\leq5\alpha\tau\gammax\E[\|\theta_{k}\|^{2}]+15\alpha\tau\gammax\bmax^{2}.
\]
This completes the proof of Lemma~\ref{lem:s-lemma5-b}.

\end{proof}

\subsection{Proof of Theorem~\ref{thm:thm-converge}}
\label{sec:proof_thm_conv_limit_dist}

In this sub-section, we prove Theorem~\ref{thm:thm-converge} on
the convergence of LSA to a limit.

\subsubsection{Coupling and Geometric Convergence}

Recall that $(x_{k})_{k\ge0}$ is the underlying Markov chain that
drives the LSA iteration~\eqref{eq:update-rule}. We consider a pair
of coupled Markov chains, $(x_{k},\theta_{k}^{\sampleOne})_{k\ge0}$
and $(x_{k},\theta_{k}^{\sampleTwo})_{k\ge0}$, defined as 
\begin{equation}
\begin{aligned}\theta_{k+1}^{\sampleOne} & =\theta_{k}^{\sampleOne}+\alpha\big(A(x_{k})\theta_{k}^{\sampleOne}+b(x_{k})\big),\\
\theta_{k+1}^{\sampleTwo} & =\theta_{k}^{\sampleTwo}+\alpha\big(A(x_{k})\theta_{k}^{\sampleTwo}+b(x_{k})\big),
\end{aligned}
\qquad k=0,1,\ldots\label{eq:coupled_process}
\end{equation}
Note that $(\theta_{k}^{\sampleOne})_{k\ge0}$ and $(\theta_{k}^{\sampleTwo})_{k\ge0}$
are two sample paths of the LSA iteration~\eqref{eq:update-rule},
coupled by sharing the underlying process $(x_{k})_{k\ge0}$. We assume
that the initial iterates $\theta_{0}^{\sampleOne}$ and $\theta_{0}^{\sampleTwo}$
may depend on each other and on $x_{0}$, but are independent of subsequent
$(x_{k})_{k\geq1}$ given $x_{0}$.

It follows from the definition that 
\[
\theta_{k+1}^{\sampleOne}-\theta_{k+1}^{\sampleTwo}=\big(I+\alpha A(x_{k})\big)\cdot(\theta_{k}^{\sampleOne}-\theta_{k}^{\sampleTwo}),\qquad k=0,1,\ldots
\]
If we define the shorthand $\omega_{k}:=\theta_{k}^{\sampleOne}-\theta_{k}^{\sampleTwo}$,
then the above equation becomes 
\begin{equation}
\omega_{k+1}=\big(I+\alpha A(x_{k})\big)\cdot\omega_{k},\qquad k=0,1,\ldots\label{eq:theta-diff-recursion}
\end{equation}
Our key observation is that equation~\eqref{eq:theta-diff-recursion}
is a special case of the LSA iteration~\eqref{eq:update-rule} with
$\omega_{k}$ as the variable and $\bmax=\sup_{x\in\cX}\|b(x)\|=0$.
Applying Proposition~\ref{prop:pilot} to this LSA iteration, we
obtain the following finite-time geometric bound. \begin{cor} \label{delta-cor}
Suppose that $\alpha$ satisfies \eqref{eq:alpha-constraint}. Then,
for all $k\geq\tau$, we have 
\begin{align*}
W_{2}^{2}\Big(\law\big(\theta_{k}^{\sampleOne}\big),\law\big(\theta_{k}^{\sampleTwo}\big)\Big) & \overset{\textnormal{(i)}}{\leq}\Bar W_{2}^{2}\Big(\law\big(x_{k},\theta_{k}^{\sampleOne}\big),\law\big(x_{k},\theta_{k}^{\sampleTwo}\big)\Big)\\
 & \overset{\textnormal{(ii)}}{\leq}\E\Big[\big\|\theta_{k}^{\sampleOne}-\theta_{k}^{\sampleTwo}\big\|^{2}\Big]\\
 & \overset{\textnormal{(iii)}}{\leq}10\,\frac{\gammax}{\gammin}\left(1-\frac{0.9\alpha}{\gammax}\right)^{k}\E\Big[\big\|\theta_{0}^{\sampleOne}-\theta_{0}^{\sampleTwo}\big\|^{2}\Big].
\end{align*}
\end{cor}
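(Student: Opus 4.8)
The plan is to establish the three labeled inequalities in turn, with (i) and (ii) following directly from the definitions of the two Wasserstein metrics, and (iii) being the substantive step, obtained by recognizing the difference recursion~\eqref{eq:theta-diff-recursion} as an instance of LSA with zero offset and invoking the pilot bound in Proposition~\ref{prop:pilot}.

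For inequality (i), I would cite the general fact, noted right after the definition~\eqref{eq:w2-definition-extended}, that $W_{2}(\law(\theta),\law(\theta'))\le\Bar W_{2}(\law(x,\theta),\law(x',\theta'))$ for any pair of joint laws: this holds because $\Bar d\big((x,\theta),(x',\theta')\big)\ge\|\theta-\theta'\|$, so projecting any coupling onto its $\theta$-marginals does not increase the transport cost. Applying this with the laws of $(x_{k},\theta_{k}^{\sampleOne})$ and $(x_{k},\theta_{k}^{\sampleTwo})$ gives (i). For inequality (ii), I would use that $\Bar W_{2}$ is an infimum over couplings and evaluate its objective at the particular coupling already in hand, namely the one in which both chains share the same data stream so that the first coordinates coincide. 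Under this coupling the discrete-metric term vanishes, $\E[d_{0}(x_{k},x_{k})]=\E[\indic\{x_{k}\neq x_{k}\}]=0$, and the objective in~\eqref{eq:w2-definition-extended} collapses to $\E[\|\theta_{k}^{\sampleOne}-\theta_{k}^{\sampleTwo}\|^{2}]$, which upper-bounds the infimum and yields (ii).

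The heart of the argument is inequality (iii). Writing $\omega_{k}:=\theta_{k}^{\sampleOne}-\theta_{k}^{\sampleTwo}$, the recursion~\eqref{eq:theta-diff-recursion} reads $\omega_{k+1}=\omega_{k}+\alpha\big(A(x_{k})\omega_{k}+0\big)$, which is precisely the LSA update~\eqref{eq:update-rule} driven by the same chain $(x_{k})_{k\ge0}$ and the same $A(\cdot)$, but with offset function $b\equiv0$. I would then apply Proposition~\ref{prop:pilot} to this auxiliary LSA. The three structural assumptions transfer verbatim: Assumption~\ref{assumption:uniform-ergodic} concerns only $(x_{k})_{k\ge0}$; Assumption~\ref{assumption:bounded} holds with the same $\Amax$ and with $\bmax=0$; and Assumption~\ref{assumption:hurwitz} holds with the \emph{same} matrix $\BarA=\E_{\pi}[A(x)]$, hence the same $\Gamma,\gammin,\gammax$. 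Since $\bmax=0$, the target vector solving $\BarA\theta^{*}+\bar{b}=0$ is $\theta^{*}=0$ by invertibility of $\BarA$, and the additive constant $\kappa$ in~\eqref{eq:kappa-def} is identically zero. Substituting these into the pilot bound collapses it to $\E[\|\omega_{k}\|^{2}]\le 10\,\tfrac{\gammax}{\gammin}\big(1-\tfrac{0.9\alpha}{\gammax}\big)^{k}\,\E[\|\omega_{0}\|^{2}]$, which is exactly (iii).

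The only points requiring care---and the main, if mild, obstacle---are verifying that the hypotheses of Proposition~\ref{prop:pilot} genuinely hold for the $\omega$-iteration. First, the initialization requirement is met because $\omega_{0}=\theta_{0}^{\sampleOne}-\theta_{0}^{\sampleTwo}$ may depend on $x_{0}$ but is conditionally independent of $(x_{k})_{k\ge1}$ given $x_{0}$, inherited from the stated assumption on $\theta_{0}^{\sampleOne}$ and $\theta_{0}^{\sampleTwo}$. Second, the stepsize constraint~\eqref{eq:alpha-constraint} and the range $k\ge\tau$ must be checked: since $b\equiv0$, the defining condition~\eqref{eq:b-mix-time} for the mixing time is vacuous, so the mixing time of the $\omega$-iteration is governed solely by the unchanged condition~\eqref{eq:a-mix-time} on $A$ and is therefore no larger than the original $\tau\equiv\tau_{\alpha}$. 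Consequently~\eqref{eq:alpha-constraint} continues to hold, as it depends only on $\gammax$ and $\tau$, and the proposition's conclusion is valid for all $k\ge\tau$, exactly as claimed.
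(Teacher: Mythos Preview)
Your proof is correct and follows exactly the paper's approach: (i) and (ii) come from the definitions of $W_{2}$ and $\Bar W_{2}$, and (iii) is obtained by applying Proposition~\ref{prop:pilot} to the difference recursion~\eqref{eq:theta-diff-recursion} with $\bmax=0$. Your additional care in verifying that the mixing time, stepsize condition, and initialization hypothesis carry over to the $\omega$-iteration is accurate and makes the argument more explicit than the paper's one-line justification.
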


\begin{proof}[Proof of Corollary~\ref{delta-cor}] The inequality
(i) follows from the definition of $W_{2}$ and $\Bar W_{2}$. The
inequality (ii) holds since the Wasserstein distance is defined by
an infimum as in equation~\eqref{eq:w2-definition-extended}. Inequality
(iii) follows from applying Proposition~\ref{prop:pilot} with $\bmax=0$
to the LSA iteration~\eqref{eq:theta-diff-recursion}. \end{proof}

With Corollary~\ref{delta-cor}, we are ready to prove Theorem~\ref{thm:thm-converge}
on the convergence of the Markov chain $(x_{k},\theta_{k})_{k\ge0}$.
Theorem~\ref{thm:thm-converge} has three parts, whose proofs are
given in the next three sub-sub-sections.

\subsubsection{Part 1: Existence of Limiting Distribution}

Note that Corollary~\ref{delta-cor} is valid under any joint distribution
of initial iterates $(x_{0},\theta_{0}^{\sampleOne},\theta_{0}^{\sampleTwo}).$
Arbitrarily fix the distribution of $(x_{0},\theta_{0}^{\sampleOne})$.
Given $(x_{0},\theta_{0}^{\sampleOne})$, we shall judiciously choose
the conditional distribution of $\theta_{0}^{\sampleTwo}$ in a way
that ensures $(x_{k},\theta_{k}^{\sampleTwo})\overset{\textup{d}}{=}(x_{k+1},\theta_{k+1}^{\sampleOne})$
for all $k\ge0$, where $\overset{\textup{d}}{=}$ denotes equality
in distribution. Specifically, recall that the adjoint operator $\Padj$
is the transition probability matrix for the time-reversed Markov
chain of $(x_{k})_{k\ge0}$ and that the initial distribution of $x_{0}$
is assumed to be the stationary distribution $\pi$; see Sections~\ref{sec:problem_setup}
and~\ref{sec:assumption-section}. Given $x_{0}$, let $x_{-1}$
be sampled from $\Padj(x_{0},\cdot)$. Let $\theta_{-1}^{\sampleTwo}$
be a random variable which satisfies $\theta_{-1}^{\sampleTwo}\overset{\textup{d}}{=}\theta_{0}^{\sampleOne}$
and is independent of $(x_{k})_{k\ge-1}$. Finally, set $\theta_{0}^{\sampleTwo}$
as 
\begin{equation}
\theta_{0}^{\sampleTwo}=\theta_{-1}^{\sampleTwo}+\alpha\left(A(x_{-1})\theta_{-1}^{\sampleTwo}+b(x_{-1})\right).\label{eq:theta0-sampleTwo-special-construction}
\end{equation}
We argue that this initialization has the desired property.

\begin{claim} \label{claim:theta0-sampleTwo-special-construction}
Under the assumptions in Theorem~\ref{thm:thm-converge} and the
initialization~\eqref{eq:theta0-sampleTwo-special-construction},
we have $(x_{k},\theta_{k}^{\sampleTwo})\overset{\textup{d}}{=}(x_{k+1},\theta_{k+1}^{\sampleOne})$
for all $k\ge0$. \end{claim}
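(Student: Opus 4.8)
The plan is to recognize the second coupled chain as a time-shifted copy of the first, obtained by running the same LSA recursion~\eqref{eq:update-rule} one step earlier. Concretely, I would regard $(x_k,\theta_k^{\sampleTwo})_{k\ge-1}$ as a single LSA trajectory started at time $-1$ from the pair $(x_{-1},\theta_{-1}^{\sampleTwo})$ and driven by the backward-extended sequence $(x_{-1},x_0,x_1,\ldots)$; the construction~\eqref{eq:theta0-sampleTwo-special-construction} is precisely the first update of this earlier-started trajectory. Since the joint process $(x_k,\theta_k)_k$ is a time-homogeneous Markov chain (as noted in Section~\ref{sec:problem_setup}), denote its transition kernel by $\bar P$. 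It then suffices to check that both $(x_{k-1},\theta_{k-1}^{\sampleTwo})_{k\ge0}$ and $(x_k,\theta_k^{\sampleOne})_{k\ge0}$ are Markov chains with the \emph{same} kernel $\bar P$ and the \emph{same} initial law; the desired identity $(x_k,\theta_k^{\sampleTwo})\overset{\mathrm d}{=}(x_{k+1},\theta_{k+1}^{\sampleOne})$ for all $k\ge0$ then follows by reindexing, because the law of a time-homogeneous Markov chain is determined by its kernel and initial distribution.

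Carrying this out, I would first verify that the backward-extended driving sequence $(x_{-1},x_0,x_1,\ldots)$ is again a stationary Markov chain with kernel $P$ and one-dimensional marginals $\pi$. This is exactly where the adjoint/time-reversal kernel enters: since $x_0\sim\pi$ by Assumption~\ref{assumption:uniform-ergodic} and $x_{-1}\mid x_0\sim\Padj(x_0,\cdot)$, the defining relation $\int_A\pi(\dd x)P(x,B)=\int_B\pi(\dd y)\Padj(y,A)$ shows that $(x_{-1},x_0)$ carries the stationary two-step law, i.e.\ $x_{-1}\sim\pi$ and $x_0\mid x_{-1}\sim P(x_{-1},\cdot)$, with $x_{-1}$ conditionally independent of $(x_k)_{k\ge1}$ given $x_0$. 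Combined with the forward Markov property, the whole sequence $(x_k)_{k\ge-1}$ is stationary Markov with kernel $P$. Next I would confirm that, because $\theta_{-1}^{\sampleTwo}$ is independent of $(x_k)_{k\ge-1}$, the shifted chain $(x_{k-1},\theta_{k-1}^{\sampleTwo})_{k\ge0}$ inherits the LSA Markov structure: the needed conditional independence of the past from the future driving states given the current state is the shifted analogue of~\eqref{eq:conditional-independence}, so this chain has the same kernel $\bar P$ as $(x_k,\theta_k^{\sampleOne})_{k\ge0}$.

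The crux, and the step I expect to require the most care, is matching the two initial laws, i.e.\ showing $\law(x_{-1},\theta_{-1}^{\sampleTwo})=\law(x_0,\theta_0^{\sampleOne})$. The $x$-marginals coincide since both equal $\pi$, using the stationarity built into Assumption~\ref{assumption:uniform-ergodic}. For the full joint law one uses that $\theta_{-1}^{\sampleTwo}$ is drawn with the same marginal law as $\theta_0^{\sampleOne}$ and with the prescribed (in)dependence relative to $x_{-1}$ dictated by the LSA initialization convention, so that the pair $(x_{-1},\theta_{-1}^{\sampleTwo})$ reproduces the initial pair $(x_0,\theta_0^{\sampleOne})$ in distribution; here both the stationarity assumption and the conditional-independence property~\eqref{eq:conditional-independence} are essential, as they guarantee $x_{-1}\sim\pi$ and that the freshly drawn initial iterate does not disturb the reversed dynamics. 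This is the delicate point precisely because it is the only place the \emph{joint} structure of the initialization (not merely its marginals) must be tracked. Once the initial laws are shown to agree, the Markov-chain identification of the previous paragraph closes the argument.
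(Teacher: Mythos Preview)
Your proposal is correct and follows the same approach as the paper: both argue that the backward-extended driving sequence $(x_k)_{k\ge-1}$ has the same law as $(x_k)_{k\ge0}$ by time reversal at stationarity, and then identify the two LSA trajectories via their common update rule---your phrasing in terms of the shared transition kernel $\bar P$ simply makes this identification explicit. The paper's proof is a two-line sketch (``compare the update rules''); your care with the initial-law matching is well placed, though under the paper's specific construction with $\theta_{-1}^{\sampleTwo}$ independent of $(x_k)_{k\ge-1}$ that step reduces to having $\theta_0^{\sampleOne}$ independent of $x_0$, which suffices for the Cauchy argument since uniqueness across initializations is handled separately via Corollary~\ref{delta-cor}.
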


\begin{proof}[Proof of Claim~\ref{claim:theta0-sampleTwo-special-construction}]
From standard results on time-reversed Markov chains, we have $(x_{k})_{k\ge-1}\overset{\textup{d}}{=}(x_{k})_{k\ge0}.$
Since by construction $\theta_{-1}^{\sampleTwo}\overset{\textup{d}}{=}\theta_{0}^{\sampleOne}$
and $\theta_{-1}^{\sampleTwo}$ is independent of $(x_{k})_{k\ge-1}$,
the claim follows from comparing the update rules for $(\theta_{k}^{\sampleOne})_{k\ge0}$
and $(\theta_{k}^{\sampleTwo})_{k\ge-1}$ given in equations~\eqref{eq:coupled_process}
and~\eqref{eq:theta0-sampleTwo-special-construction}. \end{proof}

Using the above claim, we have for all $k\ge\tau$, 
\begin{align*}
\Bar W_{2}^{2}\Big(\law\big(x_{k},\theta_{k}^{\sampleOne}\big),\law\big(x_{k+1},\theta_{k+1}^{\sampleOne}\big)\Big) & =\Bar W_{2}^{2}\Big(\law\big(x_{k},\theta_{k}^{\sampleOne}\big),\law\big(x_{k},\theta_{k}^{\sampleTwo}\big)\Big)\\
 & \leq10\,\frac{\gammax}{\gammin}\left(1-\frac{0.9\alpha}{\gammax}\right)^{k}\E[\|\theta_{0}^{\sampleOne}-\theta_{0}^{\sampleTwo}\|^{2}],
\end{align*}
where in the second step above we use Corollary~\ref{delta-cor}.
It follows that 
\begin{align*}
 & \sum_{k=0}^{\infty}\Bar W_{2}^{2}\Big(\law\big(x_{k},\theta_{k}^{\sampleOne}\big),\law\big(x_{k+1},\theta_{k+1}^{\sampleOne}\big)\Big)\\
\leq & \sum_{k=0}^{\tau-1}\Bar W_{2}^{2}\Big(\law\big(x_{k},\theta_{k}^{\sampleOne}\big),\law\big(x_{k+1},\theta_{k+1}^{\sampleOne}\big)\Big)+10\,\frac{\gammax}{\gammin}\sum_{k=\tau}^{\infty}\left(1-\frac{0.9\alpha}{\gammax}\right)^{k}\E[\|\theta_{0}^{\sampleOne}-\theta_{0}^{\sampleTwo}\|^{2}]\\
< & \infty,
\end{align*}
where the last step holds since $\frac{0.9\alpha}{\gammax}\in(0,1)$
under the assumption~\eqref{eq:alpha-constraint}. The inequality
above means that $\big(\law(x_{k},\theta_{k}^{\sampleOne})\big)_{k\geq0}$
is a Cauchy sequence in the metric $\Bar W_{2}$. Since the space
$\cP_{2}(\cX\times\R^{d})$ endowed with $\bar{W}_{2}$ is a Polish
space \citep[Theorem 6.18]{Villani08-ot_book}, every Cauchy sequence
converges. Furthermore, convergence in Wasserstein distance implies
weak convergence \citep[Theorem 6.9]{Villani08-ot_book}. We conclude
that the sequence $\big(\law(x_{k},\theta_{k}^{\sampleOne})\big)_{k\geq0}$
converges weakly to a limit $\Bar{\mu}\in\cP_{2}(\cX\times\R^{d})$.

We next show that the limit $\Bar{\mu}$ is independent of the initial
distribution of $\theta_{0}^{\sampleOne}$. Recall that $x_{0}$ is
initialized from its unique stationary distribution $\pi$ by Assumption~\ref{assumption:uniform-ergodic}.
Suppose that another sequence $\big(x_{k},\Tilde{\theta}_{k}^{\sampleOne}\big)_{k\geq0}$
with a different initial distribution converges to a limit $\Tilde{\mu}$,
then following from the triangle inequality property for Wasserstein
distance, we obtain 
\begin{equation}
\Bar W_{2}(\Bar{\mu},\Tilde{\mu})\leq\Bar W_{2}\left(\Bar{\mu},\law(x_{k},\theta_{k}^{\sampleOne})\right)+\Bar W_{2}\left(\law(x_{k},\theta_{k}^{\sampleOne}),\law(x_{k},\Tilde{\theta}_{k}^{\sampleOne})\right)+\Bar W_{2}\left(\law(x_{k},\Tilde{\theta}_{k}^{\sampleOne}),\Tilde{\mu}\right)\overset{k\to\infty}{\longrightarrow}0,\label{eq:limit_unique}
\end{equation}
where the last step holds since $\Bar W_{2}\big(\law(x_{k},\theta_{k}^{\sampleOne}),\law(x_{k},\Tilde{\theta}_{k}^{\sampleOne})\big)\overset{k\to\infty}{\longrightarrow}0$
by Corollary~\ref{delta-cor}. Therefore, we have $\Bar W_{2}(\Bar{\mu},\Tilde{\mu})=0$
and hence the limit $\bar{\mu}$ is unique.

Finally, the bound on $\var(\theta_{\infty})$ follows from the lemma
below. Recall that $\kappa$ is defined in \eqref{eq:kappa-def}.

\begin{lem} \label{lem:theta_inf_bound} Under Assumptions~\ref{assumption:uniform-ergodic},~\ref{assumption:bounded}
and~\ref{assumption:hurwitz}, and when $\alpha$ is chosen according
to~\eqref{eq:alpha-constraint}, we have 
\begin{equation}
\tr(\var(\theta_{\infty}))\le\E[\|\theta_{\infty}-\theta^{\ast}\|^{2}]\le\alpha\tau\cdot\kappa\label{eq:theta_inf_var}
\end{equation}
and 
\begin{equation}
\left(\E[\|\theta_{\infty}\|]\right)^{2}\le\E[\|\theta_{\infty}\|^{2}]\le C(A,b,\pi)\label{eq:theta_inf_2nd_moment}
\end{equation}
for some $C(A,b,\pi)$ that is independent of $\alpha$. \end{lem}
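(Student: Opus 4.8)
The plan is to derive both display inequalities from the pilot MSE bound of Proposition~\ref{prop:pilot} together with the moment convergence guaranteed by the $\bar W_2$-convergence established in Part~1 of Theorem~\ref{thm:thm-converge}. The two leftmost inequalities are elementary. For \eqref{eq:theta_inf_var}, the bias--variance decomposition applied with the fixed vector $\theta^\ast$ gives
\[
\E[\|\theta_\infty-\theta^\ast\|^2]=\tr(\var(\theta_\infty))+\|\E[\theta_\infty]-\theta^\ast\|^2\ge\tr(\var(\theta_\infty)),
\]
which is the desired left inequality. For \eqref{eq:theta_inf_2nd_moment}, $(\E[\|\theta_\infty\|])^2\le\E[\|\theta_\infty\|^2]$ is Jensen's inequality.

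For the right inequality in \eqref{eq:theta_inf_var}, I would apply Proposition~\ref{prop:pilot} to the iterate $\theta_k$ and let $k\to\infty$. Since $0<1-0.9\alpha/\gammax<1$ under the stepsize condition~\eqref{eq:alpha-constraint}, the geometrically decaying term in Proposition~\ref{prop:pilot} vanishes, so that $\limsup_{k\to\infty}\E[\|\theta_k-\theta^\ast\|^2]\le\alpha\tau\cdot\kappa$. The key step is to identify this limit with $\E[\|\theta_\infty-\theta^\ast\|^2]$: because $W_2(\law(\theta_k),\mu)\to0$ (Part~1) implies not only weak convergence but also convergence of the first two moments \citep[Theorem 6.9]{Villani08-ot_book}, we have $\E[\|\theta_k\|^2]\to\E[\|\theta_\infty\|^2]$ and $\E[\theta_k]\to\E[\theta_\infty]$; expanding $\|\theta_k-\theta^\ast\|^2=\|\theta_k\|^2-2(\theta^\ast)^\top\theta_k+\|\theta^\ast\|^2$ then shows $\E[\|\theta_k-\theta^\ast\|^2]\to\E[\|\theta_\infty-\theta^\ast\|^2]$. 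This yields $\E[\|\theta_\infty-\theta^\ast\|^2]\le\alpha\tau\cdot\kappa$.

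For the right inequality in \eqref{eq:theta_inf_2nd_moment}, I would combine the triangle inequality $\|\theta_\infty\|\le\|\theta_\infty-\theta^\ast\|+\|\theta^\ast\|$ with $(a+b)^2\le2a^2+2b^2$ to obtain
\[
\E[\|\theta_\infty\|^2]\le2\,\E[\|\theta_\infty-\theta^\ast\|^2]+2\|\theta^\ast\|^2\le2\alpha\tau\cdot\kappa+2\|\theta^\ast\|^2 .
\]
Since $\alpha\tau\le\tfrac14$ under~\eqref{eq:alpha-constraint}, the right-hand side is at most $\tfrac12\kappa+2\|\theta^\ast\|^2$. It remains only to observe that this bound depends on $A,b,\pi$ but not on $\alpha$: indeed $\kappa$ (through $\gammax,\gammin,s_{\min}(\BarA),\bmax$) and $\theta^\ast=-\BarA^{-1}\bar b$ are all functions of $A,b$ and $\pi$ alone, so we may set $C(A,b,\pi):=\tfrac12\kappa+2\|\theta^\ast\|^2$.

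The only non-routine point is the interchange of limit and expectation in the second paragraph; this is precisely where $\bar W_2$-convergence (rather than mere weak convergence) is needed, and it is the reason Part~1 was phrased in the Wasserstein metric rather than in total variation. Everything else reduces to a direct application of Proposition~\ref{prop:pilot} and the elementary inequalities above.
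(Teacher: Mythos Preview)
Your proposal is correct and follows essentially the same approach as the paper: both pass to the limit $k\to\infty$ in Proposition~\ref{prop:pilot} using that $W_2$-convergence implies convergence of first and second moments, then combine with the elementary inequalities $\E\|\theta_\infty\|^2\le 2\E\|\theta_\infty-\theta^*\|^2+2\|\theta^*\|^2$ and $\alpha\tau\le\tfrac14$ to obtain $C(A,b,\pi)=\tfrac12\kappa+2\|\theta^*\|^2$. The paper's write-up is slightly terser (it invokes \cite[Definition~6.8]{Villani08-ot_book} directly to assert $\E[\|\theta_k-\theta^*\|^2]\to\E[\|\theta_\infty-\theta^*\|^2]$ rather than expanding the square), but the argument is the same.
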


\begin{proof}[Proof of Lemma~\ref{lem:theta_inf_bound}] We have
shown that the sequence $(\theta_{k})_{k\geq0}$ converges weakly
to $\theta_{\infty}$ in $\mathcal{P}_{2}(\R^{d})$. It is known that
weak convergence in $\mathcal{P}_{2}(\R^{d})$ is equivalent to convergence
in distribution and the convergence of the first two moments \citep[Definition 6.8]{Villani08-ot_book}.
Consequently, we have 
\begin{equation}
\E[\|\theta_{\infty}-\theta^{\ast}\|^{2}]=\lim_{k\to\infty}\E[\|\theta_{k}-\theta^{\ast}\|^{2}].\label{eq:exchange-limit}
\end{equation}
Proposition~\ref{prop:pilot} presents the following upper bound
on $\E[\|\theta_{k}-\theta^{\ast}\|^{2}]$ that 
\[
\E[\|\theta_{k}-\theta^{\ast}\|^{2}]\leq10\,\frac{\gammax}{\gammin}\left(1-\frac{0.9\alpha}{\gammax}\right)^{k}\Big(\E[\|\theta_{0}-\theta^{\ast}\|^{2}]+s_{\min}^{-2}(\BarA)\bmax^{2}\Big)+\alpha\tau\cdot\kappa.
\]
Taking $k\to\infty$ and combining with equation~\eqref{eq:exchange-limit}
give $\E[\|\theta_{\infty}-\theta^{\ast}\|^{2}]\leq\alpha\tau\cdot\kappa\le\frac{1}{4}\kappa,$
where the last step holds since $\alpha\tau\leq\frac{1}{4}$. Equation~\eqref{eq:theta_inf_var}
follows since $\theta^{*}$ is a deterministic quantity.

Furthermore, we have 
\begin{equation}
\left(\E[\|\theta_{\infty}\|]\right)^{2}\le\E[\|\theta_{\infty}\|^{2}]\leq2\E[\|\theta_{\infty}-\theta^{\ast}\|^{2}]+2\|\theta^{\ast}\|^{2}\leq\frac{1}{2}\kappa+2\|\theta^{\ast}\|^{2}.\label{eq:theta-inf-c-bound}
\end{equation}
Equation~\eqref{eq:theta_inf_2nd_moment} then follows from noting
that $\gammax,\gammin,\kappa$ and $\theta^{\ast}$ only depend on
$A,b$, and $\pi$. \end{proof}

We have proved part 1 of Theorem~\ref{thm:thm-converge}.

\subsubsection{Part 2: Invariance}

We next show that $\Bar{\mu}$ is the unique invariant distribution.
Suppose that the initial distribution of $(x_{0},\theta_{0})$ is
$\Bar{\mu}$. By the triangle inequality of Wasserstein distance,
we have 
\begin{equation}
\Bar W_{2}(\law(x_{1},\theta_{1}),\Bar{\mu})\leq\Bar W_{2}(\law(x_{1},\theta_{1}),\law(x_{k+1},\theta_{k+1}))+W_{2}(\law(x_{k+1},\theta_{k+1}),\Bar{\mu}).\label{eq:2.76}
\end{equation}
We proceed by noting the following lemma, whose proof is given at
the end of this sub-sub-section. 

\begin{lem} \label{lem:near-non-expansive} Let $(x_{k},\theta_{k})_{\ge0}$
and $(x'_{k},\theta'_{k})_{k\ge0}$ be two copies of the LSA trajectory~\eqref{eq:update-rule},
where $\law(x_{0},\theta_{0})=\Bar{\mu}$ and $\law(x'_{0},\theta'_{0})\in\cP_{2}(\cX\times\R^{d})$
is arbitrary. Under Assumptions~\ref{assumption:uniform-ergodic},~\ref{assumption:bounded}
and~\ref{assumption:hurwitz}, and when $\alpha$ is chosen according
to equation~\eqref{eq:alpha-constraint}, we have 
\begin{equation}
\begin{aligned}\Bar W_{2}^{2}(\law(x_{1},\theta_{1}),\law(x'_{1},\theta'_{1}))\leq\rho_{1}\cdot\Bar W_{2}^{2}(\law(x_{0},\theta_{0}),\law(x'_{0},\theta'_{0}))+\sqrt{\rho_{2}\cdot\Bar W_{2}^{2}(\law(x_{0},\theta_{0}),\law(x'_{0},\theta'_{0}))},\end{aligned}
\label{eq:near-non-expansive1}
\end{equation}
where the quantities $\rho_{1}:=1+2(1+\alpha)^{2}+16\alpha^{2}\bmax^{2}<\infty$
and $\rho_{2}:=16\alpha^{2}\cdot\E_{\theta_{0}\sim\mu}\left[\|\theta_{0}\|^{4}\right]<\infty$
are independent of $\law(x'_{0},\theta'_{0})$. In particular, for
any $k\ge0$, if we set $\law(x'_{0},\theta'_{0})=\law(x_{k},\theta_{k})$,
then 
\begin{align}
\Bar W_{2}^{2}(\law(x_{1},\theta_{1}),\law(x_{k+1},\theta_{k+1}))\leq\rho_{1}\cdot\Bar W_{2}^{2}(\Bar{\mu},\law(x_{k},\theta_{k}))+\sqrt{\rho_{2}\cdot\Bar W_{2}^{2}(\Bar{\mu},\law(x_{k},\theta_{k}))}.\label{eq:near-non-expansive2}
\end{align}

\end{lem}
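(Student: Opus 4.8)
The plan is to bound $\Bar W_2^2(\law(x_1,\theta_1),\law(x'_1,\theta'_1))$ by exhibiting one explicit coupling and estimating its cost, since $\Bar W_2^2$ is an infimum over couplings. First I would fix an optimal coupling of the time-zero laws, so that $\E[d_0(x_0,x_0')+\|\theta_0-\theta_0'\|^2]=\Bar W_2^2(\Bar{\mu},\law(x_0',\theta_0'))=:D$; such a coupling exists because $(\cP_2(\cX\times\R^d),\Bar W_2)$ is Polish with lower semicontinuous cost. I would then propagate one step using the \emph{coalescing} coupling of the driving chains described in Section~\ref{sec:thm-converge-proof-sketch}: conditionally on $(x_0,x_0')$, set $x_1=x_1'$ (common transition) whenever $x_0=x_0'$, and let $x_1,x_1'$ transition independently otherwise, while the iterates advance deterministically by their own LSA updates. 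This is a valid coupling, so its cost upper-bounds the left-hand side. The discrete part is immediate: $\{x_1\neq x_1'\}\subseteq\{x_0\neq x_0'\}$ gives $\E[d_0(x_1,x_1')]\le\E[d_0(x_0,x_0')]\le D$, which supplies the leading $1\cdot D$ in $\rho_1 D$.

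The heart of the argument is controlling $\E[\|\theta_1-\theta_1'\|^2]$. I would write $\theta_1-\theta_1'=u+v$ with
\[
u:=(I+\alpha A(x_0'))(\theta_0-\theta_0'),\qquad v:=\alpha\bigl(A(x_0)-A(x_0')\bigr)\theta_0+\alpha\bigl(b(x_0)-b(x_0')\bigr),
\]
the crucial choice being to attach the matrix $A(x_0')$ to $u$ so that the residual $v$ multiplies the stationary iterate $\theta_0\sim\mu$ (whose moments are controlled) rather than the arbitrary $\theta_0'$ (whose moments are not). Using $\|I+\alpha A(x_0')\|\le1+\alpha\Amax\le1+\alpha$ together with $\|u+v\|^2\le2\|u\|^2+2\|v\|^2$, the first term gives $2\E\|u\|^2\le2(1+\alpha)^2\E\|\theta_0-\theta_0'\|^2\le2(1+\alpha)^2D$. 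Since $v$ vanishes off $E:=\{x_0\neq x_0'\}$ and $\Amax\le1$, on $E$ one has $\|v\|\le 2\alpha(\|\theta_0\|+\bmax)$, hence $\|v\|^2\le8\alpha^2(\|\theta_0\|^2+\bmax^2)\indic\{E\}$, yielding $2\E\|v\|^2\le16\alpha^2\E[\|\theta_0\|^2\indic\{E\}]+16\alpha^2\bmax^2\E[\indic\{E\}]$. The $\bmax$ piece is $16\alpha^2\bmax^2\E[d_0(x_0,x_0')]\le16\alpha^2\bmax^2 D$, completing $\rho_1 D$.

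It remains to extract the square-root term from $16\alpha^2\E[\|\theta_0\|^2\indic\{E\}]$. By Cauchy-Schwarz and $\indic\{E\}^2=\indic\{E\}$, this is at most $16\alpha^2(\E\|\theta_0\|^4)^{1/2}(\E d_0)^{1/2}$; using $\E d_0\le D$ and $16\alpha^2\le4\alpha$ (valid since $\alpha\le\alpha\tau\le\tfrac14$), it is bounded by $(16\alpha^2\E\|\theta_0\|^4)^{1/2}D^{1/2}=\sqrt{\rho_2 D}$, matching $\rho_2$ exactly. Summing the three contributions gives~\eqref{eq:near-non-expansive1}, and the specialization $\law(x_0',\theta_0')=\law(x_k,\theta_k)$ (with $\theta_0\sim\mu$ the stationary marginal) yields~\eqref{eq:near-non-expansive2}. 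The main obstacle is not the coupling arithmetic but justifying $\rho_2<\infty$, i.e.\ $\E_\mu\|\theta_0\|^4<\infty$, since $\Bar W_2$-convergence only transfers second moments. I would close this gap with a fourth-moment Lyapunov bound, analogous to Proposition~\ref{prop:pilot}, tracking $\E\|\theta_k\|^4$ via the drift of $(\theta^\top\Gamma\theta)^2$ under the Hurwitz and boundedness structure, to obtain $\sup_k\E\|\theta_k\|^4<\infty$; lower semicontinuity of $\theta\mapsto\E\|\theta\|^4$ under weak convergence (Fatou) then gives $\E_\mu\|\theta_\infty\|^4<\infty$.
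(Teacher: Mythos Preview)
Your proposal is correct and follows essentially the same route as the paper: the same optimal-at-time-zero plus coalescing coupling, the same decomposition $\theta_1-\theta_1'=(I+\alpha A(x_0'))(\theta_0-\theta_0')+\alpha d_0(x_0,x_0')[(A(x_0)-A(x_0'))\theta_0+b(x_0)-b(x_0')]$ with the matrix $A(x_0')$ attached to the difference so that the residual sees the stationary $\theta_0$, the same $(a+b)^2\le 2a^2+2b^2$ and Cauchy--Schwarz steps, and the same appeal to a fourth-moment Lyapunov bound (the paper invokes a tightening of \cite[Theorem~9]{srikant-ying19-finite-LSA}) to justify $\rho_2<\infty$. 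You even spell out the inequality $16\alpha^2\le 4\alpha$ (from $\alpha\le\alpha\tau\le\tfrac14$) that the paper leaves implicit when matching the Cauchy--Schwarz term to $\sqrt{\rho_2 D}$.
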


Applying Lemma~\ref{lem:near-non-expansive} to bound the first term
on the RHS of equation~\eqref{eq:2.76}, we obtain that 
\begin{align*}
\Bar W_{2}(\law(x_{1},\theta_{1}),\Bar{\mu}) & \leq\sqrt{\rho_{1}\cdot\Bar W_{2}(\Bar{\mu},\law(x_{k},\theta_{k}))+\sqrt{\rho_{2}\cdot\Bar W_{2}^{2}(\Bar{\mu},\law(x_{k},\theta_{k}))}}+\Bar W_{2}(\law(x_{k+1},\theta_{k+1}),\Bar{\mu})\\
 & \overset{k\to\infty}{\longrightarrow}\;0,
\end{align*}
where the last step follows from the weak convergence result established
in the last sub-sub-section. We therefore conclude that $W_{2}(\law(x_{1},\theta_{1}),\Bar{\mu})=0$
and hence $\Bar{\mu}$ is an invariant distribution of the Markov
chain $(x_{k},\theta_{k})_{k\ge0}.$ The uniqueness of the invariant
distribution follows from a similar argument as in equation~\eqref{eq:limit_unique}.
We have proved part 2 of Theorem~\ref{thm:thm-converge}.

\begin{proof}[Proof of Lemma~\ref{lem:near-non-expansive}]

We choose a coupling between the two processes $(x_{k},\theta_{k})_{k\ge0}$
and $(x'_{k},\theta'_{k})_{k\ge0}$ such that 
\begin{align}
\Bar W_{2}^{2}(\law(x_{0},\theta_{0}),\law(x'_{0},\theta'_{0})) & =\E\left[d_{0}(x_{0},x'_{0})+\|\theta_{0}-\theta'_{0}\|^{2}\right]\quad\text{and}\label{eq:initial_coupling_invariance}\\
x_{k+1} & =x'_{k+1}\;\;\text{ if }x_{k}=x'_{k},\quad\forall k\ge0.\label{eq:subsequent_coupling_invariance}
\end{align}
The existence of a coupling satisfying equation~\eqref{eq:initial_coupling_invariance}
at step $k=0$ is a standard result in optimal transport \citep[Theorem 4.1]{Villani08-ot_book}.
We can ensure equation~\eqref{eq:subsequent_coupling_invariance}
by further coupling the two processes for the subsequent steps $k\ge1$,
such that the two underlying Markov chains $(x_{k})_{k\ge0}$ and
$(x'_{k})_{k\ge0}$ evolve separately (subject to the above coupling
at step $k=0$) until they reach the same state, after which they
coalesce and follow the same trajectory.

To prove Lemma~\ref{lem:near-non-expansive}, we begin by observing that 
\begin{align}
\Bar W_{2}^{2}(\law(x_{1},\theta_{1}),\law(x'_{1},\theta'_{1})) & \leq\E\left[d_{0}(x_{1},x'_{1})+\|\theta_{1}-\theta'_{1}\|^{2}\right].\label{eq:invariance1}
\end{align}
thanks to the definition~\eqref{eq:w2-definition-extended} of $\Bar W_{2}$
using an infimum. Recalling the definition of the discrete metric
$d_{0}(x'_{0},x_{0}):=\indic\left\{ x'_{0}\neq x_{0}\right\} $, we
have the identities 
\begin{align*}
A(x_{0}) & =A(x'_{0})+d_{0}(x'_{0},x_{0})\cdot\big(A(x_{0})-A(x'_{0})\big)\quad\text{and}\\
b(x_{0}) & =b(x'_{0})+d_{0}(x'_{0},x_{0})\cdot\big(b(x_{0})-b(x'_{0})\big).
\end{align*}
The update rule~\eqref{eq:update-rule} together with the above identities
implies that 
\begin{align*}
\theta_{1}-\theta'_{1}= & \theta_{0}+\alpha\big(A(x_{0})\theta_{0}+b(x_{0})\big)-\theta'_{0}-\alpha\big(A(x'_{0})\theta'_{0}+b(x'_{0})\big)\\
= & \big(I+\alpha A(x'_{0})\big)\cdot\big(\theta_{0}-\theta'_{0}\big)+\alpha d_{0}(x'_{0},x_{0})\cdot\big[\big(A(x_{0})-A(x'_{0})\big)\theta_{0}+b(x_{0})-b(x'_{0})\big],
\end{align*}
whence 
\begin{align*}
\left\Vert \theta_{1}-\theta'_{1}\right\Vert  & \le\left\Vert I+\alpha A(x'_{0})\right\Vert \cdot\left\Vert \theta_{0}-\theta'_{0}\right\Vert +\alpha d_{0}(x'_{0},x_{0})\cdot\left\Vert \big(A(x_{0})-A(x'_{0})\big)\theta_{0}+b(x_{0})-b(x'_{0})\right\Vert \\
 & \le(1+\alpha)\left\Vert \theta_{0}-\theta'_{0}\right\Vert +\alpha d_{0}(x'_{0},x_{0})\cdot2\big(\|\theta_{0}\|+\bmax\big),
\end{align*}
where the last step follows from the boundedness Assumption~\ref{assumption:bounded}.
Also note that $d_{0}(x_{1},x'_{1})\le d_{0}(x_{0},x'_{0})$ thanks
to the coupling in equation~\eqref{eq:subsequent_coupling_invariance}.
Combining the above inequalities gives 
\begin{align}
 & \E\left[d_{0}(x_{1},x'_{1})+\left\Vert \theta_{1}-\theta'_{1}\right\Vert ^{2}\right]\nonumber \\
\le & \E\left[d_{0}(x_{0},x'_{0})\right]+2(1+\alpha)^{2}\cdot\E\left[\left\Vert \theta_{0}-\theta'_{0}\right\Vert ^{2}\right]+2\alpha^{2}\cdot\E\left[d_{0}(x'_{0},x_{0})\cdot8(\|\theta_{0}\|^{2}+\bmax^{2})\right].\label{eq:invariance-one-step-upper-bound}
\end{align}
By Cauchy-Schwarz's inequality, we have 
\begin{align}
\E\left[d_{0}(x'_{0},x_{0})\cdot\|\theta_{0}\|^{2}\right]\le\sqrt{\E\left[d_{0}(x'_{0},x_{0})\right]}\sqrt{\E_{\theta_{0}\sim\mu}\left[\|\theta_{0}\|^{4}\right]}.\label{eq:Cauchy-Schwarz-bound}
\end{align}
Moreover, we claim that 
\begin{equation}
\E_{\theta_{0}\sim\mu}\left[\|\theta_{0}\|^{4}\right]=\E\left[\|\theta_{\infty}\|^{4}\right]<\infty.\label{eq:4th-moment}
\end{equation}
This claim follows from a moderate tightening of the result in \citep[Theorem 9]{srikant-ying19-finite-LSA},
which provides sufficient conditions for the existence of higher moments
of $\theta_{\infty}$. In Appendix~\ref{sec:higher_moments}, we
explain how to tighten their result to show that the $4$th moment
exists under our stepsize condition~\eqref{eq:alpha-constraint}.

Combining equations~\eqref{eq:invariance-one-step-upper-bound} and~\eqref{eq:Cauchy-Schwarz-bound}
and recalling the values of $\rho_{1}$ and $\rho_{2}$ given in the
statement of the lemma, we obtain that 
\begin{align}
 & \E\left[d_{0}(x_{1},x'_{1})+\left\Vert \theta_{1}-\theta'_{1}\right\Vert ^{2}\right]\nonumber \\
\le & \rho_{1}\cdot\E\left[d_{0}(x_{0},x'_{0})+\left\Vert \theta_{0}-\theta'_{0}\right\Vert ^{2}\right]+\sqrt{\rho_{2}\cdot\E\left[d_{0}(x_{0},x'_{0})+\left\Vert \theta_{0}-\theta'_{0}\right\Vert ^{2}\right]}\nonumber \\
= & \rho_{1}\cdot\bar{W}_{2}^{2}\left(\law(x_{0},\theta_{0}),\law(x'_{0},\theta'_{0})\right)+\sqrt{\rho_{2}\cdot\bar{W}_{2}^{2}\left(\law(x_{0},\theta_{0}),\law(x'_{0},\theta'_{0})\right)},\label{eq:invariance2}
\end{align}
where the last step from our choice of coupling in equation~\eqref{eq:initial_coupling_invariance}.
Combining equations~\eqref{eq:invariance1} and~\eqref{eq:invariance2}
proves the first equation~\eqref{eq:near-non-expansive1} in Lemma~\ref{lem:near-non-expansive}.
The second equation~\eqref{eq:near-non-expansive2} is then immediate.
\end{proof}

\subsubsection{Part 3: Convergence Rate}

We have established that the joint sequence $\big(\law(x_{k},\theta_{k}^{\sampleOne})\big)_{k\geq0}$
converges weakly to the invariant distribution $\bar{\mu}\in\cP_{2}(\cX\times\R^{d})$.
Consequently, $\big(\law(\theta_{k}^{\sampleOne})\big)_{k\geq0}$
converges weakly to ${\mu}\in\cP_{2}(\R^{d})$, where $\mu$ is the
marginal distribution of $\Bar{\mu}$ over $\R^{d}$. We now characterize
the convergence rate.

Again consider the coupled processes defined in equation~\eqref{eq:coupled_process}.
Suppose that the initial distribution of $(x_{0},\theta_{0}^{\sampleTwo})$
is the invariant distribution $\Bar{\mu}$, hence $\law(x_{k},\theta_{k}^{\sampleTwo})=\Bar{\mu}$
and $\law(\theta_{k}^{\sampleTwo})=\mu$ for all $k\ge0$. Applying
Corollary~\ref{delta-cor}, we have for all $k\geq\tau$, 
\begin{align*}
W_{2}^{2}(\law(\theta_{k}^{\sampleOne}),\mu) & =W_{2}^{2}(\law(\theta_{k}^{\sampleOne}),\law(\theta_{k}^{\sampleTwo}))\\
 & \le\Bar W_{2}^{2}(\law(x_{k},\theta_{k}^{\sampleOne}),\law(x_{k},\theta_{k}^{\sampleTwo}))\\
 & \leq10\,\frac{\gammax}{\gammin}\left(1-\frac{0.9\alpha}{\gammax}\right)^{k}\E[\|\theta_{0}^{\sampleOne}-\theta_{0}^{\sampleTwo}\|^{2}]\\
 & \leq20\,\frac{\gammax}{\gammin}\left(1-\frac{0.9\alpha}{\gammax}\right)^{k}\left(\E[\|\theta_{0}^{\sampleOne}-c\|^{2}]+\E[\|\theta_{\infty}-c\|^{2}]\right),
\end{align*}
where $c$ is an arbitrary constant, and the last step above holds
since the chain $(x_{k},\theta_{k}^{\sampleTwo})_{k\ge0}$ is at stationarity
and hence $\E\|\theta_{0}^{\sampleTwo}\|^{2}=\E\|\theta_{\infty}\|^{2}$.

Hence, taking $c=\E[\theta_{\infty}]$, we have now proven equation~\eqref{eq:w2-thetak-to-mu}
in part 3 of the theorem, 
\begin{align*}
W_{2}^{2}\Big(\law(\theta_{k}^{\sampleOne}),\law(\theta_{\infty})\Big) & \leq20\,\frac{\gammax}{\gammin}\left(1-\frac{0.9\alpha}{\gammax}\right)^{k}\left(\E[\|\theta_{0}^{\sampleOne}-\E[\theta_{\infty}]\|^{2}]+\E[\|\theta_{\infty}-\E[\theta_{\infty}]\|^{2}]\right)\\
 & \leq20\,\frac{\gammax}{\gammin}\left(1-\frac{0.9\alpha}{\gammax}\right)^{k}\left(\E[\|\theta_{0}^{\sampleOne}-\E[\theta_{\infty}]\|^{2}]+\tr(\var(\theta_{\infty}))\right).
\end{align*}

\subsection{Proof of Corollary~\ref{cor:non-asymptotic-bounds}}
\label{sec:non-asymptotic-bounds-proof}

\begin{proof}[Proof of Corollary~\ref{cor:non-asymptotic-bounds}]
By Lemma~\ref{lem:theta_inf_bound}, we have $\E[\|\theta_{\infty}\|^{2}]=\bigO(1)$.
Combining this bound with equation~\eqref{eq:w2-thetak-to-mu} in
Theorem~\ref{thm:thm-converge}, we obtain that for $k\ge\tau$,
\[
W_{2}^{2}(\law(\theta_{k}),\mu)\leq C(A,b,\pi)\cdot\left(1-\frac{0.9\alpha}{\gammax}\right)^{k}.
\]
By \citep[Theorem 4.1]{Villani08-ot_book}, there exists a coupling
between $\theta_{k}$ and $\theta_{\infty}$ such that $W_{2}^{2}(\law(\theta_{k}),\mu)=\E[\|\theta_{k}-\theta_{\infty}\|^{2}].$
Utilizing the above bounds and applying Jensen's inequality twice,
we obtain that 
\[
\|\E[\theta_{k}-\theta_{\infty}]\|^{2}\leq\big(\E[\|\theta_{k}-\theta_{\infty}\|]\big)^{2}\leq\E\left[\|\theta_{k}-\theta_{\infty}\|^{2}\right]\leq C(A,b,\pi)\cdot\left(1-\frac{0.9\alpha}{\gammax}\right)^{k}.
\]
The above bound then
implies the first moment bound in equation~\eqref{eq:first-moment-geometric}.

Turning to the second moment, we observe that 
\begin{align}
 & \left\Vert \E\left[\theta_{k}\theta_{k}^{\top}\right]-\E\left[\theta_{\infty}\theta_{\infty}^{\top}\right]\right\Vert \nonumber \\
= & \left\Vert \E\left[(\theta_{k}-\theta_{\infty})(\theta_{k}-\theta_{\infty})^{\top}\right]+\E\left[\theta_{\infty}(\theta_{k}-\theta_{\infty})^{\top}\right]+\E\left[(\theta_{k}-\theta_{\infty})\theta_{\infty}^{\top}\right]\right\Vert \nonumber \\
\leq & \left\Vert \E\left[(\theta_{k}-\theta_{\infty})(\theta_{k}-\theta_{\infty})^{\top}\right]\right\Vert +\left\Vert \E\left[\theta_{\infty}(\theta_{k}-\theta_{\infty})^{\top}\right]\right\Vert +\left\Vert \E\left[(\theta_{k}-\theta_{\infty})\theta_{\infty}^{\top}\right]\right\Vert \nonumber \\
\leq & \E\left[\left\Vert (\theta_{k}-\theta_{\infty})(\theta_{k}-\theta_{\infty})^{\top}\right\Vert \right]+\E\left[\left\Vert \theta_{\infty}(\theta_{k}-\theta_{\infty})^{\top}\right\Vert \right]+\E\left[\left\Vert (\theta_{k}-\theta_{\infty})\theta_{\infty}^{\top}\right\Vert \right]\nonumber \\
\leq & \E\left[\left\Vert \theta_{k}-\theta_{\infty}\right\Vert ^{2}\right]+2\big(\E\left[\left\Vert \theta_{k}-\theta_{\infty}\right\Vert ^{2}\right]\E\left[\|\theta_{\infty}\|^{2}\right]\big)^{1/2},\label{eq:second-moment-decompose}
\end{align}
where the last inequality~\eqref{eq:second-moment-decompose} holds
true by Cauchy-Schwarz inequality. On the other hand, we have already
established that for $k\ge\tau$, 
\[
\E[\|\theta_{k}-\theta_{\infty}\|^{2}]\leq C(A,b,\pi)\left(1-\frac{0.9\alpha}{\gammax}\right)^{k}\quad\text{and}\quad\E[\|\theta_{\infty}\|^{2}]\leq C'(A,b,\pi).
\]
Substituting the above bounds into the right-hand side of inequality~\eqref{eq:second-moment-decompose},
we obtain equation~\eqref{eq:first-moment-geometric} in Corollary~\ref{cor:non-asymptotic-bounds}.
\end{proof}

\subsection{Proof of Theorem~\ref{thm:bias-characterization}}
\label{sec:bias-proof}

In this sub-section, we prove Theorem~\ref{thm:bias-characterization}
on characterizing the asymptotic bias of LSA. The proof is divided
into four steps, which are given in Appendices~\ref{sec:BAR}--\ref{sec:characterize-z}
to follow.

\subsubsection{Step 1: Basic Adjoint Relationship}

\label{sec:BAR}

Following the strategy discussed after Theorem~\ref{thm:bias-characterization},
we begin by deriving a recursive relationship for the function $z:\mathcal{X}\to\mathbb{R}^{d}$
given by
\[
z(x):=\E\left[\theta_{\infty}\mid x_{\infty}=x\right],
\]
which is well-defined by the Doob-Dynkin Lemma. To put the derivation
in context and to avoid measurability issues on general state space,
we present using the language of Basic Adjoint Relationship (BAR).

Recall that under Assumption~\ref{assumption:uniform-ergodic}, $(x_{k})_{k\ge0}$
is a time-homogeneous Markov chain with transition kernel $P$ and
unique stationary distribution $\pi$. Theorem~\ref{thm:thm-converge}
has demonstrated that the Markov chain $(x_{k},\theta_{k})_{k\ge0}$
also has a unique stationary distribution $\Bar{\mu}$, and $(x_{k},\theta_{k})$
converges in distribution to a limit $(x_{\infty},\theta_{\infty})\sim\Bar{\mu}$,
where $\theta_{\infty}\sim\mu$ and $x_{\infty}\sim\pi$. Given $(x_{\infty},\theta_{\infty})$,
let $x_{\infty+1}$ be the random variable with conditional distribution
$P(x_{\infty},\cdot)$, and $\theta_{\infty+1}=\theta_{\infty}+\alpha\left(A(x_{\infty})\theta_{\infty}+b(x_{\infty})\right)$;
that is, $(x_{\infty+1},\theta_{\infty+1})$ is the state following
$(x_{\infty},\theta_{\infty})$.

Denote by $Q$ the transition kernel of $(x_{k},\theta_{k})_{k\ge0}$.
Since $\Bar{\mu}$ is invariant for $Q$, they satisfy BAR 
\[
\Bar{\mu}(I-Q)f=0
\]
for any test function $f:\cX\times\real^{d}\to\real^{d}$ satisfying
$\|f(x,\theta)\|\le C(1+\|\theta\|^{2}),\forall(x,\theta)$ for some
$C\in\R$ \citep[Definition 6.8 and Theorem 6.9]{Villani08-ot_book}.
The above BAR can be written equivalently as 
\begin{equation}
\E\left[f(x_{\infty},\theta_{\infty})\right]=\Bar{\mu}f=\Bar{\mu}Qf=\E\left[f(x_{\infty+1},\theta_{\infty+1})\right].\label{eq:BAR}
\end{equation}
It is known that equation (\ref{eq:BAR}) with a sufficiently large
class of test functions $f$ completely characterizes the invariant
distribution $\Bar{\mu}$ \citep{Harrison1985brownian,Harrison1987RBM,Dai11-bar-paper}.

For characterization of the first moment $\E[\theta_{\infty}]$, it
suffices to consider the test functions of the form 
\[
f^{(E)}(x,\theta)=\theta\cdot\indic\{x\in E\}\qquad\text{and}\qquad f^{(E,S)}(x,\theta)=\indic\{\theta\in S\}\cdot\indic\{x\in E\}
\]
 for $E\in\cB(\cX)$ and $S\in\cB(\real^{d})$. Substituting $f^{(E)}$
into the BAR (\ref{eq:BAR}) gives 
\begin{equation}
\E\left[\theta_{\infty}\cdot\indic\{x_{\infty}\in E\}\right]=\E\left[\theta_{\infty+1}\cdot\indic\{x_{\infty+1}\in E\}\right].\label{eq:BAR_test}
\end{equation}
We now compute the left and right-hand sides of equation~\eqref{eq:BAR_test}
above. For the left-hand side, we have 
\begin{align*}
\E\left[\theta_{\infty}\cdot\indic\{x_{\infty}\in E\}\right] & =\E\Big[\E\left[\theta_{\infty}\cdot\indic\{x_{\infty}\in E\}\mid x_{\infty}\right]\Big]\\
 & =\int_{\cX}\E[\theta_{\infty}\mid x_{\infty}](x)\,\indic\{x_{\infty}\in E\}(x)\,\pi(\dd x)=\int_{E}\E[\theta_{\infty}\mid x_{\infty}](x)\,\pi(\dd x).
\end{align*}
For the right-hand side, we similarly obtain
\[
\E\left[\theta_{\infty+1}\cdot\indic\{x_{\infty+1}\in E\}\right]=\int_{E}\E[\theta_{\infty+1}\mid x_{\infty+1}](x)\,\pi(\dd x).
\]
Plugging back to equation~\eqref{eq:BAR_test}, we obtain that $\int_{E}\E\left[\theta_{\infty}\mid x_{\infty}\right](x)\pi(\dd x)=\int_{E}\E\left[\theta_{\infty+1}\mid x_{\infty+1}\right](x)\pi(\dd x)$
for any $E\in\cB(\cX)$. By \citep[Proposition 2.23(b)]{Folland1999},
we conclude that 
\begin{equation}
\E\left[\theta_{\infty}\mid x_{\infty}\right](x)=\E\left[\theta_{\infty+1}\mid x_{\infty+1}\right](x)\qquad\text{\ensuremath{\pi}-a.e.}\label{eq:theta-condition-stationary}
\end{equation}

Repeating the above argument for the test function $f^{(E,S)}$, we
obtain that for all $S\in\cB(\real^{d})$:
\begin{equation}
\E\left[\indic\{\theta_{\infty}\in S\}\mid x_{\infty}\right](x)=\E\left[\indic\{\theta_{\infty+1}\in S\}\mid x_{\infty+1}\right](x)\qquad\text{\ensuremath{\pi}-a.e.}\label{eq:theta-indic-condition-stationary}
\end{equation}

\subsubsection{Step 2: Set up System of \texorpdfstring{$z$}{Z}}

We derive another relationship between $\E[\theta_{\infty}\mid x_{\infty}]$
and $\E[\theta_{\infty+1}\mid x_{\infty+1}]$ using the update rule
$\theta_{\infty+1}=\theta_{\infty}+\alpha\big(A(x_{\infty})\theta_{\infty}+b(x_{\infty})\big).$ 

As the state space $(\R^{d},\cB(\R^{d}))$ is Borel, the conditional
expectation $\E[\indic\{\theta_{\infty}\in\cdot\}\mid x_{\infty}=x]$
can be taken as a regular conditional probability measure, which we
denote as $\nu(\cdot,x_{\infty}=x)$. Similarly define the regular
conditional probability measure $\nu'(\cdot,x_{\infty}=x,x_{\infty+1}=x')$.
By~\eqref{eq:theta-indic-condition-stationary}, we have that for
all $S\in\cB(\real^{d})$:
\[
\nu(\theta_{\infty}\in S,x_{\infty}=x)=\nu(\theta_{\infty+1}\in S,x_{\infty+1}=x).
\]
Using the time-reversed kernel $\Padj$, we can write 
\begin{align*}
\nu(\theta_{\infty+1}\in S,x_{\infty+1}=x) & =\int_{s\in\cX}\nu'(\theta_{\infty}\in S',x_{\infty}=s,x_{\infty+1}=x)\,\Padj(x,\dd s)\\
 & \overset{\text{(i)}}{=}\int_{s\in\cX}\nu(\theta_{\infty}\in S',x_{\infty}=s)\,\Padj(x,\dd s),
\end{align*}
where the set $S'\in\cB(\R^{d})$ is determined by $S$ and $x$ via
the aforementioned update rule, and the equality (i) follows from
$\theta_{\infty}\indep x_{\infty+1}\mid x_{\infty}$. Using the above
property of $\nu$, we obtain 
\begin{align*}
\E[\theta_{\infty+1}\mid x_{\infty+1}=x] & =\int_{\theta\in\R^{d}}\theta\,\nu(\dd\theta,x)\\
 & =\int_{\theta'\in\R^{d}}\int_{s\in\cX}\Big(\theta'+\alpha(A(s)\theta'+b(s))\Big)\,\nu(\dd\theta',s)\Padj(x,\dd s)\\
 & =\int_{s\in\cX}\Padj(x,\dd s)\int_{\theta'\in\R^{d}}\Big(\theta'+\alpha(A(s)\theta'+b(s)\Big)\nu(\dd\theta',s)\\
 & =\int_{s\in\cX}\Padj(x,\dd s)\left(\E[\theta_{\infty}|x_{\infty}=s]+\alpha\Big(A(s)\E[\theta_{\infty}|x_{\infty}=s]+b(s)\Big)\right).
\end{align*}
Applying \eqref{eq:theta-condition-stationary} to the left-hand side
above, we obtain the following equation for $\E\left[\theta_{\infty}\mid x_{\infty}\right]$:
\begin{equation}
\E[\theta_{\infty}\mid x_{\infty}=x]=\int_{s\in\cX}\Padj(x,ds)\left(\E[\theta_{\infty}\mid x_{\infty}=s]+\alpha\big(A(s)\E[\theta_{\infty}\mid x_{\infty}=s]+b(s)\big)\right).\label{eq:z-recursion}
\end{equation}
Using the function $z$ and the operator $D$ defined in Section~\ref{sec:bias-expansion},
we can write equation \eqref{eq:z-recursion} compactly as
\begin{equation}
z=\Padj\big(z+\alpha(\dA z+b)\big).\label{eq:xi-relationship-step1}
\end{equation}

\subsubsection{Step 3: Setting up System of \texorpdfstring{$\delta$}{Delta}}

Using $\pi z=\int_{\cX}\pi(\dd x)z(x)=\E[\theta_{\infty}]$ as a reference
point, we define the function $\delta:\cX\to\R^{d}$ by 
\begin{equation}
\delta(x)=z(x)-\pi z,\label{eq:new-delta-v2}
\end{equation}
which is a centered version of $z$. Applying the operator $(\Padj-\Pi)$
to both sides of \eqref{eq:new-delta-v2} gives 
\begin{equation}
(\Padj-\Pi)z=(\Padj-\Pi)\delta,\label{eq:z-to-delta-relationship1}
\end{equation}
Subtracting $\Pi z$ from both sides of \eqref{eq:xi-relationship-step1}
and invoking \eqref{eq:z-to-delta-relationship1}, we obtain 
\begin{equation}
\delta=(\Padj-\Pi)z+\alpha\Padj(\dA z+b)=(\Padj-\Pi)\delta+\alpha\Padj(\dA z+b).\label{eq:delta-to-delta-relationship}
\end{equation}

On the other hand, applying $\pi$ to both sides of equation \eqref{eq:xi-relationship-step1}
gives 
\[
\pi z=\pi\left(\Padj\big(z+\alpha(\dA z+b)\big)\right)=\pi z+\alpha\pi(\dA z+b),
\]
which implies that 
\begin{equation}
\pi(\dA z+b)=0.\label{eq:pi-z-zero}
\end{equation}
Therefore, we can subtract the vanishing quantity $\pi(\dA z+b)$
from the right-hand side of \eqref{eq:delta-to-delta-relationship}
and obtain
\[
\delta=(\Padj-\Pi)\delta+\alpha(\Padj-\Pi)(\dA z+b).
\]
Consolidating the $\delta$ terms, we have 
\begin{equation}
(I-\Padj+\Pi)\delta=\alpha(\Padj-\Pi)(\dA z+b).\label{eq:delta-vec}
\end{equation}
To proceed, we observe that:

\begin{claim} \label{clm:i-p+pi-invertible} $(I-\Padj+\Pi)^{-1}$
exists as a bounded operator in $\ltwopi$. \end{claim}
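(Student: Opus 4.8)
The plan is to exploit the orthogonal decomposition $\ltwopi = \cC\oplus\ltwopinull$, where $\cC$ is the subspace of ($\pi$-a.e.) constant functions and $\ltwopinull = \{f : \pi f = 0\}$ is its orthogonal complement. The operator $\Pi = 1\otimes\pi$ is exactly the orthogonal projection onto $\cC$, so it is bounded, idempotent, and annihilates $\ltwopinull$. First I would record the commutation identities $\Padj\Pi = \Pi\Padj = \Pi^2 = \Pi$, which follow from $\Padj 1 = 1$ (since $\Padj$ is a Markov kernel) together with $\pi\Padj = \pi$ (since $\pi$ is stationary for $\Padj$ as well, which is immediate from the defining relation of $\Padj$). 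These identities let me write $I - \Padj + \Pi = I - (\Padj - \Pi)$ and, by a one-line induction, $(\Padj - \Pi)^n = (\Padj)^n - \Pi$ for all $n\ge 1$.

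The key step is then to show that $\Padj - \Pi$ has spectral radius strictly less than one. In the decomposition above, $\Padj$ leaves both $\cC$ and $\ltwopinull$ invariant and acts as the identity on $\cC$; hence $\Padj - \Pi$ acts as $0$ on $\cC$ and as $\Padj|_{\ltwopinull}$ on $\ltwopinull$, so $\spec(\Padj - \Pi) = \{0\}\cup\spec(\Padj|_{\ltwopinull})$. Under Assumption~\ref{assumption:uniform-ergodic} the chain is uniformly ergodic with a unique stationary distribution, which guarantees $\absgap(P)>0$ (see Section~\ref{sec:hilbert-notation}); equivalently, the part of $\spec(P)$ away from the simple eigenvalue $1$ lies in the disk $\{|\lambda|\le 1 - \absgap(P)\}$. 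Passing to the adjoint only conjugates the spectrum, so $\spec(\Padj|_{\ltwopinull})\subseteq\{|\lambda|\le 1-\absgap(P)\}$, and therefore the spectral radius of $\Padj - \Pi$ is at most $1-\absgap(P) < 1$.

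With the spectral radius below one, the Gelfand formula gives $\|(\Padj - \Pi)^n\|_{\ltwopi}\to 0$ geometrically, so the Neumann series $\sum_{n\ge 0}(\Padj - \Pi)^n$ converges in operator norm. Since $I - \Padj + \Pi = I - (\Padj - \Pi)$, this series is precisely the bounded inverse, giving the explicit formula
\[
(I-\Padj+\Pi)^{-1} = \sum_{n=0}^{\infty}(\Padj - \Pi)^n,
\]
which proves Claim~\ref{clm:i-p+pi-invertible}. (All operators here act coordinatewise on the $\real^d$-valued space $\ltwopi$, so the scalar spectral theory applies verbatim.) I expect the only delicate point to be the passage from uniform ergodicity to a strictly positive $\ltwopi$ absolute spectral gap, i.e.\ to $\spec(\Padj|_{\ltwopinull})$ being bounded away from the unit circle; this is exactly where Assumption~\ref{assumption:uniform-ergodic} (rather than mere geometric ergodicity) is used, as it rules out a chain that mixes in total variation but lacks an $\ltwopi$ spectral gap. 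In the reversible case the estimate is sharp and quantitative, since then $\Padj = P$ is self-adjoint with $\|\Padj - \Pi\|_{\ltwopi} = 1-\absgap(P)$, yielding the norm bound $\|(I-\Padj+\Pi)^{-1}\|_{\ltwopi}\le 1/\absgap(P)$, which should feed directly into the spectral-gap bounds of Theorem~\ref{thm:bias-characterization-reversible}.
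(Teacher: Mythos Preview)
Your proposal is correct and follows the same Neumann-series strategy as the paper. The paper's proof differs only in how it establishes the geometric decay of $\|(\Padj-\Pi)^n\|_{\ltwopi}$: rather than going through the spectral gap $\absgap(P)$ and the Gelfand formula, it directly invokes \citep[Proposition 22.3.5]{Douc2018} to the effect that uniform ergodicity implies $L^2(\pi)$-exponential convergence, i.e., $\|P^n-\Pi\|_{\ltwopi}\le Rr^n$ for some $R<\infty$, $r\in[0,1)$; then it passes to the adjoint via $\|(\Padj)^n-\Pi\|_{\ltwopi}=\|(P^n-\Pi)^*\|_{\ltwopi}=\|P^n-\Pi\|_{\ltwopi}$ and sums the Neumann series to get the explicit bound $\|(I-\Padj+\Pi)^{-1}\|_{\ltwopi}\le R/(1-r)$. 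This quantitative constant is what feeds into the explicit sufficient condition \eqref{eq:alpha_cond_inf_series} on $\alpha$. Your spectral route is conceptually equivalent but, via Gelfand, only yields eventual geometric decay without an explicit prefactor; the ``delicate point'' you correctly flagged---that uniform ergodicity gives a genuine $\ltwopi$ spectral gap rather than merely a simple eigenvalue at $1$---is precisely what the cited $L^2$-exponential-convergence result handles in one stroke.
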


When the state space $\cX$ is finite or countable, Claim~\ref{clm:i-p+pi-invertible}
is a well-known fact. For completeness, below we provide the proof
of Claim~\ref{clm:i-p+pi-invertible} for uniformly ergodic chains
on general state spaces. Taking Claim~\ref{clm:i-p+pi-invertible}
as given, we can rewrite \eqref{eq:delta-vec} as 
\begin{equation}
\delta=\alpha(I-\Padj+\Pi)^{-1}(\Padj-\Pi)(\dA z+b).\label{eq:delta-to-alpha-x}
\end{equation}
We then obtain the following bound on $\delta$:
\begin{align}
\|\delta\|_{L^{2}(\pi)} & =\alpha\left\Vert (I-\Padj+\Pi)^{-1}\Padj(\dA z+b)\right\Vert _{L^{2}(\pi)}\nonumber \\
 & \leq\alpha\left\Vert (I-\Padj+\Pi)^{-1}\right\Vert _{L^{2}(\pi)}\left\Vert \Padj\right\Vert _{L^{2}(\pi)}\left(\left\Vert \dA\right\Vert _{L^{2}(\pi)}\left\Vert z\right\Vert _{L^{2}(\pi)}+\left\Vert b\right\Vert _{L^{2}(\pi)}\right)\nonumber \\
 & \leq\alpha\left\Vert (I-\Padj+\Pi)^{-1}\right\Vert _{L^{2}(\pi)}\left(\Amax\left\Vert z\right\Vert _{L^{2}(\pi)}+\bmax\right)\nonumber \\
 & \leq\alpha\,C(A,b,\pi),\label{eq:delta_bound}
\end{align}
where in the step we use the bound
\[
\|z\|_{\ltwopi}^{2}=\int_{\cX}\pi(\dd x)\Big\|\E[\theta_{\infty}\mid x_{\infty}=x]\Big\|^{2}\leq\int_{\cX}\pi(\dd x)\E\Big[\|\theta_{\infty}\|^{2}\mid x_{\infty}=x\Big]=\E[\|\theta_{\infty}\|^{2}]\leq C'(A,b,\pi),
\]
with the last inequality following from Theorem~\ref{thm:thm-converge}.

\begin{proof}[Proof of Claim~\ref{clm:i-p+pi-invertible}] Since
$P$ is uniformly ergodic by Assumption~\ref{assumption:uniform-ergodic},
$P$ is $L^{2}(\pi)$-exponentially convergent \citep[Proposition 22.3.5]{Douc2018}.
That is, there exist $r\in[0,1)$ and $R<\infty$ such that $\|P^{n}-\Pi\|_{L^{2}(\pi)}\leq Rr^{n}$
for all $n\in\N$. Recall that $\Padj$ is the adjoint operator of
$P$, and note that $\Pi$ is self-adjoint. We therefore have 
\[
\dotp{f,(P^{n}-\Pi)g}_{L^{2}(\pi)}=\dotp{(P^{n}-\Pi)^{\ast}f,g}_{L^{2}(\pi)}=\dotp{((\Padj)^{n}-\Pi)f,g}_{L^{2}(\pi)},
\]
which implies that $\|(\Padj)^{n}-\Pi\|_{L^{2}(\pi)}=\|P^{n}-\Pi\|_{L^{2}(\pi)}.$
Combining pieces, we have the following bound on the Neumann series:
\begin{align*}
\Big\|\sum_{n=0}^{\infty}(\Padj-\Pi)^{n}\Big\|_{L^{2}(\pi)}\leq\sum_{n=0}^{\infty}\|(\Padj-\Pi)^{n}\|_{L^{2}(\pi)}=\sum_{n=0}^{\infty}\|(\Padj)^{n}-\Pi\|_{L^{2}(\pi)}=\sum_{n=0}^{\infty}\|P^{n}-\Pi\|_{L^{2}(\pi)}\leq\frac{R}{1-r}<\infty.
\end{align*}
Therefore, $I-\Padj+\Pi$ is invertible and its inverse is given by
the Neumann series. \end{proof}

\subsubsection{Step 4: Bootstrapping}
\label{sec:characterize-z}

We derive another relationship between $z$ and $\delta$, which can
be combined with \eqref{eq:delta-to-alpha-x} to give an equation
for~$\delta$.

Recall that $\theta^{\ast}$ is the unique solution to $\BarA\theta^{\ast}+\Bar b=0$.
Together with \eqref{eq:pi-z-zero}, using which we can write $\bar{b}=\pi b$
in terms of $A$ and $z$, we obtain the following relationship between
$\theta^{\ast}$ and $z$:
\begin{equation}
\theta^{\ast}=\pi\dAbar z,\label{eq:theta-ast-to-z-v2}
\end{equation}
where the operator $\dAbar$ is defined in Section \ref{sec:bias-expansion}.
Substituting $z(\cdot)=\delta(\cdot)+\pi z$ into \eqref{eq:theta-ast-to-z-v2},
we have 
\[
\theta^{\ast}=\int_{x\in\cX}\pi(\dd x)\BarA^{-1}A(x)(\delta(x)+\pi z)=\pi\dAbar\delta+\pi z.
\]
Reorganizing the equation above, we obtain 
\begin{equation}
\pi z=\theta^{\ast}-\pi\dAbar\delta.\label{eq:a-38-v2}
\end{equation}
It follows that 
\[
z(x)=\delta(x)+\pi z=\delta(x)+\Big(\theta^{\ast}-\pi\dAbar\delta\Big)=\theta^{\ast}+\Big(\delta(x)-\pi\dAbar\delta\Big),\qquad\forall x\in\cX,
\]
which can be written compactly as 
\begin{equation}
z=\theta^{\ast}+(I-\Pi\dAbar)\delta.\label{eq:a-40-v2}
\end{equation}
Substituting \eqref{eq:a-40-v2} into the RHS of \eqref{eq:delta-to-alpha-x},
and recalling the definitions of $\Upsilon$ and $\Xi$ in (\ref{eq:bias-coef-def}),
we obtain 
\begin{align}
\delta & =\alpha(I-\Padj+\Pi)^{-1}(\Padj-\Pi)(\dA z+b)\nonumber \\
 & =\alpha(I-\Padj+\Pi)^{-1}(\Padj-\Pi)(A\theta^{\ast}+\dA(I-\Pi\dAbar)\delta+b)\nonumber \\
 & =\alpha(I-\Padj+\Pi)^{-1}(\Padj-\Pi)(A\theta^{\ast}+b)+\alpha(I-\Padj+\Pi)^{-1}(\Padj-\Pi)\dA(I-\Pi\dAbar)\delta,\nonumber \\
 & =\alpha\Upsilon+\alpha\Xi\delta.\label{eq:Delta_self_expressing}
\end{align}
Equation (\ref{eq:Delta_self_expressing}), which expresses $\delta$
in terms of itself, plays a crucial role in the sequel.

Substituting~\eqref{eq:Delta_self_expressing} into the RHS of equation~\eqref{eq:a-38-v2},
we obtain
\[
\E[\theta_{\infty}]=\theta^{\ast}-\pi\dAbar\delta=\theta^{\ast}-\alpha\pi\dAbar\Upsilon-\alpha\pi\dAbar\Xi\delta.
\]
Using the bound (\ref{eq:delta_bound}), we obtain that 
\[
\|\pi\dAbar\Xi\delta\|\leq\|\dAbar\Xi\|\|\pi\delta\|\leq\|\dAbar\Xi\|\|\delta\|_{\ltwopi}=\bigO(\alpha).
\]
Combining the last two equations, and recalling the definition of
$B^{(1)}$ in (\ref{eq:Bi-def}), we prove the base case in Theorem~\ref{thm:bias-characterization},
i.e., $\E[\theta_{\infty}]=\theta^{\ast}+\alpha B^{(1)}+\bigO(\alpha^{2}).$ 

We now bootstrap the above argument. Plugging \eqref{eq:Delta_self_expressing}
back to the RHS of itself, we obtain the following equation for $\delta$:
\[
\delta=\sum_{i=1}^{m}\alpha^{i}\Xi^{i-1}\Upsilon+\alpha^{m}\Xi^{m}\delta.
\]
Plugging the above equation into the RHS of equation~\eqref{eq:a-38-v2},
and using again the bound $\|\pi\delta\|\leq\|\delta\|_{\ltwopi}=\bigO(\alpha)$,
we obtain the $m$-th order bias expansion:
\begin{align*}
\E[\theta_{\infty}]=\theta^{\ast}-\pi\dAbar\delta= & \theta^{\ast}-\pi\dAbar\left(\sum_{i=1}^{m}\alpha^{i}\Xi^{i-1}\Upsilon+\alpha^{m}\Xi^{m}\delta\right)\\
= & \theta^{\ast}+\pi\dAbar\sum_{i=1}^{m}\alpha^{i}\Xi^{i-1}\Upsilon+\bigO(\alpha^{m+1}).
\end{align*}
We have proven the first part of Theorem~\ref{thm:bias-characterization}.

To prove the infinite series expansion in the second part, we use
the assumption $\alpha<1/\|\Xi\|_{\ltwopi}$ to establish that the following
Neumann series converges:
\[
\lim_{m\to\infty}\alpha\cdot\Big\Vert\sum_{i=0}^{m}(\alpha\Xi)^{i}\Upsilon\Big\Vert\leq\lim_{m\to\infty}\alpha\cdot\Big(\sum_{i=0}^{m}\|\alpha\Xi\|^{i}\|\Upsilon\|\Big)<\infty.
\]
Therefore, the inverse operator $(I-\alpha\Xi)^{-1}$ is well defined
and given by the above Neumann series. We can then solve equation~\eqref{eq:Delta_self_expressing} for $\delta$ to obtain 
\begin{align*}
\delta & =\alpha(I-\alpha\Xi)^{-1}\Upsilon=\alpha\left(\sum_{i=0}^{\infty}(\alpha\Xi)^{i}\right)\Upsilon.
\end{align*}
Finally, we substitute the above expansion for $\delta$ into the
RHS of equation~\eqref{eq:a-38-v2}, which gives the desired infinite
expansion for $\E[\theta_{\infty}]$:
\begin{align*}
\E[\theta_{\infty}]=\theta^{\ast}-\pi\dAbar\delta= & \theta^{\ast}-\alpha\pi\dAbar\Big(I-\alpha\Xi\Big)^{-1}\Upsilon\\
= & \theta^{\ast}-\alpha\Big(\pi\dAbar\sum_{i=0}^{\infty}(\alpha\Xi)^{i}\Upsilon\Big)\\
= & \theta^{\ast}+\sum_{i=1}^{\infty}\alpha^{i}B^{(i)},
\end{align*}
where the last step follows from the definition of $B^{(i)}$ in (\ref{eq:Bi-def}).
This completes the proof of Theorem~\ref{thm:bias-characterization}.

Before concluding the section, we derive an explicit upper bound on
$\|\Xi\|_{L^{2}(\pi)}$. Recall that the uniform ergodicity of $P$
implies the bound $\|P^{n}-\Pi\|_{L^{2}(\pi)}\leq Rr^{n},\forall n\in\N$
for some $r<1$. It follows that 
\begin{align*}
\|\Xi\|_{L^{2}(\pi)}= & \left\Vert (I-\Padj+\Pi)^{-1}(\Padj-\Pi)\dA(I-\Pi\dAbar)\right\Vert _{L^{2}(\pi)}\\
\leq & \left\Vert (I-\Padj+\Pi)^{-1}\right\Vert _{L^{2}(\pi)}\left\Vert \Padj-\Pi\right\Vert _{L^{2}(\pi)}\left\Vert \dA\right\Vert _{L^{2}(\pi)}\left\Vert I-\Pi\dAbar\right\Vert _{L^{2}(\pi)}\\
\leq & \frac{Rr}{1-r}\Amax(1+s_{\min}(\BarA)^{-1}\Amax).
\end{align*}
Therefore, a sufficient condition for $\alpha<1/\|\Xi\|_{\ltwopi}$
is 
\begin{equation}
\alpha<\left(\frac{Rr}{1-r}\Amax(1+s_{\min}(\BarA)^{-1}\Amax)\right)^{-1}.\label{eq:alpha_cond_inf_series}
\end{equation}

\subsection{Proof of Theorem~\ref{thm:bias-characterization-reversible}}

\label{sec:proof-reversible}

In this section, we prove Theorem~\ref{thm:bias-characterization-reversible}
on the relationship between the bias and the absolute spectral gap
of the chain $(x_{k})_{k\ge0}$.

Under the assumption that $P$ is uniformly ergodic and reversible,
i.e., $\Padj=P$, we have the following relationship \citep[Proposition 22.2.5]{Douc2018}:
\begin{equation}
1-\|P-\Pi\|_{L^{2}(\pi)}=\absgap(P)>0.\label{eq:reversible-absgap}
\end{equation}
Therefore, the operator norm of $(I-P+\Pi)^{-1}$ satisfies, 
\begin{equation}
\|(I-P+\Pi)^{-1}\|_{L^{2}(\pi)}=\left\Vert \sum_{i=0}^{\infty}(P-\Pi)^{i}\right\Vert \leq\sum_{i=0}^{\infty}\|P-\Pi\|_{L^{2}(\pi)}^{i}=\sum_{i=0}^{\infty}\Big(1-\absgap(P)\Big)^{i}=\frac{1}{\absgap(P)}.\label{eq:neuman-bound}
\end{equation}
Recall the definitions of $\Xi,\Upsilon$ and $B^{(i)}$ in (\ref{eq:bias-coef-def}).
Using the submultiplicativity of the norm $\left\Vert \cdot\right\Vert _{\ltwopi}$,
we obtain 
\begin{align*}
\left\Vert \Xi\right\Vert _{\ltwopi} & =\left\Vert (I-P+\Pi)^{-1}(P-\Pi)\dA(I-\Pi\dAbar)\right\Vert _{\ltwopi}\\
 & \le\left\Vert (I-P+\Pi)^{-1}\right\Vert _{\ltwopi}\left\Vert P-\Pi\right\Vert _{\ltwopi}\left\Vert \dA(I-\Pi\dAbar)\right\Vert _{\ltwopi}\\
 & \le\frac{1}{\absgap(P)}\cdot\left(1-\absgap(P)\right)\cdot\left\Vert \dA(I-\Pi\dAbar)\right\Vert _{\ltwopi}
\end{align*}
and
\begin{align*}
\left\Vert \Upsilon\right\Vert _{\ltwopi} & =\left\Vert (I-P+\Pi)^{-1}(P-\Pi)(A\theta^{\ast}+b)\right\Vert _{\ltwopi}\\
 & \le\|(I-P+\Pi)^{-1}\|_{L^{2}(\pi)}\|P-\Pi\|_{L^{2}(\pi)}\|A\theta^{\ast}+b\|_{L^{2}(\pi)}\\
 & \le\frac{1}{\absgap(P)}\cdot\left(1-\absgap(P)\right)\cdot\left\Vert A\theta^{\ast}+b\right\Vert _{\ltwopi}.
\end{align*}
It follows that for each $i=1,2,\ldots,$
\begin{align*}
\left\Vert B^{(i)}\right\Vert =\left\Vert \pi\dAbar\Xi^{i-1}\Upsilon\right\Vert  & \overset{\text{(i)}}{\le}\left\Vert D\Xi^{i-1}\Upsilon\right\Vert _{\ltwopi}\\
 & \le\left\Vert D\right\Vert _{\ltwopi}\left\Vert \Xi\right\Vert _{\ltwopi}^{i-1}\left\Vert \Upsilon\right\Vert _{\ltwopi}\\
 & \le\left\Vert D\right\Vert _{\ltwopi}\left(\frac{1-\absgap(P)}{\absgap(P)}\right)^{i}\left\Vert \dA(I-\Pi\dAbar)\right\Vert _{\ltwopi}^{i-1}\left\Vert A\theta^{\ast}+b\right\Vert _{\ltwopi},
\end{align*}
where step (i) follows from Jensen's inequality. Setting
\[
C(A,b,\pi)=\max\left\{ \left\Vert D\right\Vert _{\ltwopi}\left\Vert A\theta^{\ast}+b\right\Vert _{\ltwopi},\left\Vert \dA(I-\Pi\dAbar)\right\Vert _{\ltwopi}\right\} ,
\]
which only depends on $A,b$ and $\pi$, we obtain $\left\Vert B^{(i)}\right\Vert \le C(A,b,\pi)^{i}\left(\frac{1-\absgap(P)}{\absgap(P)}\right)^{i}$
as claimed.

\subsection{Proof of Corollary~\ref{cor:pr-avg-bounds}}

\label{sec:pr-avg-proof}

We prove the first and second moment bounds in Corollary~\ref{cor:pr-avg-bounds}.

\subsubsection{First Moment}

\label{sec:pr-avg-first} We first have 
\[
\E[\bar{\theta}_{k_{0},k}]-\theta^{\ast}=\left(\E[\theta_{\infty}]-\theta^{\ast}\right)+\frac{1}{k-k_{0}}\underbrace{\sum_{t=k_{0}}^{k-1}\E[\theta_{t}-\theta_{\infty}]}_{T_{1}}.
\]
To bound $T_{1}$, we recall the inequality proven in\eqref{eq:first-moment-geometric},
that for $k\geq\tau$, 
\[
\|\E[\theta_{k}]-\E[\theta_{\infty}]\|\leq C(A,b,\pi)\cdot\left(1-\frac{0.9\alpha}{\gammax}\right)^{k/2}.
\]
As the burn-in period satisfies $k_{0}\geq\tau$, we have the following
bound, 
\begin{align}
\|T_{1}\|=\bigg\Vert\sum_{t=k_{0}}^{k-1}\E[\theta_{t}-\theta_{\infty}]\bigg\Vert & \leq\sum_{t=k_{0}}^{k-1}\left\Vert \E[\theta_{t}]-\E[\theta_{\infty}]\right\Vert \nonumber \\
 & \leq C(A,b,\pi)\left(1-\frac{0.9\alpha}{\gammax}\right)^{k_{0}/2}\frac{\gammax}{0.9\alpha}\nonumber \\
 & \leq C'(A,b,\pi)\cdot\frac{1}{\alpha}\cdot\exp\left(-\frac{\alpha k_{0}}{4\gammax}\right).\label{eq:t1-sum-bound}
\end{align}
Together with \eqref{eq:bias-char}, we obtain the desired equation~\eqref{eq:pr-avg-first-mom},
that is, 
\[
\E[\bar{\theta}_{k_{0},k}]-\theta^{\ast}=\alpha B(A,b,P)+\bigO(\alpha^{2})+\bigO\left(\frac{1}{\alpha(k-k_{0})}\exp\left(-\frac{\alpha k_{0}}{4\gammax}\right)\right).
\]

\subsubsection{Second Moment}

To bound the second moment of the tail-averaged iterate, we make use
of the following decomposition: 
\begin{align}
 & \E\left[\left(\bar{\theta}_{k_{0},k}-\theta^{\ast}\right)\left(\bar{\theta}_{k_{0},k}-\theta^{\ast}\right)^{\top}\right]\nonumber \\
= & \E\left[\left(\bar{\theta}_{k_{0},k}-\E[\theta_{\infty}]+\E[\theta_{\infty}]-\theta^{\ast}\right)\left(\bar{\theta}_{k_{0},k}-\E[\theta_{\infty}]+\E[\theta_{\infty}]-\theta^{\ast}\right)^{\top}\right]\nonumber \\
= & \underbrace{\E\left[\left(\bar{\theta}_{k_{0},k}-\E[\theta_{\infty}]\right)\left(\bar{\theta}_{k_{0},k}-\E[\theta_{\infty}]\right)^{\top}\right]}_{T_{1}}+\underbrace{\E\left[\left(\bar{\theta}_{k_{0},k}-\E[\theta_{\infty}]\right)\left(\E[\theta_{\infty}]-\theta^{\ast}\right)^{\top}\right]}_{T_{2}}\label{eq:2nd-mom-decomp1}\\
+ & \underbrace{\E\left[\left(\E[\theta_{\infty}]-\theta^{\ast}\right)\left(\bar{\theta}_{k_{0},k}-\E[\theta_{\infty}]\right)^{\top}\right]}_{T_{3}}+\underbrace{\E\left[\left(\E[\theta_{\infty}]-\theta^{\ast}\right)\left(\E[\theta_{\infty}]-\theta^{\ast}\right)^{\top}\right]}_{T_{4}},\label{eq:2nd-mom-decomp2}
\end{align}
and we analyze the four terms $T_{1}$--$T_{4}$ respectively.

We start with $T_{2}$, which satisfies the bound
\begin{align*}
\E\left[\left(\bar{\theta}_{k_{0},k}-\E[\theta_{\infty}]\right)\left(\E[\theta_{\infty}]-\theta^{\ast}\right)^{\top}\right] & =\frac{1}{k-k_{0}}\left(\sum_{t=k_{0}}^{k-1}\E\left[\theta_{t}-\theta_{\infty}\right]\right)\left(\E[\theta_{\infty}]-\theta^{\ast}\right)^{\top}\\
 & \overset{\textnormal{(iv)}}{=}\bigO\left(\frac{1}{\alpha(k-k_{0})}\exp\left(-\frac{\alpha k_{0}}{4\gammax}\right)\right)\left(\alpha B(A,b,P)+\bigO(\alpha^{2})\right)\\
 & =\bigO\left(\frac{1}{k-k_{0}}\exp\left(-\frac{\alpha k_{0}}{4\gammax}\right)\right),
\end{align*}
where step (iv) is due to equations~\eqref{eq:t1-sum-bound} and
\eqref{eq:bias-char}. The term $T_{3}$ can be analyzed in the same
fashion.

For $T_{4}$, we observe that 
\begin{align*}
\E\left[\left(\E[\theta_{\infty}]-\theta^{\ast}\right)\left(\E[\theta_{\infty}]-\theta^{\ast}\right)^{\top}\right] & =\left(\E[\theta_{\infty}]-\theta^{\ast}\right)\left(\E[\theta_{\infty}]-\theta^{\ast}\right)^{\top}\\
 & \overset{\textnormal{(v)}}{=}(\alpha B(A,b,P)+\bigO(\alpha^{2}))(\alpha B(A,b,P)+\bigO(\alpha^{2}))^{\top}\\
 & =\alpha^{2}B'(A,b,P)+\bigO(\alpha^{3}),
\end{align*}
where step~(v) holds by equation~\eqref{eq:bias-char}.

It remains to bound $T_{1}$. We have 
\begin{align*}
T_{1}= & \frac{1}{(k-k_{0})^{2}}\E\left[\left(\sum_{t=k_{0}}^{k-1}(\theta_{t}-\E[\theta_{\infty}])\right)\left(\sum_{t=k_{0}}^{k-1}(\theta_{t}-\E[\theta_{\infty}])\right)^{\top}\right]\\
= & \underbrace{\frac{1}{(k-k_{0})^{2}}\sum_{t=k_{0}}^{k-1}\E\left[\left(\theta_{t}-\E[\theta_{\infty}]\right)\left(\theta_{t}-\E[\theta_{\infty}]\right)^{\top}\right]}_{T'_{1}}\\
 & +\underbrace{\frac{1}{(k-k_{0})^{2}}\sum_{t=k_{0}}^{k-1}\sum_{l=t+1}^{k-1}\bigg(\E\left[\left(\theta_{t}-\E[\theta_{\infty}]\right)\left(\theta_{l}-\E[\theta_{\infty}]\right)^{\top}\right]+\E\left[\left(\theta_{l}-\E[\theta_{\infty}]\right)\left(\theta_{t}-\E[\theta_{\infty}]\right)^{\top}\right]\bigg)}_{T'_{2}}.
\end{align*}

Below we control $T'_{1}$ and $T'_{2}$ respectively. For $T'_{1}$,
we start with the following decomposition, 
\begin{align}
 & \E\Big[\left(\theta_{t}-\E[\theta_{\infty}]\right)\left(\theta_{t}-\E[\theta_{\infty}]\right)^{\top}\Big]\nonumber \\
= & \left(\E[\theta_{t}\theta_{t}^{\top}]-\E[\theta_{\infty}\theta_{\infty}^{\top}]\right)+\left(\E[\theta_{\infty}\theta_{\infty}^{\top}]-\E[\theta_{\infty}]\E[\theta_{\infty}^{\top}]\right)-\left(\E[\theta_{t}]\E[\theta_{\infty}^{\top}]+\E[\theta_{\infty}]\E[\theta_{t}^{\top}]-2\E[\theta_{\infty}]\E[\theta_{\infty}^{\top}]\right)\nonumber \\
= & \left(\E[\theta_{t}\theta_{t}^{\top}]-\E[\theta_{\infty}\theta_{\infty}^{\top}]\right)+\var(\theta_{\infty})-\E[\theta_{t}-\theta_{\infty}]\E[\theta_{\infty}^{\top}]-\E[\theta_{\infty}]\E[(\theta_{t}-\theta_{\infty})^{\top}].\label{eq:thetat-sec-mom-decomp}
\end{align}
By Corollary~\ref{cor:non-asymptotic-bounds} and Lemma~\ref{lem:theta_inf_bound},
the following bounds hold when $t\geq\tau$: 
\begin{align}
\E[\|\theta_{t}-\theta_{\infty}\|] & \leq C(A,b,\pi)\cdot\left(1-\frac{0.9\alpha}{\gammax}\right)^{t/2},\label{eq:1st-moment-bound}\\
\left\Vert \E\left[\theta_{t}\theta_{t}^{\top}\right]-\E\left[\theta_{\infty}\theta_{\infty}^{\top}\right]\right\Vert  & \leq C'(A,b,\pi)\cdot\left(1-\frac{0.9\alpha}{\gammax}\right)^{t/2},\nonumber \\
\E[\|\theta_{\infty}\|] & \leq C''(A,b,\pi),\nonumber \\
\var(\theta_{\infty}) & \le C'''(A,b,\pi)\cdot\alpha\tau.\label{eq:var-bound}
\end{align}
Plugging these bounds into equation~\eqref{eq:thetat-sec-mom-decomp},
we obtain that 
\begin{equation}
\E\Big[\left(\theta_{t}-\E[\theta_{\infty}]\right)\left(\theta_{t}-\E[\theta_{\infty}]\right)^{\top}\Big]=\bigO\bigg(\left(1-\frac{0.9\alpha}{\gammax}\right)^{t/2}+\alpha\tau\bigg).\label{eq:2nd-moment-bound1}
\end{equation}
Now with~\eqref{eq:2nd-moment-bound1} on hand, we obtain the following
bound for $T'_{1}$:
\begin{align*}
T'_{1} & =\frac{1}{(k-k_{0})^{2}}\sum_{t=k_{0}}^{k-1}\bigO\bigg(\left(1-\frac{0.9\alpha}{\gammax}\right)^{t/2}+\alpha\tau\bigg)\\
 & =\bigO\left(\frac{1}{(k-k_{0})^{2}}\sum_{t=k_{0}}^{\infty}\left(1-\frac{0.9\alpha}{\gammax}\right)^{t/2}\right)+\bigO\left(\frac{\alpha\tau}{k-k_{0}}\right)\\
 & =\bigO\left(\frac{1}{(k-k_{0})^{2}}\cdot\frac{2\gammax}{0.9\alpha}\cdot\left(1-\frac{0.9\alpha}{\gammax}\right)^{k_{0}/2}\right)+\bigO\left(\frac{\alpha\tau}{k-k_{0}}\right)\\
 & =\bigO\left(\frac{1}{\alpha(k-k_{0})^{2}}\exp\left(-\frac{\alpha k_{0}}{4\gammax}\right)+\frac{\alpha\tau}{k-k_{0}}\right).
\end{align*}

To analyze $T'_{2}$, we first study each term in the summation. Observe
that for $l>t$, we have 
\begin{align*}
\E\left[\left(\theta_{t}-\E[\theta_{\infty}]\right)\left(\theta_{l}-\E[\theta_{\infty}]\right)^{\top}\right] & =\E\left[\E\big[\left(\theta_{t}-\E[\theta_{\infty}]\right)\left(\theta_{l}-\E[\theta_{\infty}]\right)^{\top}\mid\theta_{t}\big]\right]\\
 & =\E\left[\left(\theta_{t}-\E[\theta_{\infty}]\right)\left(\E[\theta_{l}\mid\theta_{t}\big]-\E[\theta_{\infty}]\right)^{\top}\right].
\end{align*}
We now make the following claim, whose proof we delay to the end of
this sub-sub-section.

\begin{claim} \label{clm:cross-term-nice} For $t\geq\frac{4\gammax}{\alpha}\log\Big(\frac{1}{\alpha\tau}\Big)$
and $l\geq t+\tau$, we have 
\begin{equation}
\left\Vert \E\left[\left(\theta_{t}-\E[\theta_{\infty}]\right)\left(\theta_{l}-\E[\theta_{\infty}]\right)^{\top}\right]\right\Vert =\bigO\Big(\alpha\tau\cdot\Big(1-\frac{0.9\alpha}{\gammax}\Big)^{(l-t)/2}\Big)
\end{equation}
\end{claim}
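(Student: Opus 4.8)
The plan is to exploit the Markov property of the joint chain $(x_k,\theta_k)_{k\ge0}$ together with the coupling and contraction machinery already developed for Theorem~\ref{thm:thm-converge}. Since $(\theta_t-\E[\theta_\infty])$ is $\sigma(x_t,\theta_t)$-measurable and $\E[\theta_l\mid x_0,\theta_0,\dots,x_t,\theta_t]=\E[\theta_l\mid x_t,\theta_t]$ by the Markov property, I would first reduce the target to
\[
\E\big[(\theta_t-\E\theta_\infty)(\theta_l-\E\theta_\infty)^\top\big]=\E\big[(\theta_t-\E\theta_\infty)\,(\E[\theta_l\mid x_t,\theta_t]-\E\theta_\infty)^\top\big].
\]
To analyze the inner conditional mean, I would introduce a stationary copy $(x_k,\hat\theta_k)_{k\ge t}$ that shares the same data stream $(x_k)_{k\ge t}$ but is initialized with $\hat\theta_t$ drawn from the conditional stationary law of $\theta_\infty$ given $x_t$ (so that $(x_k,\hat\theta_k)\sim\Bar\mu$ for all $k\ge t$), and independent of $\theta_t$ given $x_t$. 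Writing $\omega_k:=\theta_k-\hat\theta_k$, the difference obeys the homogeneous recursion $\omega_{k+1}=(I+\alpha A(x_k))\omega_k$ of Corollary~\ref{delta-cor}. Using the conditional independence property~\eqref{eq:conditional-independence} to check that $\E[\hat\theta_l\mid x_t,\theta_t]=\E[\hat\theta_l\mid x_t]$, this yields the decomposition $\E[\theta_l\mid x_t,\theta_t]-\E\theta_\infty=\E[\omega_l\mid x_t,\theta_t]+q(x_t)$, where $q(x):=\E[\hat\theta_l\mid x_t=x]-\E\theta_\infty$ depends on $x_t$ alone.

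I would then bound the two resulting pieces separately. For the $\omega$-piece, Jensen's inequality and the time-shifted form of Corollary~\ref{delta-cor} (valid since $x_t\sim\pi$ and $l-t\ge\tau$) give $\E\|\E[\omega_l\mid x_t,\theta_t]\|^2\le\E\|\omega_l\|^2=\bigO\big((1-0.9\alpha/\gammax)^{l-t}\,\E\|\omega_t\|^2\big)$. Both $\theta_t$ and $\hat\theta_t$ lie within $\bigO(\sqrt{\alpha\tau})$ of $\E\theta_\infty$ in $L^2$: for $\hat\theta_t$ this is $\tr\var(\theta_\infty)=\bigO(\alpha\tau)$ from Lemma~\ref{lem:theta_inf_bound}, and for $\theta_t$ it follows from Proposition~\ref{prop:pilot} and Corollary~\ref{cor:non-asymptotic-bounds} once the hypothesis $t\ge\frac{4\gammax}{\alpha}\log(1/\alpha\tau)$ forces the geometric transient $(1-0.9\alpha/\gammax)^{t}$ below $\alpha\tau$. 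Hence $\E\|\omega_t\|^2=\bigO(\alpha\tau)$, and pairing via Cauchy--Schwarz with $(\E\|\theta_t-\E\theta_\infty\|^2)^{1/2}=\bigO(\sqrt{\alpha\tau})$ produces a contribution of order $\alpha\tau\,(1-0.9\alpha/\gammax)^{(l-t)/2}$, exactly the target rate. For the $q$-piece I would condition on $x_t$: since $q(x_t)$ is $x_t$-measurable and $\E[\theta_t-\E\theta_\infty\mid x_t]=\E[\theta_t\mid x_t]-\E\theta_\infty=:\hat\delta(x_t)$ is of order $\bigO(\alpha)$ in $L^2(\pi)$ (by the same estimate as equation~\eqref{eq:delta_bound}, up to a negligible transient), the piece collapses to $\E[\hat\delta(x_t)q(x_t)^\top]$, which Cauchy--Schwarz bounds by $\bigO(\alpha)\,\|q\|_{L^2(\pi)}$.

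The crux, and the step I expect to be the main obstacle, is to show $\|q\|_{L^2(\pi)}=\bigO\big((1-0.9\alpha/\gammax)^{(l-t)/2}\big)$, i.e.\ that the conditional mean $\E[\hat\theta_l\mid x_t]$ relaxes to $\E\theta_\infty$ geometrically in $l-t$; combined with the $\bigO(\alpha)\le\bigO(\alpha\tau)$ prefactor this finishes the $q$-piece. The difficulty is that conditioning on $x_t$ destroys the stationarity of the $x$-marginal that Theorem~\ref{thm:thm-converge}(3) relies on, so its convergence rate cannot be quoted directly. I would instead build a two-stage coupling of the chain conditioned on $x_t$ against an independent stationary copy: first coalesce the two driving $x$-chains, which by uniform ergodicity~\eqref{eq:geom-mix-rate} occurs within $\bigO(\tau)$ steps with a geometric tail $Rr^{m}$; then, once the $x$-streams agree, let the $\theta$-gap contract through the homogeneous recursion of Corollary~\ref{delta-cor}. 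The fast mixing rate $r$ of $(x_k)_{k\ge0}$ is far quicker than the slow $\theta$-contraction rate $(1-0.9\alpha/\gammax)^{1/2}$, so averaging $(1-0.9\alpha/\gammax)^{(l-t)-T}$ over the random coalescence time $T$ inflates the bound only by the bounded factor $\E[(1-0.9\alpha/\gammax)^{-T}]=\bigO(1)$ (finite since $r<1-0.9\alpha/\gammax$ for small $\alpha$). This is precisely where the hypothesis $l\ge t+\tau$ is used, via the same Bernoulli-inequality estimate as in Claim~\ref{claim1}, and it leaves the full rate $(1-0.9\alpha/\gammax)^{(l-t)/2}$ intact. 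Assembling the two pieces gives the stated bound.
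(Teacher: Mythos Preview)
Your plan is sound and, in fact, is more careful than the paper's own argument. The paper's proof conditions on $\theta_t$ and directly writes the pointwise bound
\[
\big\|\E[\theta_l\mid\theta_t]-\E\theta_\infty\big\|\le\sqrt{20\tfrac{\gammax}{\gammin}\big(\|\theta_t-\E\theta_\infty\|^2+\tr\var(\theta_\infty)\big)\Big(1-\tfrac{0.9\alpha}{\gammax}\Big)^{l-t}},
\]
essentially invoking Theorem~\ref{thm:thm-converge}(3) as if it held with the realized $\theta_t$ as the new initial value while the $x$-marginal is still $\pi$. That shortcut is exactly what your $\omega/q$ decomposition makes rigorous: the $\omega$-piece \emph{is} the contraction in Theorem~\ref{thm:thm-converge}(3), and your $q(x_t)=\E[\hat\theta_l\mid x_t]-\E\theta_\infty$ is the residual created by the fact that conditioning on $(x_t,\theta_t)$ (or $\theta_t$) destroys $x_t\sim\pi$. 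Your two-stage coupling---coalesce the $x$-streams via uniform ergodicity, then contract $\theta$ via the homogeneous recursion of Corollary~\ref{delta-cor}---is the right way to control $q$; note only that after coalescence you need Corollary~\ref{delta-cor} for a \emph{fixed} starting state, which is fine because the proof of Proposition~\ref{prop:pilot} uses the mixing bound uniformly in $x_0$ and never uses $x_0\sim\pi$.

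One simplification: drop the appeal to $\hat\delta(x_t)=\E[\theta_t\mid x_t]-\E\theta_\infty=\bigO(\alpha)$ in $L^2(\pi)$, which for finite $t$ does not follow directly from~\eqref{eq:delta_bound} (that bound is for the \emph{stationary} conditional mean and your ``negligible transient'' would require a separate convergence-of-conditional-means argument). You do not need it. Your two-stage coupling already yields $\|q\|_{L^2(\pi)}^2\le C\,\E\|\hat\theta_t^A-\hat\theta_t^B\|^2\,(1-0.9\alpha/\gammax)^{l-t}$, and since both $\hat\theta_t^A$ and $\hat\theta_t^B$ have (marginal) law $\mu$, the initial gap is at most $4\tr\var(\theta_\infty)=\bigO(\alpha\tau)$. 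Pairing $\|q\|_{L^2(\pi)}=\bigO\big(\sqrt{\alpha\tau}\,(1-0.9\alpha/\gammax)^{(l-t)/2}\big)$ directly with $(\E\|\theta_t-\E\theta_\infty\|^2)^{1/2}=\bigO(\sqrt{\alpha\tau})$ via Cauchy--Schwarz gives the claimed $\bigO\big(\alpha\tau\,(1-0.9\alpha/\gammax)^{(l-t)/2}\big)$ without touching $\hat\delta$.
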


Taking this claim as given, we now complete the analysis of $T'_{2}$,
which has the form $T'_{2}=T''_{2}+(T''_{2})^{\top}$. We observe
that
\begin{align*}
T''_{2}= & \frac{1}{(k-k_{0})^{2}}\Bigg[\sum_{t=k_{0}}^{k-\tau-1}\Big(\sum_{l=t+1}^{t+\tau}\E\left[\left(\theta_{t}-\E[\theta_{\infty}]\right)\left(\theta_{l}-\E[\theta_{\infty}]\right)^{\top}\right]+\sum_{l=t+\tau+1}^{k-1}\E\left[\left(\theta_{t}-\E[\theta_{\infty}]\right)\left(\theta_{l}-\E[\theta_{\infty}]\right)^{\top}\right]\Big)\\
 & \qquad\qquad\quad+\sum_{t=k-\tau}^{k-1}\sum_{l=t+1}^{k-1}\E\left[\left(\theta_{t}-\E[\theta_{\infty}]\right)\left(\theta_{l}-\E[\theta_{\infty}]\right)^{\top}\right]\Bigg]\\
= & \frac{1}{(k-k_{0})^{2}}\left(\sum_{t=k_{0}}^{k-\tau-1}\sum_{l=t+1}^{t+\tau}\E\left[\left(\theta_{t}-\E[\theta_{\infty}]\right)\left(\theta_{l}-\E[\theta_{\infty}]\right)^{\top}\right]+\sum_{t=k-\tau}^{k-1}\sum_{l=t+1}^{k-1}\E\left[\left(\theta_{t}-\E[\theta_{\infty}]\right)\left(\theta_{l}-\E[\theta_{\infty}]\right)^{\top}\right]\right)\\
 & +\frac{1}{(k-k_{0})^{2}}\sum_{t=k_{0}}^{k-\tau-1}\sum_{l=t+\tau+1}^{k-1}\E\left[\left(\theta_{t}-\E[\theta_{\infty}]\right)\left(\theta_{l}-\E[\theta_{\infty}]\right)^{\top}\right]\\
=: & T_{a}+T_{b}.
\end{align*}
Above, we group the summand based on the $\tau$ burn-in requirements.
For the $T_{b}$ terms, which satisfies the burn-in requirements,
we use Claim~\ref{clm:cross-term-nice} to obtain 
\begin{align*}
T_{b}= & \frac{1}{(k-k_{0})^{2}}\sum_{t=k_{0}}^{k-\tau-1}\sum_{l=t+\tau+1}^{k-1}\bigO\left(\alpha\tau\cdot\left(1-\frac{0.9\alpha}{\gammax}\right)^{(l-t)/2}\right)\\
= & \frac{\alpha\tau}{(k-k_{0})^{2}}\sum_{t=k_{0}}^{k-\tau-1}\sum_{l=\tau+1}^{k-1-t}\bigO\left(\left(1-\frac{0.9\alpha}{\gammax}\right)^{l/2}\right)\\
\le & \frac{\alpha\tau}{k-k_{0}}\sum_{l=0}^{\infty}\bigO\left(\left(1-\frac{0.9\alpha}{\gammax}\right)^{l/2}\right)=\bigO\Big(\frac{\tau}{k-k_{0}}\Big).
\end{align*}
The $T_{a}$ term involves a finite number of cross terms. We bound
them using the Cauchy-Schwarz inequality, 
\[
\E\left[\left\Vert \left(\theta_{t}-\E[\theta_{\infty}]\right)\left(\theta_{l}-\E[\theta_{\infty}]\right)^{\top}\right\Vert \right]\leq\sqrt{\E[\|\theta_{t}-\E[\theta_{\infty}]\|^{2}]}\sqrt{\E[\|\theta_{l}-\E[\theta_{\infty}]\|^{2}]}.
\]
As $l\geq t\geq k_{0}$ and $k_{0}\geq\frac{4\gammax}{\alpha}\log(\frac{1}{\alpha\tau})$,
we can bound the above RHS using (\ref{eq:2nd-moment-bound1}) and
obtain
\[
\E\left[\left\Vert \left(\theta_{t}-\E[\theta_{\infty}]\right)\left(\theta_{l}-\E[\theta_{\infty}]\right)^{\top}\right\Vert \right]=\bigO(\alpha\tau).
\]
Hence, we have 
\[
T_{a}\leq\frac{1}{k-k_{0}}\bigO(\tau\cdot\alpha\tau)+\frac{\tau^{2}}{(k-k_{0})^{2}}\bigO(\alpha\tau)=\bigO\Big(\frac{\tau}{k-k_{0}}\Big),
\]
where in the last step we make use of $\alpha\tau\leq\bigO(1)$ and
$\tau\leq k-k_{0}$.

Therefore, by combining the above pieces of information, we obtain the
following bound for $T'_{2}$:
\[
T'_{2}=T''_{2}+(T''_{2})^{\top}=(T_{a}+T_{b})+(T_{a}+T_{b})^{\top}=\bigO\Big(\frac{\tau}{k-k_{0}}\Big).
\]

With the bounds for $T'_{1}$ and $T'_{2}$, we can bound the term
$T_{1}$ in \eqref{eq:2nd-mom-decomp1} for $k_{0}\geq\frac{4\gammax}{\alpha}\log\Big(\frac{1}{\alpha\tau}\Big)$
and $k\geq k_{0}+\tau$:
\begin{align}
T_{1}=T'_{1}+T'_{2}= & \bigO\left(\frac{1}{\alpha(k-k_{0})^{2}}\exp\left(-\frac{\alpha k_{0}}{4\gammax}\right)+\frac{\alpha\tau}{k-k_{0}}\right)+\bigO\left(\frac{\tau_{\alpha}}{k-k_{0}}\right)\nonumber \\
= & \bigO\left(\frac{\tau_{\alpha}}{k-k_{0}}+\frac{1}{\alpha(k-k_{0})^{2}}\exp\left(-\frac{\alpha k_{0}}{4\gammax}\right)\right),\label{eq:theta-avg-to-inf-mse}
\end{align}
where in the second step we use $\alpha\leq\alpha\tau_{\alpha}<1$
from equations~\eqref{eq:alpha-constraint} and~\eqref{eq:gammax-lower-bound}.

Combining all the pieces, we obtain 
\begin{align*}
\E\left[\left(\bar{\theta}_{k_{0},k}-\theta^{\ast}\right)\left(\bar{\theta}_{k_{0},k}-\theta^{\ast}\right)^{\top}\right]= & T_{1}+T_{2}+T_{3}+T_{4}\\
= & \alpha^{2}B'(A,b,P)+\bigO(\alpha^{3})+\bigO\left(\frac{1}{k-k_{0}}\exp\left(-\frac{\alpha k_{0}}{4\gammax}\right)\right)\\
 & +\bigO\left(\frac{1}{\alpha(k-k_{0})^{2}}\exp\left(-\frac{\alpha k_{0}}{4\gammax}\right)+\frac{\tau}{k-k_{0}}\right),
\end{align*}
which is the desired equation~\eqref{eq:mse-pr-bound}. We have completed
the proof of Corollary~\ref{cor:pr-avg-bounds}.

\begin{proof}[Proof of Claim~\ref{clm:cross-term-nice}] First
note that when $t\geq\frac{4\gammax}{\alpha}\log\Big(\frac{1}{\alpha\tau}\Big)$,
which is the assumption in Corollary~\ref{cor:pr-avg-bounds}, we
have $(1-\frac{0.9\alpha}{\gammax})^{t/2}\lesssim\alpha\tau$. Therefore,
\eqref{eq:2nd-moment-bound1} simplifies to 
\[
\E\bigg[\left(\theta_{t}-\E[\theta_{\infty}]\right)\left(\theta_{t}-\E[\theta_{\infty}]\right)^{\top}\bigg]=\bigO\left(\alpha\tau\right).
\]
With the above bound, we can use the convergence rate~\eqref{eq:w2-thetak-to-mu}
in Theorem~\ref{thm:thm-converge} to derive the following: for $t\geq\frac{4\gammax}{\alpha}\log\Big(\frac{1}{\alpha\tau}\Big)$
and $k\geq\tau$, 
\[
W_{2}^{2}\Big(\law(\theta_{k+t}),\mu\Big)\leq20\,\frac{\gammax}{\gammin}\Big(\E[\|\theta_{t}-\E[\theta_{\infty}]\|^{2}]+\tr(\var(\theta_{\infty}))\Big)\cdot\left(1-\frac{0.9\alpha}{\gammax}\right)^{k}\leq\bigO\Big(\alpha\tau\cdot\Big(1-\frac{0.9\alpha}{\gammax}\Big)^{k}\Big).
\]
Taking the optimal coupling that achieves the Wasserstein distance
between $(\theta_{k+t},x_{k+t})$ and $(\theta_{\infty},x_{\infty})$,
we obtain 
\[
\Big\|\E[\theta_{k+t}]-\E[\theta_{\infty}]\Big\|^{2}\leq\E\left[\left\Vert \theta_{k+t}-\theta_{\infty}\right\Vert ^{2}\right]\leq\bigO(\alpha\tau\cdot\left(1-\frac{0.9\alpha}{\gammax}\right)^{k}\Big).
\]
Consequently, for $l\geq t+\tau$ and $t\geq\frac{4\gammax}{\alpha}\log\Big(\frac{1}{\alpha\tau}\Big)$,
we obtain 
\begin{align*}
 & \left\Vert \E\left[\left(\theta_{t}-\E[\theta_{\infty}]\right)\left(\theta_{l}-\E[\theta_{\infty}]\right)^{\top}\right]\right\Vert \\
\le & \E\Big[\|\theta_{t}-\E[\theta_{\infty}]\|\cdot\|\E[\theta_{l}|\theta_{t}]-\E[\theta_{\infty}]\|\Big]\\
\leq & \E\left[\|\theta_{t}-\E[\theta_{\infty}]\|\cdot\sqrt{20\,\frac{\gammax}{\gammin}\Big(\|\theta_{t}-\E[\theta_{\infty}]\|^{2}+\tr(\var(\theta_{\infty}))]\Big)\cdot\left(1-\frac{0.9\alpha}{\gammax}\right)^{l-t}}\,\,\right]\\
\leq & C(A,b,\pi)\cdot\E\left[\|\theta_{t}-\E[\theta_{\infty}]\|\cdot\Big(\|\theta_{t}-\E[\theta_{\infty}]\|+\sqrt{\tr(\var(\theta_{\infty}))}]\Big)\cdot\left(1-\frac{0.9\alpha}{\gammax}\right)^{(l-t)/2}\,\right]\\
\leq & C(A,b,\pi)\cdot\left(\E\left[\|\theta_{t}-\E[\theta_{\infty}]\|^{2}\right]\cdot\left(1-\frac{0.9\alpha}{\gammax}\right)^{(l-t)/2}+\sqrt{\alpha\tau}\cdot\E\left[\|\theta_{t}-\E[\theta_{\infty}]\|\,\right]\cdot\left(1-\frac{0.9\alpha}{\gammax}\right)^{(l-t)/2}\right)\\
= & \bigO\Big(\alpha\tau\cdot\Big(1-\frac{0.9\alpha}{\gammax}\Big)^{(l-t)/2}\Big).
\end{align*}
As such, we have proven the desired claim. \end{proof}

\subsection{Proof of Corollary~\ref{cor:rr-ext-bounds}}

\label{sec:rr-ext-proof}

We prove the first and second moment bounds in Corollary~\ref{cor:rr-ext-bounds}.

\subsubsection{First Moment}

We have 
\begin{align*}
\E[\widetilde{\theta}_{k_{0},k}^{(\alpha)}]-\theta^{\ast} & =\left(2\bar{\theta}_{k_{0},k}^{(\alpha)}-\bar{\theta}_{k_{0},k}^{(2\alpha)}\right)-\theta^{\ast}=2\left(\bar{\theta}_{k_{0},k}^{(\alpha)}-\theta^{\ast}\right)-\left(\bar{\theta}_{k_{0},k}^{(2\alpha)}-\theta^{\ast}\right)\\
 & \overset{\textnormal{(i)}}{=}2\left(\alpha B(A,b,P)+\bigO(\alpha^{2})+\bigO\left(\frac{1}{\alpha(k-k_{0})}\exp\left(-\frac{\alpha k_{0}}{4\gammax}\right)\right)\right)\\
 & \quad-\left(2\alpha B(A,b,P)+\bigO(\alpha^{2})+\bigO\left(\frac{1}{\alpha(k-k_{0})}\exp\left(-\frac{\alpha k_{0}}{2\gammax}\right)\right)\right)\\
 & =\bigO(\alpha^{2})+\bigO\left(\frac{1}{\alpha(k-k_{0})}\exp\left(-\frac{\alpha k_{0}}{4\gammax}\right)\right),
\end{align*}
where (i) holds following from equation~\eqref{eq:pr-avg-first-mom}.

\subsubsection{Second Moment}

We first introduce the following shorthands: 
\[
u_{1}:=\bar{\theta}_{k_{0},k}^{(\alpha)}-\E\left[\theta_{\infty}^{(\alpha)}\right],\quad u_{2}:=\bar{\theta}_{k_{0},k}^{(2\alpha)}-\E\left[\theta_{\infty}^{(2\alpha)}\right]\quad\text{and}\quad v:=2\E\left[\theta_{\infty}^{(\alpha)}\right]-\E\left[\theta_{\infty}^{(2\alpha)}\right]+\theta^{*}.
\]
With these notations, we write $\widetilde{\theta}_{k_{0},k}-\theta^{*}=2u_{1}-u_{2}+v$
and observe the bound 
\begin{align*}
\left\Vert \E\left[\left(\tilde{\theta}_{k_{0},k}-\theta^{*}\right)\left(\tilde{\theta}_{k_{0},k}-\theta^{*}\right)^{\top}\right]\right\Vert  & =\left\Vert \E\left[\left(2u_{1}-u_{2}+v\right)\left(2u_{1}-u_{2}+v\right)^{\top}\right]\right\Vert \\
 & \le\E\left\Vert 2u_{1}\right\Vert ^{2}+3\E\left\Vert u_{2}\right\Vert ^{2}+3\left\Vert v\right\Vert ^{2}.
\end{align*}

By equation~\eqref{eq:theta-avg-to-inf-mse} we have 
\[
\E\left\Vert u_{1}\right\Vert ^{2}=\tr\Big(\E\left[u_{1}u_{1}^{\top}\right]\Big)=\bigO\left(\frac{\tau_{\alpha}}{k-k_{0}}+\frac{1}{\alpha(k-k_{0})^{2}}\exp\left(-\frac{\alpha k_{0}}{4\gammax}\right)\right),
\]
and similarly, 
\[
\E\left\Vert u_{2}\right\Vert ^{2}=\bigO\left(\frac{\tau_{2\alpha}}{k-k_{0}}+\frac{1}{\alpha(k-k_{0})^{2}}\exp\left(-\frac{\alpha k_{0}}{4\gammax}\right)\right).
\]
Furthermore, by equation~\eqref{eq:bias-char} we have $\left\Vert v\right\Vert ^{2}=\bigO(\alpha^{4})$.

Combining these bounds and noting that $\tau_{2\alpha}\le\tau_{\alpha}$,
we obtain 
\[
\E\left[\left(\tilde{\theta}_{k-k_{0}}-\theta^{*}\right)\left(\tilde{\theta}_{k-k_{0}}-\theta^{*}\right)^{\top}\right]=\bigO\left(\frac{\tau_{\alpha}}{k-k_{0}}\right)+\bigO\left(\frac{1}{\alpha(k-k_{0})^{2}}\exp\left(-\frac{\alpha k_{0}}{4\gammax}\right)\right)+\bigO\left(\alpha^{4}\right).
\]
We have completed the proof of Corollary~\ref{cor:rr-ext-bounds}.

\subsection{Proof of Negative Definiteness in TD Learning}

\label{sec:neg-def-proof}

In this subsection, we show that 
\[
-\BarA=-\E_{(s_{k},s_{k+1})\sim\pi}[\phi(s_{k})\big(\gamma\phi(s_{k+1})-\phi(s_{k})\big)^{\top}]\in\real^{d\times d}
\]
is a positive definite matrix. We start by noting the following property
of $I-\gamma P^{\cS}$ on $L^{2}(\pi^{\mS})$.

\begin{claim} \label{clm:positive-op} For any $v\in L^{2}(\pi^{\mS})$
and $v\neq0$, we have $\dotp{v,(I-\gamma P^{\cS})v}_{L^{2}(\pi^{\mS})}\geq(1-\gamma)\|v\|_{L^{2}(\pi^{\mS})}^{2}$.
Therefore, $I-\gamma P^{\cS}$ is a positive operator on $L^{2}(\pi^{\mS})$.
\end{claim}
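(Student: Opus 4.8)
The plan is to reduce the claim to the contractivity of the Markov operator $P^{\cS}$ on $L^2(\pi^{\mS})$, followed by a single application of the Cauchy--Schwarz inequality. First I would record the norm bound $\|P^{\cS}v\|_{L^2(\pi^{\mS})}\le\|v\|_{L^2(\pi^{\mS})}$ for every $v\in L^2(\pi^{\mS})$. This is the exact analogue of the fact $\|P\|_{\ltwopi}=1$ noted for the kernel $P$ in Section~\ref{sec:hilbert-notation}, and I would justify it by writing $(P^{\cS}v)(s)=\E[v(s_{k+1})\mid s_{k}=s]$ and invoking the conditional Jensen inequality $\big((P^{\cS}v)(s)\big)^{2}\le(P^{\cS}v^{2})(s)$. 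Integrating against $\pi^{\mS}$ and using the stationarity identity $\int\pi^{\mS}(\dd s)\,P^{\cS}(s,\cdot)=\pi^{\mS}$ then yields $\|P^{\cS}v\|_{L^2(\pi^{\mS})}^{2}\le\pi^{\mS}(v^{2})=\|v\|_{L^2(\pi^{\mS})}^{2}$.

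Next I would decompose the inner product as $\dotp{v,(I-\gamma P^{\cS})v}_{L^2(\pi^{\mS})}=\|v\|_{L^2(\pi^{\mS})}^{2}-\gamma\dotp{v,P^{\cS}v}_{L^2(\pi^{\mS})}$ and control the cross term by Cauchy--Schwarz together with the norm bound just established: $\dotp{v,P^{\cS}v}_{L^2(\pi^{\mS})}\le\|v\|_{L^2(\pi^{\mS})}\,\|P^{\cS}v\|_{L^2(\pi^{\mS})}\le\|v\|_{L^2(\pi^{\mS})}^{2}$. Substituting this estimate gives $\dotp{v,(I-\gamma P^{\cS})v}_{L^2(\pi^{\mS})}\ge\|v\|_{L^2(\pi^{\mS})}^{2}-\gamma\|v\|_{L^2(\pi^{\mS})}^{2}=(1-\gamma)\|v\|_{L^2(\pi^{\mS})}^{2}$, which is precisely the asserted inequality. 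Positivity of the operator $I-\gamma P^{\cS}$ is then immediate, since $\gamma\in[0,1)$ forces the right-hand side to be strictly positive whenever $v\neq0$.

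I do not anticipate a genuine obstacle here; the only point requiring care is the norm bound $\|P^{\cS}\|_{L^2(\pi^{\mS})}\le1$, which must be derived from stationarity rather than assumed. On the general (uncountable) state space considered here, this means phrasing the Jensen step through regular conditional expectations, exactly as the surrounding TD(0) analysis already does. Everything else reduces to a two-line computation.
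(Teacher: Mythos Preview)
Your proof is correct and follows essentially the same approach as the paper: both expand $\dotp{v,(I-\gamma P^{\cS})v}_{L^2(\pi^{\mS})}=\|v\|_{L^2(\pi^{\mS})}^{2}-\gamma\dotp{v,P^{\cS}v}_{L^2(\pi^{\mS})}$ and bound the cross term by $\|v\|_{L^2(\pi^{\mS})}^{2}$ via Cauchy--Schwarz and $\|P^{\cS}\|_{L^2(\pi^{\mS})}\le1$. You are in fact more explicit than the paper, which invokes the operator-norm bound tacitly (relying on the fact $\|P\|_{\ltwopi}=1$ recorded in Section~\ref{sec:hilbert-notation}) rather than rederiving it from Jensen and stationarity as you do.
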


\begin{proof}[Proof of Claim~\ref{clm:positive-op}] When $v\neq0$,
we have 
\[
\dotp{v,(\gamma P-I)v}_{L^{2}(\pi^{\mS})}=\gamma\dotp{v,P^{\cS}v}-\|v\|_{L^{2}(\pi^{\mS})}^{2}\leq(\gamma-1)\|v\|_{L^{2}(\pi^{\mS})}^{2}<0,
\]
where the last step holds since $\gamma<1.$\end{proof}

We next observe the following consequence of the linear independence
of $\{\phi_{i}\}$.

\begin{claim} \label{clm:non-zero-lower} There exists some $\rho>0$
such that $\Big\|\sum_{i=1}^{d}u_{i}\phi_{i}\Big\|_{L^{2}(\pi^{\mS})}\geq\rho\|u\|,\forall u\in\real^{d}$.\end{claim}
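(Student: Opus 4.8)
The plan is to prove Claim~\ref{clm:non-zero-lower} by reducing it to a compactness argument on the unit sphere in $\real^d$, using the linear independence of the feature functions $\{\phi_i\}$ as guaranteed by our assumption that $\Phi$ has rank $d$. The key insight is that the map $u \mapsto \big\|\sum_{i=1}^d u_i\phi_i\big\|_{L^2(\pi^{\mS})}$ is a \emph{norm} on $\real^d$, and all norms on a finite-dimensional vector space are equivalent. This immediately yields the existence of a constant $\rho>0$ making the desired lower bound hold.

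More concretely, I would first define the Gram matrix $G \in \real^{d\times d}$ with entries $G_{ij} := \dotp{\phi_i,\phi_j}_{L^2(\pi^{\mS})}$, which is well-defined since each $\phi_i \in L^2(\pi^{\mS})$. A direct computation gives
\[
\Big\|\sum_{i=1}^d u_i\phi_i\Big\|_{L^2(\pi^{\mS})}^2 = \sum_{i,j=1}^d u_i u_j \dotp{\phi_i,\phi_j}_{L^2(\pi^{\mS})} = u^\top G u.
\]
The matrix $G$ is symmetric and positive semidefinite by construction. The linear independence of $\{\phi_i\}$ ensures that $G$ is in fact positive \emph{definite}: if $u^\top G u = 0$, then $\big\|\sum_i u_i\phi_i\big\|_{L^2(\pi^{\mS})}=0$, which means $\sum_i u_i\phi_i = 0$ $\pi^{\mS}$-almost everywhere, and by linear independence this forces $u=0$. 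Hence the smallest eigenvalue $\lambda_{\min}(G)$ is strictly positive, and setting $\rho := \sqrt{\lambda_{\min}(G)}$ gives
\[
\Big\|\sum_{i=1}^d u_i\phi_i\Big\|_{L^2(\pi^{\mS})}^2 = u^\top G u \geq \lambda_{\min}(G)\|u\|^2 = \rho^2\|u\|^2,
\]
which is the claimed inequality after taking square roots.

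The only subtlety — and what I expect to be the main (though minor) obstacle — is the careful handling of the $L^2(\pi^{\mS})$ identification of functions: recall from the footnote in Section~\ref{sec:hilbert-notation} that elements of $L^2(\pi^{\mS})$ are equivalence classes under $\pi^{\mS}$-a.e.\ equality. The linear independence assumption stated in the paper (namely that $\sum_i c_i\phi_i=0$ implies all $c_i=0$) must be read as linear independence \emph{in $L^2(\pi^{\mS})$}, i.e., as equivalence classes. This is exactly the right notion to make the positive-definiteness argument for $G$ go through, since vanishing of $\big\|\sum_i u_i\phi_i\big\|_{L^2(\pi^{\mS})}$ means vanishing as an $L^2(\pi^{\mS})$ element. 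Provided this reading is adopted (which is consistent with the paper's convention and its finite-rank assumption on $\phi$), the argument is entirely routine and requires no further machinery beyond the finite-dimensional spectral theorem.
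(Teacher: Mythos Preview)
Your proof is correct and essentially identical to the paper's: both define the Gram matrix $G_{ij}=\dotp{\phi_i,\phi_j}_{L^2(\pi^{\mS})}$, express $\big\|\sum_i u_i\phi_i\big\|_{L^2(\pi^{\mS})}^2=u^\top G u$, and use linear independence of $\{\phi_i\}$ in $L^2(\pi^{\mS})$ to conclude that the smallest eigenvalue of $G$ is strictly positive. Your handling of the $L^2$-equivalence-class subtlety matches the paper's own footnote reference, and your explicit choice $\rho=\sqrt{\lambda_{\min}(G)}$ is in fact slightly cleaner than the paper's phrasing.
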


\begin{proof}[Proof of Claim~\ref{clm:non-zero-lower}] Define
the matrix $\Bar{\Phi}\in\R^{d\times d}$ by $\Bar{\Phi}_{ij}=\dotp{\phi_{i},\phi_{j}}_{L^{2}(\pi^{\mS})}$.
We can write 
\[
\Big\|\sum_{i=1}^{d}u_{i}\phi_{i}\Big\|_{L^{2}(\pi^{\mS})}^{2}=\dotp{\sum_{i=1}^{d}u_{i}\phi_{i},\sum_{i=1}^{d}u_{i}\phi_{i}}_{L^{2}(\pi^{\mS})}=\sum_{i=1}^{d}\sum_{j=1}^{d}u_{i}u_{j}\Bar{\Phi}_{ij}=u^{\top}\Bar{\Phi}u.
\]
It follows that $\Big\|\sum_{i=1}^{d}u_{i}\phi_{i}\Big\|_{L^{2}(\pi^{\mS})}^{2}\ge\rho\left\Vert u\right\Vert ^{2},\forall u\in\real^{d}$,
where $\rho$ is the smallest eigenvalue of $\Bar{\Phi}$. Note that
$\rho\ge0$ since the left hand side of the last display equation
is non-negative. For the sake of deriving a contradiction, assume that
$\rho=0$ and let $a\in\R^{d}$ be the corresponding unit-norm eigenvector.
Then, we have $\Big\|\sum_{i=1}^{d}a_{i}\phi_{i}\Big\|_{L^{2}(\pi^{\mS})}^{2}=a^{\top}\bar{\Phi}a=0,$
which implies that $\sum_{i=1}^{d}a_{i}\phi_{i}=0$ (see footnote
\ref{fn:quotient_space}), contradicting the linear independence of
$\{\phi_{i}\}$. \end{proof}

We are ready to prove $-\BarA$ is positive definite. Using the definitions
of $A$ and $\BarA$, we have 
\begin{align*}
-u\BarA u & =\sum_{i=1}^{d}\sum_{j=1}^{d}u_{i}u_{j}\dotp{\phi_{i},(I-\gamma P^{\cS})\phi_{j}}_{L^{2}(\pi^{\mS})}\\
 & =\dotp{\sum_{i=1}^{d}u_{i}\phi_{i},(I-\gamma P^{\cS})\left(\sum_{i=1}^{d}u_{i}\phi_{i}\right)}_{L^{2}(\pi^{\mS})}\\
 & \geq(1-\gamma)\Big\|\sum_{i=1}^{d}u_{i}\phi_{i}\Big\|_{L^{2}(\pi^{\mS})}^{2}\\
 & \ge(1-\gamma)\rho\|u\|_{L^{2}(\pi^{\mS})}^{2}.
\end{align*}
where the last two steps follow from Claims~\ref{clm:positive-op}
and \ref{clm:non-zero-lower}, respectively. Since $\rho>0$, the
claim follows.

\section{Existence of Higher Moments}
\label{sec:higher_moments}

The result in \citep[Theorem 9]{srikant-ying19-finite-LSA} provides
a sufficient condition for the existence of the $m$-th moment of
the LSA iterates $\theta_{k}$. Their condition turns out to be more
restrictive than necessary. By tightening several intermediate steps
in their proof, we can establish the following Proposition~\ref{prop:higher_moments},
which gives a more relaxed condition. In Appendix~\ref{sec:proof_higher_moments}
to follow, we explain how to modify the proof of \citep[Theorem 9]{srikant-ying19-finite-LSA}
to prove Proposition~\ref{prop:higher_moments}.

\begin{proposition} \label{prop:higher_moments} Assume the stepsize
$\alpha$ satisfies the condition~\eqref{eq:alpha-constraint}. Then,
for each positive integer $m$ obeying 
\begin{equation}
m\cdot\alpha\tau<\frac{1}{4\sqrt{\gammax}}\left(\frac{1}{\sqrt{\gammin}}+1\right)^{-1},\label{eq:high-mom-constraint}
\end{equation}
it holds for all $k\geq k_{m}$ that 
\[
\E[\|\theta_{k}\|^{2m}]\leq(2m-1)!!(c\alpha\tau)^{m},
\]
where 
\[
k_{m}=m\tau+\frac{\Tilde c}{\alpha}\left(\log\frac{1}{\alpha}\right)\sum_{t=1}^{m}\frac{1}{t},
\]
and both $c$ and $\Tilde c$ are constants independent of $\alpha$
and $m$. \end{proposition}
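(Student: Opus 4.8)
The plan is to prove Proposition~\ref{prop:higher_moments} by induction on $m$, following the Lyapunov-drift strategy of \citep[Theorem 9]{srikant-ying19-finite-LSA} but sharpening each intermediate inequality so that the stepsize restriction enters only through the product $m\cdot\alpha\tau$. As in the proof of Proposition~\ref{prop:pilot}, I would first reduce to the case $\bar b=0$ (hence $\theta^{*}=0$) via the centering $\theta_{k}'=\theta_{k}-\theta^{*}$, which preserves Assumptions~\ref{assumption:bounded}--\ref{assumption:hurwitz}. It then suffices to control $\E[V_{k}^{m}]$ for $V_{k}:=\theta_{k}^{\top}\Gamma\theta_{k}$, since $\gammin\|\theta_{k}\|^{2}\le V_{k}\le\gammax\|\theta_{k}\|^{2}$ by \eqref{eq:gamma-property}. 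The base case $m=1$ is precisely Proposition~\ref{prop:pilot} (with $(2\cdot1-1)!!=1$), which gives $\E[V_{k}]=\bigO(\alpha\tau)$ for $k\ge k_{1}\asymp\tfrac{1}{\alpha}\log\tfrac{1}{\alpha}$.

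For the inductive step I would expand one step of the recursion. Writing $V_{k+1}=V_{k}+\Delta_{k}$ with $\Delta_{k}=2\alpha\,\theta_{k}^{\top}\Gamma\big(A(x_{k})\theta_{k}+b(x_{k})\big)+\alpha^{2}\big(A(x_{k})\theta_{k}+b(x_{k})\big)^{\top}\Gamma\big(A(x_{k})\theta_{k}+b(x_{k})\big)$, the binomial theorem gives
\[
V_{k+1}^{m}=V_{k}^{m}+m\,V_{k}^{m-1}\Delta_{k}+\sum_{j=2}^{m}\binom{m}{j}V_{k}^{m-j}\Delta_{k}^{j}.
\]
The leading term $m\,V_{k}^{m-1}\cdot2\alpha\,\theta_{k}^{\top}\Gamma\BarA\theta_{k}=-\alpha m\,V_{k}^{m-1}\|\theta_{k}\|^{2}$, obtained from the Lyapunov equation $\BarA^{\top}\Gamma+\Gamma\BarA=-I$, produces the negative drift of order $-\tfrac{\alpha m}{\gammax}V_{k}^{m}$. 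The Markovian fluctuation contributions $m\,V_{k}^{m-1}\theta_{k}^{\top}\Gamma\big((A(x_{k})-\BarA)\theta_{k}+(b(x_{k})-\bar b)\big)$ are treated by the $\tau$-step decoupling of Lemmas~\ref{lem:s-lemma5-a} and~\ref{lem:s-lemma5-b}: one writes $\theta_{k}=\theta_{k-\tau}+(\theta_{k}-\theta_{k-\tau})$, controls the increment through Lemma~\ref{lem:s-lemma3}, and uses $x_{k}\indep\theta_{k-\tau}\mid x_{k-\tau}$ from \eqref{eq:conditional-independence} together with the mixing-time bounds to replace the conditional expectations of $A(x_{k})$ and $b(x_{k})$ by their stationary values up to $\bigO(\alpha)$. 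The subtlety is that the entire product $V_{k}^{m-1}(\cdots)$, not merely the quadratic form, must be decoupled, so those lemmas have to be upgraded to statements about $\E\big[V_{k}^{m-1}\,\theta_{k}^{\top}\Gamma(\cdot)\theta_{k}\big]$, bounding them by $\alpha\tau$ times mixtures of $\E[V_{k}^{m}]$ and $\E[V_{k}^{m-1}]$; likewise each higher-order term $V_{k}^{m-j}\Delta_{k}^{j}$ is $\bigO((\alpha\tau)^{j})$ times a combination of $\E[V_{k}^{m}],\dots,\E[V_{k}^{m-j}]$.

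Collecting these bounds yields a drift recursion of the shape
\[
\E[V_{k+1}^{m}]\le\Big(1-c_{1}\tfrac{\alpha m}{\gammax}\Big)\E[V_{k}^{m}]+c_{2}\,\alpha m\,\gammax\,\alpha\tau\cdot\E[V_{k}^{m-1}],
\]
valid for $k\ge k_{m-1}+\tau$, provided $m\alpha\tau$ is small enough that the positive fluctuation coefficients do not overwhelm the negative drift. This is exactly where condition~\eqref{eq:high-mom-constraint} enters, and where the tightening over \citep{srikant-ying19-finite-LSA} is essential: rather than crude splittings such as $2u\le u^{2}+1$, one uses weighted Young inequalities $2u\le\beta u^{2}+\beta^{-1}$ with $m$-adapted weights $\beta$, so that the admissible range scales linearly in $m\alpha\tau$ and the threshold comes out as $\tfrac{1}{4\sqrt{\gammax}}\big(\tfrac{1}{\sqrt{\gammin}}+1\big)^{-1}$, with the $\sqrt{\gammin},\sqrt{\gammax}$ dependence tracked through $\gammin\|\theta\|^{2}\le V\le\gammax\|\theta\|^{2}$.

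Finally I would feed the inductive hypothesis $\E[V_{k}^{m-1}]\le(2m-3)!!(c\alpha\tau)^{m-1}$ (for $k\ge k_{m-1}$) into the recursion and unroll it. The geometric factor $(1-c_{1}\alpha m/\gammax)$ drives the transient below the stationary level after an additional $\bigO\!\big(\tfrac{1}{\alpha m}\log\tfrac{1}{\alpha}\big)$ steps, while the forcing term equilibrates at a value equal to the ratio of the level-$m$ forcing to the level-$m$ drift rate, namely $\Theta(m)\cdot c\alpha\tau$ times $\E[V_{k}^{m-1}]$; each induction level thus multiplies the previous bound by a factor $\Theta(m)\cdot c\alpha\tau$, which generates the double-factorial prefactor $(2m-1)!!=(2m-1)(2m-3)!!$ and the power $(\alpha\tau)^{m}$ after absorbing constants into $c$. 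Summing the per-level burn-ins $\tfrac{\tilde c}{\alpha t}\log\tfrac{1}{\alpha}$ together with the $\tau$ offsets over $t=1,\dots,m$ gives the claimed $k_{m}=m\tau+\tfrac{\tilde c}{\alpha}\big(\log\tfrac{1}{\alpha}\big)\sum_{t=1}^{m}\tfrac{1}{t}$. I expect the main obstacle to be the bookkeeping in the inductive step: generalizing the moment lemmas to carry the extra factor $V_{k}^{m-1}$ while keeping every constant's $m$-dependence explicit, so that the resulting stepsize threshold is genuinely linear in $m\alpha\tau$ rather than the more restrictive bound produced by naive constants.
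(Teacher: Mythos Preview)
Your proposal correctly reconstructs the overall inductive Lyapunov framework of \citep[Theorem 9]{srikant-ying19-finite-LSA}: induction on $m$, centering to $\bar b=0$, the drift recursion for $V_k^m$ with negative term $-\alpha m\gammax^{-1}V_k^m$, $\tau$-step decoupling via Lemma~\ref{lem:s-lemma3} and the conditional independence~\eqref{eq:conditional-independence}, and the per-level burn-in that sums to $k_m$. That skeleton is right.

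Where you diverge from the paper is in the \emph{mechanism} and \emph{location} of the tightening. You propose to obtain the linear-in-$m\alpha\tau$ threshold by inserting weighted Young inequalities $2u\le\beta u^2+\beta^{-1}$ with $m$-adapted weights inside the drift recursion. That is the device the paper uses for the \emph{pilot result} (Proposition~\ref{prop:pilot}), not here. The paper's proof of Proposition~\ref{prop:higher_moments} is far more surgical: it leaves Srikant--Ying's argument untouched except for one step in their Appendix~D.4, namely the bound relating $\|\Psi_0\|^{2m}$ to $\|\Psi_k\|^{2m}$ for $\Psi_k:=\Gamma^{1/2}\theta_{k+\tau}$. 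The paper replaces the crude bound there by the telescoping identity
\[
\|\Psi_0\|^{2m}-\|\Psi_k\|^{2m}=\sum_{t=0}^{2m-1}\|\Psi_0\|^{2m-t-1}\|\Psi_k\|^{t}\big(\|\Psi_0\|-\|\Psi_k\|\big),
\]
and then controls the single difference factor via $\|\Psi_0\|-\|\Psi_k\|\le\sqrt{\gammax}\|\theta_k-\theta_0\|\le 2\alpha k\sqrt{\gammax}\big(\gammin^{-1/2}\|\Psi_0\|+\bmax\big)$, using Lemma~\ref{lem:s-lemma3} and $\|\theta_0\|\le\gammin^{-1/2}\|\Psi_0\|$. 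Summing the $2m$ terms and rearranging gives $\|\Psi_0\|^{2m}\le\tfrac{1+m\alpha kC}{1-m\alpha kC}\|\Psi_k\|^{2m}+(\text{lower moments})$ with $C=4\sqrt{\gammax}\big(\gammin^{-1/2}+1\big)$; setting $k=\tau$ produces exactly the constraint $m\alpha\tau C<1$, i.e.~\eqref{eq:high-mom-constraint}. The specific $\sqrt{\gammax},\sqrt{\gammin}$ dependence thus arises from passing between $\|\theta\|$ and $\|\Gamma^{1/2}\theta\|$ in this one inequality, not from weight choices in the drift.

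Your route could in principle also yield a threshold linear in $m\alpha\tau$, but it is not clear it would reproduce the precise constant in~\eqref{eq:high-mom-constraint}, and it requires you to redo the full moment drift analysis (upgrading Lemmas~\ref{lem:s-lemma5-a}--\ref{lem:s-lemma5-b} to carry $V_k^{m-1}$, tracking all $j\ge2$ binomial terms, etc.). The paper's approach buys economy: a single inequality is sharpened and the rest of \citep[Theorem 9]{srikant-ying19-finite-LSA} is invoked verbatim.
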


In the proof of Theorem~\ref{thm:thm-converge}, we make use of the
existence of the $4$th moment. Taking $m=2$ in Proposition~\ref{prop:higher_moments},
we see that the condition~\eqref{eq:high-mom-constraint} becomes
\[
\alpha\tau<\frac{1}{8\sqrt{\gammax}}\left(\frac{1}{\sqrt{\gammin}}+1\right)^{-1}.
\]
Recall our stepsize condition~\eqref{eq:alpha-constraint}: $\alpha\tau\leq\frac{0.05}{95\gammax}$.
Using the inequality $\gammax\geq\gammin\geq\frac{1}{2}$ established
in equation~\eqref{eq:gammax-lower-bound}, we have 
\[
\frac{0.05}{95\gammax}=\frac{0.05}{95\sqrt{\gammax}}\cdot\frac{1}{\sqrt{\gammax}}\leq\frac{0.1}{95\sqrt{\gammax}}\leq\frac{1}{32\sqrt{\gammax}}<\frac{1}{8\sqrt{\gammax}}\left(\frac{1}{\sqrt{\gammin}}+1\right)^{-1}.
\]
Therefore, the condition~\eqref{eq:alpha-constraint} implies that
the condition~\eqref{eq:high-mom-constraint} holds with $m=2$,
which in turn ensures the existence of a finite 4th moment and proves
the claim in equation~\eqref{eq:4th-moment}.

\subsection{Proof of Proposition~\ref{prop:higher_moments}}

\label{sec:proof_higher_moments}

The proof is similar to that of \citep[Theorem 9]{srikant-ying19-finite-LSA}.
We only point out the differences. In the proof of \citep[Theorem 9]{srikant-ying19-finite-LSA},
the key constraint on $\alpha\tau$ and $m$ to ensure a finite
$m$-th moment arises when bounding $\E[\|\Psi_{0}\|^{2m}]$, where
$\Psi_{k}=\Gamma^{1/2}\theta_{k+\tau}$; see \citep[Appendix D.4]{srikant-ying19-finite-LSA}.
Below we provide a refinement of the arguments therein.

We start with the following decomposition 
\begin{align}
\|\Psi_{0}\|^{2m}-\|\Psi_{k}\|^{2m} & =\sum_{t=0}^{2m-1}\left(\|\Psi_{0}\|^{2m}-t\|\Psi_{k}\|^{t}-\|\Psi_{0}\|^{2m-(t+1)}\|\Psi_{k}\|^{t+1}\right)\nonumber \\
 & =\sum_{t=0}^{2m-1}\|\Psi_{0}\|^{2m-(t+1)}\|\Psi_{k}\|^{t}\left(\|\Psi_{0}\|-\|\Psi_{k}\|\right).\label{eq:telescoping-term}
\end{align}
Note that 
\begin{align*}
\|\Psi_{0}\|-\|\Psi_{k}\|\leq\|\Psi_{k}-\Psi_{0}\| & \leq\sqrt{\gammax}\|\theta_{k}-\theta_{0}\|\\
 & \overset{\text{(i)}}{\leq}2\alpha k\sqrt{\gammax}(\|\theta_{0}\|+\bmax)\\
 & \leq2\alpha k\sqrt{\gammax}\left(\frac{1}{\sqrt{\gammin}}\|\Psi_{0}\|+\bmax\right),
\end{align*}
where we use Lemma~\ref{lem:s-lemma3} in step~(i). Hence, for the
$t$-th summand on the RHS of equation~\eqref{eq:telescoping-term},
we have 
\begin{align*}
 & \|\Psi_{0}\|^{2m-(t+1)}\|\Psi_{k}\|^{t}\left(\|\Psi_{0}\|-\|\Psi_{k}\|\right)\\
\leq & 2\alpha k\sqrt{\gammax}\|\Psi_{0}\|^{2m-(t+1)}\|\Psi_{k}\|^{t}\left(\frac{1}{\sqrt{\gammin}}\|\Psi_{0}\|+\bmax\right)\\
\leq & 2\alpha k\sqrt{\gammax}\left(\frac{1}{\sqrt{\gammin}}\|\Psi_{0}\|^{2m-t}\|\Psi_{k}\|^{t}+\bmax\|\Psi_{0}\|^{2m-(t+1)}\|\Psi_{k}\|^{t}\right)\\
\leq & 2\alpha k\sqrt{\gammax}\left(\frac{1}{\sqrt{\gammin}}(\|\Psi_{0}\|^{2m}+\|\Psi_{k}\|^{2m})+\bmax(\|\Psi_{0}\|^{2m-1}+\|\Psi_{k}\|^{2m-1})\right).
\end{align*}
We further note the following bound: 
\begin{align}
\frac{1}{\sqrt{\gammin}}\|\Psi_{0}\|^{2m}+\bmax\|\Psi_{0}\|^{2m-1} & =\|\Psi_{0}\|^{2(m-1)}\left(\frac{1}{\sqrt{\gammin}}\|\Psi_{0}\|^{2}+\bmax\|\Psi_{0}\|\right)\nonumber \\
 & \leq\|\Psi_{0}\|^{2(m-1)}\left(\frac{1}{\sqrt{\gammin}}\|\Psi_{0}\|^{2}+(\bmax^{2}+\|\Psi_{0}\|^{2})\right)\nonumber \\
 & =\left(\frac{1}{\sqrt{\gammin}}+1\right)\|\Psi_{0}\|^{2m}+\bmax^{2}\|\Psi_{0}\|^{2(m-1)}.\label{eq:4th-mom-to-2nd-part1}
\end{align}
Similarly, we have 
\begin{equation}
\frac{1}{\sqrt{\gammin}}\|\Psi_{k}\|^{2m}+\bmax\|\Psi_{k}\|^{2m-1}\leq\left(\frac{1}{\sqrt{\gammin}}+1\right)\|\Psi_{k}\|^{2m}+\bmax^{2}\|\Psi_{k}\|^{2(m-1)}.\label{eq:4th-mom-to-2nd-part2}
\end{equation}
Combining~\eqref{eq:4th-mom-to-2nd-part1} and \eqref{eq:4th-mom-to-2nd-part2},
the $t$-th summand on the RHS of \eqref{eq:telescoping-term} admits
the following upper bound: 
\begin{align*}
 & \|\Psi_{0}\|^{2m-(t+1)}\|\Psi_{k}\|^{t}\left(\|\Psi_{0}\|-\|\Psi_{k}\|\right)\\
\leq & 2\alpha k\sqrt{\gammax}\left(\left(\frac{1}{\sqrt{\gammin}}+1\right)(\|\Psi_{0}\|^{2m}+\|\Psi_{k}\|^{2m})+\bmax^{2}(\|\Psi_{0}\|^{2(m-1)}+\|\Psi_{k}\|^{2(m-1)})\right)\\
\leq & 2\alpha k\sqrt{\gammax}\left(\left(\frac{1}{\sqrt{\gammin}}+1\right)(\|\Psi_{0}\|^{2m}+\|\Psi_{k}\|^{2m})+\bmax^{2}(\|\Psi_{0}\|^{2(m-1)}+\|\Psi_{k}\|^{2(m-1)})\right).
\end{align*}

Substituting the above bound back into equation~\eqref{eq:telescoping-term},
we have 
\[
\|\Psi_{0}\|^{2m}-\|\Psi_{k}\|^{2m}\leq4m\alpha k\sqrt{\gammax}\left(\left(\frac{1}{\sqrt{\gammin}}+1\right)(\|\Psi_{0}\|^{2m}+\|\Psi_{k}\|^{2m})+\bmax^{2}(\|\Psi_{0}\|^{2(m-1)}+\|\Psi_{k}\|^{2(m-1)})\right).
\]
Set $C\equiv C(A,b,\pi)=4\sqrt{\gammax}\left(\frac{1}{\sqrt{\gammin}}+1\right)$
and $C'\equiv C'(A,b,\pi)=\sqrt{\gammax}\bmax^{2}$. We have the inequalities
\begin{align*}
\|\Psi_{0}\|^{2m}-\|\Psi_{k}\|^{2m} & \leq m\alpha kC(\|\Psi_{0}\|^{2m}+\|\Psi_{k}\|^{2m})+m\alpha kC'(\|\Psi_{0}\|^{2(m-1)}+\|\Psi_{k}\|^{2(m-1)}),\\
(1-m\alpha kC)\|\Psi_{0}\|^{2n} & \leq(1+m\alpha kC)\|\Psi_{k}\|^{2m}+m\alpha kC'(\|\Psi_{0}\|^{2(m-1)}+\|\Psi_{k}\|^{2(m-1)}),\\
\|\Psi_{0}\|^{2m} & \leq\frac{1+m\alpha kC}{1-m\alpha kC}\|\Psi_{k}\|^{2m}+\frac{n\alpha kC'}{1-m\alpha kC}(\|\Psi_{0}\|^{2(m-1)}+\|\Psi_{k}\|^{2(m-1)}).
\end{align*}

Therefore, the constraint on $m$ arises as we set $\tau=k$ and require
$m\alpha\tau C<1$. Hence, to ensure a finite $m$-th moment, we require
$m\alpha\tau<\frac{1}{C}$, which corresponds to the condition~\eqref{eq:high-mom-constraint}
in the statement of Proposition~\ref{prop:higher_moments}.

\section{Details for Numerical Experiments}
\label{sec:expt_details}

In this section, we provide the details for the setup of the numerical
experiments in Section~\ref{sec:experiments}.

\subsection{Setup for LSA Experiments}
\label{sec:expt_detals_LSA}

For the experiments on LSA, we generate the transition probability
matrix $P$ and functions $A$ and $b$ randomly on an 8-state finite
state space as follows.

We first illustrate the steps we take to generate the transition matrix
$P$. For a given $n\,(=|\cX|)$, we start with a random matrix $M^{(P)}\in[0,1]^{n\times n}$
with entries $m_{ij}^{(P)}\overset{\text{i.i.d.\ }}{\sim}U[0,1]$,
and normalize it to obtain a stochastic matrix $\hat{M}^{(P)}=\left(\hat{m}_{ij}^{(P)}\right)$
with $\hat{m}_{ij}^{(P)}=\frac{m_{ij}^{(P)}}{\sum_{k=1}^{n}m_{ik}^{(P)}}$.
We then examine the period and reducibility of the stochastic matrix
$\hat{M}^{(P)}$ to ensure that it is aperiodic and irreducible, which
gives a uniformly ergodic Markov chain as required in Assumption~\ref{assumption:uniform-ergodic}.
If $\hat{M}^{(P)}$ is not aperiodic or irreducible, we then repeat
the above procedure until we obtain one, and set $P:=\hat{M}^{(P)}$.
Now with $P$ generated, we compute the stationary distribution $\pi$.

Next, we proceed to generate $A(x)$ for $x\in\cX$. As we also need
$\BarA=\E_{\pi}[A(x)]$ Hurwitz as required in Assumption~\ref{assumption:hurwitz},
we start with generating the Hurwitz matrix $\BarA$ and then add
noise to obtain the respective $A(x)$. We first generate a random
matrix $M^{(A)}\in\R^{d\times d}$ with $m_{ij}^{(A)}\overset{\text{i.i.d.\ }}{\sim}\mathcal{N}(0,1)$,
and examine the eigenvalues $\lambda_{i}(M^{(A)})$, as Hurwitz matrix
has eigenvalues all with strictly negative real parts. If $\text{Re}(\lambda_{i}(M^{(A)}))<0$
for all $i=1,\ldots,d$, then $M^{(A)}$ is Hurwitz and we set it
as $\BarA:=M^{(A)}$. Otherwise, we adjust $M^{(A)}$ to obtain a
Hurwitz matrix, $\BarA:=M^{(A)}-2\max(\text{Re}(\lambda_{i}(M^{(A)})))\cdot I_{d}$.
With $\BarA$ generated, we add a noise matrix $E(x)\in[-1,1]^{d\times d}$
to $\BarA$ to obtain $A(x)$, i.e., $A(x)=\BarA+E(x)$. As $\E_{\pi}[E(x)]=0$,
we only generate $E(x)$ with $e(x)_{ij}\overset{\text{i.i.d.\ }}{\sim}U[-1,1]$
for $x=1,\ldots,n-1$, and set $A(n)=\BarA-\sum_{x=1}^{n-1}\pi_{x}E(x)$.
Lastly, to align with our boundedness Assumption~\ref{assumption:bounded},
we normalize $A(x)$ by the following procedure, 
\[
A(x)\leftarrow A(x)/\max_{x}\|A(x)\|,\quad\BarA\leftarrow\BarA/\max_{x}\|A(x)\|,
\]
which ensures that $\Amax:=1$.

Lastly, we generate $b(x)\in\R^{d}$ with $b(x)_{i}\overset{\text{i.i.d.\ }}{\sim}[-1,1]$
and obtain $\bar{b}=\sum_{x}\pi_{x}b(x)$ and $\bmax=\max_{x}\|b(x)\|$.

\subsection{Setup for TD(0) Experiments}
\label{sec:expt_details_TD}

We consider the TD(0) algorithm applied to the so-called ``problematic
MDP'' considered in the work \citep{koller2000-pi,Lagoudakis03-LSPI}.
This MDP involves $n^{\cS}=4$ states, $\cS=\{1,2,3,4\}$, arranged
from left to right. At each state, there are two available actions,
``Left" (L) and ``Right" (R). When the action L is chosen, with
probability $0.9$ the state transitions to the left (or stays at
the same position if the current state is the leftmost state $1$),
and with probability $0.1$ the state transitions in the opposite
direction (or stay at the same position if the current state is the
rightmost state $4$). The dynamics under the action R is defined
symmetrically. The reward function is given by $r(1)=0,r(2)=1,r(3)=3,r(4)=0$,
with a discount factor $\gamma=0.9$. We consider evaluating the policy
that takes the actions R, R, L, and L at states $1,2,3,4$, respectively
(this policy is the optimal policy for this MDP). The induced MRP
is illustrated in Figure~\ref{fig:chain-4}. 
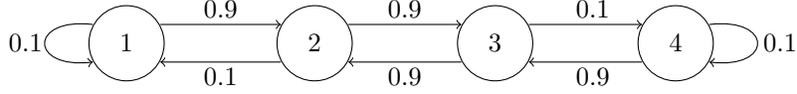
\begin{figure}[htbp]
\centering \begin{tikzpicture}
    \draw (-2.5, 0) circle (.5) node{$1$};
    \node at (-1.25, 0.45) {$0.9$};
    \draw [->] (-2.95, 0.25) .. controls (-3.8,0.35) and (-3.8,-0.35) ..  (-2.95, -0.25);
    \node at (-3.84, 0) {$0.1$};
    \draw[->] (-2.05, 0.25) -- (-0.45,0.25);
    \draw[->] (-0.45, -0.25) -- (-2.05,-0.25);
    \node at (-1.25, -0.45) {$0.1$};
    
    \draw (0, 0) circle (.5) node{$2$};
    \draw[->] (0.45,0.25) -- (1.95,0.25);
    \draw[->] (1.95,-0.25) -- (0.45,-0.25);
    \node at (1.2, 0.45) {$0.9$};
    \node at (1.2, -0.45) {$0.9$};
    
    \draw (2.4, 0) circle (.5) node{$3$};
    \draw[->] (2.85,0.25) -- (4.35,0.25);
    \draw[->] (4.35,-0.25) -- (2.85,-0.25);
    \node at (3.7, 0.45) {$0.1$};
    \node at (3.7, -0.45) {$0.9$};
    
    \draw (4.8, 0) circle (.5) node{$4$};
    \draw [->] (5.25, 0.25) .. controls (6.1,0.35) and (6.1,-0.35) ..  (5.25, -0.25);
    \node at (6.19, 0) {$0.1$};
   
    \end{tikzpicture} \caption{The Problematic MDP under ``RRLL" Policy.}
\label{fig:chain-4} 
\end{figure}

We apply TD(0) with linear function approximation to the above MRP.
For each state $s\in\{1,2,3,4\}$, the corresponding $d=3$ dimensional
feature vector is given by 
\[
\phi(s)=(1,s,s^{2})^{\top},
\]
which is used in the work \citep{koller2000-pi}. We then normalize
each row of the feature matrix $\Phi\in\R^{n^{\cS}\times d}$ to have
unit norm; explicitly, we set 
\[
\phi(s)_{i}\leftarrow\frac{\phi(s)_{i}}{\sum_{s=1}^{4}\phi(s)_{i}},\quad i=1,2,3,4.
\]
Note that one may ensure the condition $\max_{s\in\cS}\|\phi(s)\|\le\frac{1}{\sqrt{1+\gamma}}$
required by our theory by further rescaling the entire matrix $\Phi$.
In our experiments, we ignore this rescaling step, as it is equivalent to simply rescaling the stepsize and iterates.

\subsection{Setup for SGD Experiments}
\label{sec:expt_detals_SGD} 

For the experiments on SGD applied to least squares regression, the
data $\obs_{t}\in\R^{2}$ is sequentially generated from an independent
Metroplis-Hastings (MH) sampler. The target stationary distribution
of the MH sampler is $\text{Uniform}[-1,1]\times\text{Uniform}[-1,1]$.
The states of the MH sampler are generated by employing a sampling
distribution $q$ for $h\in\real^{2}$, where each coordinate $h(i)$
has the density 
\[
q(h(i))=\begin{cases}
1/4 & \text{if }-1\leq h(i)<0\\
3/4 & \text{if }0\leq h(i)<1,
\end{cases}\quad\text{for }i=1,2.
\]
Given state $g_{t}$, the next state $\obs_{t+1}$ is generated as
follows. We first generate a new sample state $h$ coordinate by coordinate
from the sampling distribution $q$. Then, we accept $h$ and set
$\obs_{t+1}=h$ with probability 
\[
\min\left\{ \frac{q(\obs_{t}(1))}{q(h(1))}\frac{q(\obs_{t}(2))}{q(h(2))},1\right\} ;
\]
 otherwise, we set $\obs_{t+1}=\obs_{t}$.
\end{document}